\def\ddefloop#1{\ifx\ddefloop#1\else\ddef{#1}\expandafter\ddefloop\fi}
\def\ddef#1{\expandafter\def\csname bb#1\endcsname{\ensuremath{\mathbb{#1}}}}
\def\ddefloop#1{\ifx\ddefloop#1\else\ddef{#1}\expandafter\ddefloop\fi}
\def\ddef#1{\expandafter\def\csname frak#1\endcsname{\ensuremath{\mathfrak{#1}}}}
\def\ddefloop#1{\ifx\ddefloop#1\else\ddef{#1}\expandafter\ddefloop\fi}
\def\ddef#1{\expandafter\def\csname fr#1\endcsname{\ensuremath{\mathfrak{#1}}}}
\def\ddefloop#1{\ifx\ddefloop#1\else\ddef{#1}\expandafter\ddefloop\fi}
\def\ddef#1{\expandafter\def\csname eul#1\endcsname{\ensuremath{\EuScript{#1}}}}
\def\ddefloop#1{\ifx\ddefloop#1\else\ddef{#1}\expandafter\ddefloop\fi}
\def\ddef#1{\expandafter\def\csname scr#1\endcsname{\ensuremath{\mathscr{#1}}}}
\def\ddefloop#1{\ifx\ddefloop#1\else\ddef{#1}\expandafter\ddefloop\fi}
\def\ddef#1{\expandafter\def\csname b#1\endcsname{\ensuremath{\mathbf{#1}}}}
\def\ddefloop#1{\ifx\ddefloop#1\else\ddef{#1}\expandafter\ddefloop\fi}
\def\ddef#1{\expandafter\def\csname bhat#1\endcsname{\ensuremath{\hat{\mathbf{#1}}}}}
\def\ddefloop#1{\ifx\ddefloop#1\else\ddef{#1}\expandafter\ddefloop\fi}
\def\ddef#1{\expandafter\def\csname btil#1\endcsname{\ensuremath{\tilde{\mathbf{#1}}}}}
\def\ddefloop#1{\ifx\ddefloop#1\else\ddef{#1}\expandafter\ddefloop\fi}
\def\ddef#1{\expandafter\def\csname bst#1\endcsname{\ensuremath{\mathbf{#1}^\star}}}
\def\ddefloop#1{\ifx\ddefloop#1\else\ddef{#1}\expandafter\ddefloop\fi}
\def\ddef#1{\expandafter\def\csname bst#1\endcsname{\ensuremath{\mathbf{#1}^\star}}}
\def\ddefloop#1{\ifx\ddefloop#1\else\ddef{#1}\expandafter\ddefloop\fi}
\def\ddef#1{\expandafter\def\csname bhat#1\endcsname{\ensuremath{\hat{\mathbf{#1}}}}}
\def\ddefloop#1{\ifx\ddefloop#1\else\ddef{#1}\expandafter\ddefloop\fi}
\def\ddef#1{\expandafter\def\csname b#1\endcsname{\ensuremath{\mathbf{#1}}}}
\def\ddefloop#1{\ifx\ddefloop#1\else\ddef{#1}\expandafter\ddefloop\fi}
\def\ddef#1{\expandafter\def\csname barb#1\endcsname{\ensuremath{\bar{\mathbf{#1}}}}}
\def\ddef#1{\expandafter\def\csname c#1\endcsname{\ensuremath{\mathcal{#1}}}}
\def\ddef#1{\expandafter\def\csname h#1\endcsname{\ensuremath{\widehat{#1}}}}
\def\ddef#1{\expandafter\def\csname hc#1\endcsname{\ensuremath{\widehat{\mathcal{#1}}}}}
\def\ddef#1{\expandafter\def\csname t#1\endcsname{\ensuremath{\widetilde{#1}}}}
\def\ddef#1{\expandafter\def\csname tc#1\endcsname{\ensuremath{\widetilde{\mathcal{#1}}}}}
 \newcommand{\iclrpar}[1]{\paragraph{#1}}
 \newcommand{\iclrpar}[1]{\textbf{#1}}
\DeclareMathSymbol{\shortminus}{\mathbin}{AMSa}{"39}
\Crefname{equation}{Eq.}{Eqs.}
\Crefname{assumption}{Assumption}{Assumptions}
\Crefname{condition}{Condition}{Conditions}
\Crefname{claim}{Claim}{Claims}
\newcommand{\N}{\mathbb{N}}
\newcommand{\R}{\mathbb{R}}
\numberwithin{equation}{section}
\setlist[itemize]{left=3em}
\setlist[enumerate]{left=3em}
\newcommand{\eye}{\mathbf{I}}
\newcommand{\rmd}{\mathrm{d}}
\newcommand{\bzero}{\ensuremath{\mathbf 0}}
\def\bB{\mathbf{B}}
\def\bC{\mathbf{C}}
\def\bu{\mathbf{u}}
\def\by{\mathbf{y}}
\def\bz{\mathbf{z}}
\def\bu{\mathbf{u}}
\def\bv{\mathbf{v}}
\def\bw{\mathbf{w}}
\def\bA{\mathbf{A}}
\def\bS{\mathbf{S}}
\def\bI{\mathbf{I}}
\def\bQ{\mathbf{Q}}
\DeclareFontFamily{U}{mathx}{\hyphenchar\font45}
\DeclareFontShape{U}{mathx}{m}{n}{
      <5> <6> <7> <8> <9> <10>
      <10.95> <12> <14.4> <17.28> <20.74> <24.88>
      mathx10
      }{}
\DeclareSymbolFont{mathx}{U}{mathx}{m}{n}
\DeclareMathAccent{\widecheck}{0}{mathx}{"71}
\DeclareMathAccent{\wideparen}{0}{mathx}{"75}
\newcommand{\ignore}[1]{}
\DeclareMathOperator{\BigOm}{\mathcal{O}}
\newcommand{\BigOh}[1]{\BigOm\left({#1}\right)}
\newcommand{\iidsim}{\overset{\mathrm{i.i.d}}{\sim}}
\newcommand{\op}{\mathrm{op}}
\newcommand{\fro}{\mathrm{F}}
	\theoremstyle{plain}
	\newtheorem{theorem}{Theorem}
	\newtheorem{lemma}{Lemma}[section]
	\newtheorem{corollary}{Corollary}[section]
	\newtheorem{proposition}[lemma]{Proposition}
	\theoremstyle{definition}
	\newtheorem{definition}{Definition}[section]
	\newtheorem{example}{Example}[section]
	\newtheorem{remark}{Remark}[section]
  \newtheorem{assumption}{Assumption}[section]
  \newtheorem{condition}{Condition}[section]
\newcommand{\neutralize}[1]{\expandafter\let\csname c@#1\endcsname\count@}
\newtheorem*{theorem*}{Theorem}
\newtheorem*{lemma*}{Lemma}
\newtheorem*{corollary*}{Corollary}
\newtheorem*{proposition*}{Proposition}
\newtheorem*{claim*}{Claim}
\newtheorem*{fact*}{Fact}
\newtheorem*{observation*}{Observation}
\newtheorem*{definition*}{Definition}
\newtheorem*{remark*}{Remark}
\newtheorem*{example*}{Example}
\newtheoremstyle{named}{}{}{\itshape}{}{\bfseries}{}{.5em}{\Cref{#3} {\normalfont (informal)} }
{}
\theoremstyle{named}
\theoremstyle{plain}
\DeclareMathAlphabet{\mathbfsf}{\encodingdefault}{\sfdefault}{bx}{n}
\let\Pr\relax
\DeclareMathOperator{\Pr}{\mathbb{P}}
\newcommand{\norm}[1]{\left\|#1\right\|}
\newcommand{\ceil}[1]{\lceil #1 \rceil}
\newcommand{\floor}[1]{\lfloor #1 \rfloor}
\newcommand{\Exp}{\mathbb{E}}
\newcommand{\trace}{\mathrm{tr}}
\newcommand{\dist}{\boldsymbol{\Delta}}
\newcommand{\rr}{\R}
\newcommand{\ee}{\mathbb{E}}
\newcommand{\pp}{\mathbb{P}}
\newcommand{\upp}[1]{^{(#1)}} %
\newcommand{\thetatil}{\tilde{\theta}}
\newcommand{\Sigmatil}{\widetilde{\Sigma}}
\DeclareMathOperator{\cov}{Cov}
\newcommand{\nexp}{N}
\newcommand{\lbc}{\ell_{\mathrm{BC}}}
\newcommand{\doff}{\cD_{\mathrm{off}}}
\newcommand{\diam}{\mathrm{diam}}
\newcommand{\btheta}{\bm{\theta}}
\newcommand{\btiltheta}{\tilde{\bm{\theta}}}
\newcommand{\bbartheta}{\bar{\bm{\theta}}}
\newcommand{\thetaema}{\btiltheta^{(\gamma)}}
\newcommand{\thetaavg}{\btiltheta^{\mathrm{avg}}}
\newcommand{\thetasuf}{\tilde{\btheta}_{\mathrm{suf},\alpha}}
\newcommand{\thetalj}{\bbartheta^{\textsc{lj}}}
\newcommand{\bkhat}{\widehat{\bK}}
\newcommand{\pihat}{\pi_{\widehat{\btheta}}}
\newcommand{\pist}{\pi_{\btheta^\star}}
\newcommand{\dimx}{\mathsf{d}_{\bx}}
\newcommand{\dimu}{\mathsf{d}_{\bu}}
\newcommand{\bmu}{\boldsymbol{\mu}}
\newcommand{\abs}[1]{\left|#1\right|}
\newcommand{\state}{\bx}
\newcommand{\action}{\bu}
\newcommand{\obs}{\bx}
\newcommand{\bSigma}{\bm{\Sigma}}
\newcommand{\bSigmatil}{\tilde{\bSigma}}
\newcommand{\bdel}{\bm{\delta}}
\newcommand{\bthetast}{\btheta^{\star}}
\newcommand{\bdelgam}{\tilde{\bdel}_{\gamma}}
\newcommand{\bthetgam}{\tilde{\btheta}_{\gamma}}
\newcommand{\bthetatil}{\widetilde{\btheta}}
\newcommand{\ldef}{:=}
\DeclarePairedDelimiter{\crl}{\{}{\}}
\DeclarePairedDelimiter{\prn}{(}{)}
\DeclarePairedDelimiter{\nrm}{\|}{\|}
\definecolor{C0}{RGB}{31,119,180}
\definecolor{C1}{RGB}{255,127,14}
\definecolor{C2}{RGB}{44,160,44}
\definecolor{C3}{RGB}{214,39,40}   %
\definecolor{C4}{RGB}{148,103,189} %
\definecolor{C5}{RGB}{140,86,75}   %
\definecolor{mplM}{RGB}{191,0,191}
    \let\Cref\crtCref
    \let\cref\crtcref
\newcommand{\creftitle}[1]{\crtcref{#1}}
\renewcommand{\tiny}{\fontsize{7pt}{6pt}\selectfont}
\title{Butterfly Effects of SGD Noise: \\ Error Amplification in Behavior Cloning and Autoregression}
\author{
 \vspace{5mm}
Adam Block\textsuperscript{1*} \quad
Dylan J. Foster\textsuperscript{2} \quad
Akshay Krishnamurthy\textsuperscript{2} \quad
Max Simchowitz\textsuperscript{1} \quad
Cyril Zhang\textsuperscript{2}  \\ \vspace{3mm}
\smaller{
\textsuperscript{1}MIT \qquad \textsuperscript{2}Microsoft Research NYC
} \\
\smaller{\texttt{\{ablock, msimchow\}@mit.edu,}}\\
\smaller{\texttt{\{dylanfoster, akshaykr, cyrilzhang\}@microsoft.com }}
}
\date{}
\begin{document}

\maketitle

\begin{abstract}

\renewcommand\thefootnote{}
\footnotetext{\textsuperscript{*}Work was done during an internship at Microsoft Research NYC.}
\renewcommand\thefootnote{\arabic{footnote}}

 This work studies training instabilities of behavior cloning with deep neural networks. We observe that minibatch SGD updates to the policy network during training result in sharp oscillations in long-horizon rewards, despite negligibly affecting the behavior cloning loss. We empirically disentangle the statistical and computational causes of these oscillations, and find them to stem from the chaotic propagation of minibatch SGD noise through unstable closed-loop dynamics.  While SGD noise is benign in the single-step action prediction objective, it results in catastrophic error accumulation over long horizons, an effect we term \emph{gradient variance amplification} (GVA).  We show that many standard mitigation techniques do not alleviate GVA, but find an exponential moving average (EMA) of iterates to be surprisingly effective at doing so. We illustrate the generality of this phenomenon by showing the existence of GVA and its amelioration by EMA in both continuous control and autoregressive language generation.  Finally, we provide theoretical vignettes that highlight the benefits of EMA in alleviating GVA and shed light on the extent to which classical convex models can help in understanding the benefits of iterate averaging in deep learning.

\end{abstract}
\section{Introduction}

Deep neural networks are increasingly used in machine learning tasks that contain \emph{feedback loops} as a defining characteristic: outputs of language models depend on previously predicted tokens \citep{vaswani2017attention}, recommendation systems influence the users to whom they give suggestions \citep{krauth2020offline,dean2022preference}, and robotic policies take actions in reactive control environments \citep{ross2010efficient,laskey2017dart}. Because these tasks are so complex, it is standard practice to optimize surrogate objectives, such as next-token prediction, that typically ignore feedback loops altogether \citep{pomerleau1988alvinn,vaswani2017attention,florence2022implicit}.

When training deep models by gradient updates on the surrogate objective, surrogate performance often improves more or less monotonically as training progresses. At the same time, successive iterates  can exhibit wild variations in their performance on the task of interest. Because it is often impractical to evaluate the desired performance metric at multiple checkpoints, these oscillations imply that we have high risk of selecting and deploying a poor policy.  Thus, in order to determine best practices, we must first understand whether \emph{better training} or \emph{better data} will fix these instabilities. 
This leads us to ask:
\begin{quote}
    \emph{What causes instabilities in learning systems with feedback loops? To what extent can they be mitigated by algorithmic interventions alone, without resorting to collecting additional data?}
\end{quote}

\begin{figure}
    \centering
    \begin{subfigure}{0.36\textwidth}
    \raisebox{1mm}{
    \includegraphics[width=\textwidth]{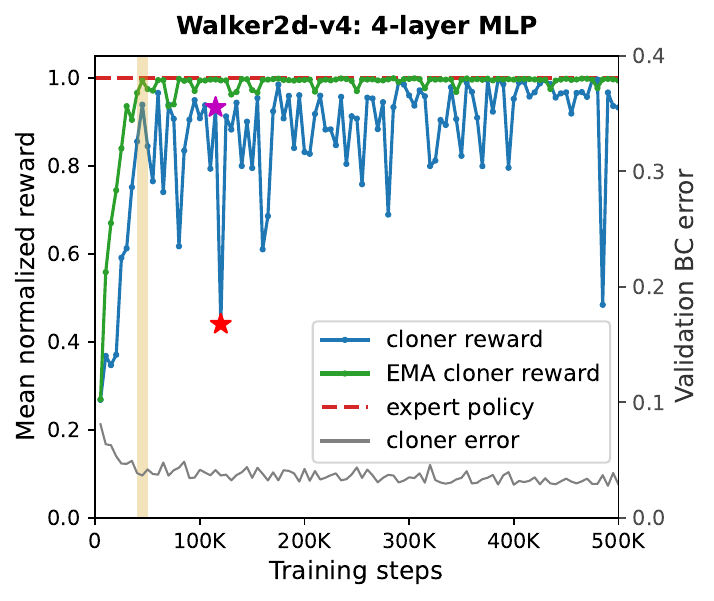}
    }
    \end{subfigure}
    \hspace{-2mm}
    \begin{subfigure}{0.3\textwidth}
    \raisebox{1mm}{
    \includegraphics[width=\textwidth]{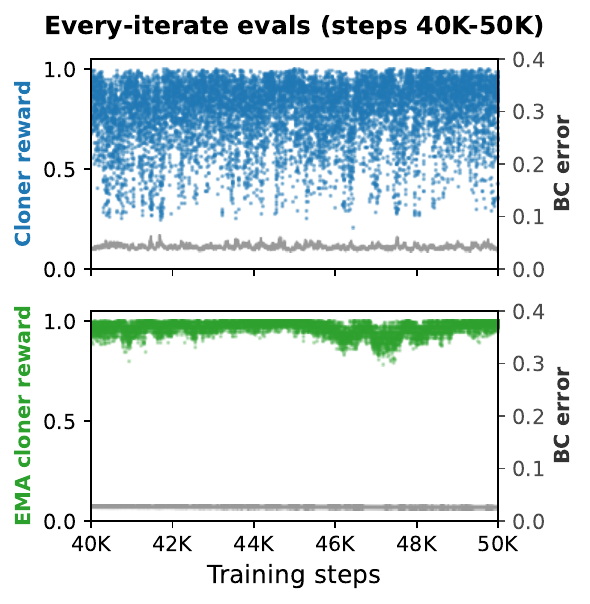}
    }
    \end{subfigure}
    \hspace{-2mm}
    \begin{subfigure}{0.34\textwidth}
    \includegraphics[width=\textwidth]{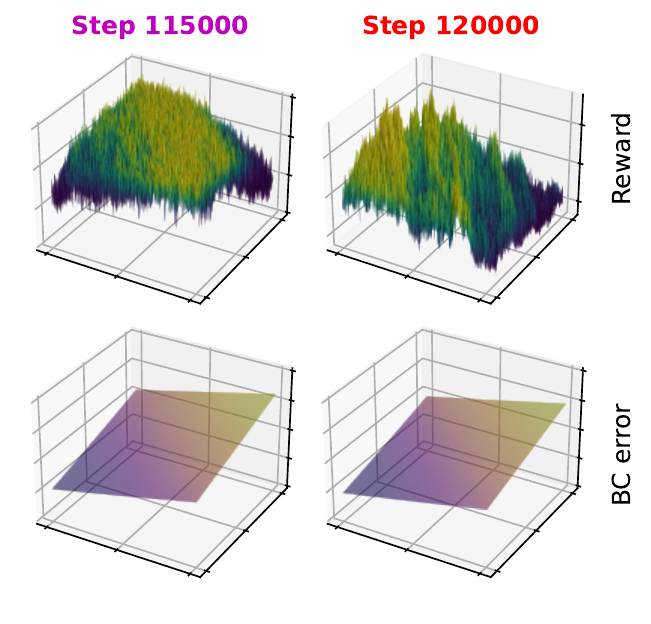}
    \end{subfigure}
    \caption{Typical reward instabilities over long-horizon ($H = 1000$) rollouts of neural behavior cloners for the \texttt{Walker2d-v4} MuJoCo locomotion task. \emph{Left:} Rollout rewards (\textcolor{C0}{blue training curves}) oscillate dramatically over the course of training (evaluated every $5000$ iterations), while BC loss is stable. \emph{Center:} Zoomed-in view of the highlighted region in (left). Large reward fluctuations are evident even between consecutive gradient iterates. \emph{Right:} Exhaustive evaluation of small neighborhoods (in stochastic gradient directions) around iterates \textcolor{mplM}{115K} and \textcolor{red}{120K}, revealing a fractal reward landscape $\btheta \mapsto J_H(\pi_{\btheta})$; this jaggedness is invisible in the 1-step behavior cloning objective $\lbc(\pi_{\btheta})$. Iterate averaging (EMA) drastically mitigates these effects (\textcolor{C2}{green training curves}). Details are provided in \cref{subsubsec:appendix-freqeval}.}
    \label{fig:intro-results}
\end{figure}

We explore this question in the context of behavior cloning (BC), a technique for training a policy to optimize a multi-step objective in a purely offline manner. This is achieved by introducing a surrogate loss function $\lbc$ (\emph{behavior cloning loss}) that measures the distance between actions generated by some \emph{expert policy} $\pist$ and those taken by the learner's policy, and then minimizing $\lbc$ over an offline dataset of expert trajectories.
BC is sufficiently broad as to capture important tasks ranging from robotics and autonomous driving \citep{pomerleau1988alvinn,codevilla2018end,chi2023diffusion} to autoregressive language generation \citep{chang2023learning}, and is popular in practice due to its simplicity and purely offline nature.

Our starting point is to observe that behavior cloning with deep neural networks exhibits \textbf{training instabilities} in which the multi-step objective ($J_H$), or \emph{rollout reward}, of nearby checkpoints oscillates wildly during training, despite a well-behaved validation loss for $\lbc$, even at the single iterate frequency; \Cref{fig:intro-results} exhibits this phenomenon for a sample training curve of a behavior cloning policy in the \texttt{Walker2d-v4} MuJoCo locomotion task \citep{towers_gymnasium_2023,todorov2012mujoco}. 
This oscillatory behavior is clearly undesirable; we cannot differentiate between low- and high-quality iterates based on (validation loss for) $\lbc$, and thus cannot reliably select a high-quality policy.

With regard to the final performance of BC, it is well understood that scarce or low-quality data can lead to statistical challenges and consequent performance degradation of imitator policies; unsurprisingly, better data often improves the quality of a learned policy.  Unfortunately, existing approaches to obtaining better data require either interactive access to the demonstrating expert \citep{ross2010efficient,laskey2017dart} or additional side information \citep{pfrommer2022tasil,block2023imitating}; these interventions may be costly or impossible in many applications. Thus, in this work we treat the data generating process as fixed and aim to investigate whether we can mitigate oscillations and improve the performance of BC solely through the application of better algorithmic choices.

\subsection{Contributions}
In this paper, we aim to diagnose and ameliorate instabilities in behavior cloning that arise from training on the surrogate cost alone in the purely offline setting. Our findings are as follows.

\textbf{Diagnosis of rollout oscillations: gradient variance amplification.}
In \Cref{sec:oscillations}, we conduct an extensive empirical study (278 distinct interventions) of BC in continuous control tasks and investigate the effects that architecture, regularization, and optimization interventions have on training instability.  We identify the presence of training oscillations and attribute them to \emph{gradient variance amplification} (GVA): the propagation of minibatch SGD noise through closed-loop dynamics, leading to catastrophic error amplification resembling \textbf{butterfly effects} in chaotic systems.
We ablate away much of the statistical difficulty, so that the presence of oscillations suggests that GVA is an \emph{algorithmic} rather than \emph{statistical} pathology.

\textbf{Mitigating GVA: stabilizers for unstable optimizers.}
In \Cref{sec:stabilizers}, we investigate mitigations for GVA. Because GVA is caused by variance in the stochastic gradients, it can be ameliorated with variance reduction.  Indeed, we observe (\cref{sec:gva_interventions}) that i) aggressively decaying the learning rate, and ii) greatly increasing the batch size through gradient accumulation, both have positive effects on the stability of training. Unfortunately, both of these interventions come at a great increase in compute cost. As such, our most significant finding (\cref{sec:ema_good}) is that \emph{iterate averaging} by taking an Exponential Moving Average (EMA) of the optimization trajectory \citep{polyak1992acceleration,ruppert1988efficient}, stabilizes training and mitigates GVA across a wide range of architectures and tasks, with essentially no downsides. While iterate averaging is popular in many  deep learning research communities, this paper exposes iterate averaging as an \emph{essential} design consideration when training any deep model in the presence of feedback loops.

\textbf{A preliminary study of GVA in language generation.} In \Cref{sec:nlp}, we broaden our focus by considering autoregressive sequence models.  Our findings suggest that \textbf{unstable optimizers, when stabilized with iterate averaging to mitigate GVA, do not need full learning rate decay}, entailing potential computational and statistical benefits for training language models.  For this reason, we suggest that EMA and related filters be designated as \emph{stabilizers} in their own right and incorporated into deep learning pipelines in the same vein as modern optimizers and schedulers.

  \textbf{The applicability of convex theory.} In \Cref{sec:theory}, we complement our empirical results with theoretical vignettes.  While the benefits of large learning rates cannot be explained in a convex setting, we demonstrate that---conditional on using theoretically suboptimal learning rates---stochastic convex optimization provides useful intuition for the causes and mitigations of GVA in deep learning. 
  With our empirical results, these findings add to a line of work on surprising near-convex behavior in deep learning~\citep{sandler2023training,frankle2020linear,fang2022convex,schaul2013no}.

\subsection{Related work}

Understanding and mitigating the effects of error amplification in behavior cloning has been the subject of much empirical work \citep{ross2010efficient,laskey2017dart}, but most approaches use potentially impractical online query access to the expert policy; instead, we focus on a purely offline setting. 

Complicated value function landscapes and their effect on training have been investigated in the context of planning in RL, with \citet{dong2020expressivity} investigating natural examples of fractal reward functions, \citet{wang2021instabilities} examining the instabilities arising from poor representations, and \citet{emmons2021rvs,chang2023learning} observing the fact that $\lbc$ is a poor proxy for $J_H$. To the best of our knowledge, there has not been a systematic study of training instability in the sense of rollout reward oscillation of nearby checkpoints. 

In the context of stochastic optimization and optimization for deep learning, many previous works have attempted to reduce variance in theory \citep{polyak1992acceleration,ruppert1988efficient} and practice \citep{izmailov2018averaging,busbridge2023scale,kaddour2022stop,kaddour2023no}.  Of particular note is \citet{sandler2023training}, which demonstrates (empirically and in a toy theoretical setting) a form of equivalence between learning rate decay and iterate averaging.  Our focus is not on variance reduction \emph{per se}, but rather on the propagation of 
,variance through unstable feedback loops.  We expand on the relationship between our work and \citet{sandler2023training} and discuss other related work in \Cref{app:related_work}.

\section{Preliminaries}\label{sec:prelims}
\iftoggle{arxiv}{We now present necessary background and preliminaries that will be used throughout the paper.}{}

\paragraph{MDP formalism.} We let $\cM = (\cS, \cA, P, r, H, \nu)$ denote a finite-horizon Markov decision process (MDP), where $\cS$ is an abstract state space, $\cA$ is an abstract action space, $P: \cS \times \cA \to \dist(\cS)$ is a Markov transition operator.  We denote by $r: \cS \times \cA \to [0,1]$ a reward function and $H \in \bbN$ is the length of the horizon.  Because we focus on continuous control tasks, we follow the notational conventions of control theory, denoting states by $\bx$ and actions by $\bu$. We let $\nu \in \dist(\cS)$ denote the initial distribution such that a trajectory from $\cM$ consists of  $\state_1 \sim \nu$ and $\state_{h+1} \sim P(\cdot \mid \state_h, \action_h)$ for all $h$.

The learner has access to a class of policies $\pi: \cS \times \Theta \to \dist(\cA)$, where $\Theta$ is the \emph{parameter} space and $\pi_{\btheta}:\cS\to\Delta(\cA)$ is the policy induced by parameter $\btheta\in\Theta$.
Given a policy $\pi_{\btheta}$, we denote its expected cumulative reward by $J_H(\pi_{\btheta}) = \ee[ \sum_{h = 1}^H r(\state_h, \action_h)]$ where $\action_h \sim \pi_{\btheta}(\cdot | \obs_h)$ and the expectation is with respect to both the transition dynamics of $\cM$ and the possible stochasticity of the policy. Our experiments focus on MDPs whose transition operators $P$ are deterministic, i.e., there exists a function $f: \cS \times \cA \to \cS$ such that $\state_{h+1} = f(\state_h, \action_h)$ for all $h$.  In this case the only stochasticity of the system comes from the sampling of the initial state $\state_1 \sim \nu$ (and possibly the policy).\looseness=-1

\paragraph{Imitation learning and behavior cloning.} In imitation learning, we  are given an offline data set of $\nexp$ trajectories \iftoggle{arxiv}{$\doff = \crl[\big]{ \prn[\big]{ \state_h\upp{i}, \action_h\upp{i} }_{1 \leq t \leq H} \mid 1 \leq i \leq \nexp }$}{$\doff = \{ ( \state_h\upp{i}, \action_h\upp{i} )_{1 \leq h \leq H} \mid 1 \leq i \leq \nexp \}$} generated by an expert policy $\pist$ interacting with the MDP $\cM$.  In this work, we always consider \emph{deterministic policies}, i.e., where for all $\state$, $\pi_{\btheta}(\state)$ has support on a single action; in particular this holds for the expert $\pist$.  The goal of the learner is to produce a policy $\pihat$ that maximizes the expected cumulative reward $J_H(\pihat)$ over an episode.  We focus on the popular \emph{behavior cloning} (BC) framework, where we fix a loss function $\lbc: \cA \times \cA \to \bbR$ that measures the distance from the actions produced by $\pist$, and learn $\pihat$ by attempting to minimize the empirical risk of $\lbc$ over $\doff$; we abuse notation by denoting $\lbc(\pi_{\btheta}) \ldef \ee_{\doff}[\lbc(\pi_{\btheta}(\bx), \bu)]$  The basic premise behind behavior cloning is that $\lbc$ should be chosen such that if $\lbc(\pihat) \ll 1$ then $J_H(\pihat) \approx J_H(\pist)$; that is, imitation of the expert is a surrogate for large cumulative reward. 
In line with common practice in BC \citep{janner2021offline,shafiullah2022behavior,chi2023diffusion}, the imitator policies in our experiments augment the state with the previous action, i.e., $\pi_{\btheta}: \cS \times \cA \to \cA$, which can be integrated into the previous formalism by expanding the state space.  For the special case of the first state $\state_1$, we always let $\action_0 = \bzero$.

\paragraph{Notation.}  Throughout the paper, we denote vectors by bold lower case letters and matrices by bold upper case letters.\footnote{In particular, we denote states by $\state$ and actions by $\action$ in order to emphasize that, in our experiments, they are vectors in Euclidean space.}  We reserve $\btheta$ for a parameter of our policy and $J_H$ for the cumulative reward over a trajectory, omitting $H$ when it is clear from context.  For conciseness, we often refer to $J_H$ as the \emph{reward}; the per-step reward function $r$ makes no appearance in the rest of the paper. Given a set $\cU$, we let $\dist(\cU)$ denote the class of probability distributions on $\cU$.

\section{Diagnosis of rollout oscillations: gradient variance amplification}\label{sec:oscillations}

\iftoggle{arxiv}{
In the introduction, we described an instance (\cref{fig:intro-results}) of training instability in behavior cloning. In this section, we investigate this phenomenon in depth. In \cref{subsec:gva-diagnosis}, we observe that for a suite of continuous control tasks, the same phenomenon occurs: training oscillations appear in the reward $J_H(\pihat)$  despite well-behaved training as measured by the BC loss $\lbc(\pihat)$. Then, in \cref{sec:gva_interventions}, we investigate the cause of this phenomenon, and term it \emph{gradient variance amplification} (GVA). We conclude in \Cref{sec:toy1} with a simple theoretical model for GVA, highlighting the potential for GVA to occur in an otherwise well-behaved system. Additional experimental details and results are deferred to \Cref{app:experiments}.  \looseness=-1}{}

\subsection{Instabilities in behavior cloning of MuJoCo tasks}
\label{subsec:gva-diagnosis}

\iftoggle{arxiv}{We begin by explaining and expand upon the findings in \Cref{fig:intro-results}.}{}

\paragraph{Experimental setup.} We investigate instabilities in behavior cloning for the \{\texttt{Walker2d}, \texttt{Hopper}, \texttt{HalfCheetah}, \texttt{Humanoid}, \texttt{Ant}\}\texttt{-v4} environments from the OpenAI Gymnasium \citep{towers_gymnasium_2023}, all rendered in MuJoCo \citep{todorov2012mujoco}.  We focus on \texttt{Walker2d-v4} for the discussion that follows, and defer detailed discussion of further environments (which exhibit similar behavior) to \Cref{app:experiments}.  Our expert is a multilayer perceptron (MLP) trained with Soft Actor Critic (SAC) \citep{haarnoja2018soft} for 3M steps with \texttt{stable-baselines3} \citep{stable_baselines3}, with out-of-the-box hyperparameters.\footnote{By default, the Stable-Baselines3 SAC agent is stochastic, but we enforce determinism by selecting the mean action of the resulting policy. This results in negligible degradations to the rewards; see \Cref{fig:appendix-experts}.}  The \emph{default} imitator is a 4 layer MLP; details are in \Cref{app:experiments}.  We examine several widths and depths, as well as Transformer \citep{vaswani2017attention} imitators.

Our first suite of experiments aims to isolate instability from \emph{statistical difficulties}. We set up the experiments to make the behavior cloning problem as easy as possible. First, we focus on the ``large-data'' regime $\nexp = H = 1000$, in which overfitting with respect to the BC loss $\lbc(\pihat)$ is not a problem (see \Cref{fig:intro-results}), and thus poor rollout performance for $J_H(\pihat)$ cannot be blamed on insufficient data; this removes a typical source of statistical difficulty faced in applying behavior cloning to domains such as robotics \citep{chi2023diffusion,pfrommer2022tasil,ross2010efficient,laskey2017dart}. Beyond focusing on the large-data regime, (i) we consider only deterministic dynamics and deterministic experts, and (ii) we include within our default model the same class of MLPs that parameterize the expert policies, ensuring that expressivity is not an issue. 
As such, we have placed ourselves in perhaps the easiest possible setting for behavior cloning.

In \Cref{fig:intro-results} (Left), we compare the evolution of the BC loss $\lbc(\pihat)$ (on a validation set) and reward $J_H(\pihat)$ for imitator policies in the \texttt{Walker2d-v4} MuJoCo locomotion task. In this figure, we observe extreme oscillatory behavior in $J_H(\pihat)$, juxtaposed with smoothly decaying $\lbc(\pihat)$.  In \Cref{fig:intro-results} (Middle), we zoom in on the training trajectory between iterates 40K and 50K and observe that the same instability persists even at the \emph{every-iterate} level. Toward identifying what causes these instabilities, \Cref{fig:intro-results} (Right) displays an experiment in which we independently sample two stochastic gradients of the training loss at a fixed checkpoint with good rollout reward.  Policy weights are then perturbed by small steps in each of the two directions, and we evaluate the resulting reward $J_H(\pihat)$ over 20 rollouts, along with the BC loss $\lbc(\pihat)$ on a held-out validation set.  We see that nearby models vary erratically in terms of rollout performance, but vary smoothly in validation BC loss.  These findings are reproduced consistently across the other environments and architectures in  \cref{app:experiments}; thus, we conclude:
\begin{enumerate}[leftmargin=2.5em]
    \item[(R1)] \textbf{The reward landscape is highly sensitive to small changes in policy parameters}: small perturbations in model weights induce \emph{butterfly effects} in the reward $J_H(\pihat)$.  In contrast, in the same regions, the \emph{BC loss} landscape $\btheta \mapsto \lbc(\pi_{\btheta})$ is well-behaved (nearly linear locally). %
    \end{enumerate}

\subsection{Instability is caused by gradient variance amplification}\label{sec:gva_interventions}
We now present compelling evidence that \emph{variance in stochastic gradients} during training is responsible for training instability, because \textbf{gradient variance is amplified through the sensitivity of the rollout rewards to fluctuations in network parameters}. In \Cref{fig:gva}, 
we visualize evolution of both $\lbc(\pihat)$ and $J_H(\pihat)$ over training for a variety of potential algorithmic interventions.  We find that neither changing the model architecture and scale (1\textsuperscript{st} row) nor standard regularization techniques (2\textsuperscript{nd} row) ameliorate the training instabilities observed. We do see, however, that aggressively decaying the learning rate and increasing the batch size (3\textsuperscript{rd} row) significantly reduces oscillations (at least when measuring mean rewards),  at the expense of substantially slowing down training.  Thus, we conclude that fluctuations from stochasticity in the gradients are to blame for oscillations in rollout rewards, and term this phenomenon \emph{gradient variance amplification} (GVA). To summarize:

\begin{figure}[ht]
    \centering
    \includegraphics[width=\linewidth]{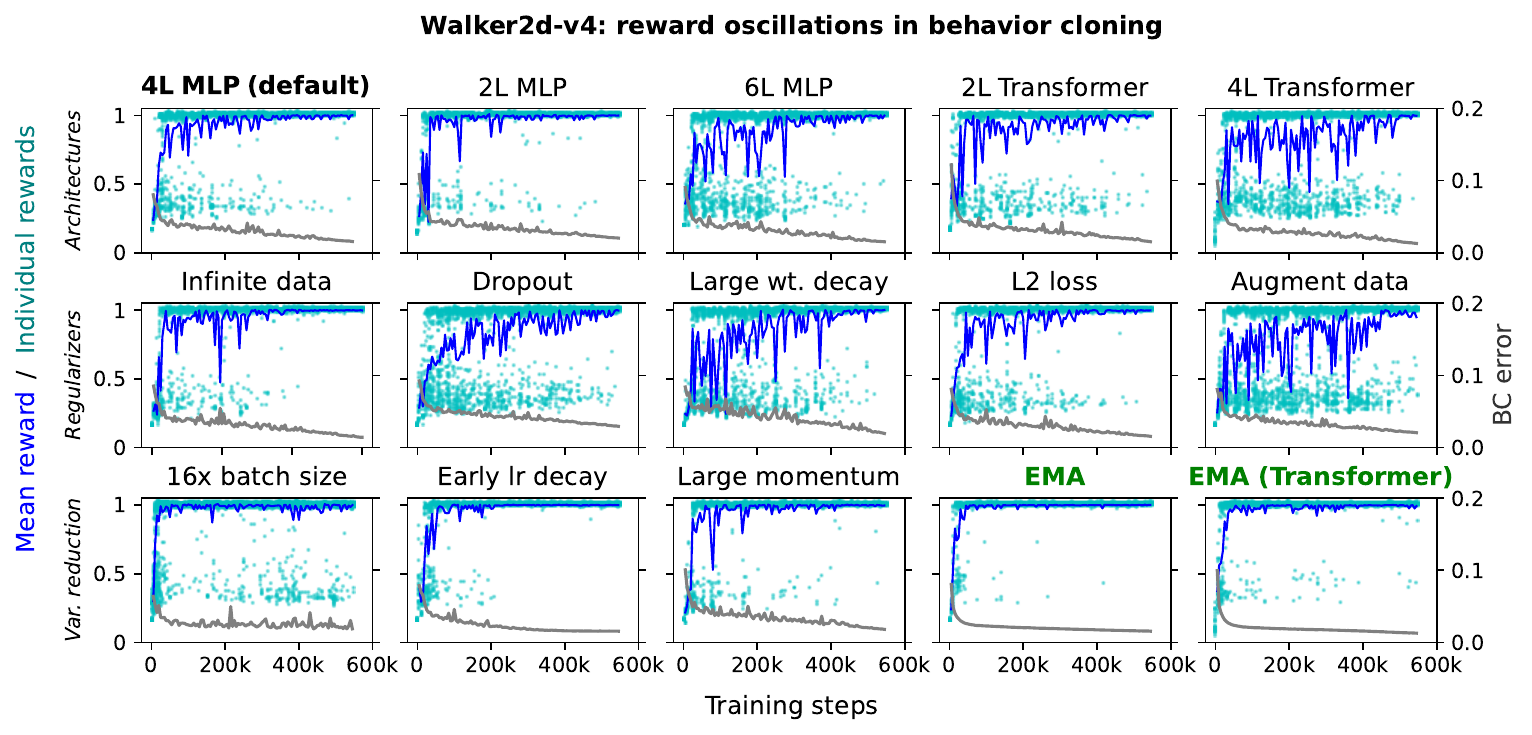}
    \caption{Highlights from a large suite of experiments, suggesting an algorithmic (rather than statistical) origin of reward oscillations. All plots use the 4-layer MLP architecture unless otherwise specified. \textcolor{blue}{Blue curves} show mean rewards over 20 initial conditions, while \textcolor{teal}{teal dots} show disaggregated per-episode rewards (such that each point represents the rollout reward of a fixed initial condition of the policy at the current iterate). These oscillations persist across dataset sizes, architectures, model scales, and choices of regularizers, and diminish toward the end of training as the learning rate decays to 0. They are most strongly mitigated by \textbf{variance reduction strategies}. Here, we opt for direct visualizations, providing a qualitative demonstration of GVA and its mitigations. We accompany these with quantitative comparisons in \cref{app:gva-quantitative}. }
    \label{fig:gva}
  \end{figure}

\begin{enumerate}[leftmargin=2.5em]
    \item[(R2)] \textbf{GVA arises from algorithmic suboptimality rather than an information-theoretic limit.} Even with ``infinite'' training data (i.e., fresh trajectories with i.i.d. initial conditions at each training step), rollout oscillations persist.
\end{enumerate}

\begin{enumerate}[leftmargin=2.5em]
    \item[(R3)] \textbf{Training oscillations are \emph{not} mitigated by many standard approaches to regularization}, including architectural interventions and increased regularization.  On the other hand, \textbf{oscillations are ameliorated by variance reduction techniques}, such as large batch sizes, learning rate decay, and iterate averaging.
    \end{enumerate}

\Cref{app:experiments} shows that (R2) and (R3) remain true across environments and model architectures.
In addition, we find that training instability is not the result of inadequate network architecture; we observe oscillations across model scales, and for both MLP and Transformer\iftoggle{arxiv}{\footnote{We observe that the successful Transformer imitators learn \emph{non-Markovian} policies; unlike the MLPs, the Transformers observe a history (of length 32 in our experiments) of many past states and actions; in \cref{app:misc-visualizations} we observe that the Transformers are attending to previous states and not just the most recent one; this corroborates findings in \citet{janner2021offline,shafiullah2022behavior}.}}{} architectures.
    
While \Cref{fig:gva} shows that it is possible to quell GVA using small learning rates or large batch sizes, this may not always be practical, as both interventions can incur steep computational costs.\footnote{As another unsatisfactory compromise, we also find that shallower models are less susceptible to GVA.} Even worse, the success of continuous optimization in deep learning depends on non-convex feature learning mechanisms \citep{chizat2019lazy}, and too small a learning rate or too large a batch size can have deleterious effects on generalization.\footnote{We refer to some theoretical and empirical accounts in \cref{app:related_work}.}
Thus, it is vital to seek interventions that are holistically compatible with existing deep learning pipelines. Among these, \Cref{fig:gva} highlights that a large momentum coefficient is mildly helpful, but taking an exponential moving average (EMA) of iterates~\citep{polyak1992acceleration} is \emph{extremely} effective. This motivates us to take a closer look at the latter in \cref{sec:stabilizers} through another suite of experiments.%

\subsection{Understanding GVA: mismatch between BC loss and rollout reward}\label{sec:toy1}

The disparity between behavior cloning loss $\lbc(\pihat)$ and rollout reward $J_H(\pihat)$  has long been appreciated in the imitation learning literature, and is understood to be caused by \emph{error amplification}, the process by which mildly erroneous predictions, when fed repeatedly through feedback loops, result in highly suboptimal performance \citep{chen2021black,wang2020statistical}.  More precisely, for given $\lbc$ and $J_H$ as well as a policy $\pist$ and $\delta > 0$, we define the \emph{error amplification constant} at scale $\delta$ to be the maximal value (with respect fo $\btheta$) for $J_H(\pist) - J_H(\pi_{\btheta})$ such that $\lbc(\pi_{\btheta}) - \lbc(\pist)< \delta$.   The following proposition provides a simple theoretical illustration for how small fluctuations in BC loss can be drastically amplified by feedback between imperfectly-imitated policies and system dynamics.

\begin{proposition}[Example of exponential error amplification]\label{prop:stab_informal}
  Let $\cB_{\delta}$ denote the set of $\delta$-Lipschitz functions $\Delta: \cS \to \cA$ with $\Delta(\mathbf 0) = \mathbf 0$. For any $\delta > 0$, there exists a deterministic MDP with horizon $H$ and an expert policy $\pist$ such that the dynamics are Lipschitz in both state and action and $\pist$ is Lipschitz in the state, and such that
    \begin{align}\label{eq:stab_informal}
        \sup_{\Delta \in \cB_{\delta}}\crl*{J_H(\pist) -  J_H(\pist + \Delta)} \geq \Omega(H) \cdot\left(  e^{\Omega(H \delta)}-1\right),%
    \end{align}
    yet $\sup_{\Delta \in \cB_\delta} \lbc(\pist + \Delta) \leq \BigOh{H\cdot\delta^2}$, where $\lbc$ is the $\ell_2$ loss. Thus, the error amplification constant is \emph{exponential} in the time horizon.
\end{proposition}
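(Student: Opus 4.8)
The plan is to exhibit a one-dimensional linear--quadratic system in which a vanishingly small, Lipschitz perturbation of the expert's feedback gain tips a \emph{marginally stable} closed loop into an \emph{unstable} one, so that the perturbed policy's own rollout diverges geometrically while its behavior-cloning error --- measured only on the expert's bounded trajectory --- stays quadratically small. Concretely, I take $\cS = \cA = \R$ with linear dynamics $\state_{h+1} = a\state_h + b\action_h$ (globally Lipschitz in $(\state,\action)$ with the fixed constants $|a|,|b|$, independent of $\delta$ and $H$), a linear expert $\pist(\state) = K\state$ (Lipschitz with constant $|K|$), and per-step reward equal to the negative quadratic cost $r(\state,\action) = -\norm{\state}^2$, relaxing the $[0,1]$ convention of \cref{sec:prelims} for this control-theoretic illustration. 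Choosing $a=2,\ b=1,\ K=-1$ makes the expert closed loop $\state_{h+1} = (a+bK)\state_h = \state_h$ marginally stable, so the expert trajectory from $\state_1\sim\nu$ (with $\E\norm{\state_1}^2 = \Theta(1)$) stays fixed at $\state_1$; the construction is allowed to depend on $\delta$, since the proposition only asserts existence for each fixed $\delta$.

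The observation driving the separation is that $\lbc$ is evaluated on the \emph{expert's} state distribution --- here the offline dataset consists solely of the constant trajectory at $\state_1$, so $\lbc$ never observes the states the perturbed policy actually visits --- whereas $J_H$ is evaluated along the \emph{perturbed policy's own} divergent trajectory. For the reward lower bound I would use the single adversarial witness $\Delta(\state) = \delta\state$, which lies in $\cB_\delta$ (it is $\delta$-Lipschitz and vanishes at $\mathbf 0$) and shifts the closed-loop rate to $a + b(K+\delta) = 1+\delta > 1$. Hence $\state_h^{\mathrm{pert}} = (1+\delta)^{h-1}\state_1$, and the cumulative-cost gap telescopes into a geometric sum
\begin{align}
J_H(\pist) - J_H(\pist+\Delta) = \norm{\state_1}^2\sum_{h=1}^{H}\big((1+\delta)^{2(h-1)}-1\big),
\end{align}
which an elementary estimate lower-bounds by $\Omega(H)\,(e^{\Omega(H\delta)}-1)$; the two sides match (up to constants) as $\Theta(H^2\delta)$ when $H\delta = \BigOh{1}$ and are both governed by the rate $e^{\Theta(H\delta)}$ when $H\delta$ is large.

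For the loss upper bound I would use that every $\Delta\in\cB_\delta$ satisfies $\norm{\Delta(\state)}\le\delta\norm{\state}$ pointwise, so on the expert states (all equal to $\state_1$) the squared $\ell_2$ error is at most $\delta^2\norm{\state_1}^2$ per step, giving $\sup_{\Delta\in\cB_\delta}\lbc(\pist+\Delta)\le H\delta^2\norm{\state_1}^2 = \BigOh{H\delta^2}$ uniformly. The claim then follows by definition, since the witness $\Delta(\state)=\delta\state$ simultaneously realizes the exponential reward gap and is covered by the $\BigOh{H\delta^2}$ loss bound. The main obstacle --- and the conceptual heart of the argument --- is arranging the delicate balance: the perturbation must be large enough, in the feedback-gain direction, to cross the stability boundary $\rho = a+bK = 1$ and trigger exponential divergence, yet small enough on the expert's visited states that the quadratic $\ell_2$ loss stays $\BigOh{H\delta^2}$. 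Marginal stability of the expert ($\rho=1$ exactly) is precisely what lets an $\BigOh{\delta}$ gain perturbation destabilize; everything else reduces to the elementary geometric-sum estimate, where the only care needed is confirming that the lower bound retains the claimed $e^{\Omega(H\delta)}$ form across both the small- and large-$H\delta$ regimes.
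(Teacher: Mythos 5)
Your proposal is correct and follows essentially the same route as the paper's proof of the formal version (\cref{prop:stab}): a linear system with a linear expert, a linear destabilizing witness $\Delta(\state)=\delta\state$ that pushes the closed-loop rate above $1$, a geometric-sum lower bound on the reward gap, and the observation that $\lbc$ is evaluated only on the expert's bounded trajectory. The only (cosmetic) differences are that you work in one dimension with an exactly marginally stable expert closed loop ($a+bK=1$), whereas the paper uses a $d$-dimensional strictly stable closed loop $(1-\epsilon)\eye$ so that the perturbation must first overcome the stability margin.
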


\iclrpar{Working model for GVA.} 
\Cref{prop:stab_informal} shows that even when $\lbc$ is uniformly small in a neighborhood around $\pist$, the rollout loss can be \emph{exponentially large} in the same neighborhood. At the same time,
there are good subsets of parameter space that do not experience this worst-case error amplification in our construction\iftoggle{arxiv}{ (in fact, by inspecting the proof, one can observe that they experience negative error amplification!)}{}. We therefore hypothesize that, when stochastic optimization converges to a small neighborhood around zero-BC error models, it bounces between low-BC error models that experience large error amplification, and those that do not. To recapitulate: \emph{GVA is the phenomenon in which gradient stochasticity leads to optimization trajectories repeatedly visiting regions of parameter space with catastrophic error amplification.}  Because our MuJoCo environments involve nonlinear contact dynamics (while the example in~\Cref{prop:stab_informal} is linear), oscillations in \Cref{fig:intro-results} are even more chaotic than this example may suggest. We elaborate on this point further by studying the advantages of EMA on a discontinuous ``cliff loss'' problem in \Cref{sec:theory}.

\section{Mitigating GVA: stabilizers for unstable optimizers}
\label{sec:stabilizers}

In \cref{sec:gva_interventions}, we isolated GVA as the primary cause of observed instabilities in BC (cf. \cref{fig:intro-results}) and identified iterate averaging with EMA~\citep{polyak1992acceleration} as a promising remedy. In this section, we conduct an in-depth investigation of EMA as a mitigation. We start in continuous control (\cref{sec:ema_good}), and find EMA works almost unreasonably well at reducing GVA in the experimental testbed described in the prequel.
Next, moving beyond continuous control (\Cref{sec:nlp}), we observe analogous effects in autoregressive language generation.  In both settings, we find iterate averaging works so well as to \textbf{eliminate the need for full learning rate decay}; this leads us to recommend a conceptual reframing of EMA as a \emph{stabilizer} for training neural networks, akin to (and interacting with) conventional optimizers and schedulers.  We conclude (\cref{sec:theory}) by exploring the extent to which intuition on benefits of iterate averaging from the theory of stochastic convex optimization applies in our empirical settings.

 \subsection{The outsized benefit of iterate averaging}\label{sec:ema_good}

\begin{figure}
    \centering

    \includegraphics[width=\linewidth]{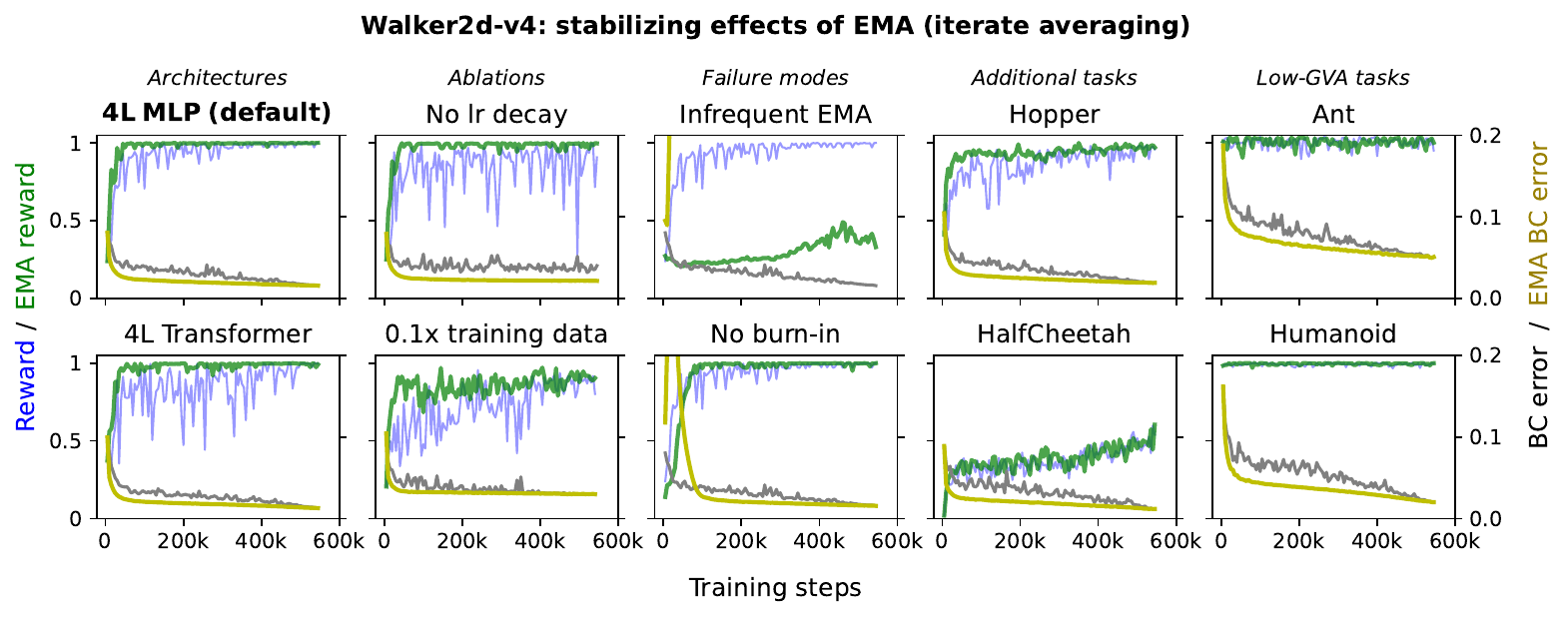}
    \caption{Iterate averaging significantly mitigates GVA-induced reward oscillations, \textbf{without needing to change the learning rate schedule or batch size}.
    These improvements hold across architectures, dataset sizes, and \emph{some} tasks. \emph{Column 2, bottom:} Algorithmic instabilities are more pronounced at smaller sample sizes; thus, \textbf{stabilization can lead to improved sample efficiency}.
    \emph{Column 3:} We recommend updating the EMA at every iterate, with an initial burn-in phase, and with a tuned $\gamma^{(t)} = t^{-\alpha}$ decay, to avoid divergence or slower progress. \emph{Columns 4-5}: We verify that the benefits of EMA are not exclusive to the \texttt{Walker2d-v4} task; for some other tasks (including the higher-dimensional \texttt{Humanoid-v4}), oscillations are more benign.}
    \label{fig:ema_results}
\end{figure}

We recall the definition of the EMA method for iterate averaging \citep{polyak1992acceleration}. Given an optimization trajectory $(\btheta\upp{t})_{0 \leq t} \subset \rr^d$ and a sequence $(\gamma_t)_{1 \leq t} \subset [0,1]$, the EMA iterates $(\bthetatil_\gamma\upp{t})$ are\footnote{Common heuristics include updating the EMA only after an initial ``burn-in'', and annealing $\gamma$ with a polynomial decay: $\gamma^{(t)} = \max(t^{-\alpha}, \gamma_\mathrm{min})$. It is also customary to use $\beta^{(t)}$ to denote $1 - \gamma^{(t)}$.} 
\begin{align}
\bthetatil_\gamma\upp{0} = \btheta\upp{0}, \quad \text{ and }
    \bthetatil_\gamma\upp{t+1} = (1 - \gamma_t) \cdot \bthetatil_\gamma\upp{t} + \gamma_t \cdot \btheta\upp{t+1}. \label{eq:EMA_def}
\end{align}
Many prior works have detailed the benefits of iterate averaging in stochastic convex optimization and beyond (see \Cref{app:related_work}).  Here, we investigate its effect on GVA.  We begin by considering the same MuJoCo framework as in \cref{sec:oscillations}. In \Cref{fig:ema_results}, we produce similar plots to those in \Cref{sec:oscillations}, but this time juxtapose the vanilla trained models with an EMA of their iterates (further results and details are deferred to \Cref{app:experiments}). We observe the following:
\begin{enumerate}[leftmargin=2.5em]
    \item[(R4)] \textbf{EMA iterate averaging strongly mitigates rollout oscillations}. %
    \emph{In every setup we consider,} across a variety of architectures and environments, EMA significantly reduces the oscillations in rollout reward; in no instance does it hurt performance.
\end{enumerate}

We provide quantitative comparisons for a wide range of interventions in \Cref{fig:appendix-quant-arch,fig:appendix-quant-data,fig:appendix-quant-tasks,fig:appendix-quant-reg-opt-va}.

\subsection{Autoregressive sequence models and language generation}\label{sec:nlp}

\begin{figure}[ht]
    \centering
    \begin{subfigure}{0.285\textwidth}
    \raisebox{5mm}{
    \includegraphics[width=\textwidth]{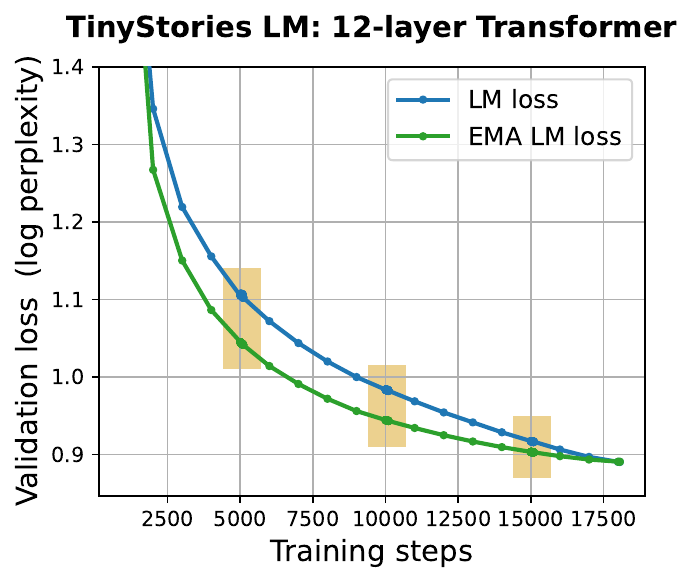}
    }
    \end{subfigure}
    \begin{subfigure}{0.375\textwidth}
    \raisebox{1mm}{
    \includegraphics[width=\textwidth]{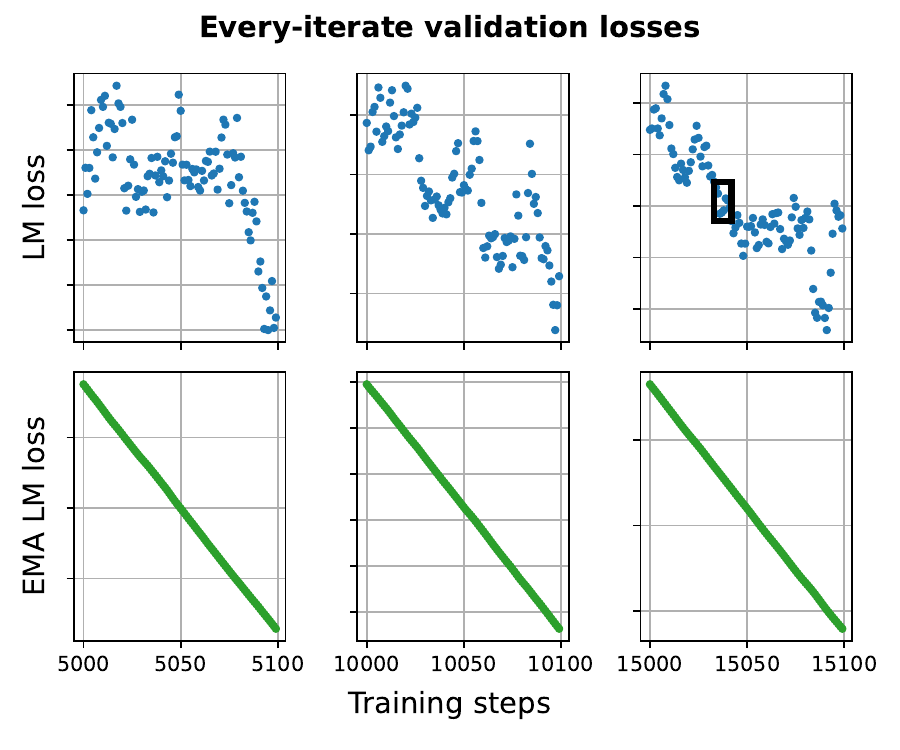}
    }
    \end{subfigure}
    \begin{subfigure}{0.325\textwidth}
    \includegraphics[width=\textwidth]{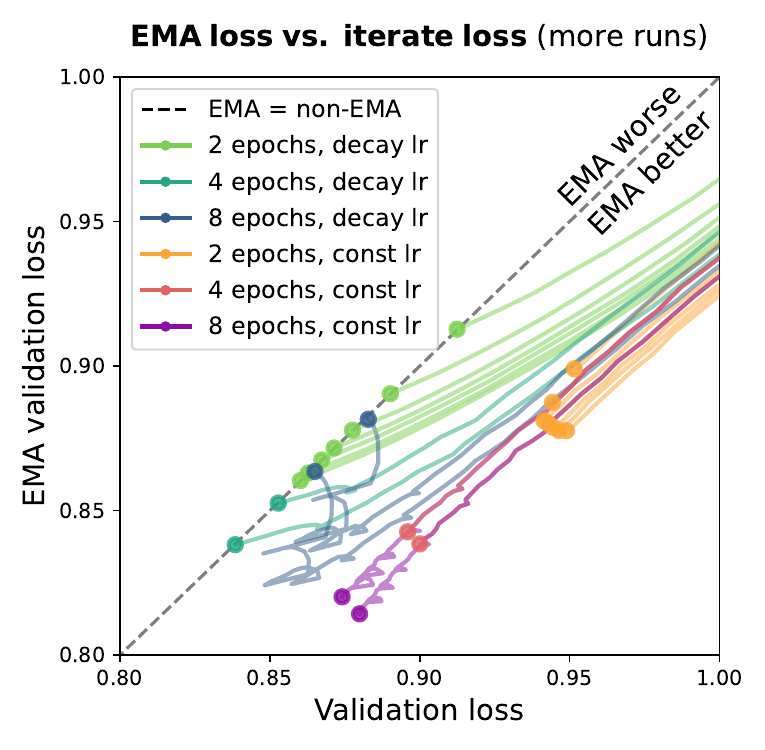}
    \end{subfigure}
    \begin{subfigure}{\textwidth}
        \centering
        \fcolorbox{black}{white}{
        \parbox{0.97\textwidth}{
            \input{body/4a-nlp-generations}
        }
        }
    \end{subfigure}

    \caption{\textbf{GVA in natural language generation}, with 270M-parameter Transformer models trained on TinyStories. \emph{(Top row) Left:} Validation loss curves with and without EMA. \emph{Center:} Zooming in on \emph{(left)}, evaluations at every update demonstrate small per-iterate loss fluctuations, which are even smaller if EMA is applied; note that the green ``lines'' are also scatter plots. \emph{Right:} Training paths in (model loss, EMA loss) space. EMA enables training without learning rate decay; this mitigates overfitting, resulting in the lowest-perplexity model. \emph{(Bottom)} Examples of autoregressively generated text (with argmax decoding), where nearby training iterates can bifurcate. See \cref{subsec:appendix-nlp} for full results, including quantitative evaluations of these \emph{``butterfly effects''} in generation. }
    \label{fig:nlp-results}
\end{figure}

We posit that GVA is a generic phenomenon that can manifest in disparate settings: whenever a model's predictions are applied within a (marginally stable or unstable) feedback loop, the closed-loop dynamics can amplify small fluctuations in a deleterious manner. A natural and timely setting with this structure---which complements continuous control--is autoregressive language modeling. Here, a network's parameters $\btheta$ are optimized on a 1-step prediction loss, which takes the role of $\lbc(\pi_{\btheta})$. The network $\pi_{\btheta}$ is then used to generate a sequence of symbols $w_{1:H}$ by iteratively rolling out $\pi_{\btheta} : w_{1:h} \mapsto w_{h+1}$. Such models have been paradigm-shattering in NLP, code synthesis, and beyond. Motivated by the similarity of this pipeline to behavior cloning,\footnote{\iftoggle{arxiv}{Many works have taken an offline imitation learning perspective on GPT-style pretraining; see \citep{chang2023learning} and the discussion therein. A well-known challenge is that there are \emph{many} degrees of freedom in evaluating performance. Thus, we do not commit to a canonical notion of reward in this preliminary study, and measure GVA-induced oscillations via disagreements in long-horizon rollouts.}{Many works have investigated GPT-style pretraining through the lens of offline IL \citep{chang2023learning}. There are many degrees of freedom in evaluating performance; thus, we do not commit to a canonical notion of reward and measure GVA-induced oscillations via disagreements in long-horizon rollouts.}}
 we perform a smaller set of analogous experiments on language generation. Our findings here parallel our findings for continuous control, and show (i) the presence of GVA, and (ii) substantial benefits of iterate averaging.
{}
In more detail, we train 270M-parameter 12-layer Transformer models on the TinyStories dataset \citep{eldan2023tinystories}, which serves as an inexpensive surrogate for a full-scale pretraining pipeline.
Highlights are shown in \cref{fig:nlp-results}, while \cref{subsec:appendix-nlp} provides full documentation, including larger-scale training runs with a non-synthetic corpus (Wikipedia). We summarize our findings below:

\begin{enumerate}[leftmargin=2.5em]
    \item[(R5)] Autoregressive LMs exhibit significant rollout oscillations throughout training. \textbf{EMA stabilizes the trajectory, accelerates training, and improves generalization}, complementing (and potentially obviating) standard practices in learning rate annealing.
\end{enumerate}

\subsection{To what extent does convex theory explain the benefits of EMA?}\label{sec:theory}
We close by providing mathematical intuition as to why iterate averaging with EMA can reduce the oscillations caused by GVA. As discussed in \Cref{sec:toy1}, oscillations can occur when there is a disparity between the BC loss $\lbc(\pihat)$ on which we train and the rollout reward function $J(\pihat)$ on which we evaluate. To study this phenomenon, we a consider simple, horizon-one behavior cloning problem with a single action determined by the model parameter $\btheta$. We take the \emph{training loss} to be a quadratic $\lbc(\btheta) = \frac 12 \cdot \norm{\btheta - \bmu}^2$, and the \emph{rollout reward}  $J(\cdot)$ to be 
\begin{align}
  \label{eq:cliff_reward}
    J(\theta) = \textstyle \begin{cases}
        -\norm{\btheta - \bmu}^2,&\quad  \|\btheta - \bmu\| \leq \epsilon \\
        -C,  &\quad \text{otherwise}
    \end{cases},
\end{align}
where $C \gg \epsilon^2 > 0$ are constants.  Here the training loss is convex, but rollout reward is not; the latter exhibits a ``cliff,'' dropping sharply from $-\epsilon^2$ to $-C$ once $\nrm{\btheta-\bmu}>\epsilon$.  The pair $(\lbc, J)$ may be thought of as a discontinuous, horizon-one analogue of the example in \Cref{prop:stab_informal}, illustrating the contrast between extreme sensitivity of reward and insensitivity of the loss to the parameter of interest. The reward function encapsulates discontinuities arising in control tasks from, e.g., contract forces. In the MuJoCo walker, ``cliff''-type behavior may come from an expert policy close to overbalancing the agent, with the learner's policy falling down if the parameter is ``over the cliff.''

We analyze  SGD iterates $\btheta\upp{t+1} = \btheta\upp{t+1} - \eta( \btheta\upp{t} - \mu + \mathbf{w})$, where $\eta > 0$ is a constant step size and $\mathbf{w} \sim \cN(0,\eye)$. This corresponds to SGD on a noisy version of the BC loss given by $\tilde{\ell}_{\textsc{BC}}(\btheta) := \Exp[\|\btheta_t - \bu + \mathbf{w}\|^2]$, which satisfies $\Exp_{\mathbf{w}}[\tilde{\ell}_{\textsc{BC}}(\btheta) ]= \lbc(\btheta) + \mathrm{constant}$. We show that applying EMA to the resulting iterates achieves  substantially higher rollout reward than vanilla SGD.
\begin{proposition}[Informal version of \Cref{prop:cliff_loss_formal}]\label{prop:dt_informal}
Consider the setting in \cref{eq:cliff_reward} for parameters $C\gg\epsilon^2>0$ in dimension one, and let $\btheta\upp{T}$ denote the SGD iterate with learning rate $\eta > 0$ as described above. 
Let $\bthetgam\upp{T}$ denote the EMA iterate \eqref{eq:EMA_def} with fixed parameter
 $\gamma_t \equiv \gamma \le \eta$ satisfying $\gamma \gg 1/T$.
Then, $\Exp[\lbc(\btheta\upp{T})]$ scales as $\Theta(\eta)$, while $\Exp[\lbc(\bthetgam\upp{T})]$ scales as $\Theta(\gamma)\leq\eta$. In particular, when $\eta > c_1 \epsilon$, and $\gamma \log(C/\gamma) \le c_2 \epsilon$, for absolute constants $c_1,c_2>0$, we find that 
\begin{align}
    \textstyle \Exp[J(\bthetast) - J(\btheta\upp{T})] \ge \frac{C}{2}, \quad \text{but} \quad \Exp[J(\bthetast) - J(\bthetgam\upp{T})]  \le \cO(\gamma).
\end{align}
\end{proposition}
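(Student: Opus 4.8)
The plan is to reduce everything to the scalar deviation $\delta\upp{t} := \btheta\upp{t} - \bmu$, whose recursion is linear, and to exploit that in steady state both the SGD and the EMA iterates are centered Gaussians whose variances can be computed exactly. First I would note that $\bthetast = \bmu$, so that $\lbc(\bthetast) = 0$ and $J(\bthetast) = 0$, and rewrite the SGD update as the AR(1) recursion $\delta\upp{t+1} = (1-\eta)\delta\upp{t} - \eta\, w_t$ with $w_t \sim \mathcal{N}(0,1)$. Since the EMA is a second linear filter of the same innovations, the joint process $(\delta\upp{t},\, \tilde\delta_\gamma\upp{t})$, where $\tilde\delta_\gamma\upp{t} := \bthetgam\upp{t} - \bmu$, is a two-dimensional linear Gaussian system driven by $(w_t)$; in particular each marginal is exactly Gaussian up to an initialization term that decays geometrically.

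Next I would compute the two stationary variances. For SGD, the fixed point of $V = (1-\eta)^2 V + \eta^2$ gives $\var(\delta\upp{\infty}) = \eta/(2-\eta) = \Theta(\eta)$, hence $\Exp[\lbc(\btheta\upp{T})] = \tfrac12\var(\delta\upp{T}) = \Theta(\eta)$. For the EMA I would write $\tilde\delta_\gamma\upp{t}$ as a geometric average of the $\delta\upp{r}$ and sum the AR(1) autocovariance $R(k) = \var(\delta\upp{\infty})(1-\eta)^{|k|}$ against the EMA weights (equivalently, solve the discrete Lyapunov equation for the $2\times2$ system). This yields a closed form whose leading order, using $\gamma \le \eta$, is $\var(\tilde\delta_\gamma\upp{\infty}) = \tfrac{\gamma\eta}{2(\gamma+\eta)}(1+o(1)) = \Theta(\gamma)$, so that $\Exp[\lbc(\bthetgam\upp{T})] = \Theta(\gamma) \le \eta$. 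The hypotheses that $\eta$ is a constant and $\gamma \gg 1/T$ guarantee that geometric mixing has brought the finite-$T$ variances within constant factors of these stationary values.

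With the variances in hand, the reward bounds follow from one-dimensional Gaussian tail estimates. For SGD, $\delta\upp{T}$ is approximately $\mathcal{N}(0,\Theta(\eta))$ with standard deviation $\asymp \sqrt{\eta}$; the hypothesis $\eta > c_1\epsilon$ forces $\epsilon/\sqrt{\eta}$ to be small, so $\pp[\,|\delta\upp{T}| > \epsilon\,] \ge \tfrac12$, and since $J(\bthetast) - J(\btheta\upp{T}) \ge C\,\ind{|\delta\upp{T}|>\epsilon}$ we obtain $\Exp[J(\bthetast) - J(\btheta\upp{T})] \ge C/2$. For the EMA I would split
\[
\Exp[J(\bthetast) - J(\bthetgam\upp{T})] = \Exp\!\left[(\tilde\delta_\gamma\upp{T})^2\,\ind{|\tilde\delta_\gamma\upp{T}|\le\epsilon}\right] + C\cdot\pp\!\left[\,|\tilde\delta_\gamma\upp{T}| > \epsilon\,\right].
\]
The first term is at most $\var(\tilde\delta_\gamma\upp{T}) = \mathcal{O}(\gamma)$. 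For the second, the sub-Gaussian tail gives $\pp[\,|\tilde\delta_\gamma\upp{T}|>\epsilon\,] \le 2\exp\!\left(-\epsilon^2/(2\var(\tilde\delta_\gamma\upp{T}))\right)$, and the smallness hypothesis relating $\gamma$, $\epsilon$, and $C$ (ensuring $\epsilon^2$ dominates $\var(\tilde\delta_\gamma\upp{T})\log(C/\gamma) = \Theta(\gamma)\log(C/\gamma)$) makes $C\cdot\pp[\,|\tilde\delta_\gamma\upp{T}|>\epsilon\,] = \mathcal{O}(\gamma)$, completing the bound.

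The main obstacle I anticipate is the exact EMA variance computation: because the $\delta\upp{r}$ are positively correlated with correlation time $\asymp 1/\eta$, one must sum the full autocovariance rather than treat the averaged iterates as independent, and it is precisely this correlation that makes the EMA variance $\Theta(\gamma)$ rather than the naive $\Theta(\gamma\eta)$ one would get by ignoring it. A secondary but genuine technical point is controlling the transient: all the Gaussian tail arguments are cleanest at stationarity, so I would verify that at the specific iterate $T$ the variances and the two one-sided tail events are within the claimed constants, which is where $\eta = \Theta(1)$ and $\gamma \gg 1/T$ enter. Matching the constants in the hypotheses $\eta > c_1\epsilon$ and $\gamma\log(C/\gamma)\le c_2\epsilon$ to the tail estimates is then bookkeeping.
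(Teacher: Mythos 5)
Your proposal is correct and takes essentially the same route as the paper: \Cref{prop:dt_ema} derives the same $\Theta(\eta)$ and $\Theta(\gamma)$ variances by explicitly powering the $2\times 2$ triangular linear system (the Lyapunov-equation route you mention as an alternative to summing the AR(1) autocovariance), and \Cref{prop:cliff_redu} supplies the same concentration/anti-concentration step, via Carbery--Wright and Hanson--Wright rather than your elementary one-dimensional Gaussian density and tail bounds. The one bookkeeping point you correctly flag---that the tail term requires $\epsilon^2 \gtrsim \gamma\log(C/\gamma)$ rather than the $\epsilon$-scaling written in the informal hypothesis---is precisely the condition imposed in the formal \Cref{prop:cliff_loss_formal}.
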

This proposition holds, which shows that the rollout performance for EMA can be arbitrarily small relative to that of SGD, holds even in the regime where SGD is initialized at $\btheta\upp{0}=\bmu$ (so that both $\btheta\upp{T}$ and $\btheta_\gamma\upp{T}$ are unbiased estimates of $\bmu$), and thus highlights that EMA can reduce the variance that arises from accumulation of SGD noise.\footnote{We compare to similar findings \citep{sandler2023training} in \Cref{app:related_work}.} Notice that \Cref{prop:dt_informal} requires $\eta \ge \gamma \gg 1/T$, which is above the optimal step size of $\eta_t = 1/t$.\footnote{Note that the $\eta_t = \frac{1}{t}$ step size schedule gives the sample mean, which is the maximum likelihood estimator for our objective.}  Indeed, in \Cref{app:theory} we show that EMA, with the parameters we find empirically successful, only benefits optimization \emph{above} these aggressively-decayed theoretically optimal learning rate schedules. Thus, we conclude that \textbf{convex theory reveals the variance-reducing benefit of \emph{either} learning rate decay or EMA, but does not suggest which one is better.}  We defer further theoretical results to \Cref{app:theory}, and present an empirical study of a system motivated by the cliff loss in \Cref{app:cliff_lqr_experiments}; in particular, our analysis provides a simple example where GVA provably occurs, both theoretically and empirically.

{}
The above example reveals the difference between the \textbf{statistical and algorithmic difficulties} of BC: with enough data, the empirical risk minimizer (sample mean) $\widehat{\btheta}$ of BC loss exhibits  $\lbc(\btheta) \sim 1/T \ll \epsilon$, which  ensures $J_H$ is small; on the other hand, with minibatch SGD and too large a learning rate, there is a noise floor on how close the non-EMA'd iterate $\widehat{\btheta}$ will be to $\btheta^\star$, ensuring that $J_H$ is large.

\section{Discussion and conclusion}\label{sec:discussion}
We investigated the causes of instabilities in training neural networks in tasks involving feedback loops, and empirically demonstrated that these instabilities often arise as a result of poor algorithmic choices rather than information-theoretic barriers. We hypothesize that the oscillations are caused by the combination of fractal reward landscapes and noise floor the stochastic gradients, and can be ameliorated by variance reduction---either during training, or \emph{post hoc} (via a stabilizer such as EMA).  We now list some directions for future inquiry, containing theoretical and empirical mysteries.

\iclrpar{Unstable optimizers \& trajectory stabilization.}
Despite many attempts to design faster-converging optimizers for deep learning, there has been no clear successor to vanilla second-moment preconditioning \citep{duchi2011adaptive,kingma2014adam} with a well-tuned learning rate schedule \citep{agarwal2020disentangling,schmidt2021descending,kaddour2023no}. By decoupling the roles of optimizers and stabilizers, we open a new avenue for innovation in this space. When training deep neural networks in the presence of feedback loops, our findings suggest: \textbf{always try EMA whenever it is affordable}, and \textbf{train at higher learning rates without decaying to 0}. Investigations of nuanced downsides, larger-scale settings, and new optimizer-stabilizer algorithms are left for future work.

\iclrpar{Representation learning in RL.}  While this work focused on fully offline data, online RL is also known to suffer from serious training instability, often thought to be caused by the statistical challenges of interactive learning.  On the other hand, GVA appears to be less pronounced in this setting,\footnote{In a small anecdotal exploration, we found that even minimal online interventions to improve data coverage such as DART \citep{laskey2017dart} appear to reduce oscillations significantly.} and the interaction between GVA and training with ``better'' data remains mysterious.

On a related note, the inductive biases of iterate averaging for representation learning remain poorly understood.  Many algorithms for representation learning benefit from EMA \citep{grill2020bootstrap,yaz2018unusual}; to what extent can this be explained by a phenomenon related to GVA?

Finally, we observe that our results are consistent with the notion that SGD noise is only harmful, but this cannot be entirely true due to the fact that sufficiently large batch sizes hurt training; we leave a better understanding of when and why SGD noise is helpful for future work.

\subsection*{Acknowledgments}
We are extremely grateful to Jonathan Chang for illuminating discussions at multiple stages of this project. We thank Daniel Pfrommer for independent verifications of GVA: that it is benign for the 2D quadcopter (a canonical smooth nonlinear control environment), and less so when training diffusion models to imitate expert behavior on the PushT environment. We also thank Mojan Javaheripi, Piero Kauffmann, and Yin Tat Lee for helpful discussions about learning rate schedules, and Sadhika Malladi for helpful NLP references. AB greatfully acknowledges support from the National Science Foundation Graduate
Research Fellowship under Grant No. 1122374.

\newpage
\bibliographystyle{iclr2024_conference}
\bibliography{refs}
\newpage 

\appendix

\renewcommand{\contentsname}{Contents of Appendix}
\addtocontents{toc}{\protect\setcounter{tocdepth}{2}}
{
  \hypersetup{hidelinks}
  \tableofcontents
}

\section{Additional related work}\label{app:related_work}
In this section, we provide an extended discussion of related work.  We split our discussion into two sections, beginning with our primary focus on stochastic optimization (especially as applied to deep learning) and ending with works relevant to one of our empirical domains, reinforcement learning and control.

\subsection{Stochastic optimization and deep learning}

\paragraph{Stochastic optimization in deep learning.} Minibatch SGD \citep{robbins1951stochastic} is the workhorse of deep learning, and its nuances are explained by mechanisms far outside the purview of stochastic convex optimization. Batch sizes and learning rates are known to control variance \citep{smith2017don}; however, not all variance reduction techniques work in practice \citep{defazio2019ineffectiveness}. Nonetheless, analyses of SGD in simplified settings have proven to be useful \citep{mandt2017stochastic,malladi2022sdes,li2019stochastic,li2020reconciling}. The theoretical literature contains numerous accounts of why finite-sample gradients might be beneficial, e.g., posterior inference \citep{mandt2017stochastic} and saddle point escape \citep{jin2017escape}. On the other hand, we show that this stochasticity can lead to catastrophic error amplification in regimes of practical interest. 

\paragraph{Iterate averaging.} It is known that returning the last iterate of even deterministic gradient descent is suboptimal, even when the function being optimized is convex \citep{harvey2019tight} (but is non-smooth). Which averaging scheme is optimal depends on whether the underlying function is strongly convex or not, and we discuss the various tradeoffs in \Cref{app:avg_schem}. Importantly, there exists memory-efficient averaging schemes (storing at most $2$ optimization iterates) which attain optimal rates \citep{lacoste2012simpler}. The optimality of EMA for stochastic optimization has not been studied in the literature, and indeed, we show in \Cref{app:avg_schem} EMA is suboptimal -- even for deterministic optimization via gradient descent -- whenever the averaging parameter $\gamma$ satisfies $\gamma \sim T^{-\beta}$ with $\beta < 1$.  This theoretically suboptimal regime is precisely the one in which we observe strong mitigation of GVA in our experiments.  In fact, the case of $\beta = 1$, which is analyzed in \citet{lacoste2012simpler} in the convex case, is \emph{empirically suboptimal} and leads to a drastic reduction in performance early in training, which is then washed out in later iterations. The stabilizing behavior of iterate averaging of machine learning in practice is well-known (e.g., \citet{bottou2010large}), although it has received less emphasis in modern deep learning than other optimizer hyperparameters.

In addition to theory, iterate averaging in general, and EMA in particular, is extremely common in practice \citep{izmailov2018averaging,kaddour2022stop,kaddour2023no,izmailov2018averaging,busbridge2023scale,he2019asymmetric}.  In contradistinction to these works, the present paper is not purporting to examine iterate averaging in and of itself, but rather is exploring the effects of iterate averaging in the presence of GVA.  

\paragraph{Non-convex side effects of variance reduction techniques.} Here, we expand on the discussion from Section~\ref{subsec:gva-diagnosis}, concerning why decreasing the learning rate and increasing the batch size are not satisfactory interventions for GVA:
\begin{itemize}[leftmargin=3em]
    \item \emph{Learning rate:} It is understood in practice that large learning rates promote faster and more robust representation learning in neural networks; theoretical understanding is an active frontier of research, which requires analyzing the trajectory of SGD in the \emph{rich} (data-adapted) regime: see \citep{li2017convergence,li2019towards,chizat2019lazy,barak2022hidden,damian2022neural,telgarsky2022feature,andriushchenko2023sgd} and references therein.\footnote{There is also a regime in which \emph{small} learning rates promote generalization \citep{bousquet2002stability,hardt2016train}, but this prevails predominantly in online and convex problems.}
    \item \emph{Batch size:} It is also well-understood in practice that the mechanisms enabling successful neural network training break down at extremely large batch sizes. This is especially relevant in the quest for massively parallel training pipelines \citep{goyal2017accurate}. One way to analyze this theoretically is to note that smaller-batch SGD can more effectively memorize individual examples \citep{abbe2020poly,abbe2021power}, often a desired behavior.
\end{itemize}
Of greatest relevance to our investigation of iterate averaging, \citet{sandler2023training} presents some related empirical and theoretical results in the context of supervised learning and explores an equivalence between learning rate decay and EMA.  Their empirical results focus on pure supervised learning and examine the relationships between training trajectories with different learning rates and how iterate averaging affects this, while we primarily focus on the phenomenon of GVA and the effect of iterate averaging therein.

\paragraph{SDE limits and analysis of training in deep learning.} Taking the continuous time limit of adaptive gradient optimization procedures and modeling the result as the solution to a stochastic differential equation has seen a lot of recent interest; see \citet{mandt2017stochastic,cheng2020stochastic,li2020reconciling,kunin2020neural,li2019stochastic,malladi2022sdes} and the references therein.  Of greatest relevance to our paper is \citet{busbridge2023scale}, which proves the convergence of constant rate SGD along with constant rate EMA to a system consisting of an SDE and an ODE.  Proving rigorous convergence guarantees in this limit is beyond the scope of this work; instead, for one of the results in \Cref{app:theory}, we take the continuous time limit as given and precisely compute in several toy settings the benefits that EMA has.

\paragraph{Error amplification and closed-loop effects in language generation.} The deep RL and deep NLP literatures are highly connected, and use common vocabulary to describe considerations involving feedback loops (e.g. training on the BC/autoregressive loss is known as \emph{teacher forcing} \citep{arora2022exposure}).
Long-range error amplification stemming from autoregression has been noted in prior work \citep{holtzman2019curious,braverman2020calibration} (sometimes called \emph{exposure bias}), motivating the design of decoding heuristics.

\subsection{Stability in reinforcement learning and control}

\paragraph{Instabilities of online RL.}

Many works have empirically observed instabilities in online RL, often
arising as a result of distribution shift
\citep{kirkpatrick2017overcoming,henderson2018deep,machado2018revisiting,packer2018assessing,zhang2018study,engstrom2019implementation,andrychowicz2020matters}. On
the theoretical side, most existing work on online RL explores
structural assumptions (e.g., Bellman rank) that allow carefully
designed algorithms to control distribution shift
\citep{russo2013eluder,jiang2017contextual,sun2019model,wang2020provably,du2021bilinear,jin2021bellman,foster2021statistical},
but does not directly address the behavior cloning setting we
consider, in which instability arises despite the absence of
distribution shift.

\paragraph{Instabilities of offline RL.}  While many works have empirically observed instabilities in online RL \citep{kirkpatrick2017overcoming,henderson2018deep,machado2018revisiting,packer2018assessing,zhang2018study,engstrom2019implementation,andrychowicz2020matters}, it is generally understood that these instabilities arise as a result of the distribution shift caused by the complicated dependence structure of interactive data.  In contrast, we observe instabilities in the offline setting, where the data is fixed and the dependence structure is significantly simpler.  In offline RL, too, many works have observed instabilities \citep{fujimoto2019off,kumar2019stabilizing}, although several theoretical works suggest that the source of these instabilities may be information theoretic limits as opposed to algorithmic issues \citep{wang2020statistical,zanette2021exponential,foster2021offline}.

In particular, \citet{wang2021instabilities} find that conducting offline RL using pre-trained representations often suffers from catastrophic error amplification.  The authors there apply known planning algorithms on top of pre-trained representations and find that even in the presence of mild distribution shift, severe degradation in rollout reward occurs.  \citet{dong2020expressivity} observe fractality in continuous MDPs as a generic phenomenon; like \citet{wang2021instabilities}, they are primarily interested in \emph{learning and planning} as opposed to imitation.

\paragraph{(In-)stability in behavior cloning.} 
Behavior cloning is known to suffer from compounding error, a phenomemon in which  the  imitators's errors  accumulate over time and place the learner in states not observed during training. As we confirm in this paper, these instabilities are readily mitigated by forcing the expert to demonstrate corrections from imperfect demonstrations, either by forcing experts to correct from noisy states (e.g. DART \citep{laskey2017dart}), or by sequential iteraction with the environment (e.g. DAGGER \citep{ross2010efficient}); both interventions require online interaction with the expert and thus lies outside the scope of this paper. Direct data augmentation of the expert trajectories via noising is popular in practice \citep{ke2021grasping}, but does not mitigate training oscillations.

More recent theoretical work \citep{tu2022sample,pfrommer2022tasil,havens2021imitation} has connected behavior cloning to the theory of incremental stability of the induced closed loop dynamics \citep{angeli2002lyapunov}. 
Leveraging this stability often requires stronger oracles, such as the ability to compute first- or zero- order gradients of the expert policy \citep{pfrommer2022tasil}. Recently, \cite{block2023imitating} proposed a combination of probabilistic and control-theoretic stability to facilitate imitation of \emph{non-smooth} expert trajectories in smooth dynamical systems, assuming access to a type of local stabilization oracle.  

\paragraph{Comparison to \citet{emmons2021rvs,chang2023large}.}  These works investigate the fact that $\lbc$ on a validation set is a poor proxy for $J_H$.  In \citet{emmons2021rvs}, through a suite of experiments in a number of domains motivated by robotics, the authors probe how far applying only supervised learning can get in producing a high reward policy.  They focus primarily on architectural and regularization interventions including depth and width of their imitator policies as well as the effects of weight decay and dropout on the resulting policies, noting the weak relationship between reward and $\lbc$ \emph{at the end of training} in all of these examples.  In contrast to this work, we focus on causes and mitigations of \emph{instabilities during training} as well as the more general phenomenon applied beyond continuous control tasks.

\citet{chang2023large} finds empirically that $\lbc$ evaluated on a held out validation set is an extremely poor empirical proxy for rollout rewards in many common benchmarks \emph{during a single training run}, but were not primarily interested in the instability itself, and thus did not further investigate this phenomenon.

\paragraph{The role of stability in learning to control.}  Considerations of stability (in a dynamical sense, rather than a training sense) have been most visible in the recent literature on learning to control linear systems. While stability is not always essential to estimation in control systems \citep{simchowitz2018learning,sarkar2019near,ghai2020no}, much of the contemporary research on learning to control linear systems has assumed access to an initial stabilizing control policy \citep{fazel2018global,simchowitz2020naive,dean2018regret}. Indeed, without access to such a controller, certain adaptive control settings require regret scaling exponentially in problem dimension \citep{chen2021black}, until the system can be estimated to sufficient accuracy that a stabilizing controller can be synthesized (in some circumstance, however, these effects can be circumvented if one can construct simulators with reward discounting \citep{perdomo2021stabilizing}). Access to an initial stabilizing controller forms the basis for both classical \citep{youla1976modern} and more recent \citep{agarwal2019online,simchowitz2020improper,simchowitz2020making} convex control parameterizations. 

Stability appears more implicitly in the recent literature on nonlinear control. For example, assuming that the output of the dynamical map is bounded and that the system exhibits process noise amounts to a form of marginal stability in Wasserstein and total variation distances (see e.g. \cite{kakade2020information}). Lastly, stabilizability of linearizations of nonsmooth dynamics have emerged as an important desideratum for optimization-based and data-driven trajectory optimization in smooth nonlinear systems \citep{pfrommer2023power,westenbroek2021stability}.

\paragraph{Stiffness in contact and hybrid dynamics.} System stability is at least as bad as system ``stiffness'', or largest Lipschitz constant dynamics, which can be arbitrarily large in contact and hybrid dynamical systems even if one does not observe the sort of exponential blow up witnessed in unstable linear dynamics. At the limit, dynamical discontinuities  induce ``one-step'' instability.   Contending with systems stiffness has been a long-standing challenge in the field of robotics, where contact systems are modeled non-smoothly \citep{van2007introduction}. Recent work has explored the use of stochastic smoothing to circumvented these challenges in trajectory optimization of a known stiff system \citep{parmas2018pipps,suh2022differentiable}. Further work has shown that stochastic smoothing makes sequential prediction and planning problems tractable for piecewise affine systems \citep{block2022efficient,block2023oracle,block2023smoothed}. Whether the dynamics of the noiseless system are well-approximated by those of the same system with process noise added remains an open question.

\section{Full experimental results}\label{app:experiments}

In this section, we provide additional experimental results as well as full details for the experiments included in the main text.  The appendix is organized as follows:
\begin{itemize}[leftmargin=3em]
    \item In \Cref{sec:mujoco_experiments} by providing a detailed description of the experimental setup we use in our empirical MuJoCo results.
    \item In \Cref{subsubsec:appendix-freqeval}, we provide an extended discussion of visualizations related to \Cref{fig:intro-results}. We continue in \Cref{app:gva-quantitative} by detailing our large suite of experiments investigating the effects that various algorithmic interventions have on GVA, both qualitatively and quantitatively. These correspond to the findings highlighted in \Cref{fig:gva} and \Cref{fig:ema_results} in the main paper.
    \item In \Cref{app:misc-visualizations} we provide several additional visualizations, including a number of reward landscapes similar to \Cref{fig:intro-results} \emph{(right)} to ensure the robustness of our conclusion that the reward landscape is extremely sensitive to the parameter. \iftoggle{arxiv}{We also demonstrate the non-Markovianity of the transformer imitators.}{We also demonstrate that the successful Transformer imitators learn \emph{non-Markovian} policies; unlike the MLPs, the Transformers observe a history (of length 32 in our experiments) of many past states and actions; in Appendix~\ref{app:misc-visualizations} we show that the Transformers are attending to previous states and not just the most recent one; this corroborates findings in \citet{janner2021offline,shafiullah2022behavior}.}
    \item In \Cref{subsec:appendix-nlp}, we give additional details and results relating to our autoregressive language generation experiments, described in \Cref{sec:nlp}.
    \item In \Cref{app:marginally_stable_lqr_experiments}, we present synthetic experiments on 2-dimensional linear dynamical systems, which serve as accompaniments to our theoretical vignettes. We show empirically that GVA does not occur in marginally stable linear systems, and show that a natural dynamical systems analogue of the cliff loss described in \Cref{sec:theory} indeed exhibits the predicted GVA in \Cref{app:cliff_lqr_experiments}; we also observe that, as predicted by the theory, EMA mitigates the oscillations.
\end{itemize}

All of our experiments are implemented in PyTorch \citep{Paszke_PyTorch_An_Imperative_2019}, and we use the \texttt{x-transformers}\footnote{\url{https://github.com/lucidrains/x-transformers}} implementation of the Transformer architecture.  All of our MuJoCo tasks are in Gymnasium \citep{towers_gymnasium_2023}.  For our LQR experiments, we implement the dynamics directly in PyTorch.

\subsection{MuJoCo continuous control environments}\label{sec:mujoco_experiments}
In all of our MuJoCo experiments, we run the following pipeline:
\begin{enumerate}[leftmargin=3em]
    \item Train an expert using online reinforcement learning.
    \item Collect expert trajectories by rolling out the (deterministic) expert policy $1000$ times.
    \item Split the collected trajectories into a training set of size $\nexp = 900$ and a validation set of size $100$ trajectories.
    \item Train an agent to imitate the expert from these $\nexp$ offline trajectories, using standard deep learning pipelines to optimize the behavior cloning loss $\lbc$.
    \item Evaluate the behavior cloner by rolling out the learned policy and evaluating loss $\lbc$ on the validation set.
\end{enumerate}
We now discuss some of the items above in more detail.

\begin{figure}
    \centering
    \includegraphics[width=\textwidth]{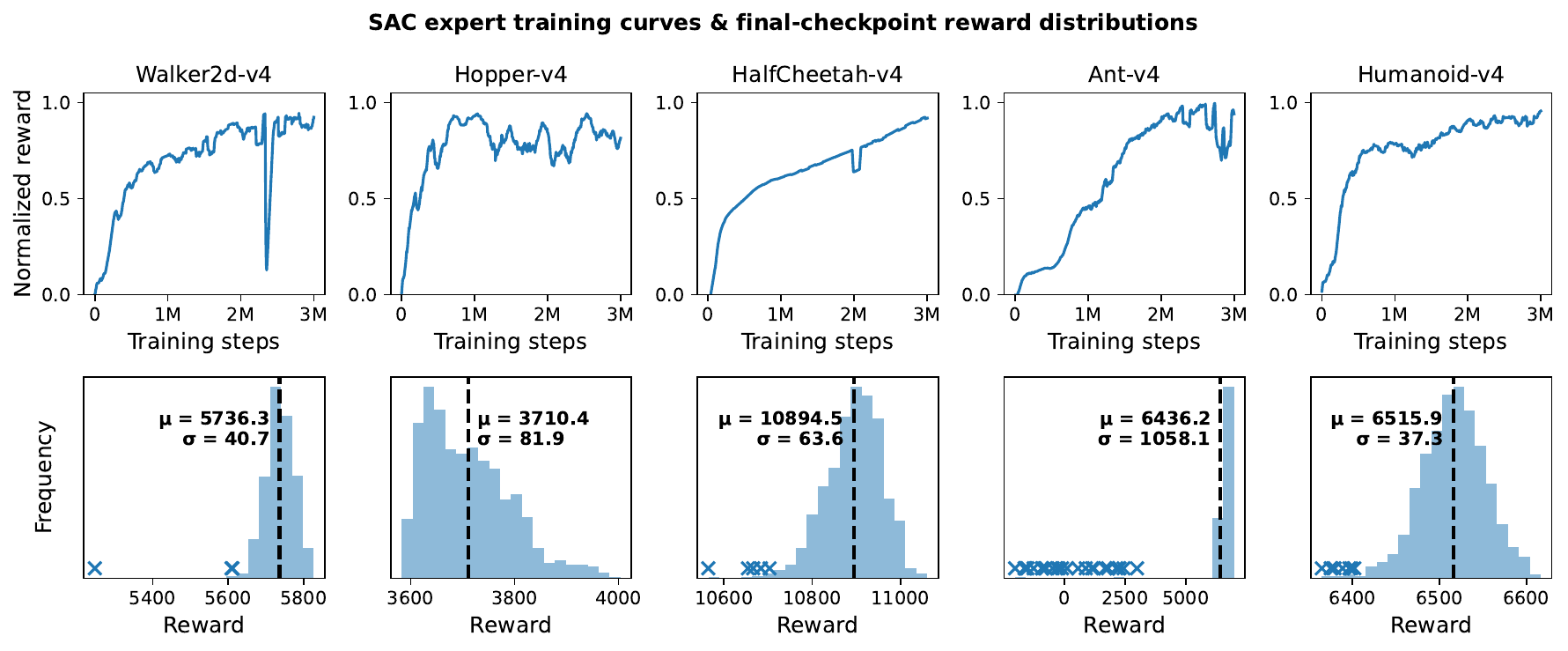}
    \caption{Training curves of the SAC experts for the various MuJoCo continuous control agents, along with final-iterate mean rewards. \emph{Top:} Online reinforcement learning training curves; these exhibit training instabilities, but not of the same nature as those encountered in our offline behavior cloning settings. \emph{Bottom:} Unnormalized reward distributions (through out the rest of this paper, we divide by these means). Outliers are marked by $\times$ symbols. }
    \label{fig:appendix-experts}
\end{figure}

\paragraph{Training the expert.}  All of our experts are trained using the Soft Actor Critic (SAC) algorithm \citep{haarnoja2018soft} as implemented by \texttt{stable-baselines3} \citep{stable_baselines3} with the default parameters for the environment under consideration.  We train our experts for 3 million steps; training curves and reward histograms (over the randomness of environment initial conditions) are shown in \Cref{fig:appendix-experts}.

\paragraph{Collecting expert trajectories.}  While the SAC online RL algorithm results in a stochastic policy, we are interested in deterministic experts. Thus, we derandomize the expert's outputs, via taking the mean of the action distribution of the randomized SAC agent.  We roll out trajectories of length 1000 for every environment. Interestingly, the presence of extreme outliers (\texttt{Ant}) and high-dimensional states (\texttt{Humanoid}) appear to be uncorrelated with the magnitude of GVA (which is very low for both of these tasks).

\paragraph{Training the behavior cloner.}  This is the step that our work is interested in and is where all of our interventions occur.  For every environment, unless specified otherwise, we train the cloner for 20 epochs.  Our default hyperparameters are included in \cref{tab:bc-hyperparams}, although each intervention below changes at least one of them.

\paragraph{Evaluating the behavior cloner.}  For most training runs, we collect a checkpoint every 5000 gradient steps, with the exception of the run in \Cref{fig:intro-results}, where we collect checkpoints every gradient step between iterates 40K and 50K in order to have a more zoomed-in view of the oscillations.  For each checkpoint, we compute $\lbc$ on the validation set as well as fixing 20 random seeds and, for each seed, sampling a trajectory beginning from the seed-induced initial condition and rolling out the learned policy.  Note that the only randomness here is in the initial condition of the agent, as determined by the seed.

\paragraph{Choice of canonical environment.}
Overall, we choose to focus our experiments on \texttt{Walker-v4}, for the following reasons: (1) reward oscillations are significantly more benign for cloners of \texttt{Ant} and \texttt{Humanoid} agents; (2) we could not get a \texttt{HalfCheetah} behavior cloner to reach the maximum reward. We choose to focus our intensive hyperparameter sweeps on \texttt{Walker} (rather than \texttt{Walker} and \texttt{Hopper}) to keep the number of runs manageable.

\begin{table}[h]
    \centering
    
    \begin{minipage}{.8\textwidth}
        \centering
        \footnotesize{
        \begin{tabular}{|l|c|}
        \hline
        \textbf{Name} & \textbf{Value} \\ \hline
        Optimizer & AdamW \\ \hline
        Learning rate & $3\times 10^{-4}$ \\ \hline
        Epochs & $20$ \\ \hline
        Batch size & $32$ \\ \hline
        $\beta_1$ (momentum) & $0.9$ \\ \hline
        $\beta_2$ & $0.999$ \\ \hline
        Weight decay & $0.1$ \\ \hline
        \emph{(MLP)} Number of layers & $4$ \\ \hline
        \emph{(MLP)} Width & $1024$ \\ \hline
        \emph{(MLP)} Activation & ReLU \\ \hline
        \emph{(Transformer)} Number of layers & $4$ \\ \hline
        \emph{(Transformer)} Number of heads & $4$ \\ \hline
        \emph{(Transformer)} Embedding dimension & $1024$ \\ \hline
        \emph{(Transformer)} Positional encoding & sinusoidal \\ \hline
        \emph{(Transformer)} History length & $32$ \\ \hline
        \end{tabular}
        }
    \end{minipage}
    
    \caption{Table of hyperparameters for behavior cloning.}
    \label{tab:bc-hyperparams}
\end{table}

\subsubsection{Reward sensitivity, per-iterate oscillations, and landscapes}
\label{subsubsec:appendix-freqeval}

\begin{figure}
    \centering
    \begin{subfigure}{0.54\textwidth}
    \includegraphics[width=\textwidth]{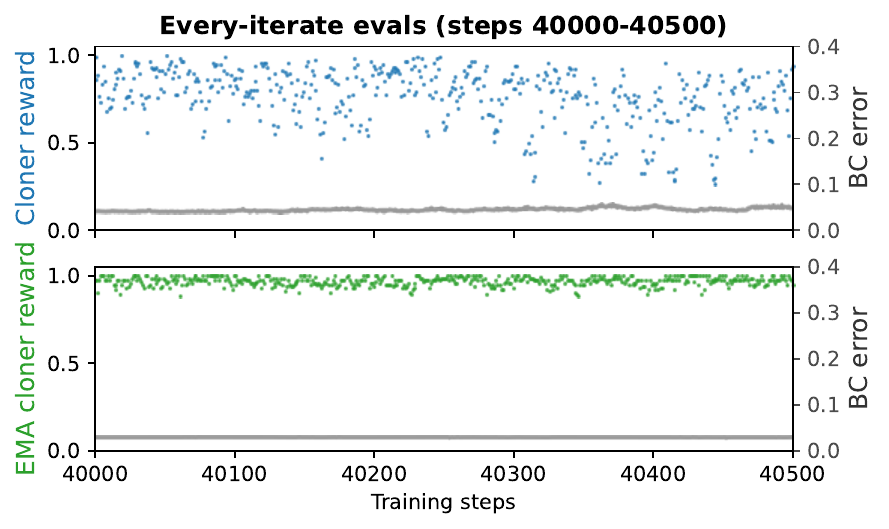}
    \caption{Per-iterate mean rewards and losses.}
    \end{subfigure}
    \begin{subfigure}{0.34\textwidth}
    \includegraphics[width=\textwidth]{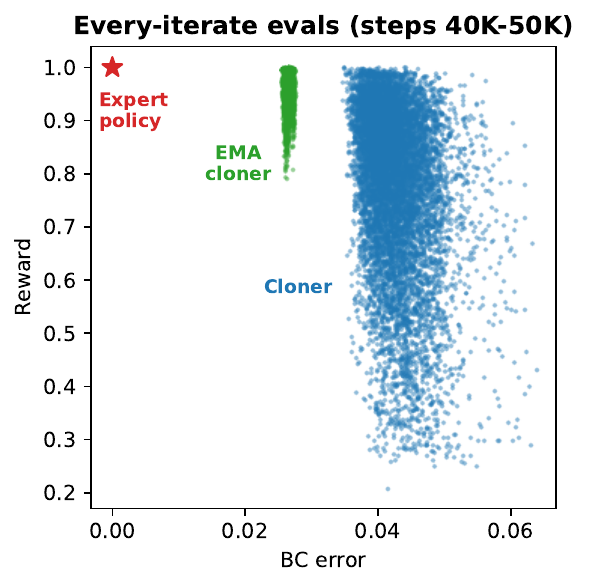}
    \caption{Rewards $J_H(\pi_{\btheta})$ vs. $\lbc(\pi_{\btheta})$.}
    \end{subfigure}

    \begin{subfigure}{0.9\textwidth}
    \includegraphics[width=\textwidth]{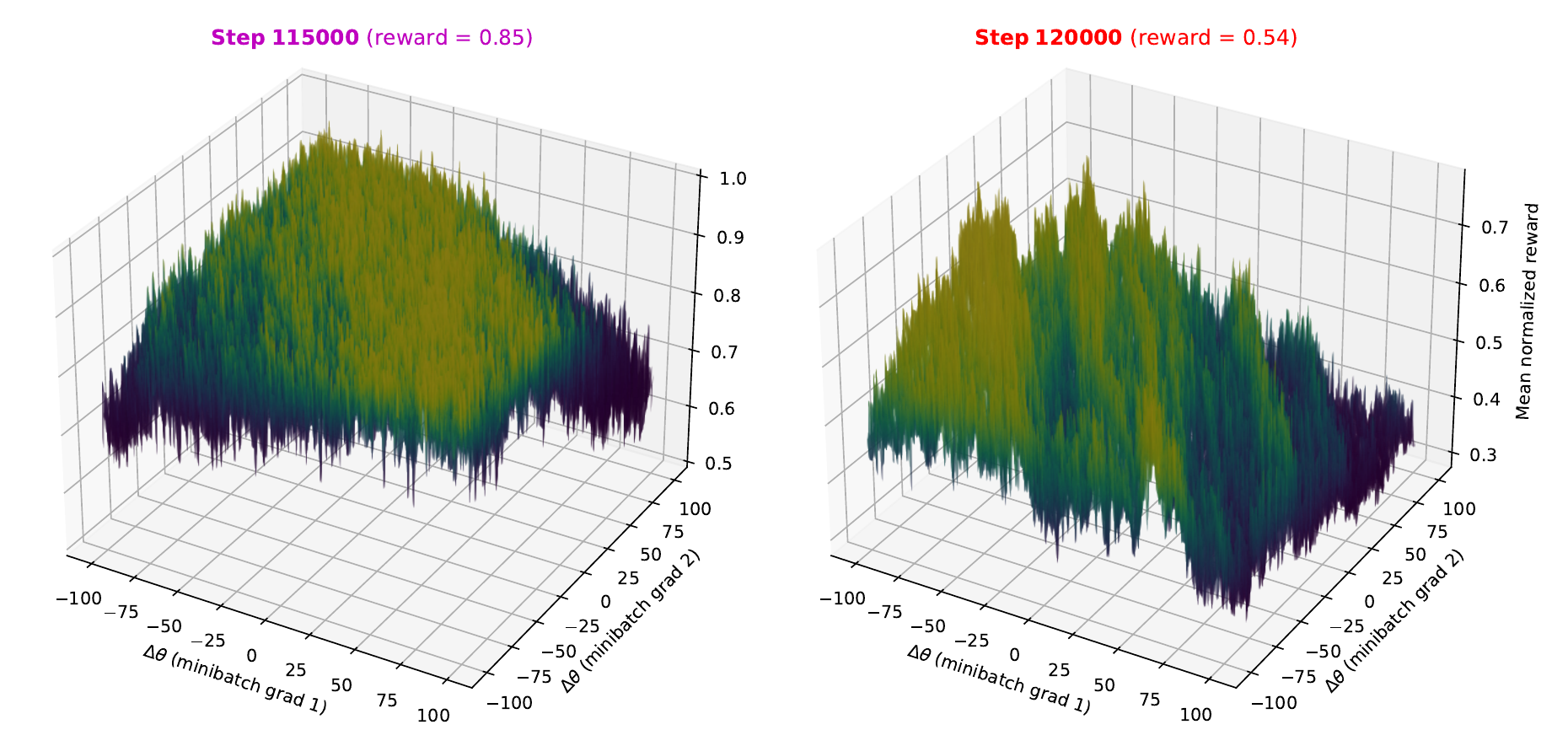}
    \caption{2-dimensional cross-sections $J_H(\pi_{\btheta} + i\cdot \eta \bg_1 + j\cdot \eta \bg_2)$ of the reward landscape. }
    \end{subfigure}
    
    \caption{Additional visualizations for the experiments highlighted in Figure~\ref{fig:intro-results}.
    (a) Rewards $J_H(\pi_{\btheta})$ and $\lbc(\pi_{\btheta})$ plotted at \emph{every} training iteration in a small range.
    (b) Scatter plot of $J_H(\pi_{\btheta})$ against $\lbc(\pi_{\btheta})$ for the full range of every-iterate evaluations displayed in Figure~\ref{fig:intro-results}.
    (c) Reward landscapes are shown around good (\emph{left}) and bad (\emph{right}) iterates. Note the extremely high Lipschitz constant (local sensitivity) of the rollout reward in terms of policy parameters; furthermore, there exist good and bad policies in very close proximity along stochastic gradient directions.
    }
    \label{fig:intro-results-appendix}
\end{figure}
\begin{figure}
    \centering
    \includegraphics[width=\textwidth]{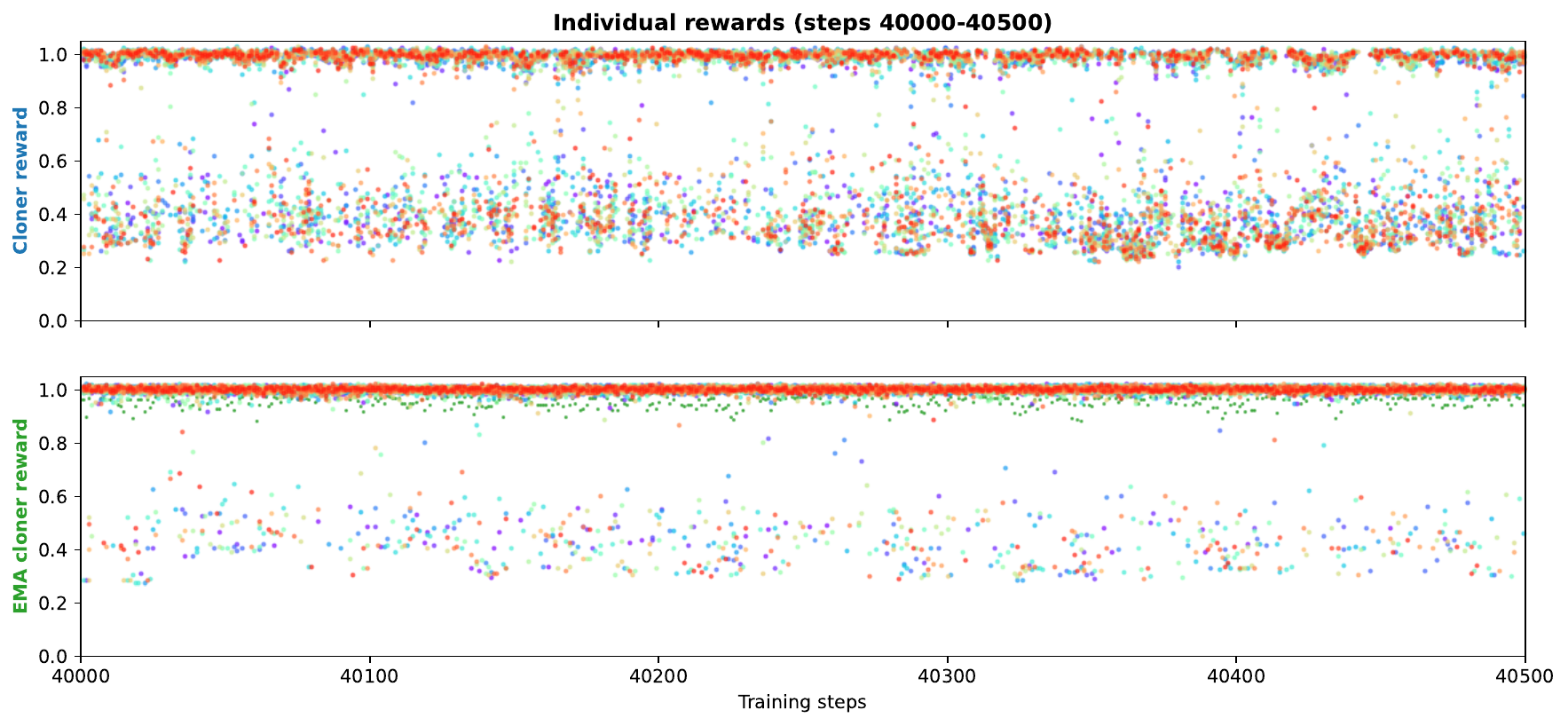}
    \caption{Disaggregated version of \Cref{fig:intro-results-appendix}(a). We visualize rollout rewards of \emph{each rollout} for each iterate in the experiment in \Cref{fig:intro-results} with points colored by environment random seed (which only determine initial conditions).  We exhibit both the non-EMA (top) and the EMA (bottom) runs.}
    \label{fig:intro-results-colored-by-seed}
\end{figure}

Recall the first finding from the main paper:
\begin{enumerate}[leftmargin=2.5em]
    \item[(R1)] \textbf{The reward landscape is highly sensitive to small changes in policy parameters}.
\end{enumerate}

Figure~\ref{fig:intro-results-appendix} provides a closer look at this landscape, via additional visualizations and discussion for the example of GVA from Figure~\ref{fig:intro-results}:
\begin{enumerate}[leftmargin=3em]
    \item[(a)] \textbf{Per-iterate fluctuations:} Zooming into the training trajectory, we see extremely large fluctuations at the level of individual gradient descent iterates (\textcolor{C0}{blue dots}). Rewards can vary by $>50\%$ of the total maximum reward. Meanwhile, the behavior cloning loss varies at the scale of $<1\%$ of the range of $\lbc$.
    \item[(b)] \textbf{Weak correlation between $J_H$ and $\lbc$:} For many pairs of iterates, the behavior cloning objective and rollout rewards are negatively correlated. Note that cloners achieve high rewards without converging to perfect action recovery (\textcolor{C3}{red star}). Representation is clearly not the bottleneck: the expert agent is a $2$-layer MLP with hidden dimension $128$, which is realizable by all of these networks.
    \item[(c)] \textbf{Exhaustive evaluation of cross-sections:} For two checkpoints $\btheta$ (iterations $115000$ and $120000$), selected to exemplify negative progress as training progresses, we evaluate neighborhoods of these policies in 2-dimensional cross-sections of the reward landscape. We compute stochastic gradients $\bg_1, \bg_2$ of $\lbc$, for two randomly selected minibatches of size 32. For a choice of step size $\eta = 3 \times 10^{-4}$ (coinciding with the Adam optimizer's learning rate), we evaluate rewards and BC losses for the lattice of policies $\{ \theta + i\cdot\eta\bg_1 + j\cdot\eta\bg_2 \}_{i,j \in \{-100, \ldots, 100\}}$. These rewards are averaged over 20 random seeds determining the initial conditions of the environment, which are \emph{not} re-randomized between evaluations. Thus, we conclude that these fluctuations arise from extremely high sensitivity to initial conditions. Aside from providing a qualitative visualization that GVA is evident, this evaluation provides a lower bound of $2 \times 10^4$ on the Lipschitz constant of $\btheta \mapsto J_H(\pi_{\btheta})$ (choosing the $\ell_2$ norm for $\btheta$), even restricted to non-adversarial stochastic gradient directions. The loss landscape (depicted in Figure~\ref{fig:intro-results}c, and omitted here) is nearly linear, and much smoother (Lipschitz constant $\approx 50$).
\end{enumerate}
In addition, in \Cref{fig:intro-results-colored-by-seed} we repeat \Cref{fig:intro-results-appendix}(a) but with each mean reward disaggregated such that each point represents the rollout reward of a fixed initial condition of the policy at the current iterate.  The points are colored by initial condition so as to illustrate that it is not the case that some initial conditions are significantly more difficult than others.

\subsubsection{Effects of algorithmic interventions}
\label{app:gva-quantitative}

Below, we list the exhaustive range of algorithmic interventions probed for in the \texttt{Walker2d-v4} environment. We begin with the suite of experiments focused on diagnosing GVA (highlighted in Figure~\ref{fig:gva}):
\begin{enumerate}[leftmargin=3em]
    \item[(1a)] \textbf{Architectures:} For both \{MLP, Transformer\} architectures, we vary the model scale parameters: depths $\{2, 4, 6\}$ and embedding dimensions $\{128, 256, 512, 1024\}$. For all other experiments below, we test interventions on the 4-layer 1024-dimensional MLPs and Transformers.
    \item[(2)] \textbf{Regularization controls:} We consider varied feedforward dropout coefficients ($\{0, 0.1, \ldots, 0.5\}$) for both MLPs and Transformers, and embedding dropout coefficients in the same range for Transformers. We also try weight decays $\{0.01, 0.3\}$, losses $\{\ell_1, \ell_2\}$, and data augmentation scales $\{0.005, 0.01, 0.02\}$.
    \item[(3)] \textbf{Optimizer configurations:} We vary the momentum and adaptive preconditioner parameters $(\beta_1, \beta_2)$ of AdamW, noting that these are known to modulate training stability and generalization. We vary $\beta_1 \in \{0.7, 0.9, 0.95, 0.99\}$ and $\beta_2\in\{0.9, 0.99, 0.999, 0.9999\}$. $\beta_2 = 0.9999$ leads to many non-convergent runs, and are omitted from Figure~\ref{fig:appendix-quant-reg-opt-va}.
    \item[(4)] \textbf{Variance reduction controls:} We vary the power $\alpha$ in the learning rate schedule $\eta_t \propto (1 - \frac{t}{T})^\alpha$, for $\alpha \in \{0, 0.5, 1, 1.5, 2, 2.5, 3\}$; note that $\alpha = 0$ corresponds to removing the learning rate decay. We also try batch sizes of $128$ and $512$ via $2\times$, $4\times$ and $16\times$ gradient accumulation with the default batch size of $32$.
\end{enumerate}

Next, we list the training runs selected to investigate the scope and hyperparameter sensitivity of EMA (highlighted in Figure~\ref{fig:ema_results}):
\begin{enumerate}[leftmargin=3em]
    \item[(1b)] \textbf{Architectures:} For all of the architecture choices in (G1) above, we evaluate EMA cloners.
    \item[(5)] \textbf{EMA parameter ablations:} To check the robustness and hyperparameter sensitivities of EMA, we perform a search in the hyperparameter space of update frequencies, burn-in durations, and annealing powers. Throughout these sweeps, we clip the annealing coefficient to a minimum of $\gamma_{\mathrm{min}} = 1 \times 10^{-4}$. Although this can be harmful, the optimal choice of this parameter depends on the number of training steps; we recommend setting a low $\gamma_{\mathrm{min}}$ and tuning the annealing power instead.
    \item[(6a,b)] \textbf{Ablations of sample size and learning rate decay:} We downsample the $900$ training trajectories, instead training agents on $\{10, 20, 50, 100, 200, 500\}$ demonstrations. In all of these settings, we remove learning rate decay, and find that this improves generalization (as long as EMA is applied). We evaluate both non-EMA and EMA models.
    \item[(7a,b)] \textbf{Additional environments:} We perform a limited set of analogous comparisons between EMA and non-EMA cloners for other continuous control tasks provided by \texttt{gymnasium}. We evaluate both non-EMA and EMA models.
\end{enumerate}

\paragraph{Quantitative evaluations.} Stochastic optimization in deep learning is fundamentally non-stationary, in the sense that representations change over the course of the training trajectory. Thus, there is no canonical choice of summary statistic which losslessly captures the magnitude of GVA. Nonetheless, to provide a digestible comparison of algorithmic interventions and their effect on GVA, we record a number of basic quantitative metrics:
\begin{itemize}[leftmargin=3em]
    \item \textbf{Maximum and final rollout rewards $J_\mathrm{max}, J_\mathrm{final}$.} These serve as indicators of whether the BC training procedure ever arrives at good policies, modulo stability considerations. GVA then entails a lack of stable convergence to these iterates, in terms of rollout performance. We similarly record the \textbf{minimum and final behavior cloning objectives} $\ell_\mathrm{min}, \ell_\mathrm{final}$.
    \item \textbf{Reward statistics $\mu_\mathrm{mid}, \mathsf{range}_\mathrm{mid}$ from iterates in the middle 50\% of training.} We track the mean and range, to summarize the amplitude of GVA-induced reward oscillations through the course of training.
    \item \textbf{Indicators of early success and negative progress $t_\mathrm{early}, t_\mathrm{worse}$.} We record the \textbf{earliest} iteration (as a percentage of the total number of training steps) at which the model visits a checkpoint whose mean reward is $\geq 95\%$ of the best one in its training trajectory. We also record the \textbf{latest} iteration where the reward dips below this value. (In the table, $-$ denotes the absence of such an iteration.)
\end{itemize}

\begin{figure}
    \centering
    \includegraphics[height=22em]{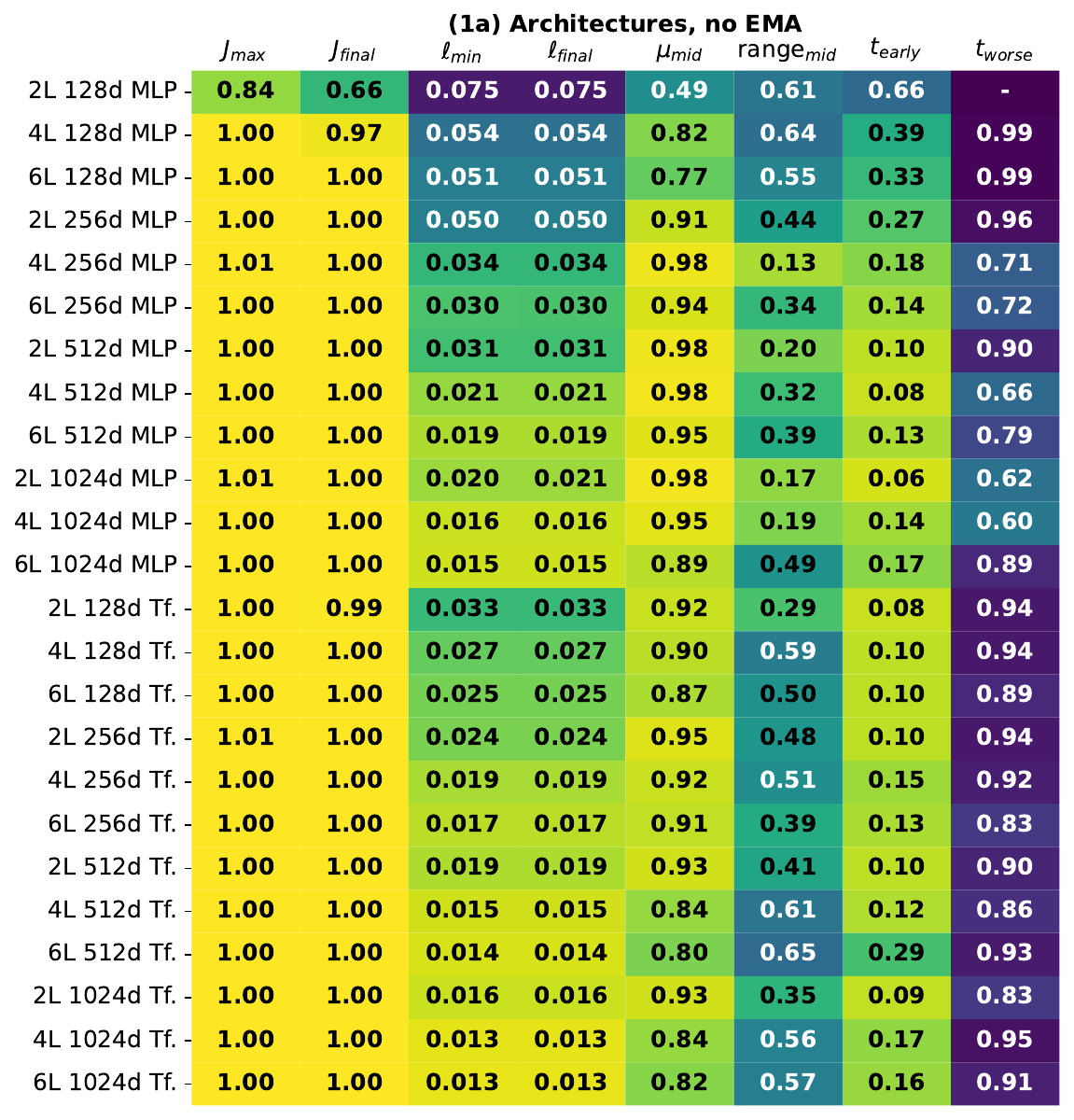}
    \includegraphics[height=22em]{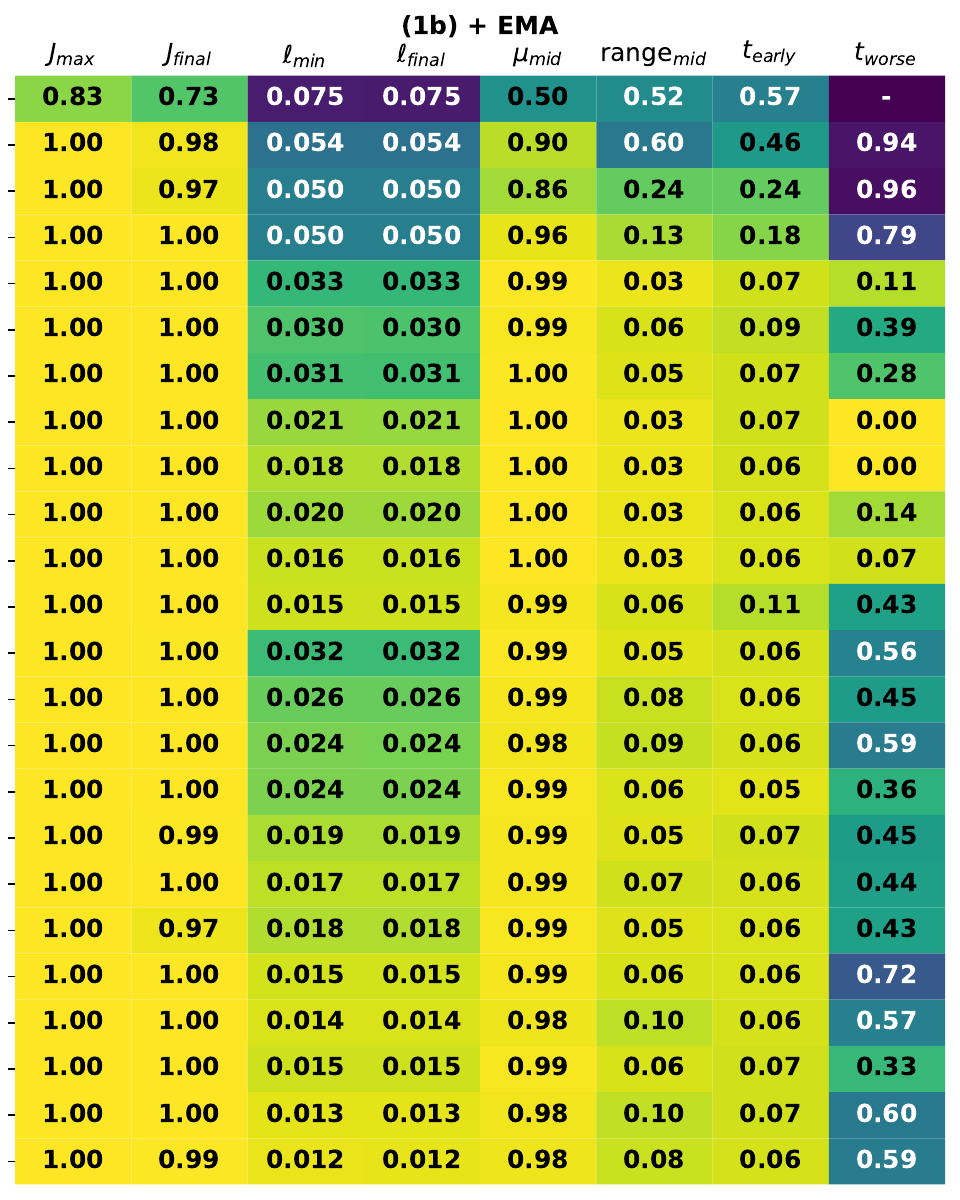}
    \caption{\textbf{Groups (1a) and (1b):} Non-EMA vs EMA rollout evaluations for a variety of architectures. The stabilizing effect of EMA is robust across these architectures and scales. This is most clearly seen in the $\mathsf{range}_\textrm{mid}$ column of \textbf{(1a)}, and robustly confirms our empirical result \textbf{(R1)}.
    Looking at the analogous column in \textbf{(1b)}, we confirm \textbf{(R4)}, that EMA uniformly stabilizes the training of these architectures. }
    \label{fig:appendix-quant-arch}
\end{figure}

\begin{figure}
    \centering
    \begin{minipage}{0.48\textwidth}
    \centering
        \vspace{0pt}
    \includegraphics[width=\textwidth]{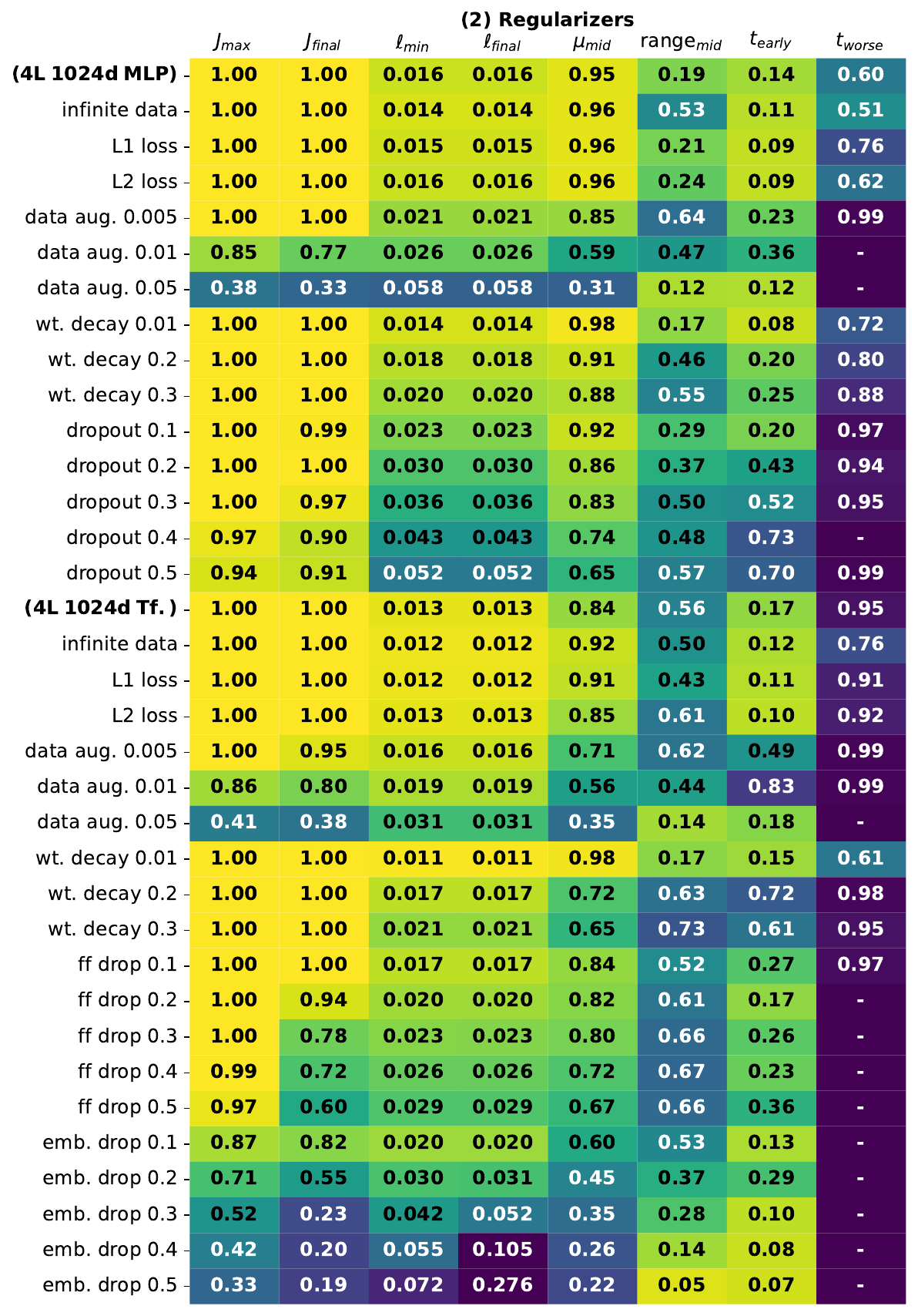}
    \includegraphics[width=\textwidth]{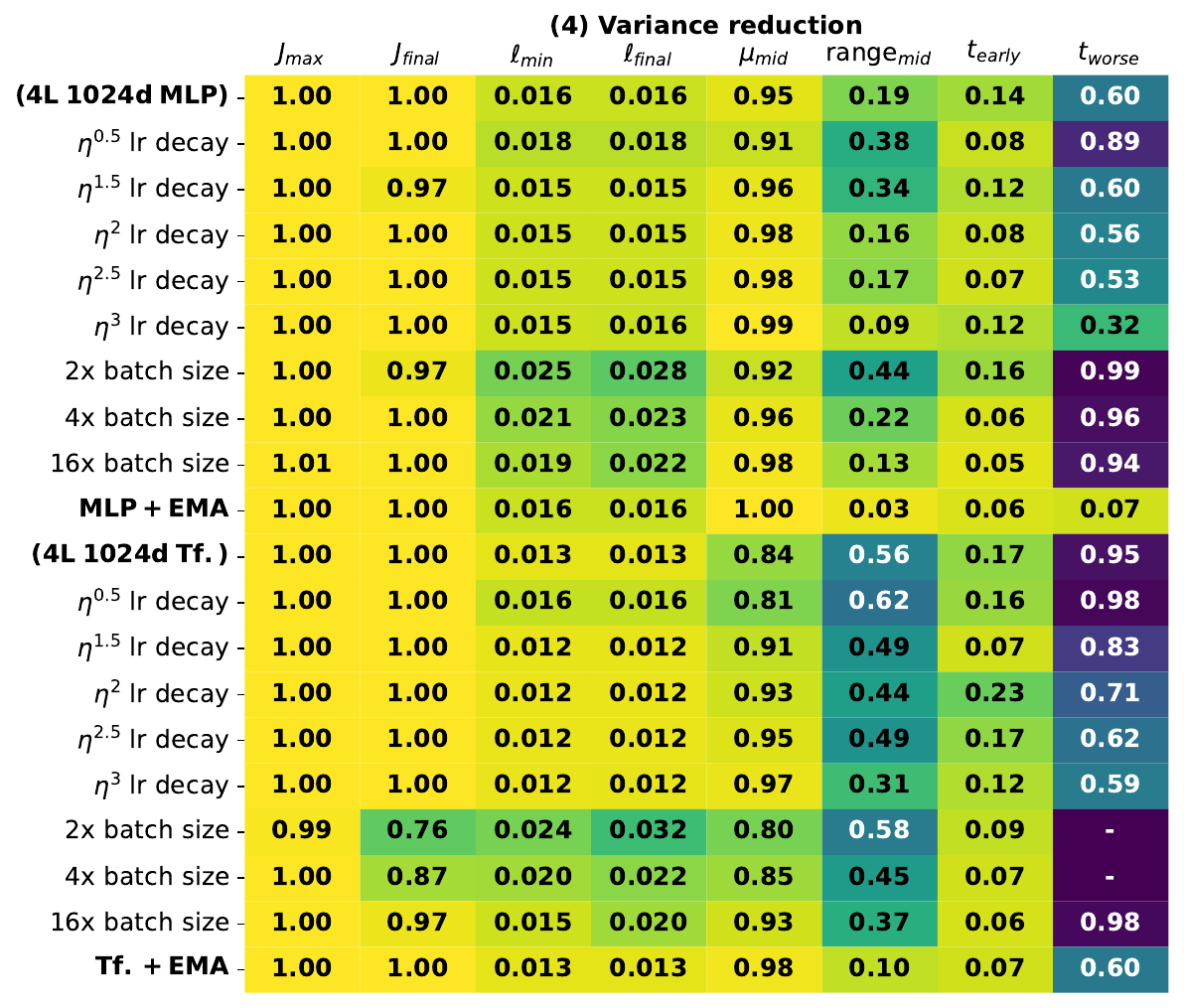}
    
    \end{minipage}
    \begin{minipage}{0.48\textwidth}
        \centering
        \vspace{0pt}
        \includegraphics[width=\textwidth]{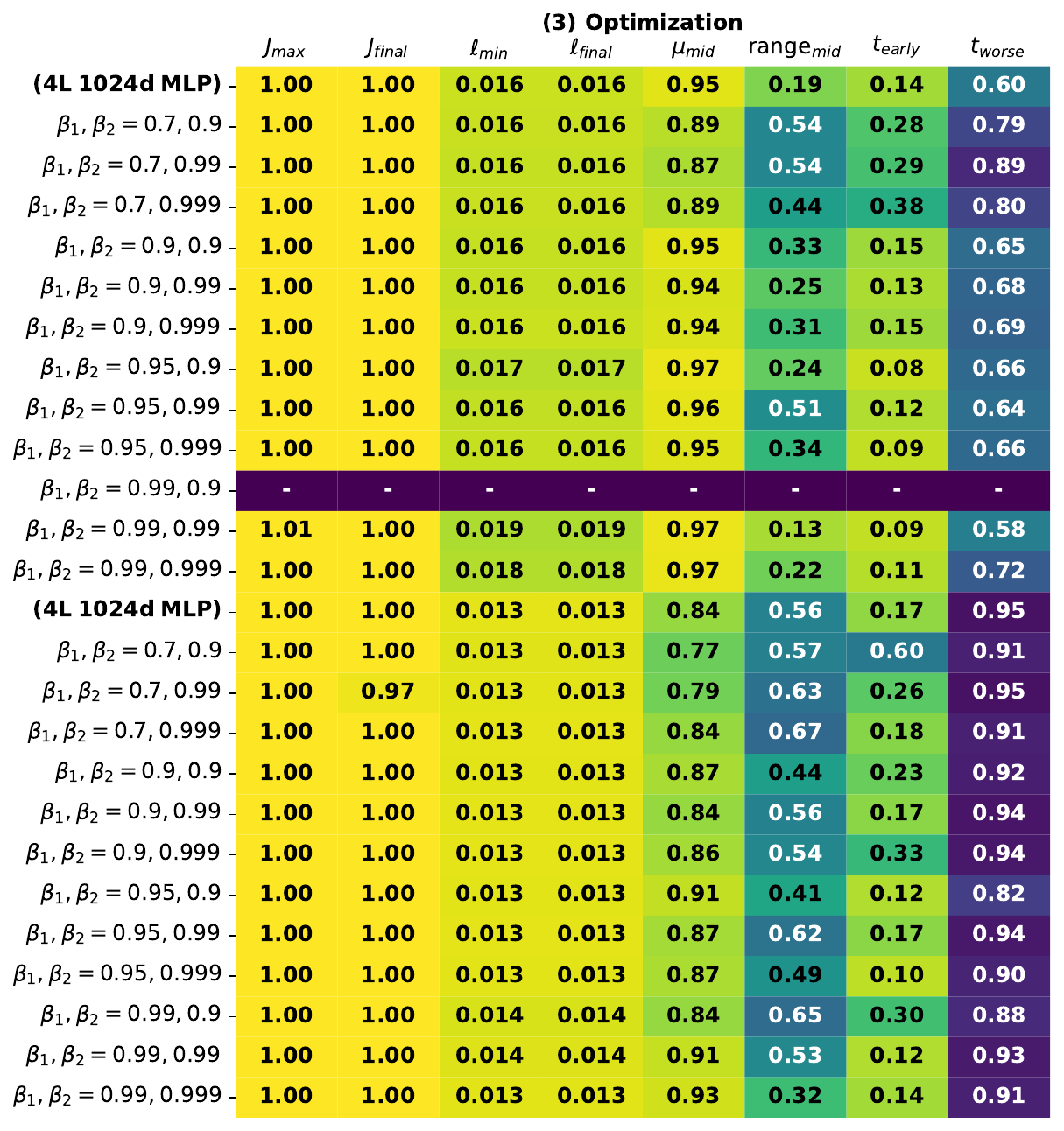}
        \includegraphics[width=\textwidth]{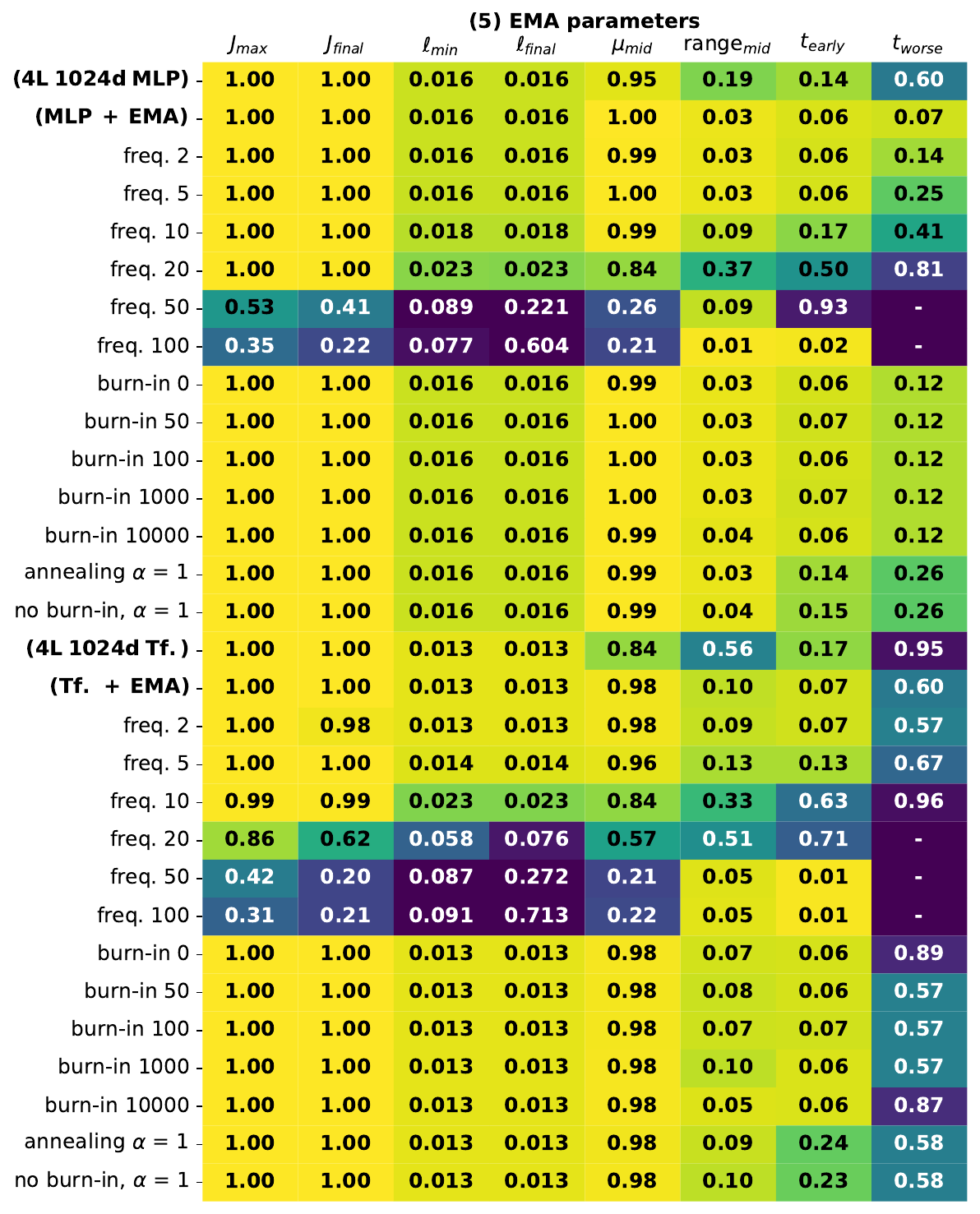}
    \end{minipage}
    \caption{\textbf{Groups (2) through (5):} Effects of various algorithmic choices: regularization, optimizer hyperparameters, variance reduction, and EMA hyperparameters. \textbf{(2)} Standard regularizers do not reduce the oscillations (and often worsen them) \textbf{(R2, R3)}. \textbf{(3)} Large momentum has a stabilizing effect on the rewards, presumably via variance reduction \textbf{(R4)}. However, it can result in other (well-known) optimization instabilities, leading to non-convergence. \textbf{(4)} Variance reduction techniques \emph{do} work \textbf{(R4)}; among these, EMA is the most robust (see especially the $\mathsf{range}_\mathrm{mid}$ column, measuring oscillation amplitude). \textbf{(5)} EMA tends to fail if updated too infrequently; an initial burn-in period is helpful for excluding poor initial iterates. EMA with suboptimal annealing ($\alpha$, so that $\gamma^{(t)} = t^{-\alpha}$) can delay learning (see $t_\mathrm{early}$ column), but still works. }
    \label{fig:appendix-quant-reg-opt-va}
\end{figure}

\begin{figure}
    \centering
    \includegraphics[height=22em]{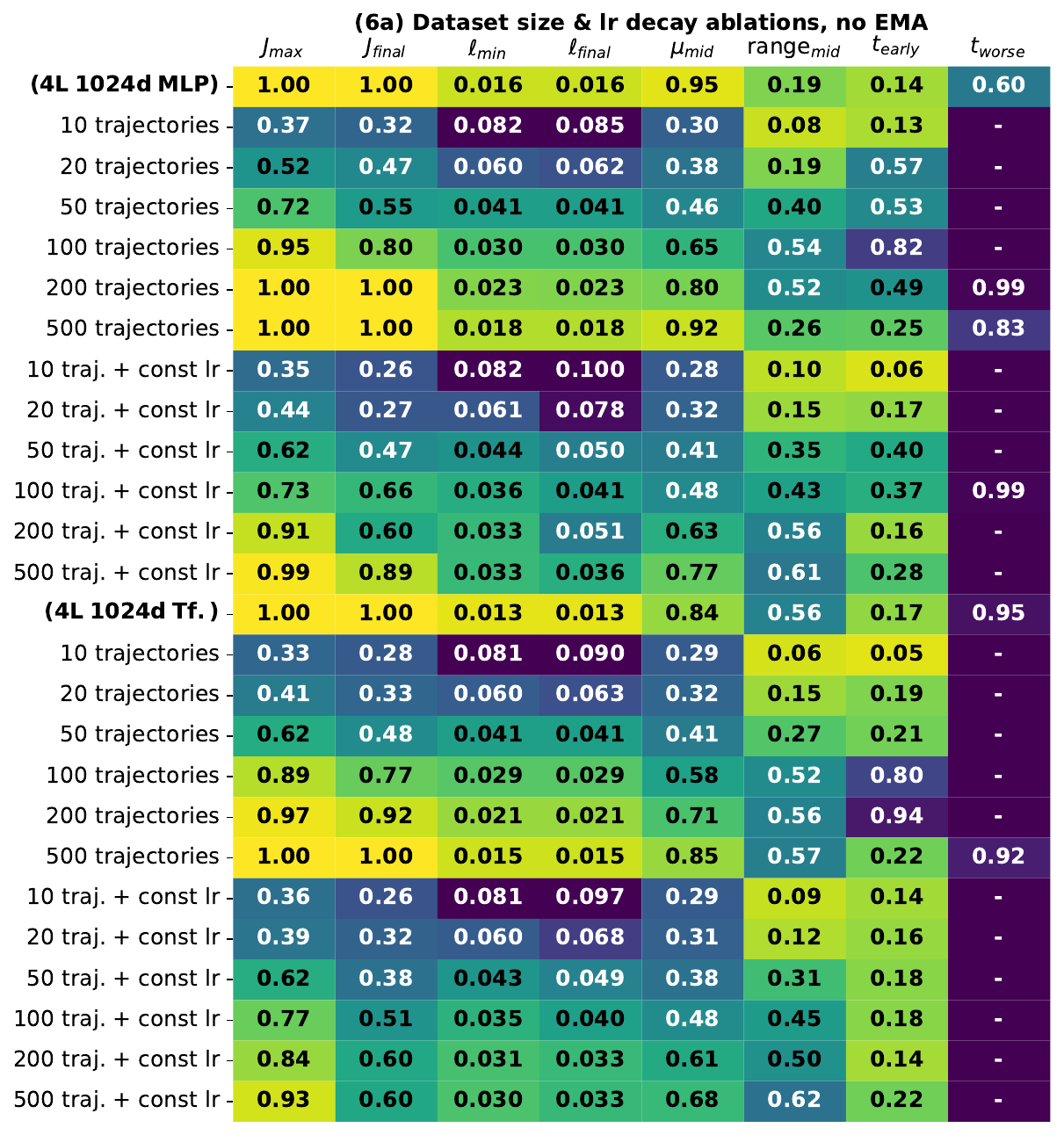}
    \includegraphics[height=22em]{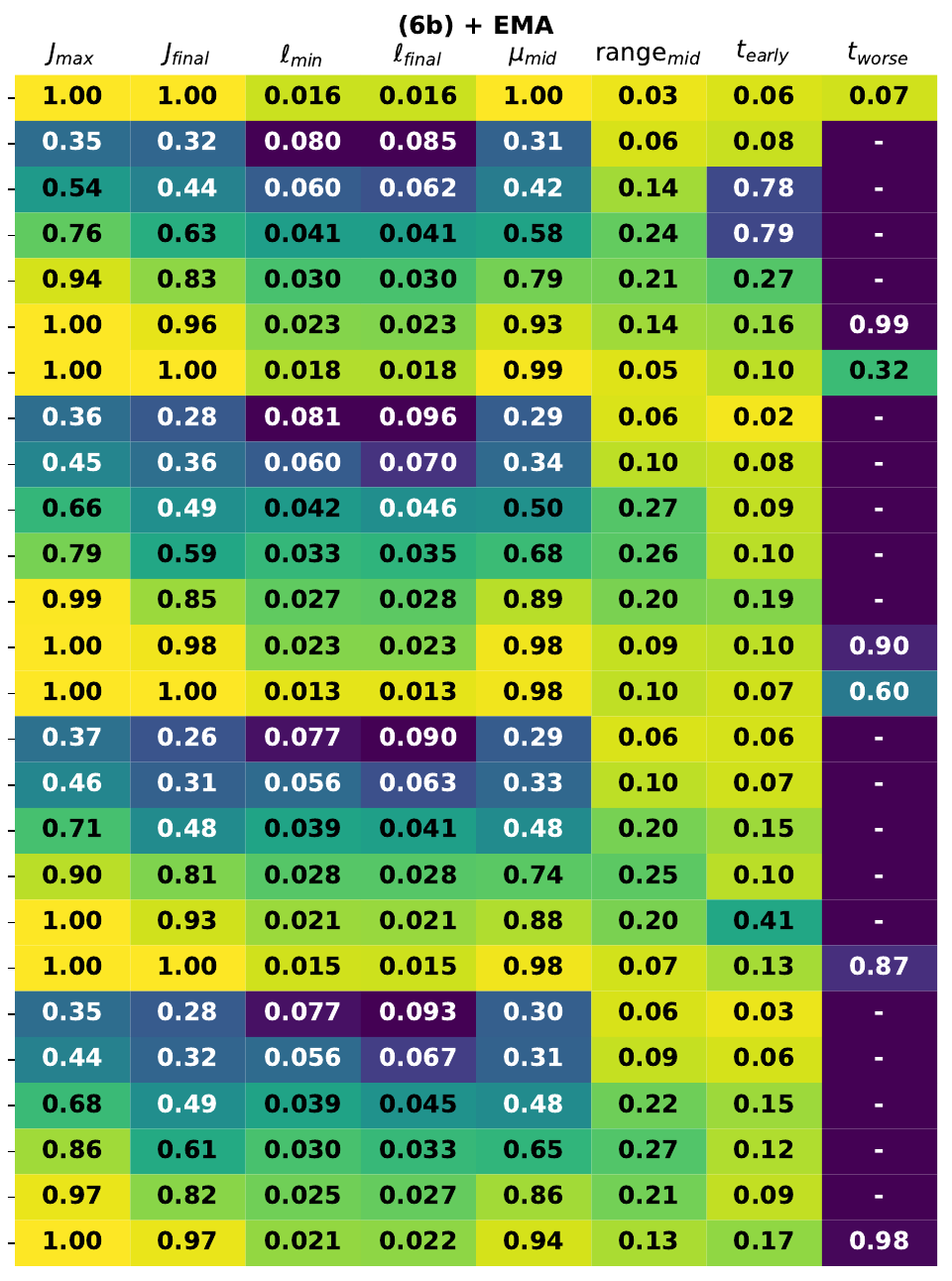}
    \caption{\textbf{Groups (6a) and (6b):} Ablations of dataset size and learning rate decay. When learning from fewer trajectories, optimization instabilities can become even worse. In such regimes, \textbf{the stabilizing effect of EMA results in end-to-end improvements in sample efficiency.} Furthermore, EMA can replace learning rate decay. This refines \textbf{(R4)}, showing that mitigating the non-statistical phenomenon of GVA can manifest data-efficiency in practice. }
    \label{fig:appendix-quant-data}
\end{figure}

\begin{figure}
    \centering
    \includegraphics[height=15em]{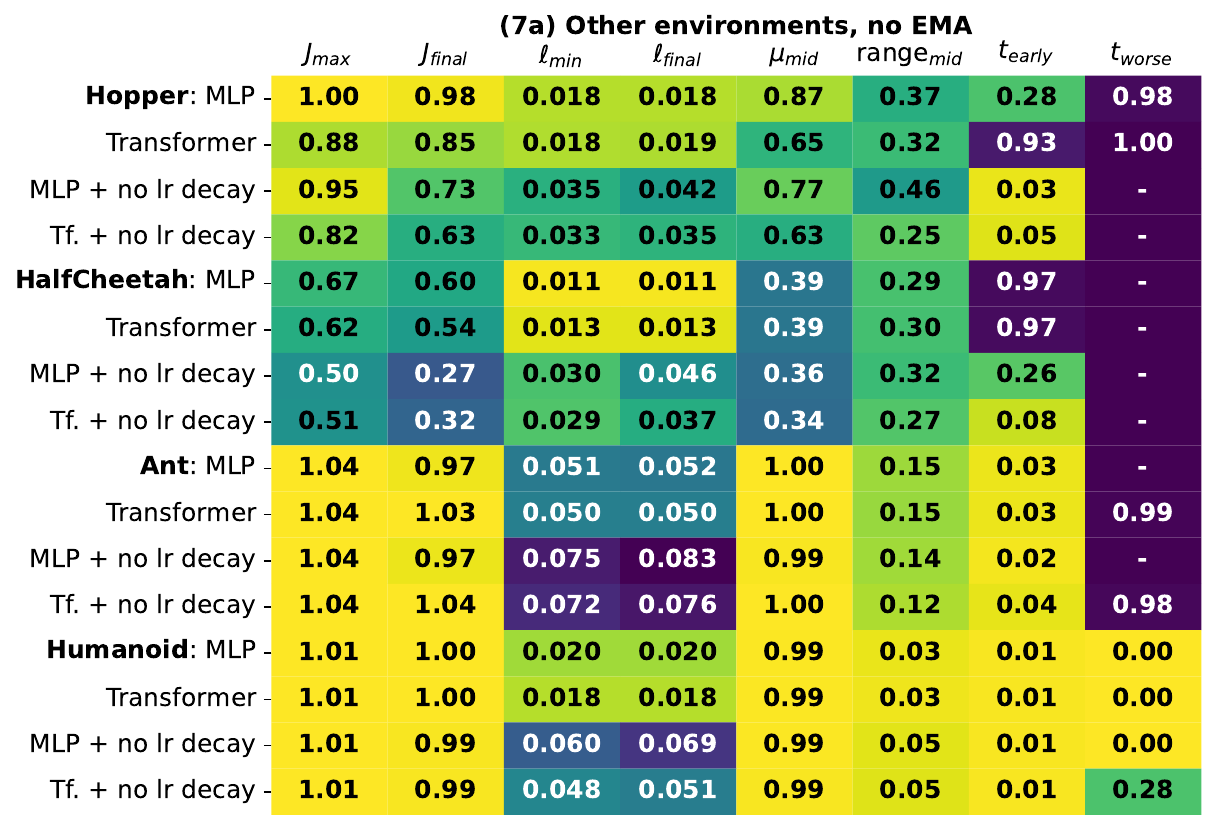}
    \includegraphics[height=15em]{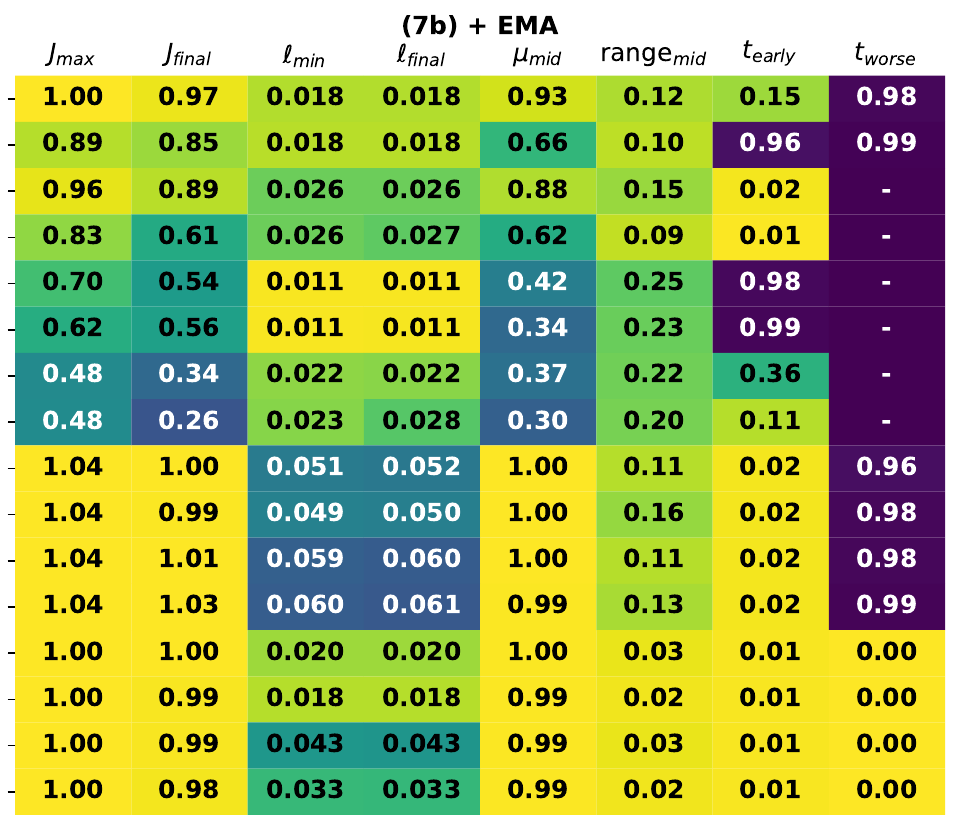}
    \caption{\textbf{Groups (7a) and (7b):} Evaluation of non-EMA and EMA cloned agents on additional tasks, showing a broader scope for \textbf{(R4)} than just the \texttt{Walker2d} environment. Improvements of an identical nature are observed for \texttt{Hopper}. Sometimes, behavior cloning does not fully work (\texttt{HalfCheetah}), or the reward oscillations are more benign (\texttt{Ant}, \texttt{Humanoid}), but \textbf{EMA stabilizes training in all of these circumstances}, never resulting in a substantially worse cloned policy. }
    \label{fig:appendix-quant-tasks}
\end{figure}

For all runs, we report the median of these statistics over 3 random seeds. Using these quantitative results, we next restate and discuss the relevant empirical findings from the main paper, and discuss these in greater depth in the figure captions. Results are shown in Figures \ref{fig:appendix-quant-arch}, \ref{fig:appendix-quant-reg-opt-va}, \ref{fig:appendix-quant-data}, and \ref{fig:appendix-quant-tasks}.

\begin{enumerate}[leftmargin=2.5em]
    \item[(R2)] \textbf{GVA arises from algorithmic suboptimality rather than an information-theoretic limit.} Even with ``infinite'' training data (i.e. fresh trajectories with i.i.d. initial conditions at each training step), rollout oscillations persist.
\end{enumerate}
 
\begin{enumerate}[leftmargin=2.5em]
    \item[(R3)] \textbf{Training oscillations are \emph{not} mitigated by many standard approaches to regularization}, including architectural interventions and increased regularization.  On the other hand, \textbf{oscillations are ameliorated by variance reduction techniques}, such as large batch sizes, learning rate decay, and iterate averaging.
\end{enumerate}

\begin{enumerate}[leftmargin=2.5em]
    \item[(R4)] \textbf{EMA iterate averaging strongly mitigates rollout oscillations}, and its performance is robust across a variety of architectures and environments.
\end{enumerate}

We close with a final visualization: \Cref{fig:appendix-scat} shows a scatter plot of $J_H$ vs. $\lbc$ for the final iterates of \emph{all} experimental conditions considered in this paper, analogous to \Cref{fig:intro-results-appendix}(b), showing that a generalization gap remains vs. the true (deterministic) expert policy, and that rollout reward can be uncorrelated with this gap in the regimes studied in this work.

\begin{figure}
    \centering
    \includegraphics[width=\textwidth]{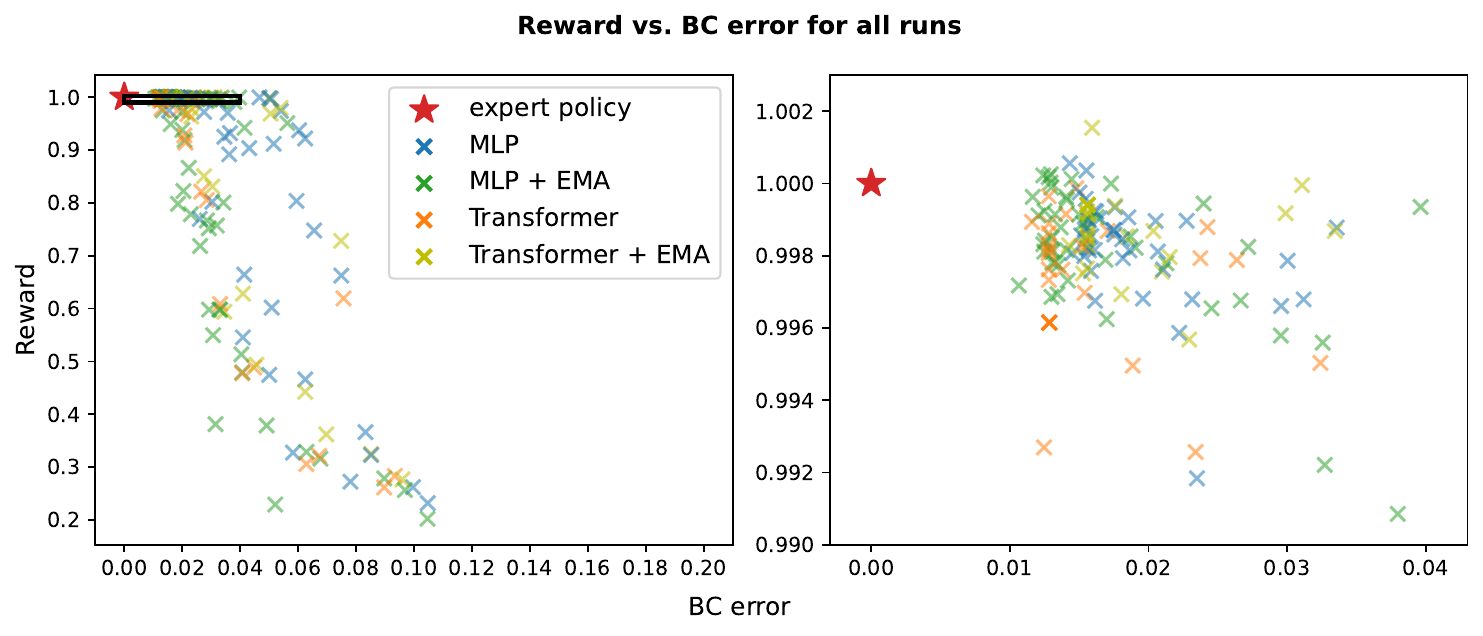}
    \caption{\textbf{Do BC agents succeed by perfectly cloning the expert?} Global scatter plot of final-iterate rollout reward $J_H$ vs. 1-step prediction error $\lbc$ for \emph{all} \texttt{Walker2d} agents trained in this paper. The plot on the right is a zoomed-in region (shown on the left). From (1) the lack of convergence to perfect agreement with the expert policy (including with access to infinite data) and (2) the lack of a strong correlation between the surrogate objective $\lbc$ and true objective $J_H$ in the regime of high performance, we illustrate the following conclusion: \textbf{although statistical convergence is sufficient for behavior cloning, it is not necessary.} Deep imitation learning succeeds in a regime that is short of perfectly recovering the demonstrator, and successes and failures are instead governed by fine-grained algorithmic choices. }
    \label{fig:appendix-scat}
\end{figure}

\subsubsection{Miscellaneous visualizations}
\label{app:misc-visualizations}

\paragraph{Additional reward and loss landscapes.} We provide additional (lower-resolution) heatmaps around neighboorhoods of various cloner policies in \Cref{fig:appendix-heatmaps}. In addition to stochastic gradient directions, we visualize cross-sections of these landscapes in random directions, like the main figures in \citep{li2018visualizing}.\footnote{Note that \citet{li2018visualizing} choose random directions because the loss landscape looks too flat in stochastic gradient directions. This provides further supporting evidence that our finding \textbf{(R1)} is generic: while the loss landscape looks smooth, the landscape of the long-horizon rollout rewards can appear to be fractal in the same regions.} This confirms the generality of our finding in \textbf{(R1)}, concerning the fractality of the reward landscape in regions where the loss is extremely smooth and near-convex. The setup is identical to that described for Figures~\ref{fig:intro-results}(c) and \ref{fig:intro-results-appendix}(c), except the following: models are 4-layer 1024-dimensional \{MLPs, Transformers\} \{with, without\} EMA; checkpoint indices \{1, 2\} denote training steps 50000 and 205000; step size increments are varied by $3 \times \{10^{-3}, 10^{-4}, 10^{-5}$ (labeled as neighborhood sizes $\{10^{-2}, 10^{-3}, 10^{-4}$)

\begin{figure}
    \centering
    \includegraphics[width=\textwidth]{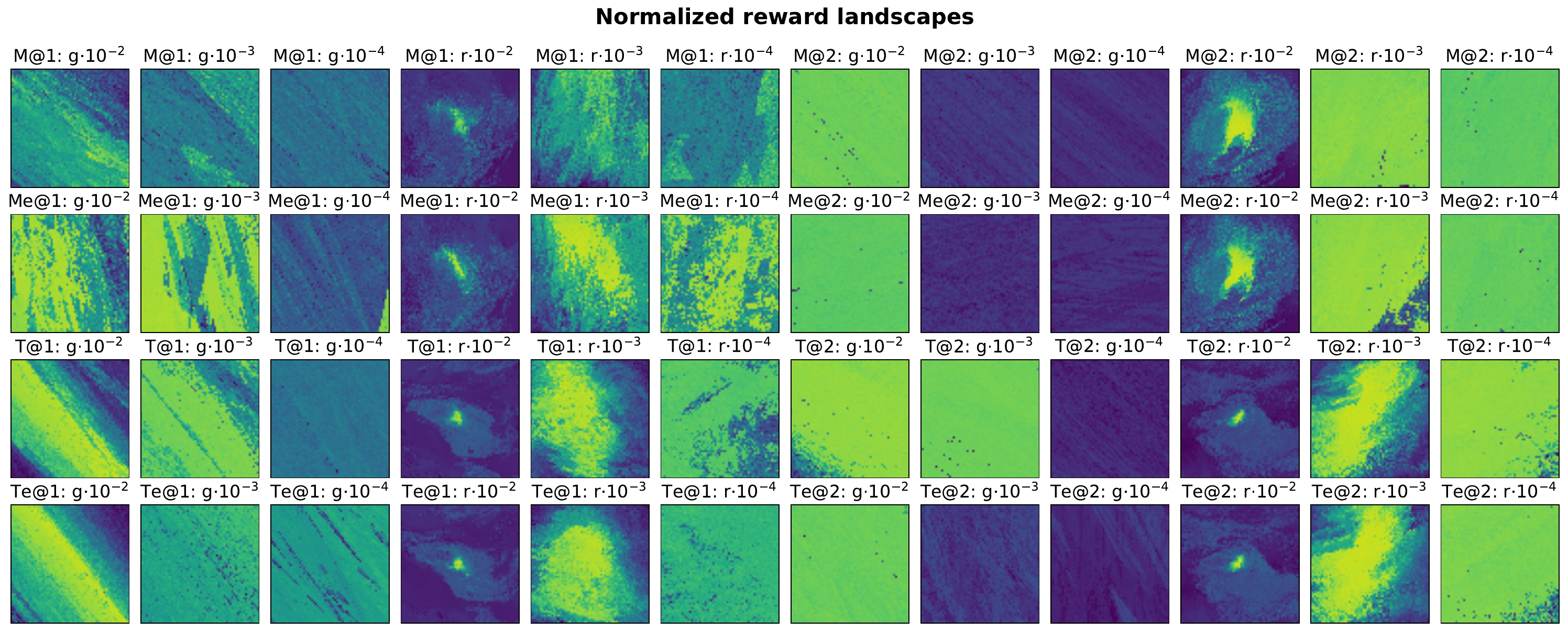}
    \includegraphics[width=\textwidth]{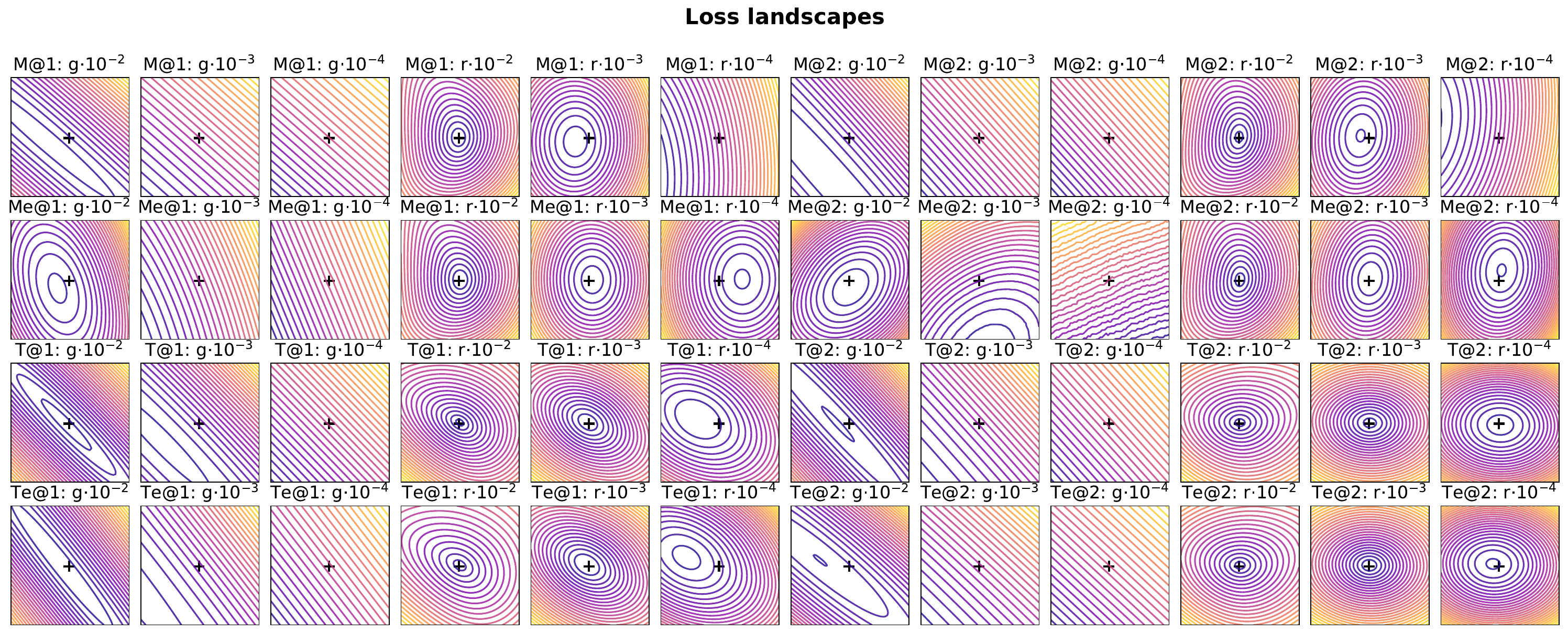}
    \caption{Additional visualizations of reward $J_H(\pi_{\btheta})$ and loss $\lbc(\pi_{\btheta})$ landscapes around neighborhoods of policies.
    \{M, T\} = \{MLP, Transformer\}; e = EMA; $\{1, 2\}$ = \{early, late\} checkpoints; \{g, r\} = \{gradient, random\} directions.  The number (e.g. $10^{-3}$) denotes neighborhood radius.
    }
    \label{fig:appendix-heatmaps}
\end{figure}

\paragraph{Non-Markovianity of the Transformer cloner.} Of interest (but somewhat orthogonal to the scope of this paper), our behavior-cloned Transformer agents are \emph{reparameterizations} of the original MLP expert policies. They are non-Markov, in the sense that their architecture allows their actions to depend on a sliding window of state history, rather than only the most recent state. A natural question is whether the internal representations of Transformer agents correspond to the formation of meaningful non-Markov circuits \citep{elhage2021mathematical,edelman2022inductive,liu2023transformers}. Preliminary visualizations in Figure~\ref{fig:appendix-transformer-attn} suggest that this is indeed the case: attention heads appear to attend to the past, perhaps learning internal ``in-context resets'' \citep{liu2023exposing}. Similar findings appear in the nascent ``sequence models for continuous control'' literature \citep{janner2021offline,shafiullah2022behavior}.

\begin{figure}
    \centering
    \includegraphics[width=0.48\textwidth]{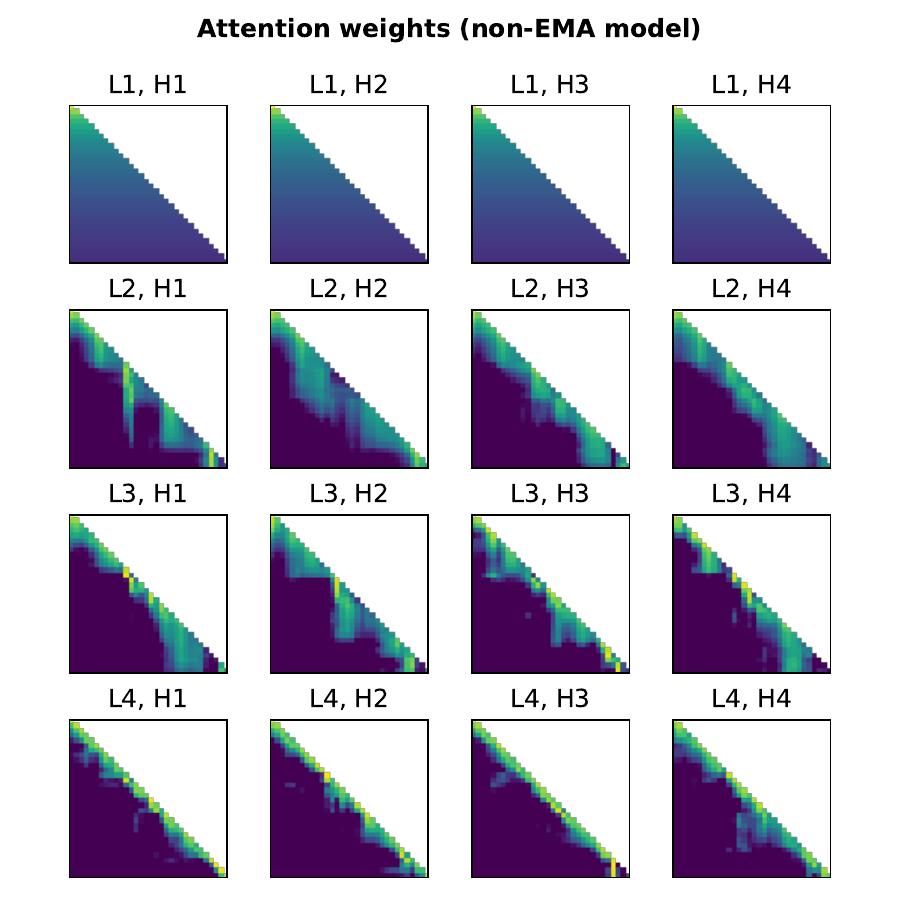}
    \includegraphics[width=0.48\textwidth]{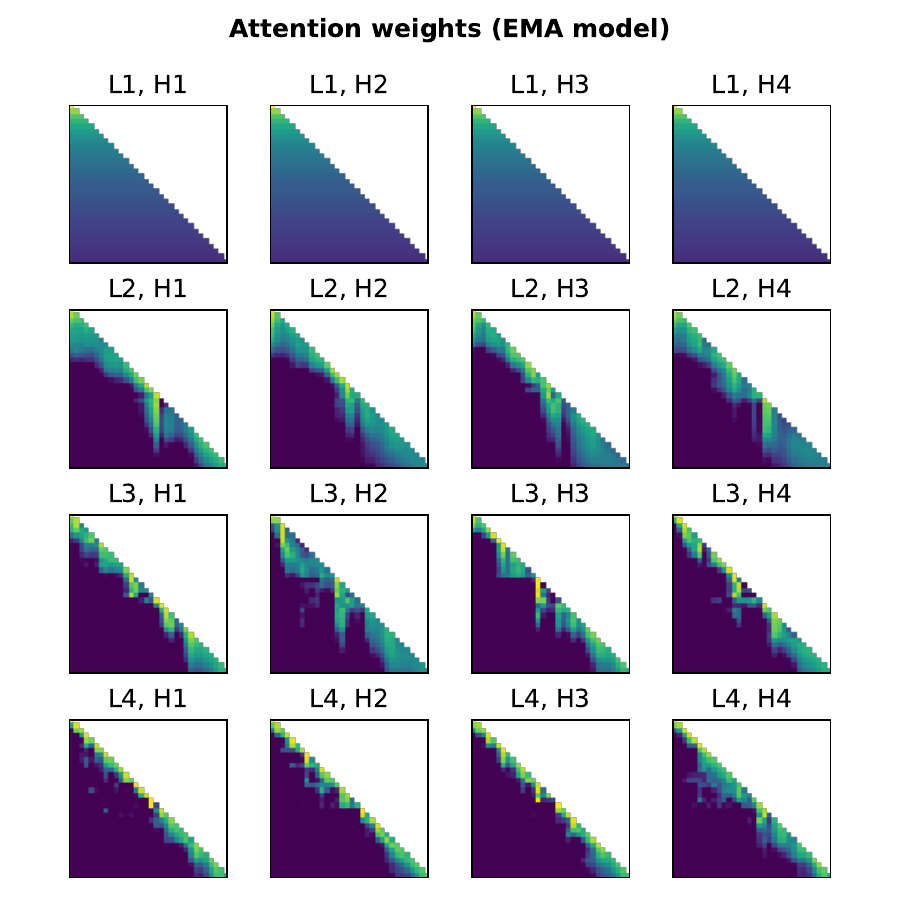}
    \caption{Heatmaps of attention weights of 4-layer, 4-head, 1024-dimensional Transformer cloners, evaluated on their own rollouts (evaluated at step 200, where a non-Markov policy has access to the past 32 states in its context). Cloned agents seem to learn policies which compute sliding-window filters in the first layer, and attend to history in a systematic manner (which we do not attempt to decode). This serves as a cursory check that the policies are indeed non-Markov, a property that is impossible for the MLP-based models to share. Weights are taken from the final checkpoint, and both agents incur normalized rewards of $>0.99$.}
    \label{fig:appendix-transformer-attn}
\end{figure}

\subsection{Natural language generation}
\label{subsec:appendix-nlp}

In this section, we describe the autoregressive language modeling experiments in full detail. Throughout this section, we use the same standard Transformer architecture, with $12$ layers, embedding dimension $1536$, context length $1024$, and sinusoidal position encodings. We use BPE tokenizers \citep{sennrich2015neural} with vocabulary size $2000$ for TinyStories (version 2, with some postprocessing to remove Unicode glitches; 598M tokens), and $32000$ for English Wikipedia (5.3B tokens, with a random 99:1 train-validation split).  We emphasize that the green lines in the bottom row are scatter plots.

The purpose of using the LLM-synthesized TinyStories corpus \citep{eldan2023tinystories}, in accordance with the authors' intent, is to provide a significantly cheaper proxy for all algorithmic considerations relevant to the large language model pretraining pipeline. This mitigates the common occurrence of \emph{undertraining} when benchmarking LLM training pipelines, while allowing for carefully controlled experiments at a reasonable computational cost. Nonetheless, the fact that TinyStories is a proxy distribution comprises a limitation of this empirical study. We hope to address this gap in future work (and also encourage interested parties to incorporate EMA without learning rate decay into their LM pipelines).

Each model is trained on a single node with 8 NVIDIA V100 GPUs, with a global batch size of 64 (thus, 65536 tokens), using the AdamW \citep{loshchilov2017decoupled} optimizer, learning rates $\{3, 5, 8\} \times 10^{-4}$, and weight decay $0.1$. For the 2-epoch TinyStories runs, a finer-grained grid of learning rates is used: $\eta = \{2, 3, 4, 5, 7, 8\} \times 10^{-4}$. For Wikipedia, we only use $3 \times 10^{-4}$ (as a quick preliminary check that our qualitative findings remain on natural data). $1000$ steps of learning rate warmup are used throughout these experiments. With this hardware setup, the TinyStories training runs take $4$ hours per epoch, while a single-epoch Wikipedia training run takes $50$ hours.

\begin{figure}
    \centering
    \includegraphics[width=\textwidth]{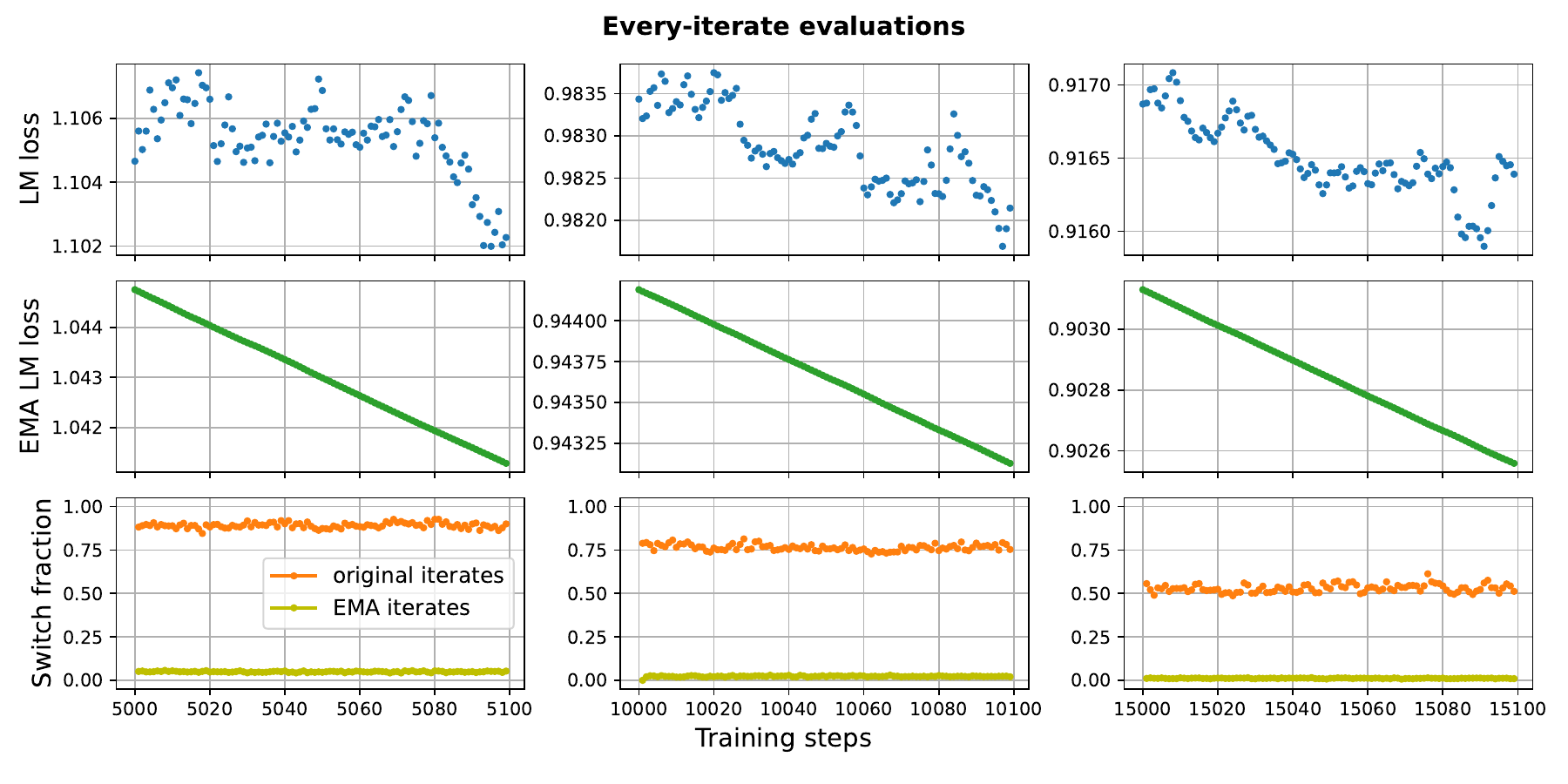}
    \caption{ Zoomed-in every-iterate evaluations from the 270M-parameter TinyStories training procedure, enlarged from \Cref{fig:nlp-results} \emph{(top left)} in the main paper. These evaluation phases occur for 100 iterations each, starting from checkpoints 5000, 10000, and 15000 (the total number of steps is 18062). The first two rows show minuscule fluctuations in the ``behavior cloning'' (next-token-prediction) loss. The third row measures \textbf{rollout disagreement between consecutive gradient iterates}: the fraction of input sequences where the deterministic (argmax) autoregressive generations at checkpoints steps $t$ and $t-1$. As the losses fluctuate, pairs of consecutive checkpoints disagree on >50\% of autoregressively-generated deterministic completions. This is only true for <5\% of EMA iterates $\bthetatil_\gamma\upp{t}$. }
    \label{fig:nlp-oscillations-appendix}
\end{figure}

We restate the empirical finding from the main paper, then discuss each aspect in depth.

\begin{enumerate}[leftmargin=2.5em]
    \item[(R5)] Autoregressive LMs exhibit significant rollout oscillations throughout training. \textbf{EMA stabilizes the trajectory, accelerates training, and improves generalization}, complementing (and potentially obviating) standard practices in learning rate annealing.
\end{enumerate}

\paragraph{Small fluctuations in the ``BC'' loss.} In the training run with $2$ epochs (18K steps) learning rate decay, and learning rate $3 \times 10^{-4}$, we perform zoomed-in evaluations of every iterate $t \in \{5001, \ldots, 5099\} \cup \{10001, \ldots, 10099\} \cup \{15001, \ldots, 15099\}$. \Cref{fig:nlp-results} \emph{(top and middle rows)} shows that there are small fluctuations in terms of the validation log loss. Strikingly, upon performing the same evaluations on the EMA iterates, progress is extremely smooth in time (nearly indistinguishable from linear). Note that this is \emph{not} to illustrate GVA-- in this setting, the next-token-prediction log loss is $\lbc$. Indeed, these fluctuations are minuscule.

\paragraph{Large rollout oscillations due to GVA.} \Cref{fig:nlp-results} \emph{(bottom)} provides a quick check that while local oscillations in the loss are small, the generations of these models diverge over long-horizon autoregressive rollouts. \Cref{fig:nlp-results} \emph{(third row)} makes this quantitative: for $1000$ examples from the TinyStories validation set, we take $4$ prefixes, of length \{10\%, 25\%, 50\%, 75\%\}, and query each of the 300 model checkpoints for their argmax completion. For each checkpoint $t$ in the intervals above, we record a \emph{switching cost}: the fraction of prefixes (``in-distribution prompts'') for which generations from $\btheta^{t-1}$ and $\btheta^{t}$ differ; these are the orange and yellow dots per iterate. The mean switching costs on the respective intervals are \{89.2\%, 76.5\%, 53.0\%\} for the optimizer iterates, and \{4.9\%, 2.3\%, 1.1\%\} for the EMA iterates. In short, these numbers can be interpreted as \textbf{probabilities that the greedy step-wise MLE sequence of consecutive LM training iterates disagree over long rollouts,} conditioned on in-distribution prefixes.

\begin{figure}
    \centering
    \begin{subfigure}{0.45\textwidth}
        \includegraphics[width=\textwidth]{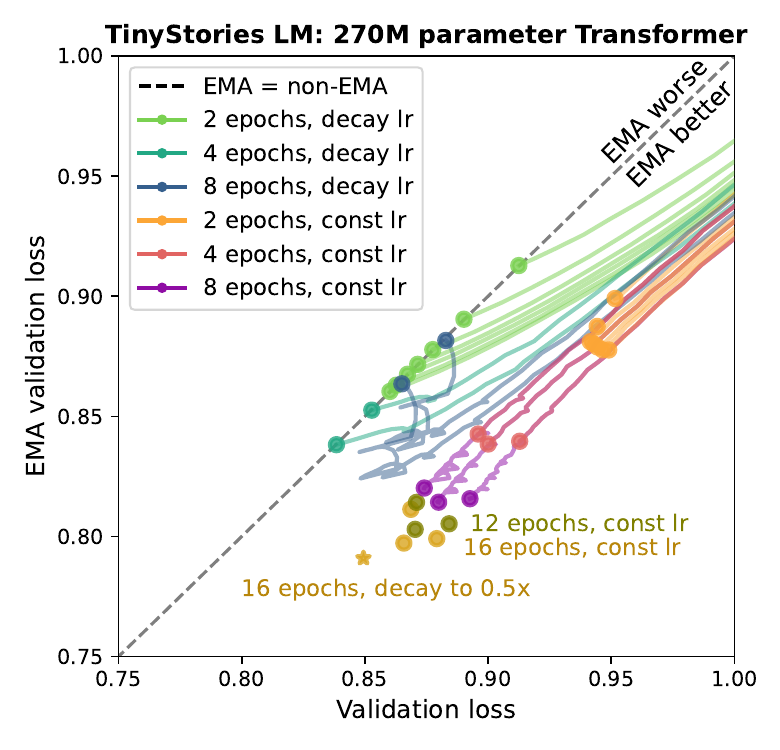}
        \caption{}
    \end{subfigure}
    \begin{subfigure}{0.45\textwidth}
        \includegraphics[width=\textwidth]{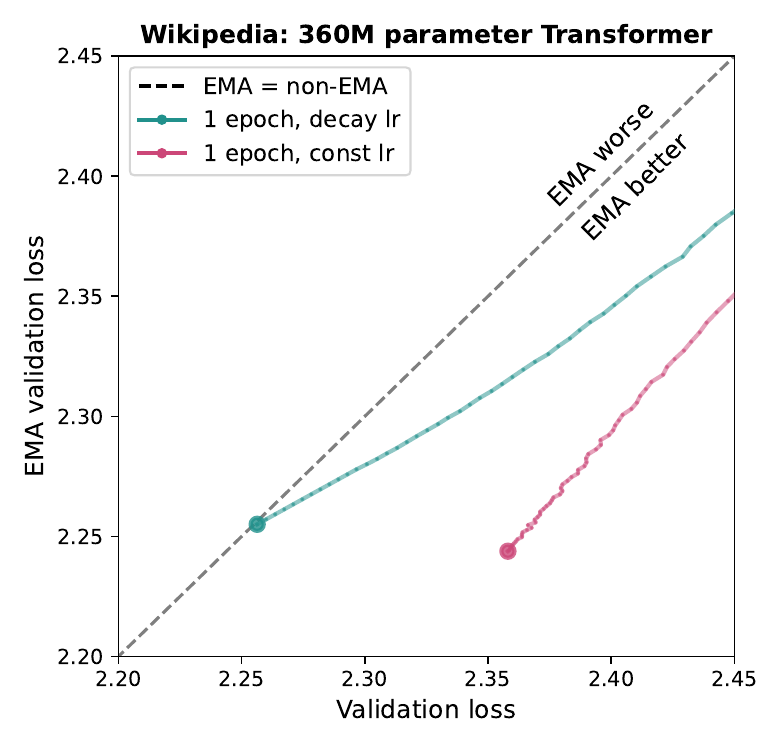}
        \caption{}
    \end{subfigure}
    \caption{Training paths of autoregressive LMs in (model loss, EMA loss) space, for (a) TinyStories and (b) English Wikipedia language models. In both cases, we find that with a suitable $\beta$ schedule ($1000$-iteration burn-in and $\gamma \propto t^{-0.67}$ annealing), \textbf{EMA never hurts validation perplexity}. With linear learning rate annealing (down to $0$), the optimizer and EMA iterates arrive at the same final validation loss. (In a limited set of experiments, findings are identical with cosine decay). On the other hand, training at a constant learning rate, \textbf{EMA iterates are better, even though the optimizer iterates are worse}. Training pipelines which decouple optimization from stabilization can reduce overfitting and produce lower-loss models. }
    \label{fig:nlp-long-appendix}
\end{figure}

\begin{figure}
        \vspace{7mm} %
        \fcolorbox{black}{white}{
        \parbox{0.97\textwidth}{
            \input{body/Ba-nlp-generations-1}
        }
        }
        \vspace{7mm}

    \caption{ Argmax-decoded generations from a sequence of consecutive training checkpoints (15027-15049), seeded by a prefix of examples from the TinyStories dataset. These are the full generations corresponding to \Cref{fig:nlp-results} \emph{(bottom)}. Note that we do not attempt to evaluate generation quality systematically in this work; we only note that (1) the argmax generations oscillate between semantically distinct modes, and (2) the EMA iterates, aside from having better losses, switch their rollout argmax trajectories (conditioned on in-distribution prompt prefixes) far less frequently. }
    \label{fig:nlp-appendix-generations}
\end{figure}

\paragraph{Improvements in validation loss.} In the continuous control settings, we do not pay much attention to analyzing the effects of trajectory smoothing on the behavior cloning loss, because obtaining a high-reward policy is the clear primary objective. In language generation, $\lbc$ has a special role as the main quantity of interest in text \emph{compression} (as opposed to closed-loop generation): namely, it is the statistical language model's log-perplexity. Figure~\ref{fig:nlp-long-appendix}a shows that on TinyStories, in terms of validation set perplexity, the EMA iterates are \textbf{always} at least as good as the optimizer iterates. Furthermore, beyond 4 training epochs, when models overfit when the learning rate is decayed to 0. without decay, \textbf{models do not overfit and keep improving}, even up to 16 training epochs.
We find that a \emph{partial} annealing strategy (decaying linearly from $8 \times 10^{-4}$ to $4 \times 10^{-4}$) produces the best of all models, with a token perplexity of 2.21 (compared to 2.31 without EMA). While the margin of overall improvement for 1 epoch over the larger Wikipedia dataset (Figure~\ref{fig:nlp-long-appendix}b) is smaller (9.54 vs. 9.39 token perplexity), the finding that EMA never hurts is upheld.

\paragraph{Examples of divergent generations.} Below, we provide some examples of argmax completions from these TinyStories language models. This serves as a quick check that (1) the autoregressive generations produced by these language models are qualitatively fluent in a restricted linguistic domain, and (2) minibatch noise-induced fluctuations induce distinct autoregressive generations. The robust benchmarking of end-to-end generation quality presents methodological ambiguities; resolving them is outside the scope of this paper.

We close with a few remarks on larger-scale experiments, in light of the flurry of interest in understanding and improving the training of large language models (LLMs).
\begin{itemize}[leftmargin=3em]
\item \textbf{Prior work.} Our preliminary NLP experiments are certainly not the first to note the benefits of averaging language models \citep{kaddour2022stop, wortsman2022model, sanyal2023understanding, sandler2023training}. In contradistinction to these works, our work contributes a set of \emph{controlled} experiments to isolate the phenomenon of GVA.
\item \textbf{Iterate averaging at the LLM scale.} Towards understanding considerations which may appear at the scale of our study, it is difficult to perform completely analogous experiments with open-source pretrained models. Model releases which target scientific analysis (e.g. \citet{sellam2021multiberts, biderman2023pythia}) only publish a small number of intermediate checkpoints (if at all), so that frequent EMA cannot be emulated from these artifacts.\footnote{Nor would it be practical to store every training iterate.} Thus,
we encourage interested parties to \textbf{explore trajectory stabilization at the scale of frontier LLMs}, and \textbf{publish both EMA-filtered and frequently-saved checkpoints} for evaluation and analysis.
\item \textbf{Preliminary recommendations.} We suggest updating the EMA at every iteration, setting $\gamma = 10^{-4}$ ($\beta = 0.9999$), employing a burn-in that is roughly the same length as the learning rate warmup, and tuning the annealing parameter in the range $0.5 \leq \alpha \leq 1$ (larger $\alpha$ acts like a smaller learning rate). We also note (based on limited, informal tests) that the stabilizing benefits of EMA appear to remain when \textbf{finetuning} pretrained LLMs; in this use case, we recommend setting burn-in to $0$, and carefully tuning the annealing parameter based on the finetuning dataset size and number of passes over the data.
\end{itemize}

\subsection{Linear systems}
In order to generate intuition about the cause and mitigation of GVA, we study a simple linear system as well as a simple 2-piece affine system.  We recall the setting of the Linear Quadratic Regulator (LQR) \citep{kalman1960new}.  Let $\bx_t \in \rr^{\dimx}$ and $\bu_t \in \rr^{\dimu}$ be state and control vectors and suppose that
\begin{align}\label{eq:lqr_dynamics}
    \bx_{t+1} = \bA \bx_t + \bB \bu_t + \bw_t, \qquad \bw_t \sim \cN\left( \bzero, \sigma^2 \eye \right),
\end{align}
where $\bA \in \rr^{\dimx \times \dimx}$ and $\bB \in \rr^{\dimx \times \dimu}$.  For a horizon $H$, and policy $\pi_{\bK}: \bx \mapsto \bK \bx$ with $\bK \in \rr^{\dimu \times \dimx}$, the reward is given by
\begin{align}\label{eq:lqr_reward}
    J_H(\pi_{\bK}) &= \ee\left[ \sum_{t=0}^{H-1} r(\bx_t, \bu_t) \right] = -\sum_{t = 0}^{H-1} \bx_0^\top \left(\left( \bA + \bB \bK \right)^t\right)^\top (\bQ + \bK \bR) \left( \bA + \bB \bK \right)^t \bx_0 \\
    r(\bx_t, \bu_t) &= -\bx_t^\top \bQ \bx_t - \bu_t^\top \bR \bu_t,
\end{align}
where $\bQ, \bR$ are positive semi-definite matrices.
\begin{remark}
    In most treatments of LQR, $J_H$ is taken to be positive and thus corresponds to a control \emph{cost} rather than a reward.  In order to remain consistent with the rest of the paper, we take $J_H$ to be negative and thus a reward.
\end{remark}
In the infinite horizon case, it is known that an optimal policy $\bK^\star$ can be found by solving the Discrete Algebraic Riccati Equation (DARE) \citep{kalman1960new}:
\begin{align}
    \bK^\star &= - \left( \bR + \bB^\top \bS \bB \right)^{-1}\left( \bB \bS \bA \right), \\
    \bS &= \bQ + \bA^\top \bS \bA - \bA^\top \bS \bB \left( \bR + \bB^\top \bS \bB \right)^{-1} \bB^\top \bS \bA.
\end{align}
In this section, when the system matrices $(\bA, \bB, \bQ, \bR)$ are clear from context, we reserve $\bK^\star$ for this optimal policy.  The closed-loop system is called \emph{marginally unstable} if $\norm{\bA + \bB \bK^\star}_\op = 1$ and \emph{marginally stable} if $\norm{\bA + \bB \bK^\star}_\op < 1$.  Stability is a helpful condition for analysis as it implies that trajectories converge toward zero and thus we do not expect GVA to appear.  Marginal stability does not cause convergence, but as we show in \Cref{app:marginally_stable}, when the imitator is sufficiently close to the Ricatti policy, we also do not expect to see GVA.  In both experiments, we follow a similar pipeline as in the MuJoCo experiments as described in \Cref{sec:mujoco_experiments}, with the main difference that we hard-code our expert.  We now describe our two experiments in detail.  In our experiments, we let $\dimx = \dimu = 2$ for ease of visualization of trajectories.

\subsection{LQR with marginally stable dynamics}\label{app:marginally_stable_lqr_experiments}

\begin{figure}
    \centering
    \subfloat[]{\includegraphics[width=.5\textwidth]{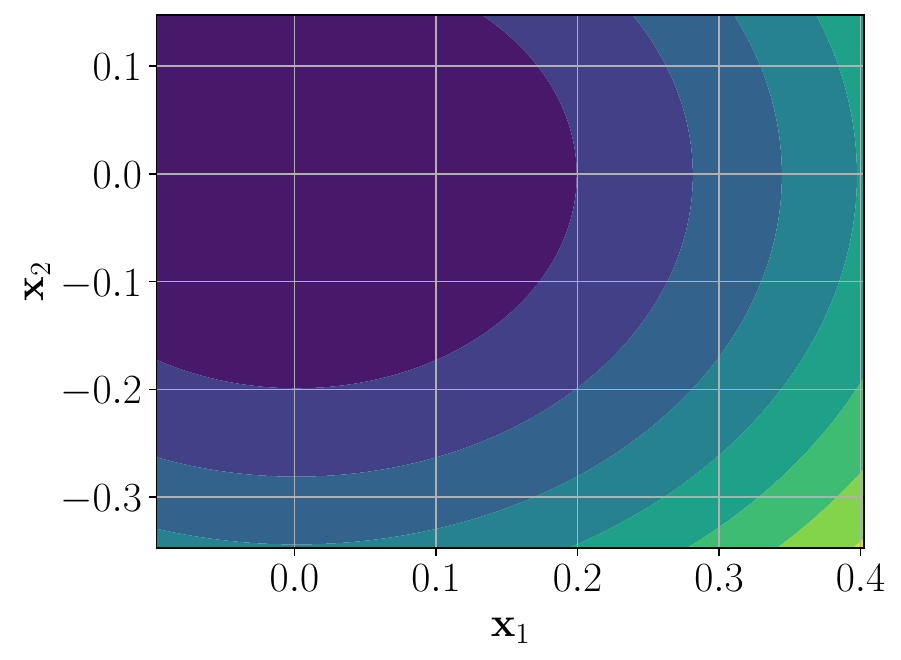}}
    \subfloat[]{\includegraphics[width=.5\textwidth]{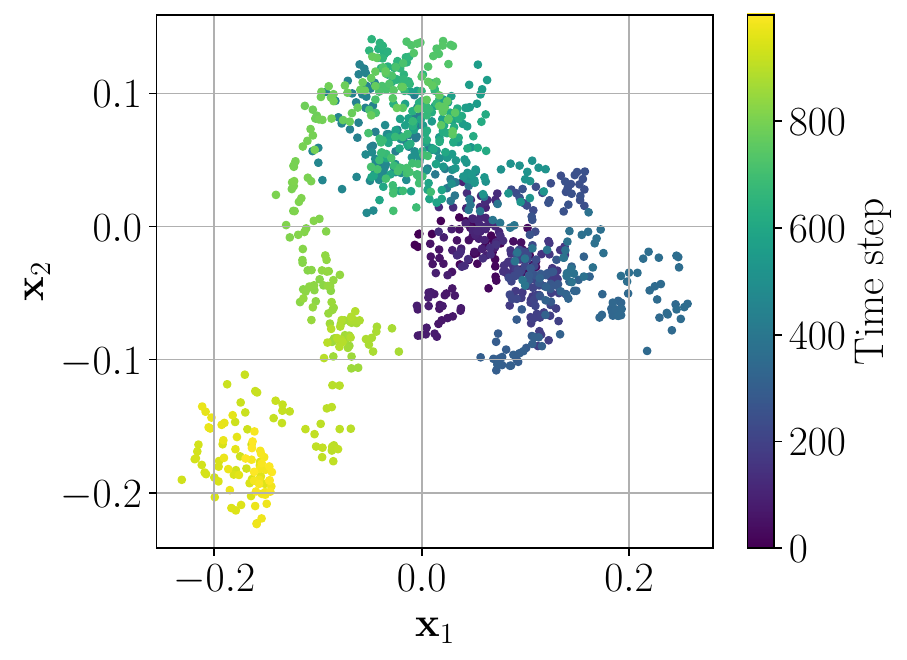}}
    
    \caption{Depiction of marginally stable LQR setting.  In (a) we show a contour plot of the reward (for fixed action $\bu$), which is simply a quadratic and in (b) we show a sample expert trajectory in state space.  Each point is the state at a given time step, with the color corresponding to the time.}
    \label{fig:no_cliff_lqr}
\end{figure}
In our first experiment, we let 
\begin{align}
    \bA = \begin{bmatrix}
        1.0025 & 0 \\ 0 & 1.0025
    \end{bmatrix}, \qquad \bB = \begin{bmatrix}
        -0.0043 & -0.0026 \\ -0.0026 & -0.0043
    \end{bmatrix}, \qquad \bQ = \bR = \eye.
\end{align}
To generate these matrices, we chose a small $\alpha > 0$ and let $\epsilon = \frac\alpha H$.  We then sampled a rotation $\bO$ uniformly at random from the orthogonal group.  We let $\bA = (1 + \epsilon) \eye$ and $\bB = - \epsilon \bO$.  Note that this system is clearly marginally stable.  The rotation is included to make the learning problem slightly more challenging for the MLP.  We let $\bK^\star$ be the Riccati policy of this marginally stable system,
\begin{align}
    \bK = \begin{bmatrix}
        1.3867 & 0.8250 \\ 0.8250 & -1.3867
    \end{bmatrix}.
\end{align}
Steps (2-5) of the pipeline in \Cref{sec:mujoco_experiments} are identical to the MuJoCo experiments.  A visualization of the reward landscape, the expert policy, and a sample expert trajectory can be found in \Cref{fig:no_cliff_lqr}.  As demonstrated theoretically in \Cref{app:marginally_stable}, we expect little to no oscillation in this setting, and indeed we observe this.
In particular, we consider two imitator function classes:
\begin{itemize}
    \item \textbf{Linear.}  This is simply linear regression with dependent features, where we optimize over the class of functions $\left\{ \bx \mapsto \bK \bx | \bK \in \rr^{\dimx \times \dimu} \right\}$.
    \item \textbf{MLP.}  We also investigate the effect of introducing nonconvexity into the optimization by optimizing over depth 2 neural networks with the dimension of the hidden layer being 32.  For optimization, we use a AdamW with a learning rate of .0003 and default parameters.  We also use a linear decay with a warmup of 50 steps.
\end{itemize}
We present the result of this experiment in \Cref{fig:no_cliff_lqr_results}.  As expected, we do not see GVA in this setting; indeed, both the linear imitator and the MLP are able to recover the expert to a sufficiently high degree of accuracy so as to obviate the lack of stability, as predicted by \Cref{prop:lqr_insentive}.  We also exhibit a scatter plot comparing $\lbc$ and $J$ in \Cref{fig:lqr_scatter} to emphasize the lack of GVA in this setting.
\begin{figure}
    \centering
    \includegraphics[width=\textwidth]{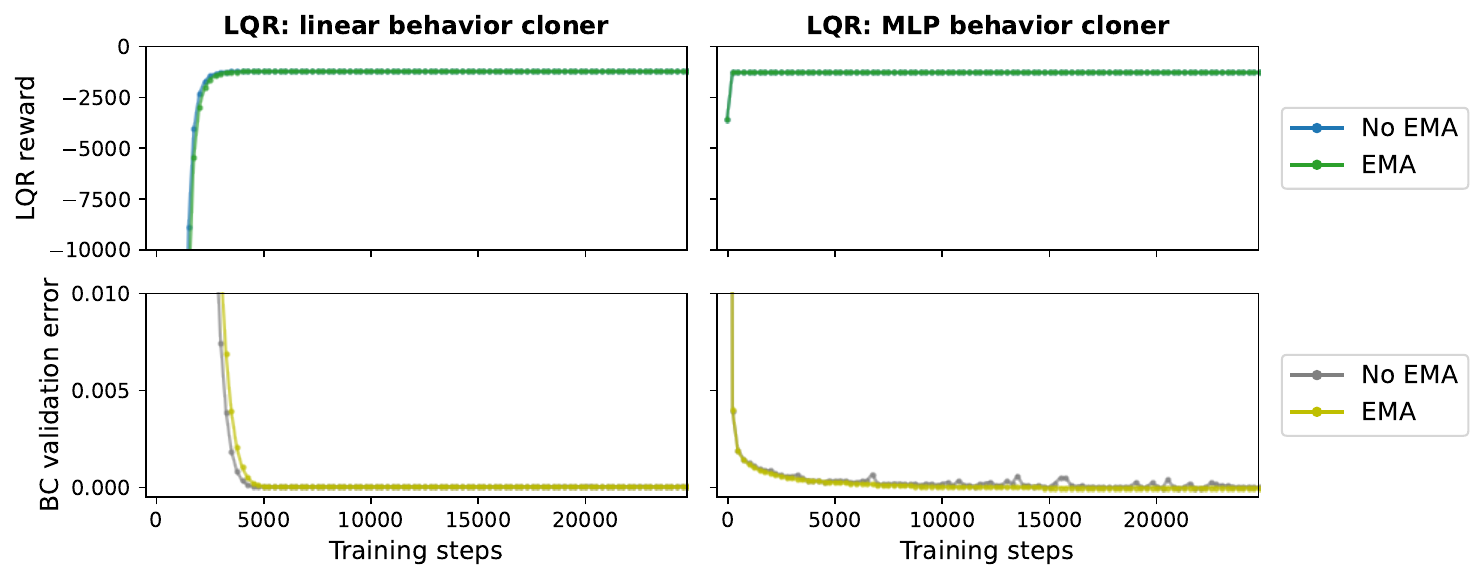}
    \caption{Training curves of imitator in marginally stable LQR setting.  We show both linear imitators (left) and MLP imitators (right), comparing the reward curves (top) with the $\lbc$ curves on a validation set (bottom).  We show the results both of EMA and of no EMA.}
    \label{fig:no_cliff_lqr_results}
\end{figure}

\begin{figure}
    \centering
    \includegraphics[width=\textwidth]{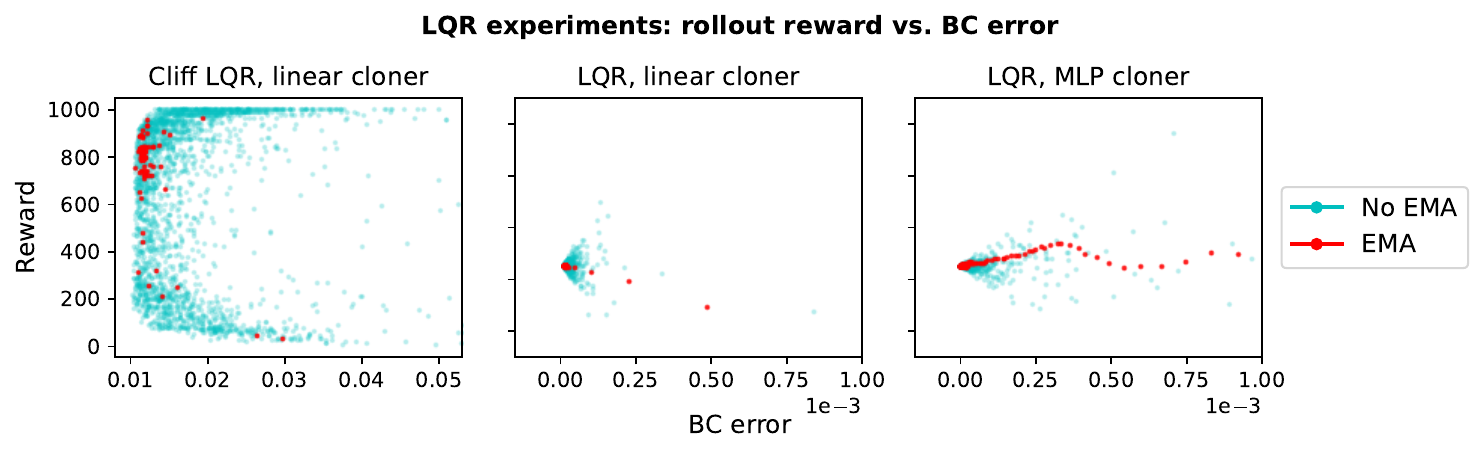}
    \caption{Scatter plots comparing $\lbc$ to the reward $J$ at different checkpoints of a single run in the LQR experiments.  We show the results both of \textcolor{red}{EMA} and of \textcolor{cyan}{no EMA}.  We exhibit a linear imitator of LQR with a cliff setting (left) as well as a linear (center) and MLP (right) imitator of the marginally stable LQR setting.}
    \label{fig:lqr_scatter}
\end{figure}

\subsection{LQR with cliff}\label{app:cliff_lqr_experiments}
\begin{figure}
    \centering
    \subfloat[]{\includegraphics[width=.4\textwidth]{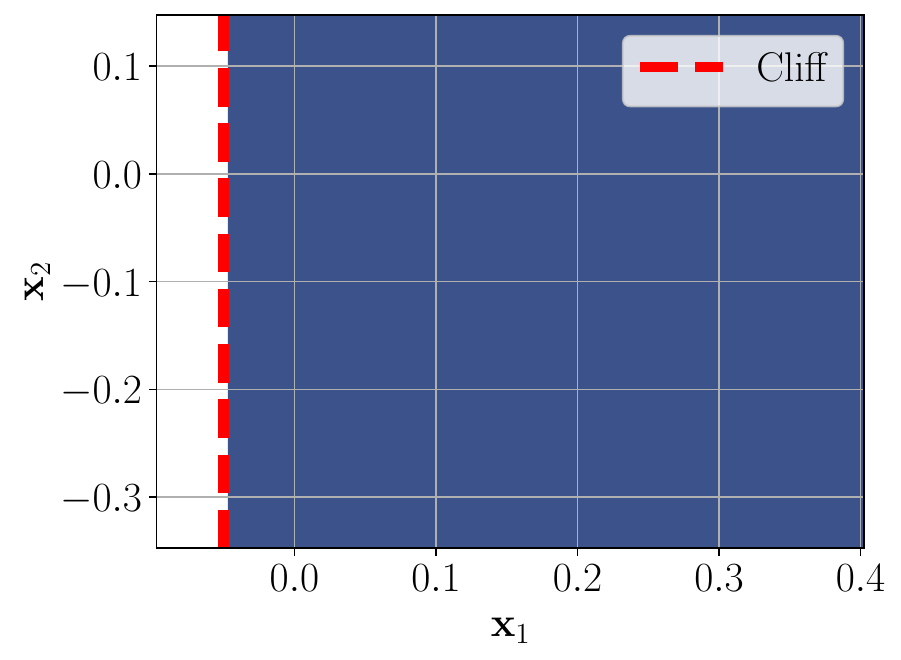}}
    \subfloat[]{\includegraphics[width=.4\textwidth]{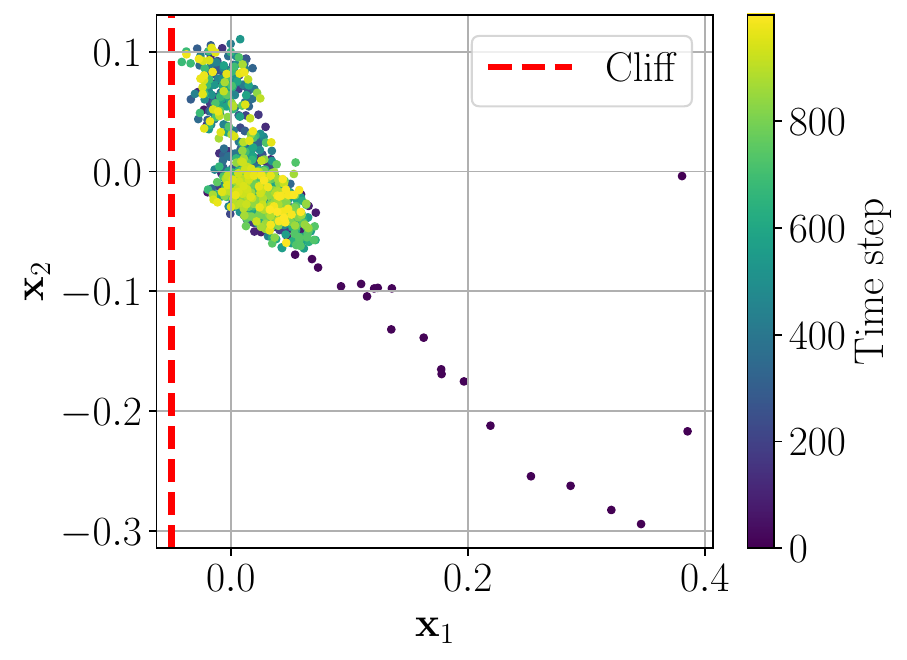}}
    
    \caption{Depiction of the LQR with a cliff setting.  In (a) we show a contour plot of the reward landscape; note that the reward is negative infinity to the left of the cliff, marked in red.  In (b) we show an example expert trajectory in state space.  Each point is the state at a given time step, with the color corresponding to the time.  Also marked is the cliff in  \textcolor{red}{red} \textcolor{black}.}
    \label{fig:cliff_lqr}
\end{figure}
Unsurprisingly, we did not see GVA in the previous experiment.  Indeed, we predict GVA to occur when the rollout dynamics function is highly unstable and possibly discontinuous.  To demonstrate that this is indeed the case even with extremely simple dynamics, we consider a system motivated by a spring falling off a cliff.  Again we suppose that $\dimx = \dimu = 2$ and think of $\bx_1$ as a position coordinate in a one-dimensional space and $\bx_2$ as a velocity coordinate.  We let
\begin{align}
    \bA = \exp\left( \eta \cdot \begin{bmatrix}
        0 & 1 \\
        -1 & 0
    \end{bmatrix}\right), \qquad \bB = \begin{bmatrix}
        0 \\
        1
    \end{bmatrix}, \qquad \bQ = \bR = \eye.
\end{align}
for a time parameter $\eta$ that we set to be $0.1$ in our experiments.  As it stands, this system acts as a discrete time approximation of a spring oscillating in one dimension, where the control is such that the learner can only affect the velocity directly (perhaps by applying some small force).  To this system, we introduce a `cliff' parameter $\kappa < 0$ such that if $\bx_1 < \kappa$, then the agent `falls off a cliff' and the episode ends.  We modify the reward function by letting
\begin{align}
    r(\bx, \bu) = \begin{cases}
        1 & \text{if } \bx_1 > \kappa \\
        - \infty & \text{otherwise}.
    \end{cases}
\end{align}
in our experiments, we let $\kappa = -0.05$.  For a visualization of the reward, see \Cref{fig:cliff_lqr} (a).  We again use the Ricatti policy $\bK^\star$ (with $\bQ = \bR = \eye$) as the expert.  Note that this policy does not take into account the infinite negative reward incurred by falling off the cliff and thus is not necessarily optimal in the above setting.

Here we only train a linear policy and examine the effect that EMA has on training.  We use a large, constant learning rate of 0.3 and apply EMA as a post hoc filter, making this a direct MDP analogue of the situation considered in \Cref{prop:dt_ema} in \Cref{app:theory}.  We display the results of this experiment in \Cref{fig:cliff_lqr_results}.  Note that GVA clearly occurs in this setting, as predicted by \Cref{prop:dt_ema}, although we observe that the .  Furthermore, we see that EMA is able to mitigate the oscillations and allow the agent to converge to an at least mediocre policy.  We emphasize that the expert policy here is suboptimal because the Riccati policy does not take into account the infinite negative reward incurred by falling off the cliff and thus it is not surprising that the EMA'd imitator does not perform optimally.  We also include a scatater plot of $\lbc$ vs. $J$ in \Cref{fig:lqr_scatter} to emphasize the fact that, even though there remains some small oscillation in $\lbc$ due to the large constant learning rate, GVA is still very much observable in the complete lack of relationship between the two quantities.
\begin{figure}
    \centering
    \includegraphics[width=\textwidth]{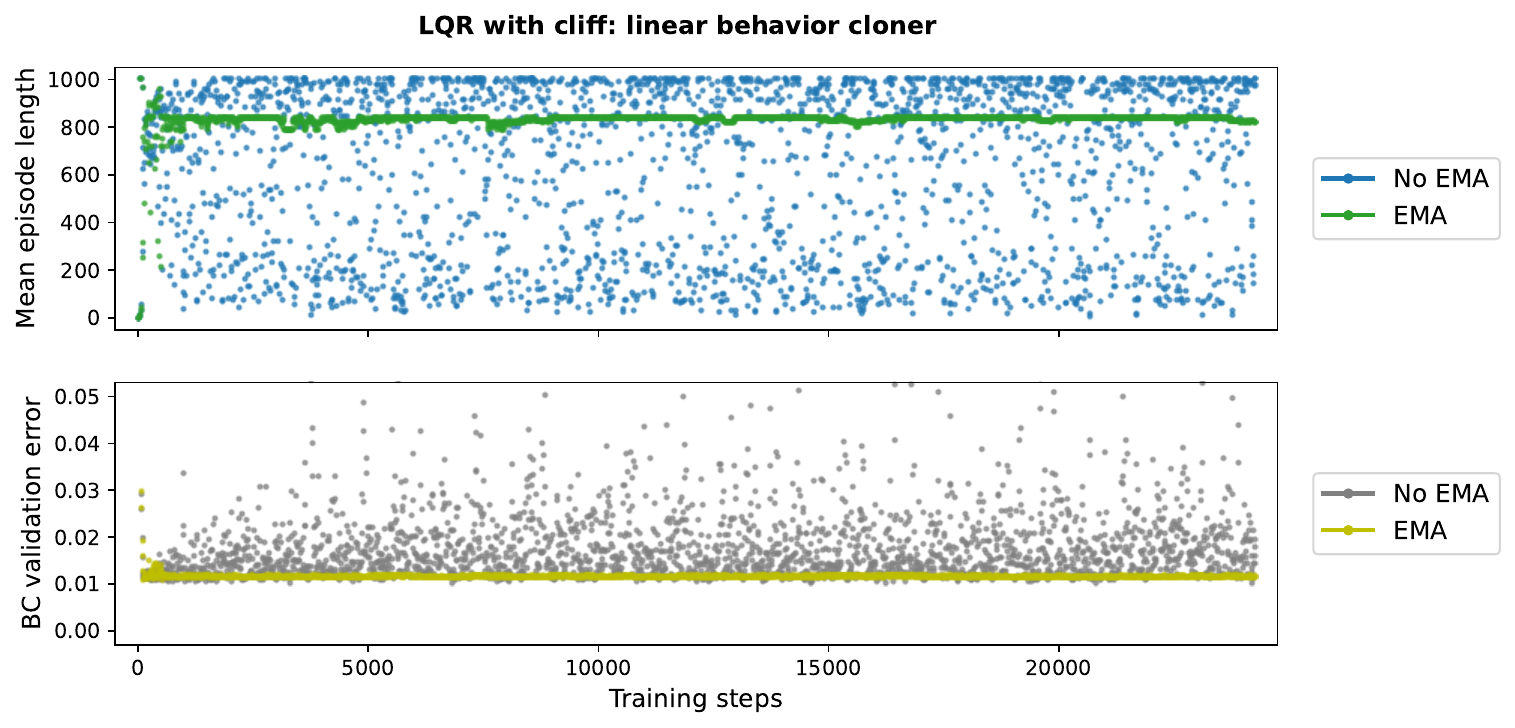}
    \caption{Training curves of linear imitator in LQR with a cliff.  We show scatter plots of both the reward curves (top) and the $\lbc$ curves on a validation set (bottom).  We show the results both of EMA and of no EMA.}
    \label{fig:cliff_lqr_results}
\end{figure}

\section{Theoretical analysis of toy examples: details and additional results}\label{app:theory}
In this section, we provide details and formal statements for the theoretical vignettes in \cref{sec:oscillations} and \cref{sec:stabilizers}, as well as additional results.

We begin in \cref{sec:amplification_linear}, which studies error amplification in linear dynamical systems. \Cref{prop:stab}, which is a formal version of
  \Cref{prop:stab_informal}, provides a linear example of extreme
  sensitivity of rollout reward to parameters despite the lack of
  sensitivity in the training loss.  We proceed by recalling that for
  (marginally) stable linear systems, this is not an issue, which is
  in line with our empirical results in
  \Cref{app:marginally_stable_lqr_experiments}. We then shift our focus to exploring the effect of EMA in several toy examples. 
\begin{itemize}[leftmargin=3em]
\item In \Cref{app:continuous_time_ema}, we introduce and analyze a variant of the cliff loss example studied empirically in \Cref{app:cliff_lqr_experiments}, and demonstrate in \Cref{prop:cliff_redu} that, if iterates are distributed as Gaussians, then the cliff loss can be characterized in terms of the BC loss.  For the sake of simplicty, we assume Gaussianity in our iterates when conducting computations.
\item Next, in \cref{sec:ema_mean}, we consider the effect of SGD on square loss and its interaction with EMA; \cref{prop:dt_ema} demonstrates that the benefits of EMA can be heavily dependent on the choice of learning rate. We then use this result to give further consequences for the cliff loss.  This constitutes a formal version of \Cref{prop:dt_informal}.  In \Cref{prop:cliff_separation_bm}, we provide a similar analysis in the continuous time limit for a variety of more complicated learning rate schedules, with the caveat that this result is restricted to the setting where EMA is started after a sufficiently large warmup period so as to saturate the population gradient.
\item Finally, in \Cref{app:convex_bad}, we consider the extent to which convex theory explains the empirical benefits of EMA.  We begin by surveying a number of alternative iterate averaging schemes in \Cref{app:avg_schem} and demonstrate in \Cref{thm:lb_ema} that EMA is provably \emph{suboptimal} for stochastic convex optimization when the \emph{EMA parameter} is chosen with the scaling that we find necessary in practice. These results apply to the \emph{step size} choices which are optimal for the convex optimization tasks.  To summarize, while the large learning rates we find empirically effective cannot be explained by the theory, conditional on using these mathematically suboptimal rates, stochastic convex optimization does provide useful intuition.
\end{itemize}
All proofs are deferred to \Cref{app:proofs}.  

\subsection{Error amplification in linear dynamical systems}
\label{sec:amplification_linear}
In this section, we study the error amplification and the GVA phenomenon in linear dynamical systems. We begin with a formal version of \Cref{prop:stab_informal}, which demonstrates that, even with a Lipschitz dynamical system, the imitation loss may have to quite small in order to guarantee that the rollout reward is close to optimal.  This is the origin of GVA: small perturbations in parameters that do not appear in BC the loss landscape lead to enormous changes in the rollout reward. 
\begin{proposition}[GVA in linear dynamical systems]
  \label{prop:stab}
    For any $\epsilon> 0$ and $d \in \bbN$, there exists a {linear} function $f: \rr^d \times \rr^d \to \rr^d$ and linear policy $\pist: \rr^d \to \rr^d$ such that the following statements hold: 
    \begin{enumerate}[leftmargin=2em,label=(\alph*)]
        \item There exists an MDP $\cM$ with \emph{deterministic} transitions such that $\state_{h+1} = f(\state_h, \action_h)$ for all $t$.
        \item The functions $\bx \mapsto f(\bx, \pist(\bx))$, $(\bx, \bu) \mapsto f(\bx, \bu)$, and $\bx \mapsto \pist(\bx)$ are globally $1 - \epsilon$, $\sqrt{1 + c^2}$, and $\epsilon / c$ Lipschitz, respectively.        
        \item Let the reward function $r(\bx, \bu) = -\norm{\bx}^2$ be quadratic, so that $J_H(\pi_{\btheta}) = - \ee\left[ \sum_{t = 1}^H \norm{\bx_t}^2 \right]$, and let $\cB_{\epsilon'}$ denote the set of $\epsilon'$-Lipschitz linear  functions $\Delta: \rr^d \to \rr^d$ satisfying $\Delta(\mathbf 0) = \mathbf{0}$.  If $\delta = c \epsilon' - \epsilon$. Then,
        \begin{align}
            \sup_{\Delta \in \cB_{\epsilon'}}\crl*{J_H(\pist)  - J_H(\pist + \Delta)} \geq C d \cdot \begin{cases}
                \min\left( H,  |\frac{1}{\delta}| \right), &\quad \delta \leq 0, \\
                H e^{C H \delta}, &\quad\text{otherwise}
            \end{cases}
        \end{align}
        for a universal constant $C > 0$. %
        \item Let imitation loss be defined as
        \begin{align}
        \lbc(\pi_{\btheta}) = \sum_{t=1}^H \|\pi_{\btheta}(\bx_t) - \pist(\bx_t)\|^2.
        \end{align}
        Then
        \begin{align}
            \sup_{\Delta \in \cB_{\epsilon'}} \lbc(\pi_{\btheta}) \leq H \norm{\bx_1}^2 \cdot (\epsilon')^2.
        \end{align}
        \item Moreover, there exists a subset $\tilde{\cB}_{\delta} \subset \cB_{\delta}$ such that, if $\epsilon$ is sufficiently small, then for $\Delta \in \tilde{\cB}_{\delta}'$, $\lbc(\pist + \Delta) \geq H\delta^2$ and yet, $J_H(\pist) -  J_H(\pist + \Delta) \le 0$.
    \end{enumerate}
  \end{proposition}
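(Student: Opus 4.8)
The plan is to realize all five claims from a single \emph{diagonal} linear system, so that every trajectory is an explicit geometric sequence and the bounds reduce to geometric-series estimates. I would take dynamics $f(\bx,\bu) = \bA\bx + \bB\bu$ with $\bA = \eye$ and $\bB = c\,\eye$, expert $\pist(\bx) = \bK^\star\bx$ with $\bK^\star = -\tfrac{\epsilon}{c}\eye$, per-step reward $r(\bx,\bu) = -\norm{\bx}^2$, and initial law $\nu = \cN(\bzero,\eye)$ so that $\E\norm{\bx_1}^2 = d$ (this supplies the factor $d$). Part (a) is then immediate, since the transition $\bx_{t+1} = \bx_t + c\,\bu_t$ is deterministic. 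Part (b) is a one-line check: the closed-loop map is $\bA + \bB\bK^\star = (1-\epsilon)\eye$ (Lipschitz $1-\epsilon$); the joint map $[\bA\ \bB] = [\eye\ c\eye]$ has operator norm $\sqrt{1+c^2}$ since $\bA\bA^\top + \bB\bB^\top = (1+c^2)\eye$; and $\norm{\bK^\star} = \epsilon/c$.

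For a linear perturbation $\Delta(\bx) = \bD\bx$ with $\norm{\bD}\le\epsilon'$, the perturbed closed loop is $(1-\epsilon)\eye + c\bD$, whose operator norm is at most $(1-\epsilon)+c\epsilon' = 1+\delta$, with equality for the aligned choice $\bD = \epsilon'\eye$. This choice makes the perturbed dynamics \emph{exactly} $\bx_t = (1+\delta)^{t-1}\bx_1$, so no spectral-radius estimates are needed: $-J_H(\pist+\Delta) = d\sum_{t=1}^H (1+\delta)^{2(t-1)}$, while $-J_H(\pist) = d\sum_{t=1}^H (1-\epsilon)^{2(t-1)} = O(d/\epsilon)$ is bounded. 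For part (c) I would split on the sign of $\delta$. When $\delta > 0$, I would lower-bound the geometric sum by its last $H/2$ terms, each at least $(1+\delta)^{H}\ge e^{CH\delta}$ (using $\ln(1+\delta)\ge\delta/2$), giving $\sum\ge\tfrac{H}{2}e^{CH\delta}$ and hence the claimed $Cd\,He^{CH\delta}$. When $\delta\le 0$ the perturbed mode decays at rate $1+\delta\in(1-\epsilon,1]$, and $\sum_{t=1}^H(1+\delta)^{2(t-1)} = \Theta(\min(H,1/\abs{\delta}))$ by the usual split according to whether $H\abs{\delta}\lesssim 1$; subtracting the bounded expert cost leaves the stated $Cd\min(H,1/\abs{\delta})$ amplification.

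Part (d) is the easy direction, evaluated on the \emph{expert} rollout $\bx_t = (1-\epsilon)^{t-1}\bx_1$: since $\pi_{\btheta}(\bx_t) - \pist(\bx_t) = \bD\bx_t$, I get $\lbc(\pi_{\btheta}) = \sum_{t=1}^H\norm{\bD\bx_t}^2 \le (\epsilon')^2\norm{\bx_1}^2\sum_{t=1}^H(1-\epsilon)^{2(t-1)} \le H(\epsilon')^2\norm{\bx_1}^2$, using only $(1-\epsilon)^{2(t-1)}\le 1$. For part (e) I would reverse the sign of the perturbation: taking $\tilde{\cB}_\delta$ to be the \emph{stabilizing} scalings $\bD = -s\,\eye$ with $s\le\delta$ yields closed loop $(1-\epsilon-cs)\eye$, which decays strictly faster than the expert, so $J_H(\pist) - J_H(\pist+\Delta)\le 0$. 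At the same time, in the regime ``$\epsilon$ sufficiently small'' (precisely $\epsilon H\lesssim 1$, so that $\sum_{t=1}^H(1-\epsilon)^{2(t-1)}\ge H/2$) the same on-distribution computation gives $\lbc(\pist+\Delta) = s^2\norm{\bx_1}^2\sum_t(1-\epsilon)^{2(t-1)} \ge H\delta^2$, establishing both halves of (e) simultaneously.

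The main obstacle is bookkeeping rather than conceptual: matching the two functional forms in part (c) with a \emph{single} universal constant $C$, and---on the benign side $\delta\le 0$---verifying that the bounded expert cost $O(d/\epsilon)$ does not cancel the $\min(H,1/\abs{\delta})$ term, which forces one to track that the perturbation meaningfully slows the decay (i.e.\ that $\abs{\delta}$ is a definite fraction of $\epsilon$). The only other care point is pinning down the ``sufficiently small $\epsilon$'' quantifier in (e) so that the lower bound $\lbc\ge H\delta^2$ coexists with non-positive amplification; both reduce to the elementary inequalities $\sinh(x)/x\ge 1$ and $1-(1-\epsilon)^{2H}\ge\tfrac12$ for $\epsilon H\lesssim 1$.
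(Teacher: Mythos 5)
Your construction is exactly the one the paper uses: the diagonal system $\bA=\eye$, $\bB=c\,\eye$, expert $\bK^\star=-\tfrac{\epsilon}{c}\eye$, the aligned perturbation $\epsilon'\eye$ to realize the worst case in (c), and a stabilizing (negative) multiple of the identity for (e), with all bounds reduced to the same geometric-series estimates. The proposal is correct and, if anything, slightly more careful than the paper's own write-up (explicit isotropic initial law to produce the factor $d$, explicitly subtracting the expert's bounded cost in (c), and the stabilizing sign convention in (e), where the paper's argument contains a sign slip).
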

  Note that \Cref{prop:stab} provides an existence result, providing a simple example where GVA may occur.  In effect, (d) ensures that small perturbations around the expert $\pist$ result in small changes to the next step prediction loss $\lbc$, while (c) shows that such small changes lead to exponential blowup in the rollout reward of the policies $J_H$; statements (a) and (b) demonstrate that such a phenomenon can even occur in otherwise well-behaved (in the sense of being Lipschitz) systems.
  
  We now complement \Cref{prop:stab} with a negative example, showing that if a linear system is sufficiently stable, then GVA does not occur.  For more intuition and a number of references relevant to these linear examples, see \citet{simchowitz2020improper,hazan2022introduction}. To state the result, recall the dynamics given in \cref{eq:lqr_dynamics}, as well as the LQR loss given in \cref{eq:lqr_reward}.  We consider linear policies in which there is some $\bK \in \rr^{\dimx \times \dimu}$ such that $\bu_t = \pist(\bx_t) = K \bx_t$, which implies that the expected dynamics are given by a linear function: $\ee\left[ \bx_H \right] = \left( \bA + \bB K \right)^H \bx_0$.  We have the following result. 
\begin{proposition}[GVA does not occur in sufficiently stable linear systems]
  \label{prop:lqr_insentive}
  Consider the linear dynamical system in \cref{eq:lqr_dynamics} and suppose that the expert policy given by $\pist(\bx_t) = \bK^\star \bx_t$ is such that $\norm{\bA + \bB \bK}_\op \leq 1$, i.e., the closed-loop dynamics are marginally stable.  Suppose that $\bkhat$ is an imitator policy trained so that $\nrm[\big]{\bK - \bkhat}_\op \leq \frac{\epsilon}{H \nrm*{\bB}_\op}$ for $\epsilon \leq 1$ and $\nrm[\big]{\bQ + \bkhat \bR}_\op \leq C$.  Then
    \begin{align}
    J_H(\pi_{\bkhat}) -      \inf_{\norm{\bK - \bkhat}_\op \leq \frac{\epsilon}{H \norm{\bB}_\op}} J_H(\pi_{\bK})  \leq 100 \left( C H^2 + H \norm{\bR}_\op  \right) \norm{\bx_0}^2 \epsilon.
    \end{align}
    If $\lbc(\pi_{\bK}) = \ee\left[ \sum_{h = 1}^H \norm{\left( \bK - \bK^\star \right)\bx_t}^2 \right]$, then
    \begin{align}
        \sup_{\norm{\bK - \bkhat}_\op \leq \epsilon } \lbc(\pi_{\bK}) \leq C H \left( \norm{\bx_0}^2 + \dimx \right) \epsilon^2.
    \end{align}
    Furthermore, if there exists some $\delta > 0$ such that $\nrm[\big]{\bA + \bB \bK}_\op \leq 1 - \delta$, then as soon as $\nrm[\big]{\bK - \bkhat}_\op \leq \frac{\delta}{2\norm{\bB}_\op}$, we have
    \begin{align}
       J_H(\pi_{\bkhat}) - \inf_{\norm{\bK - \bk}_\op \leq \frac \delta {2 \norm{\bB}_\op}} J_H(\pi_{\bK})  \leq 100 \left( C H^2 + H \norm{\bR}_\op  \right) \norm{\bx_0}^2 \epsilon.
    \end{align}
    Note that in both cases, the rollout error $J_H$ does not grow significantly more quickly than $\lbc$, in contrast to \Cref{prop:stab}, suggesting that GVA will not occur.
\end{proposition}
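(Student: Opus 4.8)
The plan is to reduce all three claims to elementary perturbation estimates for the closed-loop transition matrix $M_\bK := \bA + \bB\bK$ and for the quadratic forms appearing in \eqref{eq:lqr_reward}, exploiting marginal stability to keep every power $M_\bK^t$ bounded in operator norm uniformly over $t \le H$. This is precisely the feature that fails in \Cref{prop:stab}: there, instability makes $\norm{M^t}_\op$ grow like $e^{\Omega(Ht)}$, whereas here the hypothesis $\norm{\bA + \bB\bK^\star}_\op \le 1$ together with the $\mathcal{O}(1/H)$ radius of the perturbation ball will pin the powers at $\mathcal{O}(1)$.

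First I would record the stability bookkeeping. Since $\bkhat$ lies within $\frac{\epsilon}{H\norm{\bB}_\op}$ of $\bK^\star$ we have $\norm{M_\bkhat}_\op \le 1 + \frac{\epsilon}{H}$, and for any $\bK$ with $\norm{\bK - \bkhat}_\op \le \frac{\epsilon}{H\norm{\bB}_\op}$ likewise $\norm{M_\bK}_\op \le 1 + \frac{2\epsilon}{H}$; hence $\norm{M_\bK^t}_\op \le (1 + 2/H)^H \le e^2$ for all $t \le H$ and $\epsilon \le 1$. The telescoping identity $M_\bkhat^t - M_\bK^t = \sum_{j=0}^{t-1} M_\bkhat^{\,j}\,(M_\bkhat - M_\bK)\,M_\bK^{\,t-1-j}$ combined with $\norm{M_\bkhat - M_\bK}_\op = \norm{\bB(\bkhat - \bK)}_\op \le \frac{\epsilon}{H}$ then gives the uniform estimate $\norm{M_\bkhat^t - M_\bK^t}_\op \le t\cdot e^2 \cdot \frac{\epsilon}{H} = \mathcal{O}(\epsilon)$.

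With these in hand I would bound the reward gap termwise. Writing $W_\bK$ for the per-step weight in \eqref{eq:lqr_reward} (so $\norm{W_\bkhat}_\op \le C$ by hypothesis and $\norm{W_\bK - W_\bkhat}_\op = \mathcal{O}(\norm{\bR}_\op\,\norm{\bK - \bkhat}_\op)$), each summand is a quadratic form $\bx_0^\top (M_\bK^t)^\top W_\bK M_\bK^t \bx_0$, and the algebraic identity $A^\top W A - A'^\top W' A' = (A-A')^\top W A + A'^\top W (A-A') + A'^\top (W-W') A'$ reduces the difference of two such forms to the three norms already controlled. Summing the resulting $\mathcal{O}(\epsilon)$ bound over the $H$ terms yields $\sup_{\bK}\abs{J_H(\pi_\bkhat) - J_H(\pi_\bK)} \le \mathcal{O}(CH + \norm{\bR}_\op)\norm{\bx_0}^2\epsilon$, which is dominated by the stated $100(CH^2 + H\norm{\bR}_\op)\norm{\bx_0}^2\epsilon$ (the factor $H^2$ and the constant $100$ absorbing these crude estimates). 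Since this two-sided bound lower-bounds every $J_H(\pi_\bK)$ by $J_H(\pi_\bkhat) - \mathcal{O}(\cdots)$, taking the infimum over the ball gives the claimed inequality.

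The BC-loss bound is more direct: $\lbc(\pi_\bK) = \ee[\sum_{h=1}^H \norm{(\bK - \bK^\star)\bx_h}^2] \le \norm{\bK - \bK^\star}_\op^2 \sum_{h=1}^H \ee\norm{\bx_h}^2$, and since the states follow the marginally stable expert dynamics, $\ee\norm{\bx_h}^2$ is controlled by $\norm{\bx_0}^2$ plus the trace of the accumulated process-noise covariance, the latter contributing the $\dimx$ term; summing gives the stated $CH(\norm{\bx_0}^2 + \dimx)\epsilon^2$. For the strictly stable refinement I would rerun the reward argument, replacing the $1/H$-radius by the $H$-independent radius $\frac{\delta}{2\norm{\bB}_\op}$: now $\norm{M_\bK}_\op \le 1 - \delta/2$ for every $\bK$ in the ball, so $\norm{M_\bK^t}_\op \le (1-\delta/2)^t$ is summable and the powers stay bounded with no cancellation against $H$. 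The one place demanding care — and the crux of the whole proposition — is this uniform control of $\norm{M_\bK^t}_\op$: only the pairing of (marginal) stability with a radius shrinking like $1/H$, or in the strictly stable case with geometric decay, prevents the exponential blow-up of \Cref{prop:stab}, and getting the bookkeeping of that interplay right is essentially the entire content of the result.
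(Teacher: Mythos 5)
Your proposal is correct and follows essentially the same route as the paper's proof: bound $\norm{\bA+\bB\bK}_\op \le 1 + O(\epsilon/H)$ by the triangle inequality so that all powers up to $H$ stay $O(1)$, control $\norm{(\bA+\bB\bkhat)^t - (\bA+\bB\bK)^t}_\op$ via the telescoping identity, split the difference of per-step quadratic forms into the same three cross terms, and sum over the horizon (with the strictly stable case handled by the same argument on the larger, $H$-independent ball). The only difference is that you make the telescoping step and the geometric-decay bookkeeping explicit where the paper leaves them implicit, which if anything yields a slightly sharper constant than the stated $100(CH^2 + H\norm{\bR}_\op)$ bound.
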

Note that in the data regimes we consider {empirically} for the LQR setting, the assumption that $\nrm[\big]{\bkhat - \bK}_\op \lesssim \frac{1}{H}$ is reasonable, and SGD will converge relatively quickly to this regime \citep{nesterov2018lectures}. Thus, for this setting, we do not expect GVA to persist in training, even for marginally stable systems. It follows that the conclusion of this proposition is in line with our empirical results for LQR \Cref{app:marginally_stable_lqr_experiments}.

\subsection{Benefits of EMA for the cliff loss}
\label{app:continuous_time_ema}
In this section, we state a formal version of \Cref{prop:dt_informal}, which shows that EMA can reduce the variance of SGD for the cliff loss in \cref{sec:theory}. Recall that we consider a simple parameter estimation task, where we minimize the ``behavior cloning'' loss 
\begin{align}\label{eq:cliff_loss_quadratic}
    \lbc(\btheta) = \frac{1}{2} \|\btheta - \bmu\|^2,
\end{align}
for some mean vector $\bmu \in \rr^d$, while the ``reward'' function is given by a cliff loss:
\begin{align}\label{eq:clif_loss_reward}
    J(\btheta) = \begin{cases}
        -\norm{\btheta - \bmu}^2 & \norm{\btheta - \bmu} \leq \epsilon \\
        -C & \text{otherwise}
    \end{cases},
\end{align}
where $1 \geq \epsilon > 0$ is a fixed scale parameter.  This cliff loss is a simplified version of the ``LQR-with-a-cliff'' dynamical system given in \Cref{app:cliff_lqr_experiments}, corresponding to trajectories of length $H = 1$, which simplifies calculations. We begin by showing that despite its simplicity, this setting is nonetheless nontrivial: when $\btheta$ is drawn from a Gaussian distribution, the expected cliff loss indeed exhibits cliff-like behavior in the regime where $\Exp[\lbc(\btheta)]  =\Theta(\epsilon^2)$. {To state the result, let $\bthetast=\bmu$ denote the optimal parameter.}  Note that the Gaussian assumption is used mainly for the sake of simplicity, as it allows for exact calculations; in addition in the continuous time limit of SGD as applied to $\lbc$, the iterates approach a Gaussian process \citep{mandt2017stochastic}, a fact which we use in the sequel to apply this result.
\begin{proposition}\label{prop:cliff_redu} Let $\btheta \sim \cN\left( \bmu', \Sigma \right)$ be a Gaussian random vector in $\R^d$. Then, there exists universal constants $c_1,c_2,c_3 > 0$ such that the following hold. First,
\begin{align}
J(\bthetast) - \Exp[J(\btheta)] \geq \frac{C}{2}, \quad \text{ whenever }\quad \Exp[\lbc(\btheta)] \ge c_1 \epsilon^2.
\label{eq:Jlb_cliff}
    \end{align} 
    On the other hand, suppose that  $\Exp[\lbc(\btheta)]\leq\frac{\epsilon^2}{8}$. Then, 
    \begin{align}
J(\bthetast) -    \Exp[J(\btheta)] \leq  2\Exp[\lbc(\btheta)] +C\cdot c_2 \exp\left(- \frac{c_3\epsilon^2}{2\Exp[\lbc(\btheta)]}\right).  \label{eq:Jub_cliff}
    \end{align}
    In particular, if $\epsilon^2 \ge c_3^{-1}\Exp[\lbc(\btheta)]\log(c_2 C/\Exp[\lbc(\btheta)])$, then, 
    \begin{align}
      J(\bthetast) -\Exp[J(\btheta)] \leq  3\Exp[\lbc(\btheta)]. \label{eq:Jub_cliff_two}
    \end{align}
\end{proposition}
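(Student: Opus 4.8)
The plan is to change variables to $Z := \btheta - \bmu$, which is Gaussian with mean $\bar{\bmu} := \bmu' - \bmu$ and covariance $\bSigma$, and to observe that all three claims are governed by a single scalar quantity: the tail behaviour of $\norm{Z}$. Since $\bthetast = \bmu$ gives $J(\bthetast)=0$, and $-J(\btheta) = \norm{Z}^2\ind{\norm{Z}\le\epsilon} + C\ind{\norm{Z}>\epsilon}$, I would first record the exact identity
\begin{align}
J(\bthetast) - \Exp[J(\btheta)] = \Exp\!\left[\norm{Z}^2\ind{\norm{Z}\le\epsilon}\right] + C\cdot p, \qquad p := \Pr[\norm{Z}>\epsilon],
\end{align}
together with $\Exp[\lbc(\btheta)] = \tfrac12\Exp\norm{Z}^2$. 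Both the lower bound \eqref{eq:Jlb_cliff} and the upper bounds \eqref{eq:Jub_cliff}--\eqref{eq:Jub_cliff_two} then reduce to two-sided control of $p$ in terms of $\Exp\norm{Z}^2$.

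For the lower bound \eqref{eq:Jlb_cliff} I would drop the nonnegative first term, so it suffices to show $p\ge\tfrac12$ whenever $\Exp\norm{Z}^2\ge 2c_1\epsilon^2$; equivalently, that the small-ball probability satisfies $\Pr[\norm{Z}^2\le\epsilon^2]\le\tfrac12$. I would control this by an exponential (Chernoff) lower-tail estimate: for any $t>0$, $\Pr[\norm{Z}^2\le\epsilon^2]\le e^{t\epsilon^2}\Exp[e^{-t\norm{Z}^2}]$, and the Gaussian Laplace transform factorizes over the eigendirections of $\bSigma$ as $\prod_i(1+2t\lambda_i)^{-1/2}\exp(-\sum_i t\bar\mu_i^2/(1+2t\lambda_i))$. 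Choosing $t=1/(2\epsilon^2)$ and using $\prod_i(1+a_i)\ge 1+\sum_i a_i$, together with a one-line per-coordinate check that shifting "budget" from a covariance eigenvalue into a mean shift only decreases the bound, this collapses to $\Pr[\norm{Z}^2\le\epsilon^2]\le\sqrt{e/(1+2c_1)}$, which is at most $\tfrac12$ once $c_1\ge 5$. This pins down the universal constant $c_1$.

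For the upper bounds I would keep both terms: the first is at most $\Exp\norm{Z}^2 = 2\Exp[\lbc(\btheta)]$, so it remains to show the cliff term $Cp$ is exponentially small. The hypothesis $\Exp[\lbc(\btheta)]\le\epsilon^2/8$ forces $\Exp\norm{Z}^2\le\epsilon^2/4$, hence $\Exp\norm{Z}\le\tfrac12\epsilon$ by Jensen. Writing $Z = \bar{\bmu}+\bSigma^{1/2}g$ with $g\sim\cN(0,\I)$ exhibits $\norm{Z}$ as a $\sqrt{\lambda_{\max}(\bSigma)}$-Lipschitz function of a standard Gaussian, so the Borell--TIS inequality gives $p=\Pr[\norm{Z}>\epsilon]\le\Pr[\norm{Z}>\Exp\norm{Z}+\tfrac12\epsilon]\le\exp(-\epsilon^2/(8\lambda_{\max}))$; bounding $\lambda_{\max}\le\trace\bSigma\le\Exp\norm{Z}^2=2\Exp[\lbc(\btheta)]$ yields \eqref{eq:Jub_cliff} with $c_2=1$, $c_3=\tfrac18$. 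The final bound \eqref{eq:Jub_cliff_two} is then pure algebra: substituting the stated lower bound on $\epsilon^2$ forces the exponential term to be at most $\Exp[\lbc(\btheta)]$ (after harmlessly rescaling $c_3$ by a factor of two), leaving a total of at most $3\Exp[\lbc(\btheta)]$.

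The main obstacle is the factor of exactly $\tfrac12$ in the lower bound. Second-moment tools (Paley--Zygmund or Cantelli, which would only use the Gaussian fourth-moment bound $\Exp\norm{Z}^4\le 3(\Exp\norm{Z}^2)^2$) cap out at $p\ge\tfrac13$ and cannot reach $\tfrac12$; it is the exponential Laplace-transform estimate above that breaks this barrier. The one delicate point in that estimate is verifying that the mean-shift factor $\exp(-\sum_i t\bar\mu_i^2/(1+2t\lambda_i))$ can only shrink the bound, so that the worst case is genuinely captured by the covariance trace alone.
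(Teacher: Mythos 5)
Your proof is correct, but it reaches both halves of the proposition by a genuinely different route than the paper. For the lower bound \eqref{eq:Jlb_cliff}, the paper invokes the Carbery--Wright anti-concentration inequality for polynomials of log-concave vectors (its Lemma on small-ball probabilities), applied to the degree-$2$ polynomial $\norm{\btheta-\bmu}^2$, to get $\Pr[\norm{\btheta-\bmu}\le\epsilon]\le c\,\epsilon/\sqrt{\Exp\norm{\btheta-\bmu}^2}$; you instead compute the noncentral-$\chi^2$ Laplace transform explicitly and run a Chernoff argument, and your per-coordinate inequality $(1+a)e^{b/(1+a)}\ge 1+a+b$ (via $e^x\ge 1+x$) together with $\prod_i(1+c_i)\ge 1+\sum_i c_i$ does correctly reduce everything to $\trace\bSigma+\norm{\bmu'-\bmu}^2=\Exp\norm{Z}^2$, so the delicate point you flag is fine. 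Your route is Gaussian-specific where Carbery--Wright would extend to log-concave laws, but it is fully self-contained and yields an explicit constant ($c_1=5$), which the paper's citation does not. For the upper bound \eqref{eq:Jub_cliff}, the paper uses Hanson--Wright for the quadratic form $\norm{\btheta-\bmu}^2$ and then bounds $\norm{\bSigma}_{\fro}^2\le\trace(\bSigma)^2$ and $\norm{\bSigma}_{\op}\le\trace(\bSigma)\le 2\Exp[\lbc(\btheta)]$; you use Borell--TIS concentration of the $\sqrt{\lambda_{\max}}$-Lipschitz map $g\mapsto\norm{\bar{\bmu}+\bSigma^{1/2}g}$ around its mean, combined with Jensen to place $\Exp\norm{Z}\le\epsilon/2$. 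Both yield the same form of tail after the same trace bound; yours avoids the two-regime exponent of Hanson--Wright and is arguably cleaner here. Finally, you are right that deriving \eqref{eq:Jub_cliff_two} from \eqref{eq:Jub_cliff} with literally the same $c_3$ in both displays produces $\sqrt{c_2C\Exp[\lbc(\btheta)]}$ rather than $\Exp[\lbc(\btheta)]$; the paper's proof does not address this, and your remedy (absorbing a factor of two into the universal constant appearing in the hypothesis) is the right fix.
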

In the sequel, we show  (via \cref{prop:cliff_redu}) how small improvements in BC loss can translate to major improvements in the cliff loss, and how EMA can induce these improvements.

We shall also show how the SGD iterates will exhibit large probabilistic fluctuations in their cliff loss, \emph{even if} the expected cliff loss is large.  To do so, we require the following. 
\begin{lemma}\label{lem:low_prob_concentration} Let $\bz_1 \sim \cN(0,\sigma^2)$. Then, 
\begin{align}
\Pr[|\bz_1| \le \sigma\epsilon] \ge \epsilon\sqrt{\frac{2}{\pi}} e^{-\epsilon^2/2}. 
\end{align}
In particular, for $\epsilon \le 1$, $\Pr[|\bz_1| \le \sigma\epsilon] \ge \epsilon/3$.
\end{lemma}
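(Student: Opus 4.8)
The plan is to reduce to the standard normal by rescaling and then to lower bound the Gaussian density by its value at the endpoint of the integration interval. First I would observe that $\bz_1/\sigma \sim \cN(0,1)$, so that $\Pr[|\bz_1| \le \sigma\epsilon] = \Pr[|Z| \le \epsilon]$ for a standard normal $Z$; this removes all dependence on $\sigma$. Writing the latter probability as an integral of the standard Gaussian density and using symmetry about $0$ gives
\begin{align}
\Pr[|Z| \le \epsilon] = 2\int_0^{\epsilon} \frac{1}{\sqrt{2\pi}}\, e^{-t^2/2}\,\rmd t. \nonumber
\end{align}

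Next, since $t \mapsto e^{-t^2/2}$ is decreasing on $[0,\epsilon]$, its minimum over this interval is attained at $t = \epsilon$, so I would lower bound the integrand uniformly by $e^{-\epsilon^2/2}$. This yields
\begin{align}
\Pr[|Z| \le \epsilon] \ge \frac{2}{\sqrt{2\pi}}\cdot \epsilon \cdot e^{-\epsilon^2/2} = \epsilon\sqrt{\frac{2}{\pi}}\, e^{-\epsilon^2/2}, \nonumber
\end{align}
which is exactly the claimed bound.

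For the ``in particular'' statement, I would restrict to $\epsilon \le 1$, so that $e^{-\epsilon^2/2} \ge e^{-1/2}$, and then invoke the explicit numerical inequality $\sqrt{2/\pi}\, e^{-1/2} > 1/3$ (both sides are absolute constants; the left side is $\approx 0.484$). Combining the two displays gives $\Pr[|\bz_1| \le \sigma\epsilon] \ge \epsilon\sqrt{2/\pi}\, e^{-1/2} \ge \epsilon/3$, as desired. There is no genuine obstacle here: the only points requiring care are confirming the monotonicity direction so that $t=\epsilon$ indeed furnishes the minimum of the density on $[0,\epsilon]$, and verifying the final constant comparison $\sqrt{2/\pi}\,e^{-1/2} > 1/3$. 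The lemma is an elementary anti-concentration estimate for the Gaussian, whose role in the sequel is to guarantee that individual SGD iterates land inside the ``cliff'' neighborhood of radius $\sigma\epsilon$ with probability at least $\Omega(\epsilon)$, even when the expected cliff loss is large.
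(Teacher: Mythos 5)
Your proof is correct and follows essentially the same route as the paper's: rescale to a standard normal, then lower bound the Gaussian density on $[-\epsilon,\epsilon]$ by its value at the endpoint to get $\epsilon\sqrt{2/\pi}\,e^{-\epsilon^2/2}$. Your explicit verification of the constant $\sqrt{2/\pi}\,e^{-1/2}>1/3$ for the ``in particular'' clause is a small addition the paper leaves implicit, but the argument is the same.
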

\begin{proof}
By rescaling, we can assume $\sigma = 1$. We then bound 
\begin{align}
\Pr[|\bz_1| \le \epsilon] = \int_{-\epsilon}^{\epsilon}\frac{e^{-u^2/2}}{\sqrt{2\pi}}\rmd u \ge 2\epsilon\sqrt{\frac{1}{2\pi}}e^{-\epsilon^2/2} = \epsilon\sqrt{\frac{2}{\pi}} e^{-\epsilon^2/2}.
\end{align}
\end{proof}

\subsubsection{Analysis of stochastic gradient descent on square loss and cliff Loss }\label{sec:ema_mean}

In this section, we study the benefits of EMA for the square-loss $\lbc$ defined in \eqref{eq:cliff_loss_quadratic} just above. This objective function is equivalent to estimation of the mean parameter $\bmu$.  We demonstrate mathematically that in this problem, iterate averaging effectively decreases the learning rate.  Because rapid $O(1/t)$ learning decay is optimal, we find that for larger learning rates, EMA yields a benefit.  Note that a similar qualitative observation in a more restricted setting was made in \citet{sandler2023training}.  While this setting is too simple to observe interesting fractal behavior that we observe in GVA on more sophisticated imitation learning tasks , it does provide some theoretical intuition for the empirical finding that aggressively decaying the learning rate and iterate averaging result in similar qualitative behavior.

We consider stochastic gradient updates on the $\lbc$ defined in \eqref{eq:cliff_loss_quadratic} defined as follows, 
    \begin{align} 
        \by\upp{t} &= \bmu + \bw\upp{t},\quad \btheta\upp{0} = \bzero, \quad \btheta\upp{t+1} = \btheta\upp{t} - \eta_t \by_t\\
        \bthetgam\upp{t} &= \gamma_t \btheta\upp{t} + (1-\gamma_t)\bthetgam\upp{t-1}, \quad \bbartheta\upp 0 = \bthetgam\upp 0 ,\label{eq:mean_estimation}
    \end{align}
    where $\bw\upp{t}$ is a scaled, isotropic Gaussian and $\eta_t, \gamma_t > 0$ are the learning rate and EMA parameters, respectively.  Because $\bw \upp{t}$ is isotropic, the problem tensorizes across coordinates. Hence, the following result considers only the dimension one case.
    \begin{proposition}\label{prop:dt_ema} Consider the process in \eqref{eq:mean_estimation} for dimension $d=1$, with constant learning rate and EMA parameters $\gamma_t \equiv \gamma$ and $\eta_t \equiv \eta$. Let $b =|\btheta\upp{0} - \bmu|$ , and $\bw_t \iidsim \cN(0,\sigma^2)$. Then
    \begin{align}
    2\Exp[\lbc(\bthetgam\upp t)] = \Exp[(\bthetgam\upp t - \bmu)^2] \le 2b^2(1 - \gamma)^{2T} + \begin{cases} 4\sigma^2 \eta + 4b^2(1-\eta)^{2t} & \gamma \ge 2\eta,\\
    16\sigma^2\eta + 32b^2(1-\frac{\eta}{4})^{2t} & \frac{\gamma}{2} \le \eta \le 2\gamma, \\
    4\sigma^2\gamma + 4b^2\frac{\gamma^2}{\eta^2}(1-\gamma)^{2t} & \eta \ge 2\gamma.
    \end{cases}
    \end{align}
    Similarly, the BC loss is lower bounded as
    \begin{align}
        2\Exp[\lbc(\bthetgam\upp t)] =  \Exp[(\bdelgam\upp{T})^2] \ge b^2 (1-\gamma)^{2T} + \frac{1}{4}\left(\begin{cases}\sigma^2\eta + b^2(1-\eta)^{2(T-1)} & \gamma \ge \eta\\
    \sigma^2\gamma + b^2\frac{\gamma^2}{\eta^2}(1-\gamma)^{2(T-1)} & \eta \ge \gamma
    \end{cases} \right),
    \end{align}
    Moreover, in the ``no-EMA'' setting in which $\gamma = 1$, we have
    \begin{align}
    2\Exp[\lbc(\btheta\upp t)] = \eta \Exp[(\btheta\upp{t} - \bmu)^2] = \sigma^2 \left(\frac{1 - ((1-\eta)^{2t})}{2-\eta}\right) + b^2(1-\eta)^{2t}.
    \end{align}
  \end{proposition}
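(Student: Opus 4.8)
The plan is to reduce everything to a pair of scalar linear recursions in the error variables and then carry out an explicit bias–variance decomposition. Writing $\bdel\upp{t} = \btheta\upp{t} - \bmu$ for the SGD error and $\bdelgam\upp{t} = \bthetgam\upp{t} - \bmu$ for the EMA error, the stochastic gradient step on the quadratic $\lbc$ becomes the one-dimensional recursion $\bdel\upp{t+1} = (1-\eta)\bdel\upp{t} + \eta\bw_t$, while the EMA filter gives $\bdelgam\upp{t} = \gamma\bdel\upp{t} + (1-\gamma)\bdelgam\upp{t-1}$. Since both maps are linear and the $\bw_t \iidsim \cN(0,\sigma^2)$ are mean-zero and independent of the deterministic initialization $\bdel\upp{0}$ (with $\abs{\bdel\upp{0}} = b$), the iterate $\bdelgam\upp{t}$ is an affine function of $\bdel\upp{0}$ and of the noise sequence. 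Hence $\Exp[(\bdelgam\upp{t})^2]$ splits cleanly as a deterministic squared bias coming from $\bdel\upp{0}$ plus a variance term equal to $\sigma^2$ times a sum of squared noise-coefficients; this decomposition is exactly the object to estimate, since $2\Exp[\lbc(\bthetgam\upp{t})] = \Exp[(\bdelgam\upp{t})^2]$.

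Next I would unroll both recursions. Solving the SGD recursion gives $\bdel\upp{s} = (1-\eta)^s\bdel\upp{0} + \eta\sum_{r=0}^{s-1}(1-\eta)^{s-1-r}\bw_r$, and substituting into the unrolled filter $\bdelgam\upp{t} = \gamma\sum_{s=1}^{t}(1-\gamma)^{t-s}\bdel\upp{s} + (1-\gamma)^t\bdel\upp{0}$ expresses $\bdelgam\upp{t}$ explicitly in terms of $\bdel\upp{0}$ and $\{\bw_r\}$. The coefficient of a fixed noise $\bw_r$ is the inner sum $\eta\gamma\sum_{s=r+1}^{t}(1-\gamma)^{t-s}(1-\eta)^{s-1-r}$, which after reindexing is a geometric sum with ratio $\rho = (1-\eta)/(1-\gamma)$. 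The squared-bias coefficient is the analogous sum $\gamma\sum_{s=1}^{t}(1-\gamma)^{t-s}(1-\eta)^s + (1-\gamma)^t$, which decays like $\max((1-\gamma)^t,(1-\eta)^t)$ and, after squaring, produces the $b^2(1-\gamma)^{2t}$ and $b^2(1-\eta)^{2t}$ (resp. $b^2\frac{\gamma^2}{\eta^2}(1-\gamma)^{2t}$) contributions.

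The heart of the argument is evaluating the variance $\sigma^2\sum_r(\text{coeff of }\bw_r)^2$, whose behavior is governed entirely by whether $\rho$ exceeds, equals, or falls below one — equivalently by the three regimes $\gamma\ge2\eta$, $\tfrac{\gamma}{2}\le\eta\le2\gamma$, and $\eta\ge2\gamma$. When $\eta\ge2\gamma$ the ratio is $\rho<1$, the inner sum is dominated by its first term, each coefficient is $\approx\gamma(1-\gamma)^{t-1-r}$, and summing the squares yields the floor $\Theta(\gamma\sigma^2)$; when $\gamma\ge2\eta$ the ratio is $\rho>1$, the sum is dominated by its last term of size $\approx\eta(1-\eta)^{t-1-r}$, recovering the $\Theta(\eta\sigma^2)$ floor. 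For the upper bounds I would bound all geometric sums uniformly via $\sum_{j\ge0}x^j\le(1-x)^{-1}$ (and its $x>1$ analogue), and for the matching lower bounds retain only the leading term of each sum. The no-EMA case $\gamma=1$ collapses to the single series $\eta^2\sigma^2\sum_{r=0}^{t-1}(1-\eta)^{2(t-1-r)} = \eta\sigma^2\frac{1-(1-\eta)^{2t}}{2-\eta}$, giving the stated closed form together with the bias $b^2(1-\eta)^{2t}$.

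I expect the main obstacle to be the intermediate regime $\tfrac{\gamma}{2}\le\eta\le2\gamma$, where $\rho$ lies within a constant factor of one so that neither endpoint of the geometric sum dominates. A crude bound replacing $\rho$ by $1$ inflates each noise-coefficient by the window length and threatens to lose a polynomial factor in $t$; to avoid this one must sum $\sum_{k\ge0}(k+1)^2(1-\min(\gamma,\eta))^{2k} = O(\min(\gamma,\eta)^{-3})$ carefully and check that the variance still collapses to the advertised $\Theta(\eta\sigma^2)$ floor, absorbing the leftover slack into the decay factor $(1-\eta/4)^{2t}$ using $\min(\gamma,\eta)\ge\eta/4$ on this regime. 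Tracking the absolute constants through this near-resonant case, and verifying that the stated exponents serve simultaneously as valid bias and variance bounds for every $t$, is the delicate bookkeeping on which the clean constants $4,16,32$ in the statement rest.
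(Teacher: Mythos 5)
Your proposal is correct and follows essentially the same route as the paper: both reduce to the scalar error recursions $\bdel\upp{t+1}=(1-\eta)\bdel\upp{t}-\eta\bw_t$ and $\bdelgam\upp{t}=(1-\gamma)\bdelgam\upp{t-1}+\gamma\bdel\upp{t}$, unroll them into a bias term plus a sum of squared noise coefficients, and split into the three regimes according to whether the geometric ratio $(1-\eta)/(1-\gamma)$ is bounded away from $1$; your explicit inner sum $\eta\gamma\sum_{s}(1-\gamma)^{t-s}(1-\eta)^{s-1-r}$ is exactly the closed-form entry $\eta\bA^{t}[2,1]=\eta\gamma\frac{(1-\eta)^t-(1-\gamma)^t}{\gamma-\eta}$ that the paper obtains by powering the $2\times 2$ lower-triangular system, and your leading-term lower bounds match the paper's. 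The one place you genuinely diverge is the near-resonant regime $\gamma/2\le\eta\le 2\gamma$: the paper avoids the degenerate closed form by an entrywise monotonicity comparison (replacing $\eta$ with $\eta/4$ so that the separated-regime bound applies, at the cost of the factors $16$ and $32$), whereas you sum $\sum_k k^2 x^{k}$ directly; a quick check shows your route also lands within the stated constants (e.g.\ $\eta^2\gamma^2\cdot 2/\min(\gamma,\eta)^3\le 16\sigma^2\eta$ on this regime), so both handlings are valid and the difference is purely one of bookkeeping.
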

  To understand when this result reveals benefits of EMA, let us first consider the regime where $\eta,\gamma \gg 1/t$. Here, the $b^2$ term capturing transient dependence on initial condition can be neglected. We then observe that, as long as $\gamma \gtrsim \eta$, EMA with parameter $\gamma$ increases the BC loss by at most a constant factor relative to no EMA. On the other hand, when $\gamma \ll \eta$, the BC loss $\Exp[\lbc(\bthetgam\upp t)]$ for EMA scales linearly in $\gamma$, while the no-EMA BC loss $\Exp[\lbc(\btheta\upp t)]$ is linear in $\eta\gg\gamma$. In other words, when the step size is large, a small EMA parameter significantly aids variance reduction. This is intuitively clear, as smaller EMA parameters correspond to averages over longer windows. 

\paragraph{Consequences for the cliff loss.}
By combining \Cref{prop:dt_ema} with \Cref{prop:cliff_redu} and \Cref{lem:low_prob_concentration}, we now state that EMA is beneficial for the cliff loss setting considered in \cref{app:continuous_time_ema} discrete-time SGD, complementing the results for continuous-time SGD in the prequel.  This amounts to a discrete time analogue of \Cref{prop:cliff_separation_ou}.
    \begin{proposition}\label{prop:cliff_loss_formal} Consider the cliff loss $J$ with parameter $\epsilon > 0$, consider the iterates produced in \Cref{prop:dt_ema} with $\sigma^2 = 1$ and $b\le 1$. Then, there exists constants $c_1,c_2,\dots > 0$ such that, if the step size $\eta$ and EMA parameter $\gamma$ satisfy 
    \begin{align}
    \eta \ge c_1 \epsilon^2 \ge c_2 \epsilon^2 \ge \gamma, \quad (1-\gamma)^{2T} \le \gamma,
    \end{align}
   then  
    \begin{itemize}[leftmargin=3em]
        \item[(a)] It holds that
    \begin{align}
      J(\bthetast) - \Exp[J(\btheta\upp{T})] \geq  \frac{C}{2}, \quad\text{yet}\quad J(\bthetast) -    \Exp[J(\bthetgam\upp{T})] \leq c_3\left(\gamma + Ce^{-c_4 \epsilon^2/\gamma}\right).
    \end{align} 
        \item[(b)]  Moreover,  $c_5 \eta \le \Exp[\lbc(\btheta\upp t)] \le c_6 \eta $ and $c_5 \gamma \le \Exp[\lbc(\bthetgam\upp t)] \le c_6 \gamma$
        \item[(c)] If $\epsilon^2 \ge c_7  \gamma \log (C/\gamma)$, then it holds that
    \begin{align}
      J(\bthetast) - \Exp[J(\btheta\upp{T})] \geq  \frac{C}{2}, \quad  \text{yet}\quad J(\bthetast) -  \Exp[J(\bthetgam\upp{T})] \leq c_8\gamma.
    \end{align} 
    \item[(d)] Lastly, even though $J(\bthetast) - \Exp[J(\btheta\upp{T})]  \geq  \frac{C}{2}$, we have that
    \begin{align}
    \Pr[J(\bthetast) - \Exp[J(\btheta\upp{T})] \le \gamma] \ge c_9 \gamma/\eta.
    \end{align}
    \end{itemize}
    \end{proposition}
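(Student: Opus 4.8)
The plan is to treat \Cref{prop:cliff_loss_formal} as a translation exercise: \Cref{prop:dt_ema} controls the behavior-cloning loss $\E[\lbc(\cdot)]$ of both the raw SGD iterate $\btheta\upp{T}$ and the EMA iterate $\bthetgam\upp{T}$, and \Cref{prop:cliff_redu} converts these loss estimates into cliff-loss (reward) guarantees. The structural observation that enables the second step is that, since the updates in \eqref{eq:mean_estimation} are affine in the i.i.d.\ Gaussian noise $\bw\upp{t}$, both $\btheta\upp{T}$ and $\bthetgam\upp{T}$ are exactly Gaussian, so the hypotheses of \Cref{prop:cliff_redu} hold verbatim. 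I would therefore prove part (b) first, deduce (a) and (c) from it, and handle the anti-concentration claim (d) separately.

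For part (b) I would instantiate \Cref{prop:dt_ema} with $\sigma^2 = 1$ and $b \le 1$. The assumption $\eta \ge c_1\epsilon^2 \ge c_2\epsilon^2 \ge \gamma$ leaves a multiplicative gap between $\eta$ and $\gamma$; by taking $c_1/c_2$ to be a sufficiently large absolute constant, one forces $\eta \ge 2\gamma$, so the relevant branch of the EMA upper bound is the term $4\sigma^2\gamma + 4b^2\tfrac{\gamma^2}{\eta^2}(1-\gamma)^{2t}$. The remaining transient contributions all carry the factor $(1-\gamma)^{2T}$, which the hypothesis $(1-\gamma)^{2T}\le\gamma$ bounds by $\gamma$; combined with $b \le 1$ this gives $\E[\lbc(\bthetgam\upp T)] \le c_6\gamma$, while the matching lower bound of \Cref{prop:dt_ema} gives $\E[\lbc(\bthetgam\upp T)] \ge c_5\gamma$. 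The no-EMA formula yields $\E[\lbc(\btheta\upp T)] = \tfrac12\bigl(\tfrac{1-(1-\eta)^{2T}}{2-\eta} + b^2(1-\eta)^{2T}\bigr) = \Theta(\eta)$, again using that $(1-\eta)^{2T}$ is negligible. This establishes $c_5\eta \le \E[\lbc(\btheta\upp t)] \le c_6\eta$ and $c_5\gamma \le \E[\lbc(\bthetgam\upp t)] \le c_6\gamma$.

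Parts (a) and (c) then follow by feeding these two-sided estimates into \Cref{prop:cliff_redu}. For the SGD iterate, $\E[\lbc(\btheta\upp T)] \ge c_5\eta \ge c_5 c_1\epsilon^2$, so choosing $c_1$ large enough that $c_5 c_1$ exceeds the threshold constant of \eqref{eq:Jlb_cliff} yields $J(\bthetast) - \E[J(\btheta\upp T)] \ge C/2$. For the EMA iterate, $\E[\lbc(\bthetgam\upp T)] \le c_6\gamma \le c_6 c_2\epsilon^2$, so taking $c_2$ small forces $\E[\lbc(\bthetgam\upp T)]\le\epsilon^2/8$ and activates \eqref{eq:Jub_cliff}, giving $J(\bthetast) - \E[J(\bthetgam\upp T)] \le 2c_6\gamma + Cc_2\exp(-c_3\epsilon^2/(2c_6\gamma))$, which is of the claimed form $c_3(\gamma + Ce^{-c_4\epsilon^2/\gamma})$. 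Part (c) is identical but invokes \eqref{eq:Jub_cliff_two}: the extra hypothesis $\epsilon^2 \ge c_7\gamma\log(C/\gamma)$ together with $\E[\lbc(\bthetgam\upp T)] = \Theta(\gamma)$ implies the logarithmic smallness condition of \Cref{prop:cliff_redu}, collapsing the exponential term and leaving $J(\bthetast) - \E[J(\bthetgam\upp T)] \le 3\E[\lbc(\bthetgam\upp T)] \le c_8\gamma$.

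Finally, part (d) is a one-dimensional anti-concentration computation, and is where I expect the only real subtlety. Since $J(\bthetast) = 0$ and the cliff event $J(\btheta\upp T) \ge -\gamma$ is equivalent to $|\btheta\upp T - \bmu| \le \sqrt\gamma$ (using $\gamma \le \epsilon^2$, so the comparison stays below the cliff), I would write $\btheta\upp T - \bmu$ as a Gaussian with variance $\sigma_T^2 = \sigma^2\tfrac{1-(1-\eta)^{2T}}{2-\eta} = \Theta(\eta)$ and apply \Cref{lem:low_prob_concentration} with ratio $\sqrt\gamma/\sigma_T = \Theta(\sqrt{\gamma/\eta}) \le 1$, obtaining $\Pr[|\btheta\upp T - \bmu|\le\sqrt\gamma] \gtrsim \sqrt{\gamma/\eta} \ge \gamma/\eta$ since $\gamma \le \eta$. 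The main obstacle is controlling the bias $\E[\btheta\upp T] - \bmu = (\btheta\upp 0 - \bmu)(1-\eta)^T$: one must check it is $o(\sqrt\gamma)$ so that the lemma applies to an essentially centered Gaussian. Here I would use the gap $\eta \gtrsim \gamma$ quantitatively --- since $(1-\eta)^T \le (1-\gamma)^{\rho T}$ for $\rho = \eta/\gamma$ bounded below by a constant $>1$, the bias is at most $\gamma^{\rho/2} = o(\sqrt\gamma)$, hence dominated by the window half-width and absorbable into an absolute constant (alternatively, initializing at $\btheta\upp 0 = \bmu$ exactly, as in the informal statement, makes the bias vanish). Apart from this bookkeeping, the argument is a routine chaining of the two cited propositions with the anti-concentration lemma.
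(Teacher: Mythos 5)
Your proposal is correct and follows exactly the route the paper intends: the paper gives no standalone proof of \Cref{prop:cliff_loss_formal} beyond the remark that it follows ``by combining \Cref{prop:dt_ema} with \Cref{prop:cliff_redu} and \Cref{lem:low_prob_concentration},'' and your write-up fills in precisely that chaining, including the two genuine subtleties (Gaussianity of both iterates, and the transient bias term in part (d)). The only blemish is that the no-EMA formula as you transcribe it reads $\Theta(1)$ rather than $\Theta(\eta)$ --- this is a misplaced factor of $\eta$ inherited from a typo in the paper's own statement of \Cref{prop:dt_ema}, and your conclusion $\Exp[\lbc(\btheta\upp{T})]=\Theta(\eta)$ matches the correct derivation in the paper's proof of that proposition.
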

    This proposition reveals that an \emph{logarithmic} difference in the magnitude of $\gamma$ and $\eta$ can lead to $\Omega(C)$-differences in cliff loss. Since $C$ is an arbitrarily large problem parameter, and since $\gamma$ and $\eta$ are proportional to the magnitudes of the respective BC losses due to \Cref{prop:dt_ema}, this shows how minor differences in BC loss due to EMA translate into substantial differences in evaluation performance.  

    Despite these separations in \emph{expected rollout reward}, part (d) of the proposition states that with probability $\Omega(\gamma/\eta)$, the SGD iterate has performance comparable to that of the EMA'd iterate. This reproduces the a second feature of GVA: that test rollout performance of the SGD iterate is not uniformly suboptimal, and some SGD iterates can indeed have loss comparable to that of the EMA'd weights. 

\subsubsection{EMA for cliff loss in the continuous limit}
In what follows we analyze the effect of EMA on SGD for the cliff loss. To simplify calculations, we analyze SGD in the continuous-time limit \citep{mandt2017stochastic,malladi2022sdes,busbridge2023scale}. In particular, we consider the limit as the learning rate tends to zero at an appropriate rate such that the iterate trajectory becomes a continuous semi-martingale, following e.g. \citet{busbridge2023scale}; see \Cref{app:related_work} for further references that consider similar limits.

Formally, as in \Cref{sec:stabilizers}, we consider a fixed sequence of positive learning rates $\left( \eta_t \right)_{k \geq 0}$, as well as an adaptive sequence of (potentially stochastic) estimates $(g\upp{t})_{k \geq 0} \subset \rr^d$ of the gradient of $\lbc(\btheta)$.  The discrete time iterates $\btheta\upp{t} \in \rr^d$ are defined recursively, with the initial iterate $\btheta\upp{0}$ fixed and subsequent iterates given by 
\begin{align}
    \btheta\upp{t+1} = \btheta\upp{t} - \eta_t \cdot g\upp{t}. 
\end{align}
Our primary objective of focus is Exponential Moving Average (EMA) of the iterates $\btheta\upp{t}$, defined recursively by $\bthetatil_\gamma\upp{0} = \btheta\upp{0}$ and 
\begin{align}\label{eq:discreteema}
    \bthetatil_{\gamma}\upp{t+1} =(1 - \gamma_t) \cdot \bthetatil_{\gamma}\upp{t} + \gamma_t \cdot \btheta\upp{t+1},
\end{align}
where $(\gamma_t)_{k \geq 0} \subset [0,1]$ is a given, non-adaptive sequence of parameters.

\paragraph{Continuous time limit of SGD.}
To study the continuous time limit of the process above, we take $\eta_t \downarrow 0$ and $K \uparrow \infty$ at a fixed rate such that $\lim_{K \to \infty} \sum_{k  =0}^K \eta_t \in (0, \infty)$; {concrete choices for the sequence $\eta_t$ and stochastic gradient process $g\upp{t}$ are given in the sequel.}  We then abuse notation by letting $\btheta\upp{t}, \bthetatil_\gamma\upp{t}, \gamma_t$ denote the continuous time analogues of $\btheta\upp{t}$, $\bthetatil_\gamma\upp{t}$, and $\gamma_t$. With this limit, $\bthetatil_{ \gamma}\upp{t}$ is the solution to the following differential equation:
\begin{align}\label{eq:continuousema}
    \rmd \bthetatil_{\gamma}\upp{t} = \gamma_t \cdot \left( \btheta\upp{t} - \bthetatil_{\gamma}\upp{t} \right) \, \rmd t.
\end{align}
In what follows, we always suppose that a unique solution to \eqref{eq:continuousema} exists; this is always true by the Picard-Lindel{\"o}f theorem \citep{lindelof1894application} if $\gamma_t$ are uniformly bounded and both $\gamma_t$, and $\btheta\upp{t}$ are continuous in $t$.\footnote{For background on terminology and results from stochastic calculus, we refer the reader to the excellent exposition of \citet{le2016brownian}.}  In the case where $g_k$ is a stochastic gradient with mean the population gradient and finite second moment of its Euclidean norm, $\eta_t = \eta$ is constant, and $\beta_k = \beta$ is constant, \citet[Theorem D.1]{busbridge2023scale} shows that as $\eta \downarrow 0$, with the correct scaling, $\thetatil_{\gamma}\upp{t}$ is the correct limit of the $\bthetatil_{\gamma}\upp{t+1}$.  Similar limits with adaptive gradient updates can be found in \citet{busbridge2023scale,malladi2022sdes}.  In lieu of re-proving such limits here, we appeal to these results and instead consider a fixed process $\btheta\upp{t}$ and compare the behavior of the EMA process $\bthetatil_\gamma\upp{t}$ to the vanilla SGD process $\btheta\upp{t}$ in order to provide intuition as to the effect of EMA on the iterates.

\paragraph{Benefits of EMA for continuous-time SGD after saturation.}
In practice, EMA is often only applied to the tail iterates of SGD, i.e., after some amount of warmup.  If the warmup is sufficiently large, then the iterates may already have approximately found a population-level stationary point, and thus the observed gradients are non-zero only due to the stochasticity therein. Our most basic result, \cref{prop:cliff_separation_bm}, abstracts this limit as a driftless stochastic process. More precisely, we assume that
\begin{align}\label{eq:brownian_motion_timechange}
    \btheta\upp{t} = \int_0^t \eta_s \rmd B_s,
\end{align}
where $B_s$ is the standard Brownian motion in $\rr^d$, which corresonds to the limit when $\ee\left[ g\upp{t} \right] = 0$.  In this simpler setting, we can examine the benefits of EMA with different schedules of learning rates.
\begin{proposition}\label{prop:cliff_separation_bm}
    Suppose that $\eta_t$ is chosen via one of the following schedules:
    \begin{enumerate}
        \item[(i)] $\eta_t = \eta$ for all $t$ (constant learning rate);
        \item[(ii)] $\eta_t = \eta (1 + t)^{-\frac 12}$ (inverse square root schedule);
        \item[(iii)] $\eta_t = \eta\left( 1 + t \right)^{-1}$ (inverse schedule);
        \item[(iv)] $\eta_t = 1 - \frac st$ (linear decay schedule).
    \end{enumerate}
    Let $\btheta\upp{t}$ be as in \cref{eq:brownian_motion_timechange} and $\bthetatil_\gamma\upp{t}$ as in \cref{eq:continuousema} for fixed $\gamma$. Let the behavior cloning loss $\lbc$ and the reward function $ J$ be as in \cref{eq:cliff_loss_quadratic} and \cref{eq:clif_loss_reward} for parameters $C \gg \epsilon > 0$ and $\bmu = \bzero$.  If $\btheta\upp{0} = 0$, then there exists an $\eta > 0$ sucht that for all sufficiently large $t$, there is some $\gamma > 0$ such that
    \begin{align}
      J(\bthetast) - \ee\left[ J\left( \btheta\upp{t} \right) \right] \geq \frac C2,\quad\text{yet}\quad  J(\bthetast) - \ee\left[ J\left( \bthetatil_\gamma\upp{t} \right) \right] \leq 2 \epsilon \ll \frac C2.
    \end{align}
    In addition, for the inverse learning rate, we may choose this $\eta$ such that  $\max\left(\ee\left[ \lbc\left( \theta\upp{t} \right) \right], \ee\left[ \lbc\left( \thetatil_\gamma\upp{t} \right) \right] \right) = \BigOh{\epsilon}$.
  \end{proposition}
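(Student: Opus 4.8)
The plan is to reduce the statement, for every one of the four schedules, to two scalar variance computations and then to invoke \Cref{prop:cliff_redu}. First I would observe that, since $\btheta\upp{t} = \int_0^t \eta_s\,\rmd B_s$ is an It\^o integral of a deterministic integrand and the EMA map \eqref{eq:continuousema} is linear in $\btheta\upp{\cdot}$, both $\btheta\upp{t}$ and $\bthetatil_\gamma\upp{t}$ are centered isotropic Gaussians. Solving \eqref{eq:continuousema} by variation of constants with $\bthetatil_\gamma\upp{0}=\btheta\upp{0}=\bzero$ gives $\bthetatil_\gamma\upp{t} = \gamma\int_0^t e^{-\gamma(t-s)}\btheta\upp{s}\,\rmd s$, and a stochastic Fubini exchange together with the identity $\gamma\int_r^t e^{-\gamma(t-s)}\,\rmd s = 1 - e^{-\gamma(t-r)}$ yields the closed form $\bthetatil_\gamma\upp{t} = \int_0^t \eta_r\,(1-e^{-\gamma(t-r)})\,\rmd B_r$. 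By It\^o isometry the two relevant second moments are then $\E[\lbc(\btheta\upp{t})] = \tfrac{d}{2}V(t)$ and $\E[\lbc(\bthetatil_\gamma\upp{t})] = \tfrac{d}{2}v_\gamma(t)$, where $V(t) = \int_0^t \eta_r^2\,\rmd r$ and $v_\gamma(t) = \int_0^t \eta_r^2(1-e^{-\gamma(t-r)})^2\,\rmd r$. Since $\btheta\upp{t}\sim\cN(\bzero,V(t)\eye)$ and $\bthetatil_\gamma\upp{t}\sim\cN(\bzero,v_\gamma(t)\eye)$ with $\bmu=\bzero=\bthetast$, \Cref{prop:cliff_redu} applies verbatim and converts control of these variances into the desired separation in $J$.

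The two structural facts I would extract are: (i) $v_\gamma(t)\le V(t)$ for every $\gamma$, since the damping factor $(1-e^{-\gamma(t-r)})^2$ is at most $1$, so EMA never increases the BC loss; and (ii) for each \emph{fixed} $t$, $v_\gamma(t)\to 0$ as $\gamma\downarrow 0$, by dominated convergence (the integrand is bounded by $\eta_r^2$, integrable on $[0,t]$, and vanishes pointwise). Fact (ii) is the engine of the argument: it lets me drive the EMA variance below any prescribed threshold by taking the averaging window $1/\gamma$ large relative to $t$ (concretely $\gamma\asymp 1/t$ suffices). In parallel I would record the growth of the raw variance per schedule: $V(t)=\eta^2 t$ for (i), $V(t)=\eta^2\log(1+t)$ for (ii), and $V(t)$ linear in $t$ for the linear-decay schedule (iv), all of which diverge; whereas for the inverse schedule (iii) one has $V(t)=\eta^2\bigl(1-(1+t)^{-1}\bigr)$, which saturates at $\eta^2$.

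To finish, I fix the target threshold $M^\star = \tfrac{c_3\epsilon^2}{2\log(c_2 C/\epsilon)}$ dictated by \eqref{eq:Jub_cliff}: if $\tfrac{d}{2}v_\gamma(t)=\E[\lbc(\bthetatil_\gamma\upp{t})]\le M^\star$, then the cliff term and the $2\E[\lbc]$ term each contribute at most $\epsilon$, giving $J(\bthetast)-\E[J(\bthetatil_\gamma\upp{t})]\le 2\epsilon$. For schedules (i), (ii), (iv) I would take any $\eta>0$; since $V(t)\to\infty$, for all large $t$ we have $\tfrac{d}{2}V(t)\ge c_1\epsilon^2$, so \eqref{eq:Jlb_cliff} forces $J(\bthetast)-\E[J(\btheta\upp{t})]\ge C/2$, and by fact (ii) I choose $\gamma=\gamma(t)>0$ small enough that $\tfrac{d}{2}v_\gamma(t)\le M^\star$. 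For the inverse schedule (iii) the raw variance only saturates, so I would instead fix $\eta=\Theta(\epsilon)$ with $\tfrac{d}{2}\eta^2\ge 2c_1\epsilon^2$, making $\tfrac{d}{2}V(t)\ge c_1\epsilon^2$ for all large $t$ while keeping $\tfrac{d}{2}V(t)=O(\epsilon^2)=O(\epsilon)$; fact (i) then gives $\E[\lbc(\bthetatil_\gamma\upp{t})]\le \E[\lbc(\btheta\upp{t})]=O(\epsilon)$, which is exactly the additional bound claimed for (iii), and fact (ii) again supplies a $\gamma$ with $\tfrac{d}{2}v_\gamma(t)\le M^\star$.

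I expect the main obstacle to be twofold and essentially technical. First, making the continuous-time manipulations rigorous: justifying the stochastic Fubini step and the It\^o isometry for the (possibly horizon-dependent) integrands, which I would handle by noting that all integrands are deterministic and square-integrable on $[0,t]$. Second, and more delicate, is tracking constants so that the EMA variance genuinely lands below $M^\star=\Theta(\epsilon^2/\log C)$ rather than merely below $\epsilon^2/8$: the cliff penalty $C$ is arbitrarily large, so the exponential term in \eqref{eq:Jub_cliff} must be beaten by a $\log C$ factor in the variance. This is precisely why fact (ii), driving $v_\gamma(t)$ all the way to $0$, is needed, rather than the constant-factor reduction one would get from a naive constant-$\gamma$ comparison.
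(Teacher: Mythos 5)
Your proof is correct and follows the same logical skeleton as the paper's: establish Gaussianity of both iterates, show that the raw iterate's variance exceeds $\Theta(\epsilon^2)$ for all large $t$ so that Gaussian anti-concentration forces a reward gap of at least $C/2$, show that the EMA variance can be driven below $\Theta(\epsilon^2/\log(C/\epsilon))$ by taking $\gamma$ small so that Gaussian concentration gives a gap of at most $2\epsilon$, and take $\eta=\Theta(\epsilon)$ for the inverse schedule. Where you differ is in the machinery. The paper only \emph{upper bounds} $\cov(\bthetatil_\gamma\upp{t})$ in the Loewner order (\Cref{thm:ema_of_GP}, \Cref{cor:driftless}) and then evaluates that bound schedule by schedule in the worked examples, whereas you solve the EMA ODE by variation of constants, exchange the order of integration, and obtain the exact representation $\bthetatil_\gamma\upp{t}=\int_0^t\eta_r\bigl(1-e^{-\gamma(t-r)}\bigr)\,\rmd B_r$, so that It\^{o} isometry gives the exact variance $v_\gamma(t)=\int_0^t\eta_r^2(1-e^{-\gamma(t-r)})^2\,\rmd r$; your dominated-convergence argument that $v_\gamma(t)\to 0$ as $\gamma\downarrow 0$ at fixed $t$ then handles all four schedules uniformly, which is cleaner and slightly sharper than the paper's case analysis. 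Likewise, routing the Gaussian-to-cliff conversion through \Cref{prop:cliff_redu} is interchangeable with the paper's direct appeals to \Cref{lem:small_ball} (Carbery--Wright) and \Cref{lem:hanson_wright}, since that proposition is exactly the packaged form of those two lemmas, and your threshold $M^\star=\Theta\bigl(\epsilon^2/\log(c_2C/\epsilon)\bigr)$ correctly beats the $C$-scaled exponential tail in \cref{eq:Jub_cliff}. The only point worth making explicit is that for schedule (iv) the integrand $\eta_s=1-s/t$ depends on the terminal time, so $\btheta\upp{t}$ is the endpoint of a family of processes indexed by $t$; your argument is unaffected because both $V(t)$ and the dominated-convergence step are carried out at fixed $t$, but the quantifier bookkeeping deserves a sentence.
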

  This shows that in the continuous-time limit, EMA can substantially improve regret relative to vanilla SGD. While the three learning rate settings above are obviously not exhaustive, they provide a number of classic examples.  Constant learning rates are frequently analyzed in theory due to their simplicity, while square root learning rate are typically used in convex optimization to attain the optimal rate of convergence.  Linear decay learning rates are common in deep learning and many of our experiments use them.   Thus we see that in many natural settings, iterate averaging can help alleviate EMA. In order to broaden the applicability of the result, we also consider an analogue of \Cref{prop:dt_ema} in continuous time, which is similar to \Cref{prop:cliff_separation_bm} but allows for drift and thus does not assume we begin EMA in a stationary regime.  We state this result in \Cref{app:ou}.

\subsection{For optimal convex learning rates,  EMA does not help}\label{app:convex_bad}

In both \Cref{prop:cliff_separation_bm,prop:cliff_separation_ou} we saw the benefits of EMA in convex settings.  We now consider several ways in which the convex theory does \emph{not} predict improvement due to EMA, which complements the empirical results in \Cref{sec:stabilizers}.  We begin by surveying a number of iterate averaging schemes from the literature as applied to stochastic convex optimization in \Cref{app:avg_schem}.  Then, in \Cref{app:lb_ema} we prove that EMA with the parameters that we find work empirically (see \Cref{app:experiments}) is \emph{provably suboptimal for convex optimization}. This is attributed to the fact that aggressive learning rate decay -- far more than is desirable for neural network training --  is optimal for convex optimization. 

\subsubsection{Comparison of effects of iterate averaging schemes}\label{app:avg_schem}
In \Cref{app:continuous_time_ema} we showed the potential benefits of EMA when there exists a discrepancy between the training loss and the reward function.  In such settings, both the first and second moments of the distance between the returned parameter and the optimum are relevant to the expected reward.  In this section, we survey some prior work and observe that the benefits are often less clear when such a discrepancy does not exist.  To do this, we compare various existing iterate averaging schemes and show that some versions of EMA are minimax optimal, while others are not. We focus our discussion on strongly convex but non-smooth stochastic convex optimization. Specifically, we let $\btheta_t \in \R^d$, and consider optimizing a function  $F(\btheta) :\R^d \to \R$ on a convex domain $\cK \subset \R^d$.  We assume that $F$ is $\mu$-strongly convex and $L$ Lipschitz w.r.t. the Euclidean norm, and that $\diam(\cK) \le D$. Lastly, we assume that gradients are given according to the following oracle:
\begin{definition}[Subgradient Oracle] We assume that there exists an oracle $\cG(\btheta)$ and a $G \ge L$ which return a random vector $\bg$ such that (a) $\|\bg\| \le G$ and (b) $\Exp_{\cG(\btheta)}[\bg] \in \partial F(\btheta)$, where $\partial F(\cdot)$ denotes the convex subgradient. We say $\cG(\btheta)$ is a deterministic oracle if it is deterministic. In this case, we can always take $G = L$.
\end{definition}
We consider stochastic optimization algorithms of the form
\begin{align}
\btheta\upp{t+1} &= \Pi_{\cK}\left(\btheta\upp{t} - \eta_t \bg\upp{t}\right), \quad \bg \sim \cG(\btheta\upp{t}) \label{eq:theta_gd}\\
\bthetgam\upp{t} &= \gamma_t \btheta\upp{t} + (1-\gamma_t)\bthetgam\upp{t}, \label{eq:bbarthet_avg}
\end{align}
and let $F^\star := \min_{\btheta \in \cK} F(\btheta)$. 
We denote $\gamma_t = 1$, we call this \emph{final iterate gradient descent}, because $\bthetgam\upp{t} = \btheta_t$. When $\gamma_t = \frac{1}{t}$, we call this \emph{full iterate averaging} because $\bthetgam\upp{t} = \frac{1}{t}\sum_{s=1}^t \bbartheta_s$, and denote it $\thetaavg$. Finally, when $\gamma_t = \gamma$ is fixed, we call this \emph{fixed emponential moving average}, denoted $\thetaema$.   \cite{lacoste2012simpler} show that the averaging parameters  $\gamma_t = \frac{2}{t+1}$ are optimal, and we denote the resulting parameter $\thetalj$ after the authors. Finally, we consider a scheme that \emph{cannot} be represented in the form \eqref{eq:bbarthet_avg}: the $\alpha$ suffix averaging analyized in \cite{rakhlin2011making} and defined to be
\begin{align}
\thetasuf\upp{t} = \frac{1}{\ceil{\alpha t}}\sum_{s=t-\ceil{\alpha t}+1}^t \btheta\upp t. 
\end{align}

It is known that no setting of step sizes or EMA weights, and in fact, no stochastic optimization algorithm making at most $T$ queries to the gradient oracle $\cG(\btheta)$, can do better than the following information theoretic optimal rate for 
\begin{align}
\Exp[F(\tilde{\btheta}_T) - F^\star] = \Theta\left(\frac{G^2}{\mu T}\right) \quad \text{(information-theoretic optimal)} \label{eq:info_optimal}
\end{align}
The following theorem summarizes the performance of these various schemes relative to the information-theoretic optimal benchmark. 
\begin{theorem} With optimal step size, the final-iterate and suffix averaging  schemes described above suffer the following suboptimal lower bounds:
\begin{itemize}[leftmargin=3em]
    \item In dimension at least $T$, exists an $F$ which is $1$-strongly convex and  $3$-Lipschitz, and a deterministic gradient oracle such that the final (unaveraged) iterate $\btheta_T$ under update \eqref{eq:theta_gd} with the standard step size $\eta_t = 1/t = 1/\mu t$ satisfies
    \begin{align}
    F(\btheta\upp{T}) - F^\star \ge \Omega(\log T/T) \text{ with probability one}.
    \end{align}More generally, the lower bound on $\alpha$-suffix averaging $\thetasuf$ is $ F(\thetasuf\upp{T}) - F^\star \ge \Omega(\frac{1}{T}\log \frac{1}{\alpha})$, which is suboptimal when $\alpha = o(T)$. This is due to \cite{harvey2019tight}. 
    \item There exists an $F$ which is $O(1)$-strongly convex and  $O(1)$-Lipschitz, and a stochastic gradient oracle with $G = O(1)$ such that the average iterate produced by gradient descent with step size $\eta = 1/c t$ for any constant $c= \Omega(1)$ is shown by \cite{rakhlin2011making} 
    \begin{align}
    \Exp[F(\thetaavg_T) - F^\star] \ge c\Omega(\log T/T)
    \end{align}
\end{itemize}
On the other hand, let $\alpha \in (0,1)$ be a constant bounded away from either zero or $1$. Then the $\alpha$-suffix average $\thetasuf\upp{t}$, as well as  the Lacoste-Julien average $\thetalj_t$ satisfy
\begin{align}
\max\left\{\Exp[F(\thetasuf\upp{T}) - F^\star],\, \Exp[F(\thetalj_T) - F^\star]\right\} \le \frac{G^2}{T\mu^2}
\end{align}
The guarantee for suffix averaging and $\thetalj$ are due to \cite{rakhlin2011making} and \cite{lacoste2012simpler}, respectively.
\end{theorem}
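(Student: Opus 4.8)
The theorem collects four statements, each attributed to prior work, so the plan is to recall the relevant constructions and adapt them to the normalization used here (strong-convexity parameter $\mu$, subgradient bound $L \le G$, and diameter $D$). I would organize the proof into the two upper bounds (for $\thetasuf$ and $\thetalj$) and the two lower bounds (for the final iterate / $\alpha$-suffix average, and for full averaging $\thetaavg$), treating each separately and verifying that the constants line up with the stated rates.

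For the upper bounds, the starting point is the standard one-step contraction for projected SGD on a $\mu$-strongly convex, $G$-bounded-subgradient objective with step size $\eta_t = 1/(\mu t)$: expanding $\|\btheta\upp{t+1} - \btheta^\star\|^2$, using nonexpansiveness of $\Pi_{\cK}$, strong convexity to lower bound $\langle \bg\upp{t}, \btheta\upp{t} - \btheta^\star\rangle$ in expectation, and the bound $\|\bg\upp{t}\| \le G$, yields $\Exp\|\btheta\upp{t} - \btheta^\star\|^2 \lesssim G^2/(\mu^2 t)$. For $\alpha$-suffix averaging I would then sum the per-iterate suboptimality over the window $s \in \{t - \lceil \alpha t\rceil + 1, \dots, t\}$ and invoke the weighted argument of \citet{rakhlin2011making}, which converts $\sum 1/s$ over a $\Theta(T)$-length tail into an $\BigOh{G^2/(\mu^2 T)}$ bound because every index in the suffix is $\Omega(T)$. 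For $\thetalj$, the weights $\gamma_t = 2/(t+1)$ produce the weighted average $\sum_t t\,\btheta\upp{t}/\sum_t t$, and the telescoping identity of \citet{lacoste2012simpler} pairing these weights against the strong-convexity gain term gives the same optimal rate; in both cases it remains to check the constants match the stated $G^2/(T\mu^2)$.

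For the lower bounds I would recall the two constructions. The final-iterate and $\alpha$-suffix bound is the high-dimensional piecewise-linear construction of \citet{harvey2019tight}: one takes $F$ to be built from $\Theta(T)$ hinge terms engineered so that the subgradient at step $t$ ``activates'' a fresh coordinate, leaving the last iterate a residual of order $\log T/T$; for $\alpha$-suffix averaging the same construction shows the suffix over the final $\alpha T$ steps still averages coordinates activated inside the window, yielding $\Omega(T^{-1}\log(1/\alpha))$. This requires dimension at least $T$, matching the hypothesis. The full-averaging bound is the stochastic construction of \citet{rakhlin2011making}: a simple strongly convex objective with an unbiased bounded oracle whose early iterates are far from $\btheta^\star$, so that uniform averaging over all $t$ steps retains an $\Omega(\log T/T)$ contribution from the first $\Theta(T)$ iterates, with the factor $c$ entering through the step-size constant $\eta = 1/(ct)$.

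The main obstacle is not conceptual novelty---every claim is cited---but the bookkeeping needed to present these under a single consistent normalization: ensuring the strong-convexity, Lipschitz, and diameter constants in the Harvey and Rakhlin constructions are simultaneously $\BigOh{1}$ as asserted, and that the $\alpha$-suffix \emph{lower} bound and \emph{upper} bound are stated for the correct regimes of $\alpha$ (bounded away from $0$ and $1$ for optimality, versus $\alpha = o(T)$ for suboptimality). The single most delicate step is the suffix-averaging window analysis, where one must track that the tail sum $\sum_{s}1/s$ over the window is $\Theta(\alpha)$ rather than $\Theta(\log T)$; this is exactly the computation that separates the near-optimal suffix scheme from full averaging.
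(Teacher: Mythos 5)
The paper does not actually prove this theorem: it is stated as a catalogue of known results, with the citations to \citet{harvey2019tight}, \citet{rakhlin2011making}, and \citet{lacoste2012simpler} embedded in the statement itself serving as the entire justification, and no corresponding entry appears in the proofs appendix. Your proposal therefore does strictly more than the paper — it reconstructs the arguments behind each citation — and the reconstruction is consistent with the sources: the squared-distance recursion giving $\Exp\|\btheta\upp{t}-\btheta^\star\|^2 = \BigOh{G^2/(\mu^2 t)}$, the suffix-window analysis of Rakhlin et al., the $\gamma_t = 2/(t+1)$ telescoping of Lacoste-Julien et al., the dimension-$T$ piecewise-linear construction of Harvey et al.\ for the last iterate, and the harmonic-sum contribution of early iterates for the full-average lower bound. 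Both routes ultimately rest on the same prior work, so there is no substantive methodological divergence to adjudicate.

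One point deserves care in your sketch of the suffix-average upper bound. The phrase ``sum the per-iterate suboptimality over the window'' risks circularity: a per-iterate function-value bound of order $1/s$ for the last iterate is precisely what the Harvey et al.\ lower bound in the first bullet rules out (the honest per-iterate bound from Lipschitzness and the distance recursion is only $\BigOh{LG/(\mu\sqrt{s})}$). The Rakhlin et al.\ argument instead sums the one-step inequality
\begin{align}
\Exp[F(\btheta\upp{s}) - F^\star] \le \frac{(\eta_s^{-1}-\mu)\,\Exp\|\btheta\upp{s}-\btheta^\star\|^2 - \eta_s^{-1}\,\Exp\|\btheta\upp{s+1}-\btheta^\star\|^2}{2} + \frac{\eta_s G^2}{2}
\end{align}
over the window, where the choice $\eta_s = 1/(\mu s)$ makes the distance terms telescope and leaves a boundary term controlled by the distance bound at the start of the suffix plus the tail sum $\tsum_{s} \eta_s G^2$; it is this telescoping, not a sum of per-iterate value bounds, that yields the $\BigOh{G^2/(\mu T)}$ rate for $\alpha$ bounded away from $0$ and $1$. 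You do invoke ``the weighted argument of Rakhlin et al.,'' which presumably refers to exactly this, but as written the step could be misread as the naive (and unavailable) route. Your remaining caveats — that the paper's stated $G^2/(T\mu^2)$ normalization needs checking against the usual $G^2/(\mu T)$, and that the regimes of $\alpha$ for the upper versus lower bounds must be kept separate — are well placed.
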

We note that the Lacoste-Julien scheme $\thetalj_t$ is unique among those discussed that attains the information-theoretic optimal rate \eqref{eq:info_optimal} and can also be efficiently computed in a streaming fashion if $T$ is unknown in advance\footnote{Observe that if $T$ is known to the learner, then one can implement tail averaging as a special case of full iterate averaging, but only begin the averating at time step $\lceil\alpha T\rceil$.}.  In the regimes of interest, however, the functions we are minimizing are not necessarily continuous, let alone smooth.  In the following section, we demonstrate that if the smoothness assumption is dropped, then EMA becomes supoptimal.

\subsubsection{EMA is suboptimal for nonsmooth convex GD, provided the step size schedule is optimal}\label{app:lb_ema}
Given the above discussion of EMA weighting schemes, a natural question to ask is whether the optimal choice of $\gamma_t$ in our empirical results is borne out by our theoretical analysis.  We show that this is not the case and thus a convex mental model for the benefits of EMA we see empirically is insufficient, absent a discrepancy between the loss we are optimizing and the reward we care about.  Throughout, we let $\cG(\btheta)$ denote a gradient oracle, and again consider updates given in \eqref{eq:theta_gd} and \eqref{eq:bbarthet_avg}.  Our main result is as follows.
\begin{theorem}\label{thm:lb_ema} Let $T \in \N$ be given and let $\cK$ denote the unit radius Euclidean ball in $\R^T$. There exists a $1$-strongly convex, $\cO(1)$-Lipschitz function $F(\btheta)$ on $\cK$ and a  deterministic gradient oracle such that, for any $\beta \in (0,1)$ the updates \eqref{eq:theta_gd} and \eqref{eq:bbarthet_avg} with either fixed $\gamma_t \equiv \gamma = T^{-\beta}$ or time varying $\gamma_t = t^{-\beta}$ suffers a lower bound of 
\begin{align}
F(\bbartheta_T) - F^\star \ge \Omega\left(\frac{\beta \log T}{T}\right).
\end{align}
Moreover, there exists an $\cO(1)$-Lipschitz but not strongly convex function $ F(\btheta)$ on $\cK$ such that gradient descent update with step size $\eta_t = \frac{1}{\sqrt{t}}$ and EMA with either $\gamma_t \equiv \gamma = T^{-\beta}$ or $\gamma_t = t^{-\beta}$  and $T \ge 2^{\frac{1}{1-\beta}}$ suffers
\begin{align}
F(\bbartheta_T) - F^\star \ge \Omega\left(\frac{\beta \log T}{\sqrt{T}}\right).
\end{align}
Finally, running EMA with step size $\gamma \le \frac{c\log T}{T}$ for a sufficiently small universal constant $c$ in either example suffers $F(\bbartheta_T) - F^\star  \ge \Omega(\frac{1}{T^{1/4}})$. In particular, for all \textbf{\emph{fixed}} EMA parameters $\gamma$,
 \begin{align}
F(\bbartheta_T) - F^\star \ge \begin{cases}
\Omega\left(\frac{ \log \log T}{{T}}\right) & \text{strongly convex case}\\
\Omega\left(\frac{ \log \log T}{\sqrt{T}}\right) & \text{weakly convex case}
\end{cases}
\end{align}
\end{theorem}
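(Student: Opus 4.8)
The plan is to prove each lower bound by exhibiting an explicit hard instance and tracking the gradient-descent and EMA recursions essentially in closed form. The construction is in the style of \citet{harvey2019tight}: I would fix a $1$-strongly convex, $\cO(1)$-Lipschitz function $F$ on the unit ball of $\R^T$ together with a \emph{deterministic} subgradient oracle $\cG$ chosen so that, under the prescribed step size $\eta_t = 1/t$, the iterates $\btheta\upp{t}$ of \eqref{eq:theta_gd} follow a completely explicit trajectory. The defining feature is a multi-scale structure that is \emph{geometric in the time-to-horizon}: on each block of steps $\{\,t : T - 2^{k+1} < t \le T - 2^{k}\,\}$ the oracle returns subgradients whose component in a dedicated coordinate has a persistent sign, so that the value $F(\btheta\upp{t}) - F^\star$ carries a bias whose sign does \emph{not} reverse within the block. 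Because the bias is monotone in a fixed direction on each block, any weighting that is nonnegative and concentrated near the horizon will fail to cancel it; this is exactly the weakness of EMA that the theorem exploits, in contrast to the Lacoste-Julien weights of \citet{lacoste2012simpler} which are tuned to cancel such bias.

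For the strongly convex bound I would substitute the closed form $\bthetgam\upp{T} = \gamma\sum_{s=1}^{T}(1-\gamma)^{T-s}\btheta\upp{s} + (1-\gamma)^{T}\btheta\upp{0}$ (and its time-varying analogue) into $F$. Since $\gamma = T^{-\beta}$ with $\beta < 1$, the weight $(1-\gamma)^{T} \le e^{-T^{1-\beta}}$ is super-polynomially small, so the initialization is irrelevant and the EMA weight is effectively supported on the last $\Theta(1/\gamma) = \Theta(T^{\beta})$ iterates. A window of length $T^{\beta}$ spans exactly the finest $\Theta(\log_2 T^{\beta}) = \Theta(\beta \log T)$ horizon-refined blocks, each contributing an uncancelled bias of order $1/T$ to the value, while coarser blocks fall outside the window and are exponentially suppressed; convexity of $F$ then lower-bounds $F(\bthetgam\upp{T}) - F^\star$ by the sum of the surviving contributions, giving $\Omega(\beta \log T / T)$. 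The time-varying schedule $\gamma_t = t^{-\beta}$ is handled identically once one notes that on the relevant window $t \approx T$ one has $\gamma_t \approx T^{-\beta}$, so its late-stage weights match the fixed schedule up to constants.

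The weakly convex bound runs the same construction with $\eta_t = 1/\sqrt{t}$, so $F$ is taken Lipschitz but not strongly convex and the per-block bias is now of order $1/\sqrt{T}$ (reflecting the step sizes $\eta_t \approx 1/\sqrt{T}$ on the window); the $\Theta(\beta \log T)$ surviving blocks combine to $\Omega(\beta \log T / \sqrt{T})$, with the hypothesis $T \ge 2^{1/(1-\beta)}$ guaranteeing the window contains at least one full finest block. The small-$\gamma$ regime $\gamma \le c\log T/T$ is governed by a different mechanism: here $(1-\gamma)^{T} \ge e^{-c\log T} = T^{-c} \ge T^{-1/4}$, so the averaged iterate \emph{never forgets} a poorly chosen initialization, and I would lower-bound $F(\bthetgam\upp{T}) - F^\star$ directly by the retained fraction $T^{-c}$ of the initial error $\norm{\btheta\upp{0} - \btheta^\star} = \Omega(1)$, yielding $\Omega(T^{-1/4})$. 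Finally, the uniform statement over all fixed $\gamma$ follows by casework on the magnitude of $\gamma$: for $\gamma$ in the intermediate range the strongly/weakly convex bound with $\beta = \log(1/\gamma)/\log T$ already exceeds $\log\log T/T$ once $\gamma \le 1/\log T$, and the small-$\gamma$ bound dominates below $c\log T/T$; to also catch $\gamma$ near $1$ (short window, where EMA behaves like the final iterate) I would superimpose a final-iterate-hard feature on the construction so that every fixed $\gamma$ is caught by at least one mechanism, and optimize the crossover to land at $\Omega(\log\log T/T)$ (resp.\ $\Omega(\log\log T/\sqrt{T})$).

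The main obstacle I anticipate is the summation bookkeeping of the second and third paragraphs: one must verify that the EMA weights genuinely fail to cancel the block-wise biases rather than averaging them away, which forces the subgradient signs to be arranged so the bias is monotone in a fixed direction across each block, and requires controlling cross-block cancellation at the window boundary. A secondary but genuinely delicate point is the uniform statement: engineering a \emph{single} deterministic instance that is simultaneously hard for every fixed $\gamma$ in the claimed range—so that the easiest $\gamma$ still incurs $\Omega(\log\log T/T)$—is what the $\log\log T$ factor is paying for, and getting the two superimposed hard features to coexist without one trivializing the other is where the construction must be most careful.
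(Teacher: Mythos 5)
Your overall strategy is the same as the paper's: both arguments reduce the EMA lower bound to a lower bound on weighted suffix sums of $\langle \bh, \btheta\upp{t}\rangle$ for the non-smooth hard instance of \citet{harvey2019tight}, using that (i) $F(\btheta)-F^\star \ge \langle\bh,\btheta\rangle$ with $\langle\bh,\btheta\upp{t}\rangle\ge 0$ for every $t$, so later iterates cannot cancel earlier bias, and (ii) the EMA weights $w_t=\gamma(1-\gamma)^{T-t}$ form a monotone nonnegative weighting effectively supported on a window of the last $k=\Theta(1/\gamma)=\Theta(T^{\beta})$ iterates with $kw_{T-k+1}=\Omega(1)$. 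The paper cites Theorem~3.4 of \citet{harvey2019tight} as a black box for the properties $\frac{1}{k}\sum_{t=T-k+1}^{T}\langle\bh,\btheta\upp{t}\rangle=\Omega(\log(T/k)/T)$ and $\langle\bh,\btheta\upp{1}\rangle=\Omega(1)$, and its Lemma~\ref{lem:F_lb} is exactly your ``convexity plus nonnegativity'' step; your small-$\gamma$ argument via the retained weight $(1-\gamma)^{T}\ge T^{-c}\ge T^{-1/4}$ on the initial iterate is also the paper's.

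The one step that will not go through as described is your per-block accounting, where you want each of the $\Theta(\beta\log T)$ finest dyadic blocks \emph{inside} the window to contribute an uncancelled $\Omega(1/T)$ to $\langle\bh,\bthetgam\upp{T}\rangle$. Block $j$ contains $2^{j}$ iterates, each carrying EMA weight $\Theta(\gamma)=\Theta(T^{-\beta})$, and for any $1$-strongly convex, $\cO(1)$-Lipschitz $F$ and any subgradient oracle, GD with $\eta_t=1/t$ satisfies the universal upper bound $\langle\bh,\btheta\upp{t}\rangle\le F(\btheta\upp{t})-F^\star=\cO(\log t/t)=\cO(\log T/T)$ on the window; hence block $j$ contributes at most $\cO(2^{j}T^{-\beta}\log T/T)$, which for the finest blocks is $o(1/T)$ --- no oracle can make them each contribute $\Omega(1/T)$. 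In the actual construction the mass comes from the other end: every iterate in the window carries a common bias of order $\log(T/k)/T$ inherited from the scales \emph{coarser} than the window, the coarsest in-window block dominates, and the logarithmic factor is $\log(\gamma T)$ rather than $\log(1/\gamma)$ (for $\beta$ bounded away from $0$ and $1$ these differ only by constants, so the headline rate survives). This correction also dissolves your final worry: since $\log(\gamma T)/T$ is \emph{largest} for $\gamma$ near $1$ (the final-iterate regime) and degrades only as $\gamma$ shrinks toward the full-average regime, a single instance is hard for every fixed $\gamma$, and the casework against the initialization-retention bound at the crossover $\gamma\approx\log T/T$ yields the $\Omega(\log\log T/T)$ statement with no superimposed ``final-iterate-hard feature.'' The weakly convex case is analogous with $\cO(\log T/\sqrt{T})$ in place of $\cO(\log T/T)$.
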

We emphasize that \Cref{thm:lb_ema} demonstrates that in the regime that we find EMA empirically works, the convex theory is lacking in explanatory power. This is due to the fact that the ideal step size choices for convex optimization decay \emph{far more rapidly} than those in the non-convex optimization of neural networks. By contrast, if the step size decays more slowly than the EMA parameter, we can achieve benefits in convex problems, as shown in \Cref{prop:dt_ema}. And it is precisely the slow- or no-step size decay regimes that we empirically evaluate for the nonconvex optimization of deep neural networks.

\section{Proofs}\label{app:proofs}
In this section we provide rigorous proofs of the statements from \Cref{app:theory}.
\subsection{Proof of \creftitle{prop:stab}}

\begin{proof}[Proof of \Cref{prop:stab}]
Consider the dynamical system $\bA = \eye_d$, $\bB = c\eye_d$, and expert policy parameter $\bK^\star = -\frac{\epsilon}{c}\eye_d$. We let $f(\bx,\bu) = \bA \bx + \bB \bu$ and $\pi_{\bK}(\bx) = \bK \bx$. It is clear that these satisfy the requisite Lipschitz constants. For any other policy $\Delta$ which is at most $\epsilon'$ Lipschitz, we have that
\begin{align}
\|\bx_{t+1}\| &\le \|f(\bx_t,\pi(\bx_t)+\Delta(\bx_t))\| = \|(\bA - \bB\bK)\bx_t + c\Delta(\bx_t)\| \le  (\|(\bA - \bB\bK\|+c\epsilon')\|\bx_t\|\\
&\le  (1-\epsilon +c\epsilon')\|\bx_t\|.
\end{align}
Defining $\rho := 1-\epsilon +c\epsilon' = 1 + \delta$, we have
\begin{align}
J_T(\pi_{\bthetast})-J_T(\pi_{\bthetast}+\Delta) &\le \Exp\left[\sum_{h=1}^H \|\bx_h\|^2\right] \le \Exp\left[\sum_{h=1}^H \rho^{2(h-1)}\|\bx_1\|^2\right] = \sum_{h=1}^H  \rho^{2(h-1)}\Exp\left[\|\bx_1\|^2\right]\\
&= d  \sum_{h=1}^H  \rho^{2(h-1)}.
\end{align}
Moreover, by selecting $\Delta(\bx) = \epsilon'\bx$, this upper bound is attained.  As $\rho = 1+\delta$, for $0 \le \delta \le \frac{1}{2}$, it is standard to bound $d  \sum_{h=1}^H  \rho^{2(h-1)} \ge d  \frac{H}{2}  (1+\delta)^{2(\floor{\akedit{\frac{H}{2}}}-1)} = \Omega(H\exp(\delta H))$, and similarly, $ \sum_{h=1}^H  \rho^{2(h-1)} \le H \rho^{2(H-1)} \le H\exp(\cO(H))$. For $\delta \le 0$,  then $\sum_{h=1}^H  \rho^{2(h-1)}$ is $\Theta(\min\{H,\frac{1}{\delta}\})$ by a standard computation.   The first result follows.

For the second statement, note that
\begin{align}
    \ee\left[ \norm{\left( \bK^\star - \bkhat \right) \bx_t}^2 \right] \leq \norm{\bK^\star - \bkhat}_\op^2 \cdot  \norm{\bx_t}^2 \leq \norm{\bK^\star - \bkhat}_\op^2 \cdot \norm{\bx_1}^2 \leq \epsilon^2 \norm{\bx_1}^2,
\end{align}
where the second inequality follows from the fact that $\norm{\bA + \bB \bK}_\op \leq 1$.  The result follows by summing over $t$.

Lastly, we show the existence of $\Delta$'s with relatively large imitation cost but that do not suffer exponential error amplification. Define $\Delta_0 := \frac{\epsilon'}{2} \cdot \eye$, and define the set $\tilde{\cB}_{\epsilon'} = \{\Delta: c\|\Delta_0 - \Delta\| \le \epsilon'/16\}$. This implies that for every $\Delta \in \tilde{\cB}_{\epsilon'}$. Define $\tilde{\Delta} := \frac{1}{c}(\Delta - \Delta_0)$. Then, 
\begin{align}
    (\bA + \bB (\bK + \Delta)) = \underbrace{(1 - \epsilon - \epsilon'/2) \eye}_{:= \tilde {\bA}} + \tilde{\Delta}.
\end{align}
Note that if $\norm{\state_0} \leq 1$ almost surely, then by the preceding analysis, it is clear that $\ee\left[ \norm{\state_h}^2 \right] \leq H$ and thus by construction, if we let $\bC = \widetilde{\bA} + \widetilde{\Delta}$, then 
\begin{align}
    \ee\left[ \norm{\state_H}^2 \right] = \ee\left[ \norm{ \bC^H \state_0 + \sum_{s = 1}^H \bC^{H-s} \bw_s} \right] = \norm{\bC}_\op^H \ee\left[ \norm{\bx_0} \right] + d \cdot \sum_{s = 1}^H \norm{\bC}_\op^{H-s}.
\end{align}
If we suppose that $\epsilon \ll H^{-1}$, then the above is $\Omega(H)$; thus $\ee\left[ \lbc(\bK) \right] \geq \Omega(H \delta^2)$ as desired.  On the other hand, because $\norm{\bA + \bB (\bK + \Delta)}_\op \leq \norm{\bA + \bB \bK^\star}_\op$, the rollout reward of the perturbed policy is greater than that of the expert.  The result follows.

\end{proof}

\subsection{Proof of \creftitle{prop:lqr_insentive}}\label{app:marginally_stable}
 
The triangle inequality tells us that
\begin{align}
    \norm{\bA + \bB \bkhat}_\op \leq \norm{\bA + \bB \bK}_\op + \norm{\bB (\bK - \bkhat)}_\op \leq 1 + \frac{\epsilon}{H},
\end{align}
by assumption.  In particular, for $t \in [H]$, we have that
\begin{align}
    \norm{\bA + \bB \bkhat}_\op \leq e^{\epsilon} \leq 1 + 2 \epsilon
\end{align}
for $\epsilon < 1$.  For each $t$, we have
\begin{align}
    &\bx_0^\top \left(\left( \bA + \bB \bkhat \right)^t\right)^\top (\bQ + \bkhat \bR) \left( \bA + \bB \bkhat \right)^t \bx_0 \\
    &\geq \bx_0^\top \left(\left( \bA + \bB \bK \right)^t\right)^\top (\bQ + \bK \bR) \left( \bA + \bB \bK \right)^t \bx_0 \\
    &- \abs{\bx_0^\top \left(\left( \bA + \bB \bkhat \right)^t - \left( \bA + \bB \bK \right)^t\right)^\top (\bQ + \bK \bR) \left( \bA + \bB \bK \right)^t \bx_0} \\
    &- \abs{\bx_0^\top \left(\left( \bA + \bB \bkhat \right)^t\right)^\top ((\bkhat - \bK) \bR) \left( \bA + \bB \bK \right)^t \bx_0} \\
    &- \abs{\bx_0^\top \left(\left( \bA + \bB \bkhat \right)^t \right)^\top (\bQ + \bkhat \bR) \left(\left( \bA + \bB \bkhat \right)^t - \left( \bA + \bB \bK \right)^t\right) \bx_0} \\
    &\geq C  H \epsilon \norm{\bx_0}^2 + \norm{\bx_0}^2 \norm{\bR}_\op (1 + 2\epsilon)^2 \epsilon + C H \epsilon \norm{\bx_0}^2 \\
    &\geq 100 C \norm{\bx_0}^2  (H  +  \norm{\bR}_\op) \epsilon.
\end{align}
The first result follows by summing over $t$.  The second result follows similarly after observing that $\norm{\bA + \bB \bK}_\op \leq 1$ for all $\norm{\bK - \bkhat}_\op \leq \frac \delta 2$. \qed

\subsection{Cliff loss for Gaussian random vectors (Proof of \creftitle{prop:cliff_redu})}

In this section, we prove \Cref{prop:cliff_redu}, which establishes the direct but discontinuous relationship between the the square BC loss and the cliff loss. Recall our imitation losses and cliff loss for $\bmu$ fixed:
\begin{align}
    \lbc(\btheta) = \frac{1}{2} \|\btheta - \bmu\|^2, \quad J(\btheta) = \begin{cases}
        - \|\btheta - \bmu\|^2 & \norm{\btheta - \bmu} \leq \epsilon \\
        -C & \text{otherwise}
    \end{cases},\quad 
\end{align}

The statement \eqref{eq:Jlb_cliff} of \Cref{prop:cliff_redu} is a direct consequence of the following lemma:
\begin{lemma}\label{lem:small_ball}
    Let $\btheta$ be a random Gaussian vector with covariance matrix $\bSigma$.  Then  there is some constant $c$ such that for all $\epsilon > 0$,
    \begin{align}
       \inf_{\bmu} \pp\left( \norm{\btheta - \bmu} > \epsilon \right) \geq 1 - c \cdot \frac{\epsilon}{ \sqrt{\Exp[\norm{\btheta - \bmu}^2]}}.
    \end{align}
\end{lemma}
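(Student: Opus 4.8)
The plan is to establish the equivalent pointwise bound $\pp(\norm{\btheta-\bmu}\le\epsilon)\le c\,\epsilon/\sqrt{\Exp[\norm{\btheta-\bmu}^2]}$ for every fixed $\bmu$, from which the stated inequality follows by taking complements (since the constant $c$ will be universal, it is uniform in $\bmu$). Write $\bz:=\btheta-\bmu$, a Gaussian vector with mean $\ba:=\Exp[\bz]$ and covariance $\bSigma$, and abbreviate $R^2:=\Exp[\norm{\bz}^2]=\norm{\ba}^2+\trace\bSigma$. Before committing to an approach, it is worth seeing why the most obvious one is inadequate: projecting onto a fixed unit vector $\bv$ and invoking the one-dimensional density bound yields only $\pp(\norm{\bz}\le\epsilon)\le 2\epsilon/(\sqrt{2\pi}\,\sqrt{\bv^\top\bSigma\bv})$, and optimizing over $\bv$ reaches $\norm{\bSigma}_{\op}$ but never $\trace\bSigma$. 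Taking $\bSigma=\I$ and $\ba=\mathbf 0$ shows this is fatal in high dimension, where the true probability is of order $\epsilon^d$ while the projection bound is only $\Theta(\epsilon)$. The argument must therefore see the entire spectrum of $\bSigma$, which motivates a Laplace-transform (Chernoff) bound on $\norm{\bz}^2$.

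Concretely, I would diagonalize $\bSigma$ and write $\norm{\bz}^2=\sum_i (a_i+\sqrt{\lambda_i}\,g_i)^2$ with $g_i\iidsim\cN(0,1)$, where $\lambda_i\ge 0$ are the eigenvalues of $\bSigma$ (so $\sum_i\lambda_i=\trace\bSigma$) and $a_i$ are the coordinates of $\ba$ in this basis (so $\sum_i a_i^2=\norm{\ba}^2$). For any $t>0$, Markov's inequality applied to $e^{-t\norm{\bz}^2}$ gives $\pp(\norm{\bz}^2\le\epsilon^2)\le e^{t\epsilon^2}\Exp[e^{-t\norm{\bz}^2}]$, and by independence the right-hand expectation factorizes across $i$. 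Substituting the standard Gaussian identity $\Exp[e^{-t(a+\sigma g)^2}]=(1+2t\sigma^2)^{-1/2}\exp(-t a^2/(1+2t\sigma^2))$ produces
\begin{align*}
\pp(\norm{\bz}\le\epsilon)\;\le\; e^{t\epsilon^2}\prod_i (1+2t\lambda_i)^{-1/2}\,\exp\!\Big(-\sum_i \frac{t\,a_i^2}{1+2t\lambda_i}\Big).
\end{align*}

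The decisive step is the choice $t=1/(2\epsilon^2)$, making the prefactor $e^{t\epsilon^2}=e^{1/2}$. For the product I would apply the elementary inequality $\prod_i(1+x_i)\ge 1+\sum_i x_i$ with $x_i=\lambda_i/\epsilon^2$, giving $\prod_i(1+2t\lambda_i)^{-1/2}\le(1+\trace\bSigma/\epsilon^2)^{-1/2}=\epsilon/\sqrt{\epsilon^2+\trace\bSigma}$; this is precisely where the full trace enters the denominator. For the mean-shift factor, $\lambda_i\le\trace\bSigma$ gives $\sum_i a_i^2/(\epsilon^2+\lambda_i)\ge\norm{\ba}^2/(\epsilon^2+\trace\bSigma)$, so that factor is at most $\exp(-\norm{\ba}^2/(2(\epsilon^2+\trace\bSigma)))$. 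Setting $\rho^2:=\epsilon^2+\trace\bSigma$ and $w:=\norm{\ba}/\rho$, the two factors combine to $e^{1/2}(\epsilon/\rho)e^{-w^2/2}$; since $\rho^2(1+w^2)=\epsilon^2+R^2\ge R^2$ and the univariate function $\sqrt{1+w^2}\,e^{-w^2/2}$ is maximized at $w=0$ with value $1$, we get $\rho^{-1}e^{-w^2/2}\le 1/R$ and hence $\pp(\norm{\bz}\le\epsilon)\le \sqrt{e}\,\epsilon/R$. The main obstacle is conceptual rather than computational: it is arranging for $\trace\bSigma$ (not merely $\norm{\bSigma}_{\op}$) to appear in the denominator, which is exactly what the choice $t=1/(2\epsilon^2)$ together with the product inequality accomplishes; the remaining estimates are elementary and yield the universal constant $c=\sqrt{e}$.
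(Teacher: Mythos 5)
Your proof is correct, but it takes a genuinely different route from the paper. The paper proves this lemma as a one-line application of the Carbery--Wright anti-concentration inequality: writing $\norm{\btheta-\bmu}^2$ as a degree-$2$ nonnegative polynomial in a standard Gaussian (log-concave) vector, Carbery--Wright directly yields $\pp(P(\bz)\le \epsilon^2)\le c\,\epsilon/\Exp[P(\bz)]^{1/2}$ with an unspecified universal constant. You instead give a self-contained Chernoff argument: the exact Laplace transform of the noncentral quadratic form, the choice $t=1/(2\epsilon^2)$, the elementary inequality $\prod_i(1+x_i)\ge 1+\sum_i x_i$ to pull the full trace (rather than just the operator norm) into the denominator, and the observation $\sqrt{1+w^2}\,e^{-w^2/2}\le 1$ to absorb the mean shift. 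I checked the computation --- the Gaussian identity, the Markov step, and the final combination $\rho^{-1}e^{-w^2/2}\le 1/R$ are all sound, including the degenerate cases $\bSigma=\mathbf{0}$. What your approach buys is elementarity and an explicit constant $c=\sqrt{e}$, at the cost of being specific to Gaussian vectors and quadratic forms; the paper's approach is a black-box citation that generalizes immediately to any nonnegative polynomial of any log-concave vector, which is the form in which they state the intermediate Carbery--Wright step. Your opening remark about why the naive one-dimensional projection bound fails (it sees only $\norm{\bSigma}_{\op}$, not $\trace\bSigma$) is a genuinely useful observation that the paper does not make explicit.
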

\begin{proof}
    By the classical Carbery-Wright inequality \citep[Theorem 8]{carbery2001distributional}, for any degree $q \in \N$, there is a constant $c > 0$ such that for 
    any nonnegative polynomial $P(\cdot)$ of degree $q$ in a vector-valued log concave random variable $\bz \in \R^d$ such that
    \begin{align}
        \pp\left( P(\bz) \le \epsilon^q \right) &\leq c \cdot \left(\frac{\epsilon^q}{\ee\left[ P(\bz) \right]}\right)^{1/q} \\
        &= c \cdot \frac{\epsilon}{\ee\left[ P(\bz) \right]^{1/q}}.
    \end{align}
    
    Noting that $\btheta - \bmu \stackrel{d}{=} \bmu' - \bmu + \bSigma^{1/2} \bz$ where $\bz \sim \cN(0,\eye)$, we see that $\|\btheta - \bmu\|^2$ can be expressed as as a degree $2$ polynomial in the log-concave standard Gaussian variable. The result follows.
\end{proof}

To prove the second part of \Cref{prop:cliff_redu}, we recall the following corollary of the classical Hanson-Wright inequality:
\begin{lemma}\label{lem:hanson_wright}
    Let $\btheta$ be a Gaussian random vector with mean $\bmu'$ and covariance $\Sigma$.  Then there is a constant $c$ such that for all $t > 0$, it holds that
    \begin{align}
        \pp\left(\norm{\btheta - \bmu}^2 \geq 2 \cdot \norm{\bmu - \bmu'}^2 +  2 \trace(\bSigma) + 2t  \right) \leq c' e^{- c \min\left( \frac{t^2}{\norm{\bSigma}_\fro^2}, \frac{t}{\norm{\bSigma}_\op} \right)}
    \end{align}
\end{lemma}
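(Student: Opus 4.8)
The plan is to reduce the statement to the classical Hanson--Wright (Gaussian chaos) concentration inequality for a quadratic form. First I would write $\btheta = \bmu' + \bSigma^{1/2}\bz$ with $\bz \sim \cN(\bzero,\eye)$, set $\bv := \bmu' - \bmu$, and expand
\[
\norm{\btheta - \bmu}^2 = \norm{\bv + \bSigma^{1/2}\bz}^2 = \norm{\bv}^2 + 2\,\bv^\top \bSigma^{1/2}\bz + \bz^\top \bSigma \bz.
\]

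The key step is to dispose of the linear cross term by Young's inequality. Since $2\,\bv^\top \bSigma^{1/2}\bz \le \norm{\bv}^2 + \norm{\bSigma^{1/2}\bz}^2 = \norm{\bv}^2 + \bz^\top\bSigma\bz$ holds pointwise, this yields the deterministic domination
\[
\norm{\btheta - \bmu}^2 \le 2\norm{\bv}^2 + 2\,\bz^\top\bSigma\bz.
\]
This is exactly the purpose of the factors of $2$ appearing in the statement: they absorb the cross term, so that I never have to carry a separate Gaussian tail bound for the linear contribution. Consequently the event in the lemma is contained in $\{\bz^\top\bSigma\bz \ge \trace(\bSigma) + t\}$, and the problem is reduced to controlling the upper tail of the centered quadratic form $\bz^\top\bSigma\bz - \trace(\bSigma)$.

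Finally I would invoke the standard Hanson--Wright inequality for Gaussian quadratic forms, which gives $\pp(\bz^\top\bSigma\bz - \trace(\bSigma) \ge t) \le c' \exp(-c\min(t^2/\norm{\bSigma}_\fro^2,\; t/\norm{\bSigma}_\op))$ for absolute constants $c,c'>0$; here the relevant matrix is $\bSigma$ itself, so $\trace(\bSigma^2) = \norm{\bSigma}_\fro^2$ and $\norm{\bSigma}_\op$ are the correct norms. I do not anticipate any genuine obstacle: the construction is essentially a one-line domination followed by a black-box application of Hanson--Wright. The only points requiring a little care are performing the cross-term bound cleanly so the constants line up with the stated $2\norm{\bmu-\bmu'}^2 + 2\trace(\bSigma) + 2t$ threshold, and ensuring that the concentration inequality is applied with the norms of $\bSigma$ (not of $\bSigma^{1/2}$).
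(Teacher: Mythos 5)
Your proposal is correct and follows essentially the same route as the paper's proof, which also writes $\btheta = \bmu' + \bSigma^{1/2}\bz$, applies Young's inequality to obtain $\norm{\btheta - \bmu}^2 \leq 2\norm{\btheta - \bmu'}^2 + 2\norm{\bmu' - \bmu}^2$, and then invokes the classical Hanson--Wright inequality for the quadratic form $\bz^\top \bSigma \bz$. Your version is slightly more explicit about the cross-term cancellation and the reduction of the event to $\{\bz^\top\bSigma\bz \geq \trace(\bSigma) + t\}$, but there is no substantive difference.
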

\begin{proof}
    This follows immediately from the classical Hanson-Wright inequality \citep{rudelson2013hanson} by considering the random vector $\btheta = \bmu' + \Sigma^{1/2} Z$ with $Z$ isotropic Gaussian and applying Young's inequality to upper bound $\norm{\btheta - \bmu}^2 \leq 2 \norm{\btheta - \bmu'}^2 + 2 \norm{\bmu' - \bmu}^2$.
\end{proof}
Translated into the language of \Cref{prop:cliff_redu}, set $\epsilon^2 = 2\norm{\bmu - \bmu'}^2 +  2 \trace(\bSigma) + 2t  = 4\Exp[\lbc(\btheta)] + 2t$. Suppose that $\epsilon^2 \ge 8\Exp[\lbc(\btheta)]$, so that $t  = \frac{1}{2}(\epsilon^2 - 8\Exp[\lbc(\btheta)]) \ge \frac{\epsilon^2}{4}$. Then, 
\begin{align}
\pp\left(\norm{\btheta - \bmu}^2 \geq \epsilon^2  \right) \leq c e^{- c \min\left( \frac{\epsilon^4}{\norm{\bSigma}_\fro^2}, \frac{\epsilon^2}{\norm{\bSigma}_\op} \right)}
\end{align}
Morever, we can upper bound $\norm{\bSigma}_\fro^2 \le \trace(\bSigma)^2 \le 4\Exp[\lbc(\btheta)]^2$ and $\norm{\bSigma}_\op \le \Exp[\lbc(\btheta) \le 2\Exp[\lbc(\btheta)]$. Thus, using $\epsilon^2 \ge 8\Exp[\lbc(\btheta)]$,  $\min\left( \frac{\epsilon^4}{\norm{\bSigma}_\fro^2}, \frac{\epsilon^2}{\norm{\bSigma}_\op} \right) \ge \frac{\epsilon^2}{2\Exp[\lbc(\btheta)]}$. By reassigning constants, we conclude that, if $\epsilon^2 \ge 8\Exp[\lbc(\btheta)]$, 
\begin{align}
\pp\left(\norm{\btheta - \bmu}^2 \geq \epsilon^2  \right) \leq c_2 e^{- c_3 \frac{\epsilon^2}{2\Exp[\lbc(\btheta)]}}
\end{align}
Finally, we lower bound
\begin{align}
\Exp[J(\btheta)] \ge -C\pp\left(\norm{\btheta - \bmu}^2 \geq \epsilon^2  \right) + \Exp[- \lbc(\btheta)] \ge -\Exp[\lbc(\btheta)] -C\cdot c_2 e^{- c_1 \frac{\epsilon}{2\Exp[\lbc(\btheta)]}}. 
\end{align}
This concludes the proof of the second statement of \Cref{prop:cliff_redu}. \qed

\subsection{Proof of \creftitle{prop:dt_ema}}\label{app:mean_est_dt}

We now turn to the proof of \Cref{prop:dt_ema}.
    Introduce the error $\bdel\upp{t} = \btheta\upp{t} - \bmu$ and $\bdelgam\upp{t} = \bthetgam\upp{t} - \bmu$. We readily compute
    \begin{align}
    \bdel\upp{t} = (1 - \eta) \bdel\upp{t-1} - \eta \bw\upp{t-1}, \quad \bdelgam\upp{t} = (1 - \gamma) \bdelgam\upp{t-1} + \gamma \bdel
    \end{align}
    Let's quickly compute the No-EMA setting:
    \begin{align}
    \bdel\upp{t} = (1-\eta)^{T-1}\bdel\upp{1} + \eta\sum_{i=0}^{T-1} (1-\eta)^{T-1-i}\bw_{i}
    \end{align}
    and thus, with $|\bdel\upp{1}|=r$ and $\Exp[\bw_t^2] = \sigma^2$, 
    \begin{align}
    \Exp[(\bdel\upp{t})^2] &= b^2(1-\eta)^{2T} + \sigma^2 \sum_{i=0}^{T-1} (1-\eta)^{2(T-1-t)}\\
     &= b^2(1-\eta)^{2T} + \eta^2\sigma^2 \sum_{i=0}^{T-1} (1-\eta)^{2i}\\\
     &= \eta^2b^2(1-\eta)^{2T} + \sigma^2 \frac{1 - ((1-\eta)^{2T})}{1-(1-\eta)^2}\\
      &= \eta^2b^2(1-\eta)^{2T} + \sigma^2 \frac{1 - ((1-\eta)^{2T})}{\eta(2-\eta)}\\
      &= \eta b^2(1-\eta)^{2T} + \sigma^2 \frac{1 - ((1-\eta)^{2T})}{2-\eta}.
    \end{align}

    For the EMA setting, defining $\bz = (\bdel\upp{t},\bdelgam\upp{t})$, we have
    \begin{align}
    \bz\upp{t} = \underbrace{\begin{bmatrix}(1-\eta) & 0 \\
    \gamma & (1-\gamma)
    \end{bmatrix}}_{=\bA}\bz\upp{t-1} + \underbrace{\begin{bmatrix} -\eta \\
    0
    \end{bmatrix}}_{=\bB}\bw\upp{t-1}
    \end{align}

    Let $\be_1,\be_2$ denote the canonical basis vectors for $\R^2$.
    Using $\bz\upp{0} = (\bdel\upp{0},\bdelgam\upp{0}) = (\bdel\upp{0},\bdel\upp{0}) = \bdel\upp{0}(\be_1 + \be_2)$, $\bdelgam\upp{t} = \be_2^\top \bz\upp{t}$, and $\bB_2 = \eta\be_1$, we compute 
    \begin{align}
    \bdelgam\upp{T} = \sum_{t=0}^{T-1} \eta\be_2^\top\bA^{T-(t+1)}\bB\bw_{t}\be_1 + \be_2^\top\bA^{T}(\be_1 + \be_2)\bdel\upp{0}
    \end{align}
    And thus, by independence of $\bw_1,\dots,\bw_{T-1} \sim \cN(0,\sigma^2)$ and $\bdel\upp{0}$, 
    \begin{align}
    \Exp[(\bdelgam\upp{T})^2] &= \sigma^2\sum_{t=0}^{T-1} (\eta\be_2^\top\bA^{T-(t+1)}\be_1)^2 + (\bdel\upp{0})^2 (\be_2^\top\bA^{T-1}(\be_1+\be_2))^2\\
    &=\sigma^2\sum_{t=0}^{T-1} (\eta\bA^{T-(t+1)}[2,1])^2 + (\bdel\upp{0})^2 (\bA^{T-1}[2,1]+\bA^{T-1}[2,2])^2\\
    &=\sigma^2\sum_{t=0}^{T-1} (\eta\bA^{t}[2,1])^2 + (\bdel\upp{0})^2 (\bA^{T}[2,1]+\bA^{T}[2,2])^2\\
    &\le\sigma^2\sum_{t=0}^{T-1} (\eta\bA^{t}[2,1])^2 + 2(\bdel\upp{0})^2 (\bA^{T}[2,1]^2+\bA^{T}[2,2]^2)\\
    &=\sigma^2\sum_{t=0}^{T-1} (\eta\bA^{t}[2,1])^2 + 2b^2(\bA^{T}[2,1]^2+(1-\gamma)^{2(T)}) := V_T + 2b^2 (1-\gamma)^{2T}
    \label{eq:V_T}
    \end{align}
    where above, $\bX[i,j]$ is the $i,j$-the element of matrix $\bX$, and where in the last line, we use that the diagonal elements of a power of triangular matrix are the powers of its diagonals, as well as $(\bdel\upp{1})^2 = b^2$.

    To compute the powers $\bA^{t}[2,1]$ lets first assume $\gamma \ne \eta$, and moreover, that either $\gamma \ge 2\eta$ or $\eta \ge 2\gamma$. We can address the other cases at the end. Using a formula for diagonal matrix exponentiation with $\gamma \ne \eta$, we obtain
    \begin{align}
    \bA^{t} = \begin{bmatrix}(1-\eta)^t & 0 \\
    \gamma \frac{(1-\eta)^t - (1-\gamma)^t}{(1-\eta)-(1-\gamma)}  & (1-\gamma)^t
    \end{bmatrix},
    \end{align}
    We then have
    \begin{align}
    \eta\bA^t[i,j] &=\eta\gamma\frac{(1-\eta)^t - (1-\gamma)^t}{(1-\eta)-(1-\gamma)}  = \eta\gamma\frac{(1-\eta)^t - (1-\gamma)^t}{\gamma - \eta} \le 2\begin{cases} \eta(1-\eta)^t & \gamma \ge 2\eta\\
    \gamma(1-\gamma)^t & \eta \ge 2\gamma \label{eq:etaA}
    \end{cases}
    \end{align}
    Hence, 
    \begin{align}
    V_T &\le 4\sigma^2\left(\begin{cases}\eta^2\sum_{t=0}^{T-2}(1-\eta)^{2t} & \gamma \ge 2\eta\\
    \gamma^2\sum_{t=0}^{T-2}(1-\gamma)^{2t} & \eta \ge 2\gamma
    \end{cases}\right) + 4b^2 \left(\begin{cases}(1-\eta)^{2(T-1)} & \gamma \ge 2\eta\\
    \frac{\gamma^2}{\eta^2}(1-\gamma)^{2(T-1)} & \eta \ge 2\gamma
    \end{cases}\right)\\
    &\le 4 \sigma^2\left(\begin{cases}\frac{\eta^2}{1-(1-\eta)^2} & \gamma \ge 2\eta\\
    \frac{\gamma^2}{1-(1-\gamma)^2} & \eta \ge 2\gamma
    \end{cases}\right) + 4b^2\left(\begin{cases}(1-\eta)^{2(T-1)} & \gamma \ge 2\eta\\
    \frac{\gamma^2}{\eta^2}(1-\gamma)^{2(T-1)} & \eta \ge 2\gamma
    \end{cases}\right)\\
    &= 4 \left(\begin{cases}\sigma^2\frac{\eta}{2-\eta} + b^2(1-\eta)^{2(T-1)} & \gamma \ge 2\eta\\
    \sigma^2\frac{\gamma}{2-\gamma} + b^2\frac{\gamma^2}{\eta^2}(1-\gamma)^{2(T-1)} & \eta \ge 2\gamma
    \end{cases} \right)\\
    &= 4 \left(\begin{cases}\sigma^2 \eta + b^2(1-\eta)^{2(T-1)} & \gamma \ge 2\eta\\
    \sigma^2\gamma + b^2\frac{\gamma^2}{\eta^2}(1-\gamma)^{2(T-1)} & \eta \ge 2\gamma
    \end{cases} \right)
    \end{align}
    Lets now hand the case $\eta \in [\frac{1}{2}\gamma,2\gamma]$. In this case, observe that the entries
    \begin{align}
    \bA = 
    \begin{bmatrix}(1-\eta) & 0 \\
    \gamma & (1-\gamma)
    \end{bmatrix}
    \end{align}
    are non-decreasing in $\eta$. Hence, we can upper bound 
    \begin{align}\label{eq:Akpr}
    \bA^k \le (\bA')^k \text{ entrywise}, \quad \bA' = 
    \begin{bmatrix}(1-\eta') & 0 \\
    \gamma & (1-\gamma)
    \end{bmatrix}, \quad \eta' = \frac{\eta}{4}. 
    \end{align}
    We can now apply the same upper bound with parameters $\gamma,\eta' = \frac{\eta}{4}$. Defining $V_T'$ as corresponding to $V_T$ but with $\eta'$ instead of $V_T$, we have that
    \begin{align}
    V_T' \le 4(\sigma^2\eta' + b^2(1-\eta')^{2T}) = \sigma^2\eta + 4b^2(1-\frac{\eta}{4})^{2T}
    \end{align} 
    Moreover, its easy to see from \eqref{eq:V_T}  and Eq.\ref{eq:Akpr}
    that 
    \begin{align}
    V_T \le 16V_T',
    \end{align} 
    yielding $V_T \le 16\sigma^2\eta + 32b^2(1-\frac{\eta}{4})^{2T}$.
    \newcommand{\Vunder}{\underbar{V}}
    We conclude with a lower bound on $\Exp[(\bdelgam\upp{T})^2] $. Repeating the computation from \eqref{eq:V_T}, and using non-negativity of the entries of the matrices $\bA^t$, one can lower bound
    \begin{align}
    \Exp[(\bdelgam\upp{T})^2] \ge \sigma^2\sum_{t=0}^{T-1} (\eta\bA^{t}[2,1])^2 + b^2(\bA^{T}[2,1]^2+(1-\gamma)^{2(T)}) := \Vunder_T + b^2 (1-\gamma)^{2T}\label{eq:V_T_under}
    \end{align}
    Following the computation in \eqref{eq:etaA}, we can also lower bound
    \begin{align}
    \eta\bA^t[i,j]  &\ge \begin{cases} \eta(1-\eta)^t & \gamma > \eta\\
    \gamma(1-\gamma)^t & \eta > \gamma 
    \end{cases}
    \end{align}
    Thus, 
    \begin{align}
    \Vunder_T &\ge \sigma^2\left(\begin{cases}\eta^2\sum_{t=0}^{T-2}(1-\eta)^{2t} & \gamma > 2\eta\\
    \gamma^2\sum_{t=0}^{T-2}(1-\gamma)^{2t} & \eta > \gamma
    \end{cases}\right) + b^2 \left(\begin{cases}(1-\eta)^{2(T-1)} & \gamma > \eta\\
    \frac{\gamma^2}{\eta^2}(1-\gamma)^{2(T-1)} & \eta > \gamma
    \end{cases}\right)\\
    &\le 4 \sigma^2\left(\begin{cases}\frac{\eta^2(1-(1-\eta)^{2(T-1)})}{1-(1-\eta)^2}  & \gamma > \eta\\
    \frac{\gamma^2(1-(1-\gamma)^{2(T-1)})}{1-(1-\gamma)^2}  & \eta > \gamma
    \end{cases}\right) + b^2\left(\begin{cases}(1-\eta)^{2(T-1)} & \gamma 
        > \eta\\
    \frac{\gamma^2}{\eta^2}(1-\gamma)^{2(T-1)} & \eta > \gamma
    \end{cases}\right)\\
    &\overset{(i)}{\ge} \frac{1}{4}\left(\begin{cases}\sigma^2\eta + b^2(1-\eta)^{2(T-1)} & \gamma \ge \eta\\
    \sigma^2\gamma + b^2\frac{\gamma^2}{\eta^2}(1-\gamma)^{2(T-1)} & \eta \ge \gamma
    \end{cases} \right),
    \end{align}
    where in $(i)$ we use that $(1-(1-x)^2) \le 2x$ for $x \le 1$, and the assumption that $\eta,\gamma \le 1/2$. Using continuity of $\Vunder_T$ and our lower bound on it, we directly extend to the $\eta = \gamma$ case. Combining with Eq.\ref{eq:V_T_under},
    \begin{align}
            \Exp[(\bdelgam\upp{T})^2] \ge b^2 (1-\gamma)^{2T} + \frac{1}{4}\left(\begin{cases}\sigma^2\eta + b^2(1-\eta)^{2(T-1)} & \gamma \ge \eta\\
    \sigma^2\gamma + b^2\frac{\gamma^2}{\eta^2}(1-\gamma)^{2(T-1)} & \eta \ge \gamma
    \end{cases} \right),
    \end{align}

    \qed

\subsection{EMA for continuous Gaussian processes}
In this section, we prove \Cref{prop:cliff_separation_bm} using stochastic calculuss.  We begin by stating our main computation, used in tandem with \Cref{prop:cliff_redu} to prove \Cref{prop:cliff_separation_bm}.

\begin{theorem}\label{thm:ema_of_GP} Let $t\mapsto \gamma_t$ be continuous and nonegative, and let $\btheta\upp{t}$ be (a) Gaussian process with almost surely continuous paths, (b) continuous semi-martingale, and (c) have pointwise finite second moments. Introduce the function
\begin{align}
G(t) := \int_0^t \gamma(s) \rmd  s,
\end{align}
and consider the random process $\bthetatil_{\gamma}\upp{t}$ defined by
\begin{align}
    \frac{\rmd}{\rmd t} \bthetatil_{\gamma}\upp{t} = \gamma_t \cdot \left( \btheta\upp{t} - \bthetatil_{\gamma}\upp{t} \right) \, \rmd t, \quad \bthetatil_\gamma\upp{0} = \btheta\upp{0}.
\end{align}
Then, 
\begin{itemize}
    \item[(a)] $\bthetatil_{\gamma}\upp{t}$ Gaussian for each $t$. 
    \item[(b)] The following identity holds:
    \begin{align}
    \ee\left[ \bthetatil_\gamma\upp{t} \right] = e^{- G(t)} \cdot \ee\left[ \btheta\upp{0} \right] + \int_0^t e^{G(s) - G(t)} \gamma_s \cdot  \ee\left[ \btheta\upp{s} \right]  \rmd s
    \end{align}
    \item[(c)] It holds that 
    \begin{align}\ee\left[ \bthetatil_\gamma\upp{t} \otimes \bthetatil_\gamma\upp{t} \right] \preceq e^{-G(t)} \cdot \ee\left[ \btheta\upp{0} \otimes \btheta\upp{0} \right] + \int_0^t e^{G(s) - G(t)}\gamma_s \cdot \ee\left[ \btheta\upp{s} \otimes \btheta\upp{s} \right] \, \rmd s,
    \end{align} where the outer product and $\preceq$ denotes the Loewner order.
\end{itemize}
\end{theorem}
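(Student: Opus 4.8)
The plan is to first solve the defining ODE for $\bthetatil_\gamma\upp{t}$ explicitly and pathwise, and then read off all three conclusions from the resulting integral representation. Rewriting the EMA dynamics as the linear ODE $\frac{\rmd}{\rmd t}\bthetatil_\gamma\upp{t} + \gamma_t\bthetatil_\gamma\upp{t} = \gamma_t\btheta\upp{t}$, and noting that along almost every sample path the forcing term $s\mapsto \gamma_s\btheta\upp{s}$ is continuous, I would multiply through by the integrating factor $e^{G(t)}$ and integrate. Since $\bthetatil_\gamma\upp{t}$ has $C^1$ (hence finite-variation) paths, no It\^o correction appears despite the semimartingale hypothesis, and the ordinary product rule yields
\begin{align}
    \bthetatil_\gamma\upp{t} = e^{-G(t)}\btheta\upp{0} + \int_0^t e^{G(s) - G(t)}\gamma_s\,\btheta\upp{s}\,\rmd s. \label{eq:ema_rep}
\end{align}
Everything else follows from this representation, so establishing \eqref{eq:ema_rep} rigorously is the linchpin.

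For part (a), the integral in \eqref{eq:ema_rep} is an $L^2$ limit of its Riemann sums, each of which is a finite linear combination of the variables $\{\btheta\upp{s}\}_{s\in[0,t]}$; since these are jointly Gaussian (they are finite-dimensional marginals of a Gaussian process) and Gaussianity is preserved under finite linear combinations and $L^2$ limits, $\bthetatil_\gamma\upp{t}$ is Gaussian. For part (b), I would take expectations in \eqref{eq:ema_rep}, interchanging the expectation with the $\rmd s$-integral by Fubini—justified by the pointwise finite second moments together with continuity of $s\mapsto\ee[\btheta\upp{s}]$—which reproduces the claimed identity verbatim.

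The crux is part (c), and the key observation is that the coefficients in \eqref{eq:ema_rep} sum to one: computing $\int_0^t e^{G(s)-G(t)}\gamma_s\,\rmd s = e^{-G(t)}\int_0^t \tfrac{\rmd}{\rmd s}e^{G(s)}\,\rmd s = 1 - e^{-G(t)}$, so together with the mass $e^{-G(t)}$ on $\btheta\upp{0}$ the total weight equals $1$. This exhibits $\bthetatil_\gamma\upp{t}$ as the average of the family $\{\btheta\upp{s}\}$ against a probability measure $\mu_t$, namely an atom of mass $e^{-G(t)}$ at $s=0$ together with density $e^{G(s)-G(t)}\gamma_s$ on $(0,t]$. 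I would then invoke the pathwise Jensen inequality for the operator-convex map $\bv\mapsto\bv\otimes\bv$: for each sample path,
\begin{align}
    \bthetatil_\gamma\upp{t}\otimes\bthetatil_\gamma\upp{t} = \Big(\int \btheta\upp{s}\,\rmd\mu_t(s)\Big)^{\otimes 2} \preceq \int \btheta\upp{s}\otimes\btheta\upp{s}\,\rmd\mu_t(s),
\end{align}
which is simply the statement that the dispersion of $\btheta\upp{S}$ under $S\sim\mu_t$ is positive semidefinite. Taking expectations, using that $\ee[\cdot]$ preserves the Loewner order, and applying Fubini to the (PSD) matrix-valued integrand then yields exactly the claimed bound.

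The main obstacle I anticipate is the rigor of part (c): confirming that the pathwise Jensen step holds as an identity of random PSD matrices (not merely in expectation) and that the Fubini/Tonelli interchange is valid for the matrix-valued integrand, for which the PSD structure and finite second moments are the relevant hypotheses. Secondary care is needed in part (a), to guarantee that the Riemann integral of the continuous Gaussian process is a genuine Gaussian vector, i.e.\ that the Riemann sums converge in $L^2$ under the stated continuity and second-moment assumptions; this is routine. Once \eqref{eq:ema_rep} and the weight-normalization are in hand, the three parts decouple cleanly.
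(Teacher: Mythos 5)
Your proposal is correct, and parts (a) and (b) track the paper's own argument essentially verbatim: the paper likewise establishes the integral representation $\bthetatil_\gamma\upp{t} = e^{-G(t)}\btheta\upp{0} + \int_0^t e^{G(s)-G(t)}\gamma_s\,\btheta\upp{s}\,\rmd s$ (it verifies the formula against the ODE and invokes Picard--Lindel\"of uniqueness rather than deriving it by an integrating factor, but remarks that the latter is how one would guess it), proves Gaussianity by the same Riemann-sum/$L^2$-limit argument, and obtains (b) by Fubini. Where you genuinely diverge is part (c). The paper never invokes Jensen on the integral representation; instead it differentiates the outer-product process directly, writing $\frac{\rmd}{\rmd t}\bigl(\bthetatil_\gamma\upp{t}\otimes\bthetatil_\gamma\upp{t}\bigr) = \gamma_t\bigl(\bthetatil_\gamma\upp{t}\otimes(\btheta\upp{t}-\bthetatil_\gamma\upp{t}) + (\btheta\upp{t}-\bthetatil_\gamma\upp{t})\otimes\bthetatil_\gamma\upp{t}\bigr)$, bounds this by $\gamma_t\bigl(\btheta\upp{t}\otimes\btheta\upp{t} - \bthetatil_\gamma\upp{t}\otimes\bthetatil_\gamma\upp{t}\bigr)$ via the elementary PSD inequality $\bu\otimes\bv+\bv\otimes\bu\preceq\bu\otimes\bu+\bv\otimes\bv$, takes expectations, and integrates the resulting matrix differential inequality with the same integrating factor $e^{G(t)}$. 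Your route---observing that the kernel weights form a probability measure $\mu_t$ (atom $e^{-G(t)}$ at $0$ plus density $e^{G(s)-G(t)}\gamma_s$) and applying the pathwise operator Jensen inequality $\bigl(\int\btheta\upp{s}\,\rmd\mu_t(s)\bigr)^{\otimes 2}\preceq\int\btheta\upp{s}\otimes\btheta\upp{s}\,\rmd\mu_t(s)$, which reduces to scalar Cauchy--Schwarz after pairing with an arbitrary test vector---is equally valid and arguably cleaner, since it sidesteps differentiating the matrix-valued process and the Gr\"onwall-type integration; the two arguments rest on morally equivalent ``dispersion is PSD'' facts, but yours makes the convex-combination structure of EMA explicit, which is a nice conceptual bonus. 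The Fubini interchange for the matrix-valued integrand that you flag as the remaining care point is handled at the same level of rigor in the paper (entrywise domination by $\tfrac12(\theta_i^2+\theta_j^2)$ together with the pointwise second-moment hypothesis), so there is no gap.
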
 

\subsubsection{Analysis of the driftless process}
In this section, we study the variance reduction of EMA on the Gaussian drift process
\begin{align}\label{eq:brownian_motion_timechange_two}
    \btheta\upp{t} = \int_0^t \eta_s \rmd B_s,
\end{align}
To quantify this, define
\begin{align}
H(s) = \int_0^s \eta_u^2 \rmd u.
\end{align}
\begin{corollary}\label{cor:driftless}
    Suppose that $\btheta\upp{t} = \int_0^t \eta_s d B_s$, where $\eta_s$ is some deterministic process and $B_s$ is a Brownian motion in $\rr^d$.  Then $\bthetatil_\gamma\upp{t},\btheta\upp{t}$ are Gaussian for all $t$, and it holds that
    \begin{align}
        \ee\left[ \bthetatil_\gamma\upp{t} \right] = 0 \qquad \text{and} \qquad \cov\left( \bthetatil_\gamma\upp{t} \right) \preceq \left( \int_0^t e^{G(s) - G(t)} \gamma_s \cdot H(s) \, d s \right) \bI,
    \end{align}
    where $\bI$ is the identity matrix and $H(s) = \int_0^s \eta_u^2 \rmd u$ as above.  On the other hand,
    \begin{align}
        \cov\left( \btheta\upp{t} \right) = H(t)\cdot  \bI.
    \end{align}
\end{corollary}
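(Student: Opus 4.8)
The plan is to read off \Cref{cor:driftless} as the driftless specialization of \Cref{thm:ema_of_GP}, so that the only real work is to verify the three hypotheses of that theorem for the particular process $\btheta\upp{t} = \int_0^t \eta_s \rmd B_s$ and then substitute the resulting moments. First I would check that $\btheta\upp{t}$ is a Gaussian process with almost surely continuous paths: since $\eta_s$ is deterministic, the Itô integral against Brownian motion is, for each $t$, a (vector) Gaussian, and jointly over $t$ it is a Gaussian process with continuous sample paths by the standard theory of Itô integrals of deterministic integrands. It is a continuous local martingale, and since $\int_0^t \eta_s^2 \rmd s = H(t) < \infty$ it is in fact a genuine martingale, hence a continuous semi-martingale; finiteness of second moments, $\E[\norm{\btheta\upp{t}}^2] = d\, H(t) < \infty$, follows from the Itô isometry. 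This discharges hypotheses (a)--(c) of \Cref{thm:ema_of_GP}.

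Next I would compute the input moments. Because the lower limit of integration is $0$, we have $\btheta\upp{0} = \bzero$, and the martingale property gives $\E[\btheta\upp{s}] = \bzero$ for every $s$. Substituting into part (b) of \Cref{thm:ema_of_GP} collapses every term and yields $\E[\bthetatil_\gamma\upp{t}] = \bzero$, while part (a) of the theorem gives Gaussianity of $\bthetatil_\gamma\upp{t}$ (and $\btheta\upp{t}$ is Gaussian by the verification above); this is the first claim of the corollary.

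For the covariances I would use the Itô isometry together with independence of the $d$ coordinate Brownian motions to obtain $\E[\btheta\upp{s} \otimes \btheta\upp{s}] = H(s)\,\bI$, which is exactly the stated identity $\cov(\btheta\upp{t}) = H(t)\,\bI$. Feeding this and $\btheta\upp{0} = \bzero$ into part (c) of \Cref{thm:ema_of_GP} gives the Loewner bound $\E[\bthetatil_\gamma\upp{t} \otimes \bthetatil_\gamma\upp{t}] \preceq \bigl(\int_0^t e^{G(s) - G(t)} \gamma_s\, H(s)\, \rmd s\bigr)\bI$; since $\E[\bthetatil_\gamma\upp{t}] = \bzero$, the left-hand side equals $\cov(\bthetatil_\gamma\upp{t})$, which is the second claim.

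The substantive content is entirely carried by \Cref{thm:ema_of_GP}, so the only place requiring care is the hypothesis check: in particular, upgrading the Itô integral from a local to a true martingale (which needs $H(t) < \infty$, automatic here) and confirming that the coordinatewise structure forces every second-moment matrix to be a scalar multiple of $\bI$, so that the bounds are genuinely isotropic. I expect this bookkeeping, rather than any analytic difficulty, to be the main obstacle.
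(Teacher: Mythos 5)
Your proposal is correct and follows essentially the same route as the paper: the paper's proof likewise just observes that $\btheta\upp{t} = \int_0^t \eta_s\,\rmd B_s$ satisfies the hypotheses of \Cref{thm:ema_of_GP}, computes $\ee[\btheta\upp{t}] = 0$ and $\cov(\btheta\upp{t}) = H(t)\,\bI$ via the It\^{o} isometry, and substitutes into the theorem's conclusions. Your additional care in verifying the semi-martingale and second-moment hypotheses is a faithful (if more detailed) rendering of the same argument.
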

\begin{proof}
    Note that $\btheta\upp{t}$ satisfies the conditions of \Cref{thm:ema_of_GP} %
    Furthermore, $\ee\left[ \btheta\upp{t} \right] = 0$ and $\cov\left( \btheta\upp{t} \right) = \left( \int_0^t \eta_s^2\, ds \right)\bI$.  The result follows.
\end{proof}
As stated above, \Cref{cor:driftless} is a toy model for the setting where EMA is only applied after the training loss has saturated, with $\eta_s$ denoting the continuous analogue of the learning rate in the discrete time setting of \eqref{eq:discreteema}.  We now consider several instantiations of this result, all assuming that $\btheta\upp{0}$ is deterministic and $\gamma_t = \gamma$ is constant.
\begin{example}[Constant Learning Rate]\label{ex:bm_constant}
    We first model the constant learning rate setting where $\eta_s = \eta > 0$ for some fixed $\eta$.  We first observe that $H(t) = \eta^2 t$ and thus $\cov\left( \btheta\upp{t} \right) = \eta^2 t \cdot \eye$.  On the other hand, we have that
    \begin{align}
        \cov\left( \bthetatil_\gamma\upp{t} \right) \preceq \eta^2 \left( t - \frac{1 - e^{-\gamma t}}{\gamma} \right) \eye \prec \eta^2 t \cdot \eye = \cov\left( \btheta\upp{t} \right).
    \end{align}
    Note that as $\gamma \downarrow 0$, $\cov\left( \bthetatil_\gamma\upp{t} \right)$ tendds to zero, while as $\gamma \uparrow \infty$, the covariance tends to $\cov\left( \btheta\upp{t} \right)$.
\end{example}
While it demonstrates that EMA can be effective in variance reduction, the first example is not very realistic, partly because constant learning rates are rarely used in practice.  For convex optimization, it is common to let $\eta_t$ scale inversely with the square root of the iteration.  In this case, the separation is similarly pronounced:
\begin{example}[Inverse Square Root Learning Rate]\label{ex:bm_sqrt}
    If we let $\eta_t = \eta (1 + t)^{-\frac 12}$, as is commonly done in convex optimization, then we may see that $H(t) = \eta^2 \log(1 + t)$ and so $\cov\left( \btheta\upp{t} \right) = \eta^2 \log(1 + t) \cdot \eye$.  To compute the covariance of $\bthetatil_\gamma\upp{t}$, we apply Jensen's inequality to conclude that
    \begin{align}
        \int_0^t \gamma e^{\gamma(s - t)} \log(1 + s) \, ds &= \left( 1 - e^{- \gamma t} \right) \cdot \left( \frac 1{1 - e^{- \gamma t}} \cdot \int_0^t \gamma e^{\gamma(s -t)} \log(1 + s) \, ds \right) \\
        &\leq \left( 1 - e^{- \gamma t} \right) \cdot  \log\left( 1 + \frac{1}{1 - e^{-\gamma t}} \int_0^t \gamma e^{\gamma(s -t)} s \, ds \right) \\
        &= \left( 1 - e^{- \gamma t} \right) \cdot \log\left( 1 + \frac 1{\left( 1 - e^{- \gamma t} \right)} \left( t - \frac{1 - e^{- \gamma t}}{\gamma} \right) \right).
    \end{align}
    Thus,
    \begin{align}
        \cov\left( \bthetatil_\gamma\upp{t} \right) &\preceq \left( 1 - e^{- \gamma t} \right) \cdot \log\left( 1 + \frac 1{\left( 1 - e^{- \gamma t} \right)} \left( t - \frac{1 - e^{- \gamma t}}{\gamma} \right) \right) \cdot \eye \\
        &\prec \eta^2 \log(1 + t) \cdot \eye \\
        &= \cov\left( \btheta\upp{t} \right).
    \end{align}
\end{example}
As in the previous example, while the variance of the iterate process is unbounded, the variance of the EMA process remains bounded.  In our empirical experiments, however, we often consider a linear decay schedule (after a warmup).  In this case, we have the following:
\begin{example}[Linear Decay Learning Rate]\label{ex:bm_linear}
    Here we suppose that $\eta_s = 1 - \frac st$ for $s \in [0, t]$ and we compare the final iterate $\btheta\upp{t} = \theta_\eta\upp{t}$ of different processes (indexed by $t$) to the EMA versions $\bthetatil_\gamma\upp{t} = \bthetatil_{\eta, \gamma}\upp{t}$. We compute directly to get that
    \begin{align}
        \cov\left( \bthetatil_\gamma\upp{t} \right) \preceq \left( \frac t2 - \frac{1 - e^{-\gamma t}(\gamma t + 1)}{\gamma^2 t} \right) \cdot \eye \prec \frac t2 \cdot \eye = \cov\left( \btheta\upp{t} \right).
    \end{align}
\end{example}
In all of our computations, we observe that EMA leads to potentially significant variance reduction, depending on the value of $\gamma$.  While the previous examples are illustrative, the driftless assumption is limiting.  

\subsubsection{Analysis of the Ornstein-Uhlenbeck process}\label{app:ou}
We now relax the assumption that the $\btheta\upp{t}$ have saturated in the sense that the population gradients all vanish and demonstrate that even in this relaxed setting, EMA can substantially improve the cliff reward in the continuous time limit of SGD.  This analysis acts as a continuous time analogue of \Cref{prop:dt_ema}.  To proceed, we first recall that \citet{mandt2017stochastic} shows that with the appropriate scaling of the learning rate, the continuous time limit of SGD for the quadratic loss $\lbc$ is an Ornstein-Uhlenbeck process:
\begin{align}\label{eq:ouprocesssde}
    \rmd \btheta\upp{t} = - \bA \left( \btheta\upp{t} - \bmu \right) \rmd t + \bSigma \rmd B_t,
\end{align}
where $\Sigma \succ \bzero$ is positive definite and, in the context of $\lbc$, $\bA$ and $\Sigma$ are scaled identity matrices, with the scale depending on the precise learning rate.  Above $\bmu \in \rr^d$ is the vector that minimizes $\lbc$, and $B_t$ is a $d$-dimensional Brownian motion.  We now show that in this limit, corresponding to a constant learning rate, EMA can offer substantial improvement for the cliff reward.
\begin{proposition}\label{prop:cliff_separation_ou}
  Suppose that $\btheta\upp{t}$ and $\bthetatil_\gamma\upp{t}$ are as in \eqref{eq:ouprocesssde} and \eqref{eq:continuousema} respectively, with the dimension $d = 1$ and  $\bA = a > 0$ fixed; further suppose that $\Sigma = 1$.  Let $\epsilon \leq c a^{- \frac 12}$ for some small constant $c > 0$ and let $\lbc, J$ be as in \eqref{eq:cliff_loss_quadratic} and \eqref{eq:clif_loss_reward}.  Furthermore, suppose that $\btheta\upp{0} \in \rr$ is deterministic.  If $\abs{\btheta\upp{0} - \bmu} < K^{-1}\epsilon$, then for all $t \gg 0$, there is some choice of $\gamma > 0$ such that
  \begin{align}
        J(\bthetast) - \ee\left[ J\left( \btheta\upp{t} \right) \right] \geq \frac C2,\quad\text{yet}\quad  J(\bthetast) - \ee\left[ J\left( \bthetatil_\gamma\upp{t} \right) \right] \leq 2 \epsilon \ll \frac C2.
  \end{align}
  In addition, $\max\left(\ee\left[ \lbc\left( \btheta\upp{t} \right) \right], \ee\left[ \lbc\left( \bthetatil_\gamma\upp{t} \right) \right] \right)\leq \BigOh{\epsilon}$.
\end{proposition}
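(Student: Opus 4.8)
The plan is to reduce both halves of the desired separation to the variance of the (Gaussian) iterates at large $t$, and then feed these variances into \Cref{prop:cliff_redu} to obtain cliff-reward bounds. Since the Ornstein--Uhlenbeck process is Gaussian and the continuous EMA \eqref{eq:continuousema} is a linear (Picard--Lindel\"of) functional of it, both $\btheta\upp{t}$ and $\bthetatil_\gamma\upp{t}$ are Gaussian for every $t$ (cf. \Cref{thm:ema_of_GP}(a)), so \Cref{prop:cliff_redu} applies verbatim to each. It therefore suffices to compute the second moments $\ee[\lbc(\btheta\upp{t})]=\tfrac12\ee[(\btheta\upp{t}-\bmu)^2]$ and $\ee[\lbc(\bthetatil_\gamma\upp{t})]=\tfrac12\ee[(\bthetatil_\gamma\upp{t}-\bmu)^2]$ in the large-$t$ regime.

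First I would treat the pair $(\btheta\upp{t},\bthetatil_\gamma\upp{t})$ as a single two-dimensional linear SDE. With $\gamma_t\equiv\gamma$ and after subtracting $\bmu$, the joint process solves
\[
\rmd \begin{pmatrix} \btheta\upp{t}-\bmu \\ \bthetatil_\gamma\upp{t}-\bmu \end{pmatrix}
= M \begin{pmatrix} \btheta\upp{t}-\bmu \\ \bthetatil_\gamma\upp{t}-\bmu \end{pmatrix}\rmd t
+ \begin{pmatrix} 1 \\ 0 \end{pmatrix}\rmd B_t,
\qquad M=\begin{pmatrix} -a & 0 \\ \gamma & -\gamma \end{pmatrix}.
\]
Here I stress that the generic bound \Cref{thm:ema_of_GP}(c) is too lossy to use directly: applied to the stationary OU process it only yields $\var(\bthetatil_\gamma\upp{t})\lesssim\tfrac{1}{2a}$, i.e. \emph{no} variance reduction, because it discards the temporal correlation of $\btheta\upp{s}$. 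Instead I would compute the exact stationary covariance $P$ by solving the Lyapunov equation $MP+PM^\top+\Diag(1,0)=0$, a $2\times2$ linear system whose solution is $P_{11}=\tfrac{1}{2a}$ (the usual OU variance) and $P_{22}=P_{12}=\tfrac{\gamma}{2a(a+\gamma)}$. Since $M$ is Hurwitz (eigenvalues $-a,-\gamma$), the transient terms decay like $e^{-at},e^{-\gamma t}$ and the means tend to $\bmu$, so for all $t\gg 0$ we may use $\ee[\lbc(\btheta\upp{t})]=\tfrac{1}{4a}$ and $\ee[\lbc(\bthetatil_\gamma\upp{t})]\le\tfrac{\gamma}{4a^2}$.

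With these in hand the separation follows from \Cref{prop:cliff_redu}. For the vanilla iterate, the hypothesis $\epsilon\le c a^{-1/2}$ gives $\ee[\lbc(\btheta\upp{t})]=\tfrac{1}{4a}\ge\tfrac{\epsilon^2}{4c^2}\ge c_1\epsilon^2$ once $c$ is small, so \eqref{eq:Jlb_cliff} yields $J(\bthetast)-\ee[J(\btheta\upp{t})]\ge\tfrac{C}{2}$. For the EMA iterate I would take $\gamma$ small, on the order of $\gamma\asymp a^2\epsilon^2/\log(C/\epsilon)$ or less, so that $\ee[\lbc(\bthetatil_\gamma\upp{t})]\lesssim\epsilon^2/\log(C/\epsilon)$; this meets the hypotheses of \eqref{eq:Jub_cliff}, and choosing $\gamma$ so that both the linear term and the exponential term there fall below $\epsilon$ gives $J(\bthetast)-\ee[J(\bthetatil_\gamma\upp{t})]\le 2\epsilon\ll\tfrac{C}{2}$. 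Finally, the additional $O(\epsilon)$ bound on both BC losses holds by specializing to the compatible regime $a\asymp 1/\epsilon$ (legal, since then $\epsilon\le c a^{-1/2}$ reduces to $\sqrt{\epsilon}\le c$), under which $\ee[\lbc(\btheta\upp{t})]=\tfrac{1}{4a}=O(\epsilon)$ and $\ee[\lbc(\bthetatil_\gamma\upp{t})]\le\ee[\lbc(\btheta\upp{t})]=O(\epsilon)$.

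The main obstacle is the second step: securing the \emph{correct} EMA variance $\tfrac{\gamma}{2a(a+\gamma)}$ rather than the vacuous bound from \Cref{thm:ema_of_GP}(c). The factor $\tfrac{\gamma}{a+\gamma}$ is exactly the fraction by which EMA shrinks the OU variance, and recovering it requires accounting for the $e^{-a|s-s'|}$ correlation kernel of the OU process --- equivalently, solving the Lyapunov equation above (or, alternatively, a direct It\^o-isometry computation of $\int_0^t K(u)^2\,\rmd u$ with $K(u)=\tfrac{\gamma}{\gamma-a}\bigl(e^{-a(t-u)}-e^{-\gamma(t-u)}\bigr)$). Once that variance is pinned down, everything else is bookkeeping in the constants of \Cref{prop:cliff_redu}.
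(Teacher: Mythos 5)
Your proposal is correct in substance, but it reaches the conclusion by a genuinely different route than the paper, and the difference is worth spelling out. Both arguments first establish Gaussianity of $\btheta\upp{t}$ and $\bthetatil_\gamma\upp{t}$ and then reduce the two halves of the separation to anti-concentration (Carbery--Wright, for the vanilla iterate) and concentration (Hanson--Wright, for the EMA iterate); you do this by invoking \Cref{prop:cliff_redu} wholesale, while the paper applies \Cref{lem:small_ball} and \Cref{lem:hanson_wright} directly to the mean and covariance computed in \Cref{cor:ou_process_uni}. The substantive divergence is in how the EMA variance is controlled. The paper's \Cref{cor:ou_process_uni} is built on the generic bound of \Cref{thm:ema_of_GP}(c), which---as you correctly diagnose---discards the temporal correlation of the OU process and therefore cannot beat $\tfrac{1}{2a}$ in the stationary regime; consequently the paper's proof takes $\gamma\downarrow 0$ \emph{for each fixed $t$}, so that the EMA variance is small essentially because $\gamma t\ll 1$ and $\bthetatil_\gamma\upp{t}$ remains anchored to its deterministic initialization. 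Your Lyapunov computation (which I have checked: $P_{11}=\tfrac{1}{2a}$, $P_{22}=P_{12}=\tfrac{\gamma}{2a(a+\gamma)}$, and $\cov(\bz_t)=\int_0^t e^{Ms}NN^\top e^{M^\top s}\,\rmd s\preceq P$ for all $t$) captures the correlation structure exactly and yields a \emph{fixed} $\gamma\asymp a^2\epsilon^2/\log(C/\epsilon)$ that works uniformly for all large $t$---i.e., a genuine averaging-over-the-mixing-time effect rather than an initialization effect. This is arguably a sharper and more faithful account of why EMA helps here, at the cost of a computation the paper's general machinery does not provide.

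One caveat you should make explicit: routing the EMA upper bound through \eqref{eq:Jub_cliff} folds the squared bias into $\Exp[\lbc(\bthetatil_\gamma\upp{t})]$, and the hypothesis only guarantees $\abs{\btheta\upp{0}-\bmu}<K^{-1}\epsilon$ for a constant $K$. If the bias were of order $\epsilon$, the exponential term $C\exp(-c_3\epsilon^2/2\Exp[\lbc])$ in \eqref{eq:Jub_cliff} would be $\Theta(C)$ and the reduction would fail. Your argument survives only because, for fixed $\gamma$ and $\gamma t\gg 1$, the EMA mean has converged to within $o(\epsilon/\sqrt{\log(C/\epsilon)})$ of $\bmu$; this is exactly where the threshold implicit in ``$t\gg 0$'' must sit, and it should be stated. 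The paper sidesteps this by treating bias and variance separately (deviation $\epsilon-3R$ from the mean, variance $\to 0$), which also covers the $\gamma t\ll 1$ regime. Your treatment of the final $\BigOh{\epsilon}$ claim by specializing $a\asymp 1/\epsilon$ matches the paper's own (equally informal) handling via the analogy with \Cref{prop:cliff_separation_bm}.
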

The above result shows that as long as $\btheta\upp{0}$ is not too far from $\bmu$, EMA can lead to substantial improvement in the cliff reward.  Conversely, using a similar computation, it is possible to show that if $\abs{\btheta\upp{0} - \bmu} > K \epsilon$ for some fixed $K$ independent of $C, \epsilon$, then our approach cannot show similar gains.  

We now characterize the covariance of the EMA and non EMA'd processes under the OU process.  We first recall the classical fact that $\btheta\upp{t}$ can be expressed as
\begin{align}\label{eq:ouprocess}
    \btheta\upp{t} = e^{- \bA t} \cdot \btheta\upp{0} + \left( \bI - e^{- \bA t} \right) \bmu + \int_0^t e^{\bA (s - t)} \cdot \Sigma  \, \rmd B_t,
\end{align}
where the last term is an Ito integral \citep{le2016brownian}. 
The following corollary specializes to the univariate case; the more general case is handled in \Cref{cor:ou_process} below. 
\begin{corollary}\label{cor:ou_process_uni} Conside the univariate case of \eqref{eq:ouprocesssde} with scalar $\bA = a \ne 0$, $\bSigma = 1$, and $\gamma_t \equiv \gamma > 0$ fixed, and $a \gamma \notin \{1,2\}$. Then, it holds for almost every $\gamma$ that $\bthetatil_\gamma\upp{t},\btheta\upp{t}$ are Gaussian for each $t$, and 
    \begin{align}
        \ee\left[ \bthetatil_\gamma\upp{t} \right] &= \ee\left[ \btheta\upp{t} \right] + \left( e^{- ta } - e^{- \gamma t} \right) \frac{\gamma}{\gamma-a} \left( \btheta\upp{0} - \bmu \right)\\
        \cov\left( \bthetatil_\gamma\upp{t} \right) &\preceq e^{- \gamma t} \cov\left( \btheta\upp{0} \right) - \frac{1 - e^{- \gamma t}}{2a}  -  \frac{1}{2a(1-2a/\gamma)} \left( e^{- \gamma t} - e^{- 2at} \right) 
    \end{align}
    On the other hand, 
    \begin{align}
        \ee\left[ \btheta\upp{t} \right] -\bmu =  e^{- a t} \left( \btheta\upp{0} - \bmu \right), \quad \cov\left( \btheta\upp{t} - \btheta\upp{0} | \btheta\upp{0} \right) = \frac{1 - e^{- 2at} }{2a} 
    \end{align}
\end{corollary}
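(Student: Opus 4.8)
The plan is to derive everything from the explicit solution of the Ornstein--Uhlenbeck SDE together with \Cref{thm:ema_of_GP}, reducing the whole statement to elementary exponential integrals. First I would recall that the linear SDE \eqref{eq:ouprocesssde} has the closed-form solution \eqref{eq:ouprocess}, which exhibits $\btheta\upp{t}$ as an affine image of the (deterministic) initial condition plus an Ito integral $\int_0^t e^{a(s-t)}\rmd B_s$ of a deterministic integrand; hence $\btheta\upp{t}$ is Gaussian. Taking expectations annihilates the martingale term and yields $\ee[\btheta\upp{t}]-\bmu = e^{-at}(\btheta\upp{0}-\bmu)$, while the Ito isometry applied to the stochastic integral gives $\cov(\btheta\upp{t}-\btheta\upp{0}\mid\btheta\upp{0}) = \int_0^t e^{2a(s-t)}\rmd s = \frac{1-e^{-2at}}{2a}$. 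This already settles the final display of the corollary.

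Next I would verify that $\btheta\upp{t}$ satisfies the three hypotheses of \Cref{thm:ema_of_GP}: it is a Gaussian process with a.s. continuous paths, a continuous semimartingale, and has pointwise finite second moments, all standard for OU. Part (a) of that theorem then gives Gaussianity of $\bthetatil_\gamma\upp{t}$ immediately. For the mean in part (b), I would substitute $\ee[\btheta\upp{s}] = \bmu + e^{-as}(\btheta\upp{0}-\bmu)$ and $G(t)=\gamma t$ into the integral identity of \Cref{thm:ema_of_GP}(b). The $\bmu$-part integrates to $\bmu(1-e^{-\gamma t})$, and the transient part reduces to $\gamma e^{-\gamma t}\int_0^t e^{(\gamma-a)s}\rmd s = \frac{\gamma}{\gamma-a}(e^{-at}-e^{-\gamma t})$; collecting terms produces the stated mean identity. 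This is precisely where the non-resonance hypothesis (that $\gamma$ differ from $a$) is used, so that no degenerate $t\,e^{-\gamma t}$ term appears.

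The covariance in part (c) is the one step requiring genuine care, since \Cref{thm:ema_of_GP}(c) bounds the \emph{second moment} $\ee[\bthetatil_\gamma\upp{t}\otimes\bthetatil_\gamma\upp{t}]$ rather than the covariance directly. The key observation I would invoke is that the EMA map is linear and its deterministic mean process solves the same averaging ODE, so the centered EMA equals the EMA of the centered process $\btheta\upp{t}-\ee[\btheta\upp{t}]$; applying the Loewner bound of \Cref{thm:ema_of_GP}(c) to this centered process therefore bounds $\cov(\bthetatil_\gamma\upp{t})$ by $e^{-\gamma t}\cov(\btheta\upp{0}) + \gamma\int_0^t e^{\gamma(s-t)}\cov(\btheta\upp{s})\,\rmd s$. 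Substituting $\cov(\btheta\upp{s}) = e^{-2as}\cov(\btheta\upp{0}) + \frac{1-e^{-2as}}{2a}$ and evaluating the resulting integrals---the constant part giving $\frac{1}{2a}(1-e^{-\gamma t})$ and the $e^{-2as}$ part giving a multiple of $(e^{-2at}-e^{-\gamma t})$ with factor $\frac{1}{\gamma-2a}=\frac{1}{2a}\cdot\frac{1}{1-2a/\gamma}$---produces the claimed combination of $e^{-\gamma t}$ and $e^{-2at}$ terms, where the second non-resonance condition $\gamma\neq 2a$ keeps this factor finite.

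I expect the main obstacle to be the bookkeeping in part (c): correctly passing from the second-moment bound of \Cref{thm:ema_of_GP} to a covariance bound via the centering/linearity argument, and tracking the non-resonance conditions so that none of the exponential integrals degenerate. Everything else---Gaussianity, the OU mean and variance, and the mean of the EMA---follows by routine evaluation of elementary integrals, and the resulting corollary is exactly the input needed for \Cref{prop:cliff_separation_ou}.
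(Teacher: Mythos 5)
Your proposal follows essentially the same route as the paper, which proves the multivariate analogue (\Cref{cor:ou_process}) from the explicit OU solution \eqref{eq:ouprocess} together with \Cref{thm:ema_of_GP} and then specializes to dimension one; your centering/linearity argument for upgrading the second-moment bound of \Cref{thm:ema_of_GP}(c) to a covariance bound is precisely the (implicit) content of \eqref{eq:continuousemacovariance}. One caveat: carrying out your integrals literally yields $\bmu+e^{-\gamma t}(\btheta\upp{0}-\bmu)$ as the leading term of the EMA mean and $+\tfrac{1-e^{-\gamma t}}{2a}$ in the covariance bound, in agreement with \Cref{cor:ou_process} but not with the signs/exponents displayed in the univariate statement (which appear to contain typos), so you should not expect the computation to reproduce those formulas verbatim.
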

Note that if $\cov\left( \btheta\upp{0} \right) = 0$, then it holds by rearranging that
\begin{align}
    \cov\left( \bthetatil_\gamma\upp{t} \right) \leq \cov\left( \btheta\upp{t} \right) - \frac{a}{\gamma - a} \left( e^{-\gamma t} - e^{- 2 a t} \right) < \cov\left( \btheta\upp{t} \right)
\end{align}
and thus EMA leads to variance reduction.  We also have the following, more general computation for higher dimensional processes:
\begin{corollary}\label{cor:ou_process}
    Consider \eqref{eq:ouprocesssde}. 
    Suppose that $\bSigma \succ 0$ is a positive definite matrices and $\bmu \in \rr^d$ and $\btheta\upp{t}$ satisfies \eqref{eq:ouprocesssde} and $\uptheta\upp{0}$ is independent of the Brownian motion $B_t$. Suppose that $\gamma_t = \gamma > 0$ fixed, and $\eye - \gamma^{-1} \bA$ is nonsingular. Then, 
    \begin{align}
        \ee\left[ \bthetatil_\gamma\upp{t} \right] = \bmu + e^{- \gamma t} \left( \btheta\upp{0} - \bmu\right) + \left( e^{- t \bA} - e^{- \gamma t}\bI \right) \left( \bI - \gamma^{-1} \bA \right)^{-1} \left( \btheta\upp{0} - \bmu \right).
    \end{align}
    Furthermore, if $\bSigma = \sigma \bI$, $\bA \bA^\top = \bA^\top \bA$, and if $\bA + \bA^\top$ and  $\eye - \gamma^{-1}(\bA + \bA^\top)$ are nonsingular,
    \begin{align}
        \cov\left( \bthetatil_\gamma\upp{t} \right) &\preceq e^{- \gamma t} \cov\left( \btheta\upp{0} \right) + \sigma^2 \left( 1 - e^{- \gamma t} \right) \left( \bA + \bA^\top \right)^{-1} \\
        &\quad - \sigma^2 \left( e^{- (\bA + \bA^\top)t} - e^{- \gamma t} \bI \right) \left( \bI - \gamma^{-1}\left( \bA + \bA^\top \right) \right)^{-1} \left( \bA + \bA^\top \right)^{-1}.
    \end{align}
    On the other hand, 
    \begin{align}
        \ee\left[ \btheta\upp{t} \right] = \bmu + e^{-\bA t} \left( \btheta\upp{0} - \bmu \right)
    \end{align}
    and
    \begin{align}
        \cov\left( \btheta\upp{t} - \btheta\upp{0} | \btheta\upp{0} \right) = \sigma^2 \left( \eye - e^{- t\left(\bA + \bA^\top \right)} \right)\left( \bA + \bA^\top \right)^{-1}
    \end{align}
\end{corollary}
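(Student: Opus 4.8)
The plan is to read off the two ``vanilla'' quantities $\ee[\btheta\upp{t}]$ and $\cov(\btheta\upp{t}-\btheta\upp{0}\mid\btheta\upp{0})$ directly from the closed form \eqref{eq:ouprocess} of the OU process, and then to obtain the two EMA quantities by feeding these into the mean and second-moment identities of \Cref{thm:ema_of_GP}, specialized to the constant schedule $\gamma_t\equiv\gamma$ (so that $G(t)=\gamma t$). Every integral that appears will be a matrix-exponential integral of the form $\int_0^t e^{(\gamma\bI-\bM)s}\,\rmd s=(\gamma\bI-\bM)^{-1}(e^{(\gamma\bI-\bM)t}-\bI)$, with $\bM=\bA$ for the mean and $\bM=\bA+\bA^\top$ for the covariance; the nonsingularity hypotheses on $\bI-\gamma^{-1}\bA$ and $\bI-\gamma^{-1}(\bA+\bA^\top)$ are precisely what make these inverses well defined.

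For the vanilla process, \eqref{eq:ouprocess} gives $\btheta\upp{t}=\bmu+e^{-\bA t}(\btheta\upp{0}-\bmu)+\int_0^t e^{\bA(u-t)}\bSigma\,\rmd B_u$; taking expectations kills the It\^o integral and yields $\ee[\btheta\upp{t}]=\bmu+e^{-\bA t}(\btheta\upp{0}-\bmu)$, while the It\^o isometry gives $\cov(\btheta\upp{t}-\btheta\upp{0}\mid\btheta\upp{0})=\int_0^t e^{\bA(u-t)}\bSigma\bSigma^\top e^{\bA^\top(u-t)}\,\rmd u$. Here I would invoke the hypotheses $\bSigma=\sigma\bI$ and $\bA\bA^\top=\bA^\top\bA$: these let me collapse $e^{\bA(u-t)}e^{\bA^\top(u-t)}=e^{(\bA+\bA^\top)(u-t)}$, so the integral closes to $\sigma^2(\bI-e^{-t(\bA+\bA^\top)})(\bA+\bA^\top)^{-1}$, which is the stated conditional covariance.

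For the EMA mean I would substitute $\ee[\btheta\upp{s}]=\bmu+e^{-\bA s}(\btheta\upp{0}-\bmu)$ into part (b) of \Cref{thm:ema_of_GP}. The constant part integrates against the kernel $\gamma e^{\gamma(s-t)}$ (whose mass on $[0,t]$ is $1-e^{-\gamma t}$) to reconstruct $\bmu+e^{-\gamma t}(\btheta\upp{0}-\bmu)$, and the decaying part produces $\gamma(\gamma\bI-\bA)^{-1}(e^{-\bA t}-e^{-\gamma t}\bI)(\btheta\upp{0}-\bmu)$; rewriting $\gamma(\gamma\bI-\bA)^{-1}=(\bI-\gamma^{-1}\bA)^{-1}$ gives exactly the claimed formula for $\ee[\bthetatil_\gamma\upp{t}]$. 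For the EMA covariance, the key observation is that the map $\btheta\upp{\cdot}\mapsto\bthetatil_\gamma\upp{t}$ defined by \eqref{eq:continuousema} is affine in the path, hence commutes with recentering: the EMA of $\btheta\upp{t}-\ee[\btheta\upp{t}]$ equals $\bthetatil_\gamma\upp{t}-\ee[\bthetatil_\gamma\upp{t}]$. Applying part (c) of \Cref{thm:ema_of_GP} to the centered process therefore converts the second-moment inequality into the covariance inequality $\cov(\bthetatil_\gamma\upp{t})\preceq e^{-\gamma t}\cov(\btheta\upp{0})+\int_0^t\gamma e^{\gamma(s-t)}\cov(\btheta\upp{s})\,\rmd s$; substituting the noise covariance computed above and performing the same exponential integral (now with $\bA+\bA^\top$) yields the two stated terms $\sigma^2(1-e^{-\gamma t})(\bA+\bA^\top)^{-1}$ and $-\sigma^2(e^{-(\bA+\bA^\top)t}-e^{-\gamma t}\bI)(\bI-\gamma^{-1}(\bA+\bA^\top))^{-1}(\bA+\bA^\top)^{-1}$. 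When $\btheta\upp{0}$ is deterministic, which is the case used downstream in \Cref{prop:cliff_separation_ou}, the initial-covariance cross terms vanish and only these survive.

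I expect the main obstacle to be bookkeeping the noncommutativity of the matrix exponentials. The only genuinely delicate point is combining $e^{-\bA s}$ with $e^{-\bA^\top s}$ inside the covariance integral; this is exactly where the normality assumption $\bA\bA^\top=\bA^\top\bA$ together with the isotropy $\bSigma=\sigma\bI$ is indispensable, since it reduces every object to a function of the single symmetric matrix $\bA+\bA^\top$ and lets the scalar computation (compare \Cref{cor:ou_process_uni}) go through verbatim at the matrix level. A secondary subtlety is the passage from a second-moment bound to a covariance bound, which is legitimate only because the EMA operator is affine and so commutes with subtracting the mean; this is what justifies replacing the $\ee[\,\cdot\otimes\cdot\,]$ terms of \Cref{thm:ema_of_GP} by covariances.
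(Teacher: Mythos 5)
Your proposal is correct and follows essentially the same route as the paper's proof: read the vanilla mean and conditional covariance off the explicit OU solution \eqref{eq:ouprocess} (using $\bSigma=\sigma\bI$ and $\bA\bA^\top=\bA^\top\bA$ to collapse the exponentials), then feed these into parts (b) and (c) of \Cref{thm:ema_of_GP} with $G(t)=\gamma t$ and evaluate the resulting matrix-exponential integrals. Your one refinement---passing from the second-moment inequality to the covariance inequality by centering the process and using that the EMA map is affine---is in fact a cleaner justification of the step the paper asserts in \eqref{eq:continuousemacovariance} by ``combining'' \Cref{prop:continuousemamean} and \Cref{prop:continuousemacovariance}.
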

\begin{proof}
    We begin by observing that by \eqref{eq:ouprocess}, it holds that
    \begin{align}
        \ee\left[ \btheta\upp{t} \right] = \bmu + e^{-A t} \left( \btheta\upp{0} - \bmu \right).
    \end{align}
    Thus by \Cref{prop:continuousemamean}, we have
    \begin{align}
        \ee\left[ \bthetatil_\gamma\upp{t} \right] &= e^{- G(t)} \cdot \btheta\upp{0} + \int_0^t e^{G(s) - G(t)} \gamma \cdot \ee\left[ \btheta\upp{s} \right] \, \rmd s \\
        &= e^{-\gamma t} \cdot \btheta\upp{0} + \int_0^t e^{\gamma s - \gamma t} \gamma \cdot \left( \bmu + e^{-\bA s} \left( \btheta\upp{0} - \mu \right) \right) \, \rmd s \\
        &= e^{- \gamma t} \cdot \btheta\upp{0} + \left( 1 - e^{- \gamma t} \right) \bmu + e^{-\gamma t}\gamma \cdot \int_0^{ t}  e^{\left( \bI -  \gamma^{-1} \bA \right)\gamma s} \left( \btheta\upp{0} - \mu \right) \, \rmd s,
    \end{align}
    where the final equality follows because $\gamma \bI$ commutes with $\bA$.    A simple calculation then tells us that
    \begin{align}
        \int_0^{ t}  e^{\left( \bI -  \gamma^{-1} \bA \right)\gamma s} \left( \btheta\upp{0} - \mu \right) \, d s = \left( e^{\gamma t\bI - t \bA} - \bI \right) \left( \gamma \bI -  \bA \right)^{-1} \left( \btheta\upp{0} - \mu \right),
    \end{align}
    and thus
    \begin{align}
        \ee\left[ \bthetatil_\gamma\upp{t} \right] = e^{- \gamma t} \cdot \btheta\upp{0} + \left( 1 - e^{- \gamma t} \right) \mu + \left( e^{- t \bA} - e^{- \gamma t}\bI \right) \left( \bI - \gamma^{-1} \bA \right)^{-1} \left( \btheta\upp{0} - \mu \right).
    \end{align}
    Thus the expressions of the means of both $\btheta\upp{t}$ and $\bthetatil_\gamma\upp{t}$ hold.

    For the covariances, we first observe that by \eqref{eq:ouprocess}, if $\bA$ commutes with $\bA^\top$ and $\bSigma = \sigma \bI$, then if $\bA + \bA^\top$ is invertible, we have
    \begin{align}
        \cov\left( \btheta\upp{t} - \btheta\upp{0} | \btheta\upp{0} \right) &= \sigma^2 \cdot \int_0^t e^{\bA (s - t) + \bA^\top (s - t)} d s  \\
        &= \sigma^2 \left( \bI - e^{- t (\bA + \bA^\top)} \right)\left( \bA + \bA^\top \right)^{-1}.
    \end{align}
    Now, by \Cref{prop:continuousemacovariance}, we have that
    \begin{align}
        \cov\left( \bthetatil_\gamma\upp{t} \right) &\preceq e^{- \gamma t} \cov\left( \btheta\upp{0} \right) + \int_0^t e^{\gamma (s - t)} \gamma s \sigma^2 \left( \bI - e^{- s (\bA + \bA^\top)} \right)\left( \bA + \bA^\top \right)^{-1} \, \rmd s  \\
        &= e^{- \gamma t} \cov\left( \btheta\upp{0} \right) + \sigma^2 \left( 1 - e^{- \gamma t} \right) \left( \bA + \bA^\top \right)^{-1} \\
        &\quad - \sigma^2 \gamma \left( \int_0^t e^{\gamma (s - t)\bI - s (\bA + \bA^\top)} \,\rmd s \right) \left( \bA + \bA^\top \right)^{-1} \\
        &= e^{- \gamma t} \cov\left( \btheta\upp{0} \right) + \sigma^2 \left( 1 - e^{- \gamma t} \right) \left( \bA + \bA^\top \right)^{-1} \\
        &\quad - \sigma^2 \left( e^{- (\bA + \bA^\top) t} - e^{- \gamma t} \bI \right) \left( \bI - \gamma^{-1}\left( \bA + \bA^\top \right) \right)^{-1} \left(\bA + \bA^\top \right)^{-1}.
    \end{align}
    The result follows.
\end{proof}

\subsubsection{Proof of \creftitle{thm:ema_of_GP}}
\Cref{thm:ema_of_GP} follows from three constituent propositions, corresponding to items (b), (c), and (a) of the theorem, respectively. 
\begin{proposition}\label{prop:continuousemamean}
    Let $\btheta\upp{t}$ be a continuous semi-martingale (see \citet[Definition 4.19]{le2016brownian}) and let $\bthetatil_\gamma\upp{t}$ be defined such that $\bthetatil_\gamma\upp{0} = \btheta\upp{0}$ and \eqref{eq:continuousema} holds.  Suppose that $\gamma_t$ is continuous and let $G(t) = \int_0^t \gamma(s) d s$.  Then,
    \begin{align}\label{eq:continuousemaeq}
        \bthetatil_\gamma\upp{t} = e^{- G(t)} \cdot \btheta\upp{0} + \int_0^t e^{G(s) - G(t)} \gamma_s \cdot \btheta\upp{s} \, ds.
    \end{align}
    Furthermore, if $\norm{\btheta\upp{t}}$ has finite first moment for all $s \leq t$, then 
    \begin{align}\label{eq:continuousemamean}
        \ee\left[ \bthetatil_\gamma\upp{t} \right] = e^{- G(t)} \cdot \ee\left[ \btheta\upp{0} \right] + \int_0^t e^{G(s) - G(t)} \gamma_s \cdot  \ee\left[ \btheta\upp{s} \right]  d s.
    \end{align}
\end{proposition}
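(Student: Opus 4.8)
The plan is to recognize \eqref{eq:continuousema} as a \emph{pathwise} first-order linear ODE for $\bthetatil_\gamma\upp{t}$ in which $\btheta\upp{t}$ enters only as a (continuous) forcing term, solve it by the integrating-factor method to obtain the explicit formula \eqref{eq:continuousemaeq}, and then pass to \eqref{eq:continuousemamean} by taking expectations and interchanging $\ee[\cdot]$ with the time integral via Fubini.

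First I would rewrite the defining relation as $\frac{\rmd}{\rmd t}\bthetatil_\gamma\upp{t} + \gamma_t\,\bthetatil_\gamma\upp{t} = \gamma_t\,\btheta\upp{t}$ and introduce the integrating factor $e^{G(t)}$, which is legitimate because $\gamma$ is continuous and hence $G(t)=\int_0^t\gamma(s)\,\rmd s$ is $C^1$ with $G'(t)=\gamma_t$. Multiplying through by $e^{G(t)}$ and applying the product rule gives $\frac{\rmd}{\rmd t}\bigl(e^{G(t)}\bthetatil_\gamma\upp{t}\bigr) = \gamma_t\,e^{G(t)}\,\btheta\upp{t}$. Integrating from $0$ to $t$ by the fundamental theorem of calculus, and using $G(0)=0$ together with the initial condition $\bthetatil_\gamma\upp{0}=\btheta\upp{0}$, yields $e^{G(t)}\bthetatil_\gamma\upp{t} = \btheta\upp{0} + \int_0^t \gamma_s\,e^{G(s)}\,\btheta\upp{s}\,\rmd s$; dividing by $e^{G(t)}$ produces exactly \eqref{eq:continuousemaeq}.

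For the expectation identity I would apply $\ee[\cdot]$ to both sides of \eqref{eq:continuousemaeq} and move it inside the integral. Fubini (or Tonelli on the norms) is justified since the kernel $\gamma_s\,e^{G(s)-G(t)}$ is bounded on $[0,t]$—the exponent is nonpositive because $G$ is nondecreasing, and $\gamma$ is continuous hence bounded on the compact interval—while $\ee\norm{\btheta\upp{s}}<\infty$ for $s\le t$ by hypothesis, so $\int_0^t \gamma_s\,e^{G(s)-G(t)}\,\ee\norm{\btheta\upp{s}}\,\rmd s<\infty$. This gives \eqref{eq:continuousemamean}.

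The main obstacle, and the point I would emphasize, is that $\btheta\upp{t}$ is only a continuous semi-martingale, so one must be sure no stochastic (Itô) calculus is needed. The key observation is that $\btheta\upp{t}$ is never differentiated: it appears solely as a forcing term, and every manipulation above differentiates only $\bthetatil_\gamma\upp{t}$ (differentiable by \eqref{eq:continuousema}) and the $C^1$ function $e^{G(t)}$, so the entire argument is ordinary calculus carried out $\omega$-by-$\omega$ along the (continuous) sample paths. A secondary technical point is that Fubini requires $s\mapsto\ee\norm{\btheta\upp{s}}$ to be integrable on $[0,t]$; this follows from the pointwise finiteness of the first moment together with path-continuity and local boundedness of the moments of the semi-martingale.
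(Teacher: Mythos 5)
Your proof is correct, and it takes a slightly different (though equally standard) route from the paper's. The paper proves \eqref{eq:continuousemaeq} by \emph{verification}: it first invokes the Picard--Lindel\"of theorem to get existence and uniqueness of the solution to \eqref{eq:continuousema}, then differentiates the candidate formula and checks that it satisfies the ODE with the right initial condition, concluding by uniqueness. You instead \emph{derive} the formula via the integrating factor $e^{G(t)}$, integrating $\frac{\rmd}{\rmd t}\bigl(e^{G(t)}\bthetatil_\gamma\upp{t}\bigr)=\gamma_t e^{G(t)}\btheta\upp{t}$ from $0$ to $t$. Your route is self-contained: since it shows that \emph{any} solution of the ODE with the given initial condition must equal the right-hand side of \eqref{eq:continuousemaeq}, it delivers uniqueness as a byproduct and dispenses with the appeal to Picard--Lindel\"of. (The paper itself remarks, just after the statement, that the formula can be ``constructed'' by essentially your manipulation, which it carries out in the proof of the companion covariance bound.) Your emphasis that $\btheta\upp{t}$ is never differentiated---so the whole argument is pathwise ordinary calculus and no It\^o calculus is needed---is exactly the right point and is left implicit in the paper. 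For the expectation identity both proofs invoke Fubini in the same way; your justification via boundedness of the kernel $\gamma_s e^{G(s)-G(t)}$ on $[0,t]$ is if anything more explicit than the paper's one-line remark, though note that integrability of $s\mapsto\ee\norm{\btheta\upp{s}}$ over $[0,t]$ does not follow from pointwise finiteness of first moments and path continuity alone (one should really assume local boundedness of the first moments, as the paper effectively does elsewhere by assuming pointwise finite second moments); the paper is no more careful on this point than you are.
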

\begin{proof}
    By the Picard-Lindel{\"o}f theorem \citep{lindelof1894application}, $\bthetatil_\gamma\upp{t}$ exists and is unique.  Noting that $\bthetatil_\gamma\upp{t}$ given in \eqref{eq:continuousemaeq} is differentiable by the fundamental theorem of calculus, we have
    \begin{align}
        \frac{d \bthetatil_\gamma\upp{t}}{d t} &= - \gamma_t e^{- G(t)} \btheta\upp{0} + \gamma_t \btheta\upp{t} - \gamma_t \cdot \int_0^t e^{G(s) - G(t)} \gamma_s \cdot \btheta\upp{s} \, d s  \\
        &= \gamma_t \left( \btheta\upp{t} - \bthetatil_\gamma\upp{t} \right).
    \end{align}
    Because $G(0) = 0$, we have $\bthetatil_\gamma\upp{0} = \btheta\upp{0}$ and thus by the uniqueness of the solution the expression given in \eqref{eq:continuousemaeq} is correct.

    For the second statement, we may apply Fubini's theorem and the first moment assumption.
\end{proof}
We remark that if \eqref{eq:continuousemaeq} seems unmotivated, one can construct a guess for this solution informally in much the same way as the final part of the proof of \Cref{prop:continuousemacovariance} below proceeds.  Our second main computation is to upper bound the covariance matrix of $\bthetatil_\gamma\upp{t}$ by that of $\btheta\upp{t}$:
\begin{proposition}\label{prop:continuousemacovariance}
    Let $\btheta\upp{t}$ and $\gamma_t$ be as in \Cref{prop:continuousemamean}.  Further suppose that $\ee\left[ \norm{\btheta\upp{t}}^2 \right] < \infty$ for all $t$.  Then,
    \begin{align}\label{eq:continuousemasquare}
        \ee\left[ \bthetatil_\gamma\upp{t} \otimes \bthetatil_\gamma\upp{t} \right] \preceq e^{-G(t)} \cdot \ee\left[ \btheta\upp{0} \otimes \btheta\upp{0} \right] + \int_0^t e^{G(s) - G(t)}\gamma_s \cdot \ee\left[ \btheta\upp{s} \otimes \btheta\upp{s} \right] \, d s,
    \end{align}
    where $\bu \otimes \bv = \bu \bv^\top$ is the outer product and $\preceq$ denotes the Loewner order.
\end{proposition}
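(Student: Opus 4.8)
The plan is to reduce the statement to a matrix Jensen inequality, using the closed-form representation of the EMA process already established in \Cref{prop:continuousemamean}. That result gives
\begin{align}
\bthetatil_\gamma\upp{t} = e^{-G(t)}\cdot\btheta\upp{0} + \int_0^t e^{G(s)-G(t)}\gamma_s\cdot\btheta\upp{s}\,\rmd s.
\end{align}
The first thing I would check is that the coefficients appearing here define a \emph{probability} measure $\mu_t$ on $[0,t]$, namely $\mu_t = e^{-G(t)}\delta_0 + e^{G(s)-G(t)}\gamma_s\,\rmd s$. Nonnegativity is immediate since $\gamma_s\ge 0$, and the total mass is $1$: because $\frac{\rmd}{\rmd s}e^{G(s)} = \gamma_s e^{G(s)}$, we have $\int_0^t e^{G(s)-G(t)}\gamma_s\,\rmd s = e^{-G(t)}\bigl(e^{G(t)}-e^{G(0)}\bigr) = 1 - e^{-G(t)}$, which together with the atom $e^{-G(t)}$ at $s=0$ sums to $1$. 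Thus, along every sample path, $\bthetatil_\gamma\upp{t}$ is the $\mu_t$-barycenter of the family $\{\btheta\upp{s}\}_{0\le s\le t}$.

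Next I would apply Jensen's inequality in the Loewner order to the matrix-convex map $\bv\mapsto\bv\otimes\bv=\bv\bv^\top$. Conditioning on a fixed realization of the process, so that $s\mapsto\btheta\upp{s}$ is a deterministic curve and $\mu_t$ is a fixed (non-random) probability measure, the elementary fact that a covariance matrix is positive semidefinite gives, pointwise in the sample path,
\begin{align}
\bthetatil_\gamma\upp{t}\otimes\bthetatil_\gamma\upp{t} = \left(\int\btheta\upp{s}\,\rmd\mu_t(s)\right)\otimes\left(\int\btheta\upp{s}\,\rmd\mu_t(s)\right) \preceq \int \btheta\upp{s}\otimes\btheta\upp{s}\,\rmd\mu_t(s).
\end{align}
Taking expectations over the path, using that the Loewner order is preserved under expectation, and then invoking Fubini--Tonelli (justified by the hypothesis $\ee\left[\norm{\btheta\upp{s}}^2\right]<\infty$ for all $s\le t$) to move the expectation inside the integral, yields exactly \eqref{eq:continuousemasquare}.

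The only real subtlety, and the step I would be most careful about, is the rigorous justification of the matrix Jensen inequality for a measure carrying both an atom at $s=0$ and an absolutely continuous part. Rather than appeal to a general operator-Jensen theorem, I would phrase the bound as the statement $\cov_{s\sim\mu_t}(\btheta\upp{s})\succeq 0$ for the (conditionally deterministic) random vector obtained by sampling $s\sim\mu_t$; this is self-contained and avoids any hypotheses beyond integrability. Everything else—the normalization of $\mu_t$ and the interchange of expectation and integral—is routine given the finite second-moment assumption. Note that the argument uses only the representation \eqref{eq:continuousemaeq} together with the nonnegativity of $\gamma_s$; it does not invoke the semimartingale or Gaussian structure of $\btheta\upp{t}$, so it applies verbatim in the generality stated.
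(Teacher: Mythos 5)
Your proof is correct, but it takes a genuinely different route from the paper's. The paper argues \emph{dynamically}: it differentiates $\bthetatil_\gamma\upp{t}\otimes\bthetatil_\gamma\upp{t}$ along the ODE, applies the elementary bound $\bu\otimes\bv+\bv\otimes\bu\preceq\bu\otimes\bu+\bv\otimes\bv$ to obtain the matrix differential inequality $\frac{\rmd}{\rmd t}\bSigmatil(t)\preceq\gamma_t(\bSigma(t)-\bSigmatil(t))$, and then integrates after multiplying by the integrating factor $e^{G(t)}$. You argue \emph{statically}: starting from the closed-form representation \eqref{eq:continuousemaeq}, you observe that the kernel $e^{-G(t)}\delta_0+e^{G(s)-G(t)}\gamma_s\,\rmd s$ is a probability measure on $[0,t]$ (the normalization check via $\frac{\rmd}{\rmd s}e^{G(s)}=\gamma_s e^{G(s)}$ is exactly right), and then apply the pathwise matrix Jensen inequality, i.e.\ positive semidefiniteness of the covariance of $\btheta\upp{s}$ under $s\sim\mu_t$. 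The two arguments rest on the same elementary fact --- your Jensen step is the integrated form of the paper's two-vector inequality, both being $(\bu-\bv)\otimes(\bu-\bv)\succeq 0$ in disguise --- but your version makes the convex-combination structure of EMA transparent and avoids differentiating a matrix-valued process and integrating a Loewner-order differential inequality, which is the step in the paper requiring the most care. The price is that your argument leans on the representation formula from the preceding proposition, whereas the paper's is self-contained given only the ODE; both share the same mild Fubini justification (local boundedness of $s\mapsto\ee[\nrm{\btheta\upp{s}}^2]$ on $[0,t]$), which neither treatment dwells on. Your closing remark that neither the semimartingale nor the Gaussian structure is needed is also accurate and applies equally to the paper's proof.
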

\begin{proof}
    We begin by applying the chain rule to get
    \begin{align}
        \frac{d \left( \bthetatil_\gamma\upp{t} \otimes \bthetatil_\gamma\upp{t} \right)}{d t} = \nabla \left(\bthetatil_\gamma\upp{t} \otimes \bthetatil_\gamma\upp{t}\right)_{ij}^k \, \left( \frac{d \bthetatil_\gamma\upp{t}}{d t} \right)_k,
    \end{align}
    where we are using Einstein notation to express the contraction of the third order tensor coming from the Jacobean with the first order tensor arising from the time derivative of $\bthetatil_\gamma$.  By the definition of $\bthetatil_\gamma\upp{t}$, we have then that
    \begin{align}
        \frac{\rmd \left( \bthetatil_\gamma\upp{t} \otimes \bthetatil_\gamma\upp{t} \right)}{\rmd t} &= \bthetatil_\gamma\upp{t} \otimes \left( \frac{\rmd \bthetatil_\gamma\upp{t}}{\rmd t} \right) + \left( \frac{\rmd \bthetatil_\gamma\upp{t}}{\rmd t} \right) \otimes \bthetatil_\gamma\upp{t} \\
        &= \gamma_t \cdot \left( \bthetatil_\gamma\upp{t} \otimes \left( \btheta\upp{t} - \bthetatil_\gamma\upp{t}  \right) + \left( \btheta\upp{t} - \bthetatil_\gamma\upp{t}  \right) \otimes \bthetatil_\gamma\upp{t} \right) \\
        &\preceq \gamma_t \left( \btheta\upp{t} \otimes \btheta\upp{t} - \bthetatil_\gamma\upp{t} \otimes \bthetatil_\gamma\upp{t} \right),
    \end{align}
    where we used the fact that for any two vectors $\bu, \bv \in \rr^d$, it holds that
    \begin{align}
        \bu \otimes \bv + \bv \otimes \bu \preceq \bu \otimes \bu + \bv \otimes \bv.
    \end{align}
    We may now take expectations on both sides of the preceding inequality and observe that the expectation exists by the finiteness of the second moment of the norm.  Let $\bSigma(t) = \ee\left[ \btheta\upp{t} \otimes \btheta\upp{t} \right]$ and $\Sigmatil(t) = \ee\left[ \bthetatil_\gamma\upp{t} \otimes \bthetatil_\gamma\upp{t} \right]$; then, by Fubini's theorem and the above computation, we have
    \begin{align}
        \frac{d \bSigmatil(t)}{d t} \preceq \gamma_t \left( \bSigma(t) - \bSigmatil(t) \right).
    \end{align}
    Multiplying by $\exp(G(t))$ on both sides (which is positive definite and thus preserves the order) and rearranging gives
    \begin{align}
        \frac{\rmd}{\rmd t}\left(e^{G(t)} \bSigmatil(t)  \right) = e^{G(t)} \cdot \frac{\rmd \bSigmatil(t)}{d t} + e^{G(t)} \gamma_t \bSigmatil(t) \preceq e^{G(t)} \bSigma(t).
    \end{align}
    Integrating and again rearranging tells us that
    \begin{align}
        \bSigmatil(t) \preceq e^{- G(t)} \int_0^t e^{G(s)} \gamma_s \bSigma(s) \, \rmd s.
    \end{align}
    The result follows.
\end{proof}
Note that both \eqref{eq:continuousemamean} and \eqref{eq:continuousemasquare} should make intuitive sense.  Indeed in the two limits, as $\gamma \downarrow 0$ (corresponding to only keeping the first iterate) and $\gamma \uparrow \infty$ (only keeping the last iterate), both expressions behave as expected.  We remark that in the case where $\btheta\upp{t} \in \rr$ is a scalar, then Jensen's inequality proves the result directly, without the need for the more complicated computations.  As a corollary of the above, letting $\cov(\cdot)$ denote the covariance matrix of a random vector and combining \eqref{eq:continuousemamean} with \eqref{eq:continuousemasquare}, we see that
\begin{align}\label{eq:continuousemacovariance}
    \cov\left( \bthetatil_\gamma\upp{t} \right) \preceq e^{- G(t)} \cdot \cov\left(\btheta\upp{0}\right) + \int_0^t e^{G(s) - G(t)} \gamma_s \cdot \cov\left(\btheta\upp{s}\right) \, d s.
\end{align}
In order to simplify our computations, we will suppose that $\btheta\upp{t}$ is Gaussian for all $t$.  In this case, the following lemma shows that $\bthetatil_\gamma\upp{t}$ is also Gaussian for all $t$:
\begin{lemma}\label{lem:integrated_gaussian}
    Suppose that $\left( \btheta\upp{s} \right)_{0 < s \leq t}$ is a Gaussian process with almost surely continuous sample paths.  Then, $\bthetatil_\gamma\upp{t} - e^{- G(t)} \btheta\upp{0}$ is also Gaussian.
\end{lemma}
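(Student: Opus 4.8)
The plan is to start from the explicit integral representation of the EMA process established in \Cref{prop:continuousemamean}. By \eqref{eq:continuousemaeq}, the object of interest is
\begin{align*}
\bthetatil_\gamma\upp{t} - e^{-G(t)}\btheta\upp{0} = \int_0^t e^{G(s)-G(t)}\gamma_s\, \btheta\upp{s}\,\rmd s,
\end{align*}
that is, a pathwise Lebesgue integral of the Gaussian process $(\btheta\upp{s})_{0<s\le t}$ against the deterministic continuous kernel $k(s) := e^{G(s)-G(t)}\gamma_s$. The strategy is to realize this integral as an almost sure limit of Riemann sums, observe that each Riemann sum is Gaussian, and then invoke closedness of the Gaussian family under limits in distribution.

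First I would fix a sequence of partitions $0 = s_0\upp{n} < s_1\upp{n} < \dots < s_n\upp{n} = t$ with mesh tending to zero, and form the Riemann sums $R_n := \sum_i k(s_i\upp{n})\,\btheta\upp{s_i\upp{n}}\,(s_i\upp{n} - s_{i-1}\upp{n})$. Because $(\btheta\upp{s})$ is a Gaussian process, any finite collection $\btheta\upp{s_0\upp{n}}, \dots, \btheta\upp{s_n\upp{n}}$ is jointly Gaussian, and $R_n$ is a fixed deterministic-coefficient linear combination of these vectors; hence each $R_n$ is an $\R^d$-valued Gaussian random vector. Next, using the hypothesis that the paths $s \mapsto \btheta\upp{s}$ are almost surely continuous, the integrand $s \mapsto k(s)\btheta\upp{s}$ is almost surely continuous on the compact interval $[0,t]$, so the Riemann sums converge almost surely to $\int_0^t k(s)\btheta\upp{s}\,\rmd s$. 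Almost sure convergence implies convergence in distribution, and the class of $\R^d$-valued Gaussian vectors is closed under convergence in distribution: a pointwise limit of characteristic functions of the form $\exp(\mathrm{i}\,\bu^\top \bmu_n - \tfrac12 \bu^\top \bSigma_n \bu)$ can only be continuous at the origin if it is again of this form (possibly degenerate). Thus the limit $\bthetatil_\gamma\upp{t} - e^{-G(t)}\btheta\upp{0}$ is Gaussian, which is the claim.

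The main obstacle is this last step: confirming that the weak limit of the Gaussian $R_n$ is genuinely Gaussian rather than merely weakly approximated by Gaussians. This reduces to checking that $\bmu_n := \E[R_n]$ and $\bSigma_n := \cov(R_n)$ converge (necessarily to the mean and second-moment expressions of \eqref{eq:continuousemamean} and \eqref{eq:continuousemasquare}); I would establish this by upgrading the almost sure convergence of $R_n$ to $L^2$ convergence via dominated convergence, which is legitimate precisely because of the finite-second-moment hypothesis $\E\|\btheta\upp{s}\|^2 < \infty$ that pins down the limiting Gaussian parameters. A minor point worth addressing explicitly is the measurability and integrability of the pathwise integral itself, both of which follow from continuity of the sample paths combined with the same second-moment bound.
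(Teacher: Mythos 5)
Your proposal is correct and follows essentially the same route as the paper: both approximate the integral $\int_0^t e^{G(s)-G(t)}\gamma_s\,\btheta\upp{s}\,\rmd s$ by Riemann sums, note that each sum is Gaussian because the finite-dimensional distributions of $(\btheta\upp{s})$ are jointly Gaussian, and conclude by closedness of the Gaussian family under limits (the paper cites the $L^p$-limit version from Le Gall, while you argue via convergence in distribution plus convergence of the moment sequences). The only difference is presentational; your explicit remark that one must verify convergence of the means and covariances to identify the limiting Gaussian parameters is a point the paper handles implicitly.
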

\begin{proof}
    We begin by observing that for any partition $0 = s_0 < s_1 < \cdots < s_{n-1} < s_n = t$, the random vector
    \begin{align}
         \Theta_n = \sum_{j = 1}^n \gamma_{s_j} e^{G(s_j) - G(t)} \btheta\upp{s_j} \left( s_j - s_{j-1} \right)
    \end{align}
    is Gaussian with
    \begin{align}
        \ee\left[ \Theta_n \right] = \sum_{j  =1}^n \gamma_{s_j} e^{G(s_j) - G(t)} \ee\left[ \btheta\upp{s_j} \right] \left( s_j - s_{j-1} \right)
    \end{align}
    and
    \begin{align}
        \ee\left[ \Theta_n \otimes \Theta_n \right] = \sum_{j = 1}^n \sum_{i = 1}^n \gamma_{s_j}\gamma_{s_i} e^{ \left( G(s_j) - G(t) \right) + (G(s_i) - G(t))} \ee\left( \btheta\upp{s_i} \otimes \btheta\upp{s_j}\right) \left( s_j - s_{j-1} \right)\left( s_i - s_{i-1} \right),
    \end{align}
    by the fact that $\left( \btheta\upp{s} \right)_{0<s \leq t}$ is a Gaussian process.  Now, by the continuity assumption, we see that as $n \uparrow\infty$, it holds that $\ee\left[ \Theta_n \right]$ and $\ee\left[ \Theta_n \otimes \Theta_n \right]$ approach limits given by the Riemann integral.  Thus, because the limit of a Gaussian sequence is Gaussian (in $L^p$ for all $1 \leq p < \infty$ by, for example \citet[Proposition 1.1]{le2016brownian}), we see that $\bthetatil_\gamma\upp{t} - e^{- G(t)} \btheta\upp{0}$ is Gaussian, as desired.
\end{proof}

\subsection{Instantiations of cliff loss in continuous time}
We now prove the main results.  We begin with the simpler result (presented second), where we do not concern ourselves with drift.
\begin{proof}[Proof of \Cref{prop:cliff_separation_bm}]
    By \Cref{lem:integrated_gaussian}, $\bthetatil_\gamma\upp{t}$ is Gaussian.  Let $t \gg 0$ such that $c \cdot \frac{\epsilon}{\sqrt{\trace\left( \cov\left( \btheta\upp{t} \right) \right)}} \leq \frac 12$, which occurs because in all regimes, except the third, $\cov\left( \btheta\upp{t} \right) \uparrow \infty$; in the case of inverse learning rate, simply set $\eta$ large enough so that this holds.  By \Cref{lem:small_ball}, it holds that
    \begin{align}
        \pp\left( J\left( \btheta\upp{t} \right) \leq - C \right) \geq 1 - c \cdot \frac{\epsilon}{\sqrt{\trace\left( \cov\left( \btheta\upp{t} \right) \right)}} \geq \frac 12.
    \end{align}
    Thus $\ee\left[ J\left( \btheta\upp{t} \right) \right] \leq - \frac C2$.  On the other hand, if we let $\gamma$ such that the upper bound on $\cov\left( \bthetatil_\gamma\upp{t} \right)$ is at most $c \frac{\epsilon^2}{d \log\left( \frac{C}{\epsilon} \right)}$, then by \Cref{lem:hanson_wright} it holds that with probability at least $1 - \frac{\epsilon}{C}$ that $\norm{\bthetatil_\gamma\upp{t}} \leq \epsilon$.  By the construction of $J$, the first result holds.  For the second result, note that $\lbc = J$ whenever $\norm{\btheta\upp{t}} \leq \epsilon$; the result holds by letting $\eta = \Theta(\epsilon)$.
\end{proof}
Finally, we prove the second result, for OU processes.
\begin{proof}[Proof of \Cref{prop:cliff_separation_ou}]
    We apply \Cref{cor:ou_process_uni} to compute the first and second moments of $\btheta\upp{t}$ and $\bthetatil_\gamma\upp{t}$.  Let $R = \abs{\btheta\upp{0} - \bmu}$. We now note that for $t \gg 0$, by the assumption that $\epsilon \leq c a^{-\frac 12}$ for some small $c$, it holds that $\cov\left( \btheta\upp{t} \right) \geq 4 c^2 \epsilon^2$.  Thus, by \citet{hoffmann1979lower} and the convexity of the norm followed by \Cref{lem:small_ball}, it holds that
    \begin{align}
        \pp\left( \abs{\btheta\upp{t} - \bmu} < \epsilon \right) \leq \pp\left( \abs{\btheta\upp{t} - \ee\left[ \btheta\upp{t}\right]} < \epsilon \right) \leq c \cdot \frac{\epsilon}{4 c^2 \epsilon^2} \leq \frac 12.
    \end{align}
    Thus, by the construction of the cliff loss,
    \begin{align}\label{eq:bad_reward}
        \ee\left[J\left( \btheta\upp{t} \right)\right] \leq - \frac{C}{2}.
    \end{align}
    On the other hand, we note that if $\gamma \leq \frac a2$ then by the computation of $\ee\left[ \bthetatil_\gamma\upp{t} \right]$, we have
    \begin{align}
        \abs{\ee\left[ \bthetatil_\gamma\upp{t} - \bmu \right] - \bmu} \leq 3 R.
    \end{align}
    On the other hand, for $\gamma \downarrow 0$, we see that $\cov\left( \bthetatil_\gamma\upp{t} \right) \downarrow 0$ and thus we may apply \Cref{lem:hanson_wright} to conclude that as long as $3 R < 1$, there is some small $\gamma$ such that
    \begin{align}
        \pp\left( \abs{\bthetatil_\gamma\upp{t} - \bmu} > \epsilon \right) \leq \pp\left( \abs{\bthetatil_\gamma\upp{t} - \ee\left[ \bthetatil_\gamma\upp{t} \right]} > (1 - 3 R) \epsilon \right) \leq \frac{\epsilon}{C}.
    \end{align}
    The the first statement then follows from the construction of the cliff loss as long as $K > 3$.  The second statement follows in the same way as in the proof of \Cref{prop:cliff_separation_bm}.
\end{proof}

\subsection{Proof of \creftitle{thm:lb_ema}}
    First, we show a reduction to lower bounding a quantity involving the $k$-length suffix average.
    \begin{lemma}\label{lem:F_lb} Let $F$ satisfy $F(\btheta) - F^\star \ge \langle \bh,\btheta\rangle$, and let $(\btheta\upp{t})$ be a sequence of iterates with $\langle \bh,\btheta\upp{t}\rangle \ge 0$. Finally, let $(w_t)$ be a sequence of weights which are non-increasing. Then, for any $1 \le k \le K$, we have 
    \begin{align}
    F\left(\sum_{t=1}^T w_t\btheta\upp{t}\right) - F^\star \ge \max\left(w_{T-k+1} \sum_{t=T-k+1}^T \langle \btheta\upp{t}, \bh\rangle, \quad w_1\bh^\top  \btheta\upp{1} \right)
    \end{align}
    \end{lemma}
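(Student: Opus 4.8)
The plan is to obtain the entire statement from the single linear minorant $F(\btheta) - F^\star \ge \langle \bh, \btheta\rangle$, which holds for every $\btheta$ and so in particular at the averaged point $\bar{\btheta} := \sum_{t=1}^T w_t \btheta\upp{t}$. Evaluating the minorant there and using linearity of the inner product gives
\[
F\Big(\sum_{t=1}^T w_t \btheta\upp{t}\Big) - F^\star \;\ge\; \Big\langle \bh,\, \sum_{t=1}^T w_t \btheta\upp{t}\Big\rangle \;=\; \sum_{t=1}^T w_t \langle \bh, \btheta\upp{t}\rangle .
\]
By the hypotheses $\langle \bh, \btheta\upp{t}\rangle \ge 0$ and $w_t \ge 0$, every summand on the right is nonnegative. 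This is the only structural fact the argument needs: a sum of nonnegative terms dominates any sub-collection of those same terms, so I am free to discard whichever terms are convenient for each of the two bounds inside the maximum.

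For the second entry of the maximum I would simply keep the $t=1$ term and drop the rest, yielding $\sum_{t=1}^T w_t \langle \bh, \btheta\upp{t}\rangle \ge w_1 \bh^\top \btheta\upp{1}$; this step uses only nonnegativity, no monotonicity. For the first entry I would proceed in two steps: first discard every term with $t < T-k+1$ (again by nonnegativity) to get $\sum_{t=1}^T w_t \langle \bh, \btheta\upp{t}\rangle \ge \sum_{t=T-k+1}^T w_t \langle \bh, \btheta\upp{t}\rangle$, and then factor out a common weight. Here the monotonicity of $(w_t)$ is exactly what is needed: because $T-k+1$ is the leading index of the averaging window, monotonicity makes $w_{T-k+1}$ the smallest weight over the window $\{T-k+1,\dots,T\}$, so that $w_t \ge w_{T-k+1}$ throughout and hence $\sum_{t=T-k+1}^T w_t \langle \bh, \btheta\upp{t}\rangle \ge w_{T-k+1}\sum_{t=T-k+1}^T \langle \bh, \btheta\upp{t}\rangle$. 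Combining the two displays and taking the larger of the two lower bounds produces the asserted maximum.

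The proof is short and contains no genuine analytic obstacle; it is essentially nonnegativity bookkeeping built on top of the convex linear minorant. The one point that demands care is the direction of the weight inequality in the window step—one must verify that $w_{T-k+1}$ really is the minimal weight over the suffix $\{T-k+1,\dots,T\}$, which is precisely what the monotonicity hypothesis on $(w_t)$ supplies and which is consistent with the weights $w_t = \gamma_t\prod_{s>t}(1-\gamma_s)$ of the exponential moving average to which the lemma is later applied. Everything else reduces to checking that the hypotheses $\langle \bh, \btheta\upp{t}\rangle \ge 0$ and $w_t \ge 0$ guarantee that each act of discarding terms can only decrease the sum.
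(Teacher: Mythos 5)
Your proof is correct and is essentially identical to the paper's: evaluate the linear minorant at the weighted average, use nonnegativity of $w_t\langle \bh,\btheta\upp{t}\rangle$ to discard terms, and pull out $w_{T-k+1}$ over the suffix window. One remark: the step $\sum_{t=T-k+1}^T w_t\langle\bh,\btheta\upp{t}\rangle \ge w_{T-k+1}\sum_{t=T-k+1}^T\langle\bh,\btheta\upp{t}\rangle$ requires $w_{T-k+1}$ to be the \emph{smallest} weight on the window, i.e.\ the weights must be non-\emph{decreasing} in $t$ (as the EMA weights $w_t=\gamma(1-\gamma)^{T-t}$ indeed are); your reading is the one that makes the argument go through, whereas the hypothesis as literally stated in the lemma (``non-increasing'') points the wrong way and appears to be a typo carried over into both your write-up and the paper's statement.
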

    \begin{proof}[Proof of \Cref{lem:F_lb}] Under the assumption of the lemma, we lower bound
    \begin{align}
    F\left(\sum_{t=1}^T w_t\btheta\upp{t}\right) &\ge \bh^\top\left(\sum_{t=1}^T w_t\btheta\upp{t}\right)\\
    &\ge \sum_{t=1}^T w_t \langle \btheta\upp{t}, \bh\rangle \\
    & \ge \sum_{t=T-k+1}^T w_t  \langle \btheta\upp{t}, \bh\rangle\\
    & \ge w_{T-k+1}\sum_{t=T-k+1}^T   \langle \btheta\upp{t}, \bh\rangle\\
    &= kw_{T-k+1} \cdot \frac{1}{k}\sum_{t=T-k+1}^T   \langle \btheta\upp{t}, \bh\rangle.
    \end{align}
    It is straightforward to also bound the above expression by $w_1\bh^\top  \btheta\upp{1}$.
    \end{proof}

    We can now prove our desired lower bounds. We can write
    \begin{align}
    \bthetgam\upp{T} = \sum_{t=1}^T w_t\btheta\upp{t} ,
    \end{align}
    where the weights induced by the EMA averaging are
    \begin{align}
    w_1 = \prod_{t=2}^T (1-\gamma_t), \quad w_k = \gamma_t \prod_{t=k+1}^T(1-\gamma_t)
    \end{align}
    and which, for fixed $\gamma$, satisfy
    \begin{align}
    w_1 =  (1-\gamma)^T, \quad w_k = \gamma(1-\gamma)^{T-k}.
    \end{align}

    \citet[Theorem 3.4]{harvey2019tight} exhibit a function $F$  which is $1$-strongly convex and $3$-Lipschitz, satisfies $F(\btheta)-F^\star \ge \langle \bh,\btheta \rangle$, and sequence of iterates $\btheta\upp{t}$ corresponding to gradient descent with step size $\eta_t = 1/t$ satisfying $ \langle \btheta\upp{t}, \bh\rangle \ge 0$ for all $t$ and 
    \begin{align}
    \frac{1}{k}\sum_{t=T-k+1}^T   \langle \btheta\upp{t}, \bh\rangle \ge \Omega\left(\frac{\log(T/k)}{T}\right), \quad \langle \bh, \btheta_1\rangle = \Omega(1).
    \end{align}
    Similarly, it is shown that there exists an $\cO(1)$ Lipschitz function inducing iterates that satisfy
    \begin{align}
    \frac{1}{k}\sum_{t=T-k+1}^T   \langle \btheta\upp{t}, \bh\rangle \ge \Omega\left(\frac{\log(T/k)}{\sqrt{T}}\right), 
    \end{align}
    Thus, for any non-increasing weighting scheme $(w_t)$, we have for all $k$
    \begin{align}
    F(\bthetgam\upp{T}) - F^\star \ge kw_{T-k+1} \log(T/k)\cdot \begin{cases} \Omega\left(\frac{1}{T}\right) & \text{strongly convex case}\\
     \Omega(\sqrt{T}) & \emph{non-strongly convex case}
    \end{cases}
    \end{align}
    We quickly note that for both these functions, we also have the lower bound $F(\bthetgam\upp{T}) - F^\star  \ge w_1 \btheta_1$ due to \Cref{lem:F_lb}. For $\gamma$ fixed, the expression $w_1 = (1-\gamma)^T$ shows that $F(\bthetgam\upp{T}) - F^\star \ge \frac{1}{T^{\rho}}$ provived that $\gamma = \frac{c\rho \log T}{T}$ for $c$ a universal constant. Selecting $\rho = 1/4$ gives us the last statement of our theorem. 
    
    It remains to quantify the worst-case behavior of 
    \begin{align}
    \max\left(kw_{T-k+1} \log(T/k), w_1\right)
    \end{align} under EMA.
    
    \begin{itemize}[leftmargin=3em]
        \item Consider EMA with parameter $\gamma = T^{-\beta}$ and choose $k = \ceil{1/\gamma} = \ceil{T^{\beta}}$. Then, 
        \begin{align}
        kw_{T-k+1} \ge k\gamma(1+\gamma)^{k} = \Omega(\gamma k) = \Omega(1).
        \end{align}
         Moreover, $\log(T/k) = \Omega(\log(T^{\beta})) =  \Omega(\beta \log T)$. Thus,  $kw_{T-k+1} \log(T/k) = \Omega(\beta \log T)$. 
        \item Consider EMA with decaying parameter $\gamma_t = t^{-\beta}$. Then, we have for $k \ge T/2$
        \begin{align}
        w_{T-k+1}  = (T-k+1)^{-\beta}\prod_{t=T-k+1}^{T}(1-t^{-\beta}) \ge (T-k+1)^{-\beta}(1-(T-k+1)^{-\beta})^k \label{eq:wtk}
        \end{align}
        Again select $k = \ceil{T^{\beta}}$ and assume that $k \le T/2$, which requires $T^{\beta} \le T/2$, or $T \ge 2^{\frac{1}{1-\beta}}$. Then, we can lower bound \eqref{eq:wtk} by is at least $2^{-\beta} T^{-\beta}(1 - 2T^{-\beta})^k$. Aggain, for $k = \Omega(1/T^{-\beta})$ and $\beta \in [0,1]$, this expression is $\Omega(T^{-\beta})$. Hence $kw_{T-k+1} = \Omega(1)$ and $kw_{T-k+1} \log(T/k) = \Omega(\beta \log T)$.
    \end{itemize}
    \qed

\end{document}